\documentclass{article}
\usepackage[top=3cm, bottom=3cm, left=2.5cm, right=2.5cm]{geometry}

\usepackage[utf8]{inputenc}

\usepackage{float}
\usepackage{graphicx}
\usepackage{amsthm}
\usepackage{amsmath, amssymb, mathabx, mathrsfs}
\usepackage[table]{xcolor}
\usepackage{array, multirow}
\usepackage[font=small,labelfont=bf]{caption} 
\usepackage{amsfonts}
\usepackage{tabularx}
\usepackage{booktabs}
\usepackage{graphicx}
\usepackage{epstopdf}
\usepackage{algorithmic}
\usepackage{mathtools}
\usepackage{todonotes}
\usepackage{wrapfig}


\usepackage{subcaption}




\usepackage{enumitem}




    \definecolor{darkgreen}{rgb}{0, .5, 0}
    \definecolor{darkcerulean}{rgb}{0.03, 0.27, 0.49}
    \definecolor{smokyblack}{rgb}{0.06, 0.05, 0.03}
    \definecolor{warmblack}{rgb}{0.0, 0.26, 0.26}
    \definecolor{cobalt}{rgb}{0.0, 0.28, 0.67}
    \definecolor{aoEnglish}{rgb}{0.0, 0.5, 0.0}
    \definecolor{carribeangreen}{rgb}{0.0, 0.8, 0.6}
    \definecolor{persiangreen}{rgb}{0.0, 0.65, 0.58}
\usepackage[pdfauthor={Ruiyang Hong, Anastasis Kratsios},
            pdftitle={Bridging the Gap Between Approximation and Learning via Optimal Approximation by ReLU MLPs of Maximal Regularity},
            pdfsubject={Foundations of Deep Learning},
            pdfkeywords={Lipschitz Neural Networks, Optimal approximation, Generalization Bounds, Constructive Approximation},
            colorlinks=true,
            linkcolor=cobalt,
            urlcolor=darkcerulean,
            citecolor=warmblack]{hyperref}

\usepackage{appendix}
\usepackage[numbers]{natbib}

\usepackage[nameinlink]{cleveref}
\crefname{equation}{}{}
\crefname{section}{section}{sections}
\crefname{figure}{figure}{figures}
\crefname{table}{table}{tables}
\crefname{example}{example}{examples}
\crefname{proposition}{proposition}{propositions}
\Crefname{section}{Section}{Sections}
\Crefname{figure}{Figure}{Figures}
\Crefname{table}{Table}{Tables}
\Crefname{definition}{Definition}{Definitions}
\Crefname{theorem}{Theorem}{Theorems}
\Crefname{corollary}{Corollary}{Theorems}
\Crefname{remark}{Remark}{Remarks}
\Crefname{example}{Example}{Examples}
\Crefname{lemma}{Lemma}{Theorems}
\numberwithin{equation}{section}
\crefname{assumption}{Assumption}{assumptions}

\newtheorem*{theorem*}{Theorem}
\newtheorem{theorem}{Theorem}[section]
\newtheorem{lemma}{Lemma}[section]
\newtheorem{corollary}{Corollary}[section]
\newtheorem{proposition}{Proposition}[section]
\theoremstyle{definition}
\newtheorem{definition}{Definition}[section]
\newtheorem{remark}{Remark}[section]
\newtheorem{example}{Example}[section]
\newtheorem{assumption}{Assumption}[section]

\newcounter{termcounter}
\renewcommand{\thetermcounter}{\Roman{termcounter}}
\crefname{term}{term}{terms}
\creflabelformat{term}{#2\textup{(#1)}#3}

\makeatletter
\def\term{\@ifnextchar[\term@optarg\term@noarg}
\def\term@optarg[#1]#2{%
  \textup{#1}%
  \def\@currentlabel{#1}%
  \def\cref@currentlabel{[][2147483647][]#1}%
  \cref@label[term]{#2}}
\def\term@noarg#1{%
  \refstepcounter{termcounter}%
  \textup{(\thetermcounter)}%
  \cref@label[term]{#1}}
\makeatother

\newcommand{\eqdef}{\ensuremath{\stackrel{\mbox{\upshape\tiny def.}}{=}}}

\definecolor{bulgarianrose}{rgb}{0.28, 0.02, 0.03}

    \makeatletter
\def\@fnsymbol#1{\ensuremath{\ifcase#1\or \dagger\or \ddagger\or
   \mathsection\or \mathparagraph\or \|\or **\or \dagger\dagger
   \or \ddagger\ddagger \else\@ctrerr\fi}}
    \makeatother

\RenewCommandCopy{\overbrace}{\LaTeXoverbrace}
\RenewCommandCopy{\underbrace}{\LaTeXunderbrace}



\title{Bridging the Gap Between Approximation and Learning via Optimal Approximation by ReLU MLPs of Maximal Regularity}

\author{Ruiyang Hong\thanks{1) McMaster University, Hamilton, Canada and 2) Vector Institute, Toronto, Canada}
\,\thanks{\textit{Corresponding author:} hongr5@mcmaster.ca} 
\and Anastasis Kratsios$^{\dagger}$\,%
\thanks{kratsioa@mcmaster.ca}}
\date{}



\begin{document}



\maketitle






\begin{abstract}
The foundations of deep learning are supported by the seemingly opposing perspectives of approximation or learning theory. The former advocates for large/expressive models that need not generalize, while the latter considers classes that generalize but may be too small/constrained to be universal approximators. Motivated by real-world deep learning implementations that are both expressive and statistically reliable, we ask: \textit{``Is there a class of neural networks that is both large enough to be universal but structured enough to generalize?''}

This paper constructively provides a positive answer to this question by identifying a highly structured class of ReLU multilayer perceptions (MLPs), which are optimal function approximators and are statistically well-behaved. We show that any $(L,\alpha)$-H\"{o}lder function from $[0,1]^d$ to $[-n,n]$ can be approximated to a uniform $\mathcal{O}(1/n)$ error on $[0,1]^d$ with a sparsely connected ReLU MLP with the same H\"{o}lder exponent $\alpha$ and coefficient $L$, 
of width $\mathcal{O}(dn^{d/\alpha})$, depth $\mathcal{O}(\log(d))$, with $\mathcal{O}(dn^{d/\alpha})$ nonzero parameters, and whose weights and biases take values in $\{0,\pm 1/2\}$ except in the first and last layers which instead have magnitude at-most $n$.
Unlike previously known ``large'' classes of universal ReLU MLPs, the empirical Rademacher complexity of our class remains bounded even when its depth and width become arbitrarily large. Our class of MLPs achieves a near-optimal sample complexity of $\mathcal{O}(\log(N)/\sqrt{N})$ when given $N$ i.i.d. normalized sub-Gaussian training samples.  

We achieve this through a new construction that perfectly fits together linear pieces using Kuhn triangulations, along with a new proof technique which shows that our construction preserves the regularity of not only the H\"{o}lder functions, but also any uniformly continuous function.  Our results imply that neural networks can solve the McShane extension problem on suitable finite sets.

\end{abstract}

\noindent \textbf{Keywords:} Lipschitz Neural Networks, Optimal Approximation, Generalization Bounds, 
Optimal Interpolation, Kuhn Triangulation, Universal Approximation, McShane Extension Theorem, Lipschitz Extension.



\section{Introduction}
\label{s:Intro}

The foundations of deep learning are typically investigated from two seemingly opposing perspectives: either analytically in terms of the approximation power of neural networks~\cite{yarotsky2017error,petersen2018optimal,elbrachter2021deep,ShenYangZhang_JMPA_OptApprx_ReLU,siegel2023optimal} or statistically by guaranteeing that these models generalize beyond the training data~\cite{bartlett2017spectrally,BartlettHarveyChristopherMehrabian_VCDimReLUMLPs_JMLR_2019}.  
On the one hand, the (universal) approximation literature studies MLPs that approximate continuous functions using a minimal number of neurons, and it typically overlooks the regularity needed for them to exhibit reliable statistical behaviour.  
In contrast, the learning theory literature studies restricted classes of MLPs which generalize beyond their training data by limiting their expressivity either through weights and biases restrictions~\cite{neyshabur2015pathSGD_NIPS,miyato2018spectral,virmaux2018lipschitz} or Lipschitz constraints~\cite{gouk2021regularisation}.
Practical deep-learning implementations, however, exhibit both of these characteristics: They are powerful approximators and reliably generalize. This means that the deep learning theory community still has not identified a class of MLPs that reflects the analytic and statistical properties of real-world deep learning implementations.

This paper addresses this gap between theory and practice by identifying a sub-class of deep ReLU MLPs that exhibits both \textit{optimal approximation rates} when approximating continuous functions on $[0,1]^d$ and which generalizes well, even when overparameterized.
Informally, our main finding is that deep ReLU MLPs of maximal regularity are optimal (universal) approximators.
Our sub-class of maximally regular ReLU MLPs exhibits several familiar properties of real-world deep learning implementations, such as sample interpolation~\cite{jin2023implicit,taheri2024generalization} and small derivatives~\cite{mulayoff2021implicit,heiss2023implicit}.


We find that it is enough to analyze the sub-class of real-valued ReLU MLPs $\mathcal{NN}_{\Delta,W}^{L,n}$ on $\mathbb{R}^d$ whose \textit{size} is constrained by a prescribed depth $\Delta$ and width $W$, and whose \textit{regularity} is limited by restricting their maximal $\alpha$-H\"{o}lder constant $L\ge 0$.  Furthermore, the trainable parameters in $\mathcal{NN}_{\Delta,W}^{L,\alpha,n}$ are highly structured in that, all the weights and biases in their first and last layer are at most $n$, while in all other intermediate layers, all biases are $0$ and all weights are in $\{ 0, \pm 1/2 \}$\footnote{When $\alpha=1$, our networks are $L$-Lipschitz and we drop the notational emphasis on $\alpha$.}.

Our main result shows that structural constraints, imposed by $L$ and $n$, does not hinder the optimal approximation power of $\mathcal{NN}_{\Delta,W}^{L,\alpha,n}$.
\begin{theorem}[Optimal Approximation by ReLU MLPs with Maximal Regularity]
\label{thm:main_IntroVersion}
For any $\alpha$-H\"{o}lder continuous functions $f:[0,1]^d \rightarrow[-n,n]$ (where $\alpha\in(0,1])$ with coefficient $L\ge 0$, there exists an \textbf{$\alpha$-H\"{o}lder continuous ReLU MLP $\Phi\in\mathcal{NN}_{W,D}^{L,\alpha,n}$ with the same coefficient $L$}, which has width $W \le 8d \left(
\left\lceil 
    \frac{1}{2} \left( \frac{L n}{d^{1-\alpha}} \right)^{1/\alpha} \right\rceil
+1 \right)^d$, depth $D \le \lceil \log_2{d} \rceil + 4$, at most 
$16d\left(
\left\lceil 
    \frac{1}{2} \left( \frac{L n}{d^{1-\alpha}} \right)^{1/\alpha} \right\rceil
+1 \right)^d$ nonzero parameters, and it satisfies
\[
        \max_{\mathbf{x}\in [0,1]^d}\,
            |f(\mathbf{x})-\Phi(\mathbf{x})|
    \le 
        \frac{d}{n}
.
\]
\end{theorem}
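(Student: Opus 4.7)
The plan is to build $\Phi$ as an exact ReLU MLP realization of the continuous piecewise-linear (PL) interpolant of $f$ on a sufficiently fine Kuhn triangulation of $[0,1]^d$. Concretely, set $N \eqdef \bigl\lceil \tfrac{1}{2}(Ln/d^{1-\alpha})^{1/\alpha}\bigr\rceil + 1$ and $h = 1/N$, partition $[0,1]^d$ into $N^d$ axis-aligned hypercubes of side $h$, and subdivide each cube into the $d!$ Kuhn simplices indexed by permutations $\sigma$: $S_\sigma$ has ordered vertices $v_0,\dots,v_d$ with $v_k - v_{k-1} = h\,e_{\sigma(k)}$. Let $\tilde f$ be the unique continuous function on $[0,1]^d$ that is affine on each $S_\sigma$ and agrees with $f$ at every grid vertex.

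For the uniform-error bound, on any simplex $S$ the barycentric expansion $x = \sum_k \lambda_k v_k$ combined with $(L,\alpha)$-H\"older continuity of $f$ gives
\[
    |f(x)-\tilde f(x)| \;\le\; \sum_k \lambda_k\,|f(x)-f(v_k)| \;\le\; L\,(\operatorname{diam} S)^{\alpha} \;\le\; L(h\sqrt{d})^{\alpha},
\]
and the integer $N$ is calibrated precisely so that $L(h\sqrt d)^{\alpha} \le d/n$.

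The delicate step is to show that $\tilde f$ inherits the H\"older coefficient $L$ with no inflation. Naive Euclidean gradient estimates on $\tilde f|_{S_\sigma}$ produce a H\"older constant that grows with $d$, which would force a strictly larger network. I would exploit the defining structure of Kuhn simplices: adjacent vertices $v_{k-1},v_k$ differ by a single axis step $h\,e_{\sigma(k)}$, so the axis-aligned difference quotients of $\tilde f$ coincide with one-dimensional H\"older increments of $f$. Given $x,y\in[0,1]^d$, I would construct an axis-aligned polygonal path from $x$ to $y$ whose segments are monotone in the coordinate ordering that selects the relevant Kuhn simplices, chain the one-dimensional H\"older bounds along these segments, and invoke subadditivity of $t\mapsto t^{\alpha}$ to conclude $|\tilde f(x)-\tilde f(y)| \le L\|x-y\|^{\alpha}$.

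Finally, I would realize $\tilde f$ exactly by a ReLU MLP with the prescribed structure. Within each cell, the Kuhn hat function at the interior vertex indexed by $T\subseteq[d]$ admits the closed form $\phi_{T}(x)=\bigl(\min_{i\in T}x_i-\max_{j\notin T}x_j\bigr)^{+}$, so $\tilde f$ assembles into $N^d$ cell-wise collections of $d$-ary $\min$/$\max$ aggregations, a ReLU, and a weighted sum. Using the identities
\[
    \min(a,b)=\tfrac12(a+b)-\tfrac12\mathrm{ReLU}(a-b)-\tfrac12\mathrm{ReLU}(b-a),\qquad
    \max(a,b)=\tfrac12(a+b)+\tfrac12\mathrm{ReLU}(a-b)+\tfrac12\mathrm{ReLU}(b-a),
\]
a balanced binary min/max tree computes each aggregation in depth $\lceil\log_2 d\rceil$ using only weights in $\{0,\pm\tfrac12\}$ and zero biases. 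The first affine layer (coefficients of magnitude at most $n$) emits the per-cell local coordinates and injects the values $f(v)$; the final affine layer (again with coefficients bounded by $n$) sums the cell-wise hat function outputs. Counting $N^d$ cells and $\mathcal{O}(d)$ neurons each, together with a constant depth overhead for the affine preprocessing and combination, yields the claimed width $8d(N+1)^d$, nonzero-parameter count $16d(N+1)^d$, and depth $\lceil\log_2 d\rceil+4$. The main obstacle is the H\"older-preservation step: a generic PL-interpolation argument loses a factor growing in $d$, and avoiding it forces the proof to rely intrinsically on the axis-aligned combinatorics of Kuhn simplices --- the same structure that subsequently makes the $\{0,\pm\tfrac12\}$-weight sorting-network realization possible.
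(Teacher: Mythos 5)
Your architecture --- a PL interpolant on a Kuhn triangulation realized exactly by ReLU min/max trees with $\{0,\pm\tfrac12\}$ hidden weights --- is the paper's construction, and your hat-function formula agrees with theirs on $[0,1]^d$. However, the crux of the theorem, that the interpolant $\tilde f$ inherits the H\"older coefficient $L$, contains a genuine gap. You propose to chain one-dimensional H\"older estimates along an axis-aligned polygonal path from $x$ to $y$ and ``invoke subadditivity of $t\mapsto t^\alpha$.'' That inequality, $(a+b)^\alpha\le a^\alpha+b^\alpha$, runs the wrong way: if $z_0=x,\dots,z_m=y$ are your path vertices with $\sum_i\|z_{i+1}-z_i\|_1=\|x-y\|_1$, chaining yields $|\tilde f(x)-\tilde f(y)|\le L\sum_i\|z_{i+1}-z_i\|_1^\alpha$, and subadditivity tells you this sum is \emph{at least} $\|x-y\|_1^\alpha$, not at most. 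For $\alpha<1$ chaining inflates the coefficient by up to $m^{1-\alpha}$, which is precisely the $d$-dependent loss you identified as the obstacle; this is also why the paper calls out the sub-Lipschitz case as the hard one (for $\alpha=1$ chaining is fine because the inequality becomes an equality). A further wrinkle is that along a fiber $x_1=t$, $x_j=c_j$ with non-lattice $c_j$, the breakpoints of $\tilde f$ are not grid points where $\tilde f=f$, so the restriction is not a one-dimensional interpolant of $f$ and even the one-dimensional preservation lemma does not apply directly.

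The paper's Lemma~\ref{lem:CPWLApprox_wRegControl} avoids chaining entirely. It constructs a \emph{pair} of finite linear paths $\gamma_{\mathbf x},\gamma_{\mathbf y}$, each confined to a single simplex, starting at lattice points (where $\Phi=f$), passing through $\mathbf x$ and $\mathbf y$ at a common time, and --- this is the geometric content of the Kuhn structure --- arranged so that $t\mapsto\|\gamma_{\mathbf x}(t)-\gamma_{\mathbf y}(t)\|_1$ is \emph{linear in $t$}. Since $\Phi$ is affine along each path, $\Phi(\mathbf x)-\Phi(\mathbf y)$ is a convex combination of the endpoint differences, and \emph{concavity of $\omega$} (not subadditivity of $t^\alpha$) delivers $|\Phi(\mathbf x)-\Phi(\mathbf y)|\le\omega(\|\mathbf x-\mathbf y\|_1)$; an induction on the sum of the face dimensions of the two endpoints closes the argument. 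You would need this two-path-plus-concavity-plus-induction mechanism, not chaining. A secondary, smaller point: the theorem's $L$ is an $\ell^1$-H\"older constant, but you measure diameter in the Euclidean norm and use a crude diameter bound $L(\operatorname{diam}S)^\alpha$; the paper instead bounds $\sum_k\lambda_k\|x-v_k\|_1\le d/(2N)$ and feeds that through $\omega$, which is what makes the stated width constant $8d(\lceil\tfrac12(Ln/d^{1-\alpha})^{1/\alpha}\rceil+1)^d$ come out exactly rather than up to a $2^d$ slack.
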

In fact, the full version of Theorem~\ref{thm:main_IntroVersion} (formulated in Theorem~\ref{thm:MainTheorem_StableUniversalApproximation}) guarantees that $f$ may also be interpolated on a sufficiently fine dyadic grid in $\mathbb{Z}^d$ while exactly matching the H\"{o}lder exponent and constant of the target function.  Therefore, a direct consequence of our main result (Corollary~\ref{cor:McShane}) shows that neural networks can solve the McShane \textit{extension problem}, see~\cite{McShaneOGPaper1993}, on finite dyadic subsets of the cube $[0,1]^d$.  Indeed, such Lipschitz, H\"{o}lder, and uniformly-continuous extension problems have a long and active studied mathematical history~\cite{flett1974extensions,johnson1984extensions,matouvskova2000extensions,lee2005extending,oberman2008explicit,naor2017lipschitz,brue2021linear,beer2020mcshane,ambrosio2020linear}.  The following version of our (Corollary~\ref{cor:McShane}) is the first such extension guarantee for neural networks.
\begin{corollary}[Neural Networks Can Extend H\"{o}lder Functions on Finite Subsets of Dyadic Grids]
\label{cor:McShane__IntroVersion}
Fix any $L\ge 0$, $\alpha\in (0,1]$, any finite $A\subseteq [0,1]^d\bigcap \bigcup_{n \in \mathbb{N}_+}\, \frac1{2^n} \mathbb{Z}^d$, $\alpha$-H\"{o}lder $f:A\to \mathbb{R}$ with coefficient $L$, there exists an $\alpha$-H\"{o}lder ReLU MLP $\Phi:\mathbb{R}^d\to \mathbb{R}$ with coefficient $L$ extending $f$; i.e.\
\[
f(x)=\Phi(x) 
\mbox{ for all } x\in A
.
\]
\end{corollary}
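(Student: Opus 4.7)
The plan is to combine the classical McShane--Whitney extension theorem with the interpolating refinement of Theorem~\ref{thm:main_IntroVersion} promised in the discussion above (namely, Theorem~\ref{thm:MainTheorem_StableUniversalApproximation}).

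First I would extend $f$ from $A$ to all of $[0,1]^d$ via the explicit McShane formula
\[
\tilde f(x)\,\eqdef\,\inf_{a\in A}\bigl(f(a)+L\|x-a\|^\alpha\bigr).
\]
Because $\|x-y\|^\alpha$ is itself a metric when $\alpha\in(0,1]$ (snowflaking), a standard computation shows that $\tilde f$ is $\alpha$-H\"older on $[0,1]^d$ with the \emph{same} coefficient $L$ and agrees with $f$ on $A$. Since $A$ is finite and $[0,1]^d$ is compact, $\tilde f$ is bounded, so one may fix $n\in\mathbb{N}_+$ with $\|\tilde f\|_\infty\le n$.

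Next, because $A$ is finite there exists $k_0\in\mathbb{N}_+$ such that $A\subseteq \tfrac{1}{2^{k_0}}\mathbb{Z}^d\cap[0,1]^d$. I would then invoke the full interpolating theorem on the target $\tilde f$, choosing its resolution parameter large enough that the underlying Kuhn-triangulated dyadic grid on $[0,1]^d$ refines $\tfrac{1}{2^{k_0}}\mathbb{Z}^d\cap[0,1]^d$ and therefore contains $A$ as a subset of its vertices. This yields a ReLU MLP $\Phi\in\mathcal{NN}^{L,\alpha,n}_{W,D}$ which is $\alpha$-H\"older on $\mathbb{R}^d$ with coefficient exactly $L$ and which \emph{exactly} interpolates $\tilde f$ on every vertex of that grid; in particular $\Phi(a)=\tilde f(a)=f(a)$ for every $a\in A$, giving the desired extension.

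The principal obstacle is verifying that the Kuhn-triangulated grid produced by the full main theorem can always be chosen fine enough to contain $A$; this is precisely where the dyadic hypothesis on $A$ enters, since the approximating grids used in the construction are themselves dyadic and cofinal in the lattice of dyadic refinements of $[0,1]^d$. Everything else is essentially bookkeeping: the size of $\Phi$ is irrelevant to the existence statement, and the \emph{exact} preservation of the H\"older constant $L$ (rather than mere boundedness of it) is the non-trivial feature of the main theorem that makes this McShane-type corollary fall out with no further work.
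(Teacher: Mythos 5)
Your proposal is correct and follows essentially the same route as the paper: first extend $f$ from $A$ to $[0,1]^d$ via a McShane-type construction preserving the H\"older coefficient, then apply the interpolating Theorem~\ref{thm:MainTheorem_StableUniversalApproximation} on a grid $\{0/n,\dots,n/n\}^d$ fine enough (i.e.\ $n$ a suitable power of $2$, using finiteness of $A$ and nestedness of the dyadic lattices) to contain $A$. The only cosmetic difference is that you use the explicit McShane--Whitney infimum formula for the H\"older case, whereas the paper cites the more general extension theorem of Beer--Garrido \cite{beer2020mcshane} (which it needs anyway for the general modulus-of-continuity version, Corollary~\ref{cor:McShane}); both choices are valid here.
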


If we relax the Lipschitz and weights constraints in Theorem~\ref{thm:main_IntroVersion}, by sending both $L$ and $n$ to infinity, then we recover the optimal approximation guarantees for \textit{unconstrained} ReLU MLPs, which are now well-known in the approximation theory literature.  However, Theorem~\ref{thm:main_IntroVersion} shows that optimal approximation rates can be achieved while simultaneously imposing regularity on $\Phi$.

Our main result, namely Theorem~\ref{thm:main_IntroVersion}, confirmed that the class of MLPs $\mathcal{NN}_{\Delta,W}^{L,n}$ is rich/large enough to be a universal approximator.  The statistical viability of this class is guaranteed by our second result, which shows this class is tame/small enough to generalize well.

We consider training data for a classification task $\{(X_n,Y_n)\}_{n=1}^N$ in $\mathbb{R}^d\times [0,1]$.  Our second result quantifies the (uniform) \textit{generalization gap} for the class $\mathcal{NN}_{\Delta,W}^{L,n}$, which is defined as the largest absolute difference between the true risk $\mathcal{R}(\Phi)$, computed over the true distribution $(X_1,Y_1)\sim \mathbb{P}$, and the empirical risk $\mathcal{R}_N(\Phi)$, computed on the training data, for any ReLU MLP $\Phi$ in $\mathcal{NN}_{\Delta,W}^{L,n}$, where
\begin{equation*}
    \mathcal{R}(\Phi) \eqdef \mathbb{E}_{(X,Y)\sim \mathbb{P}}\big[\ell(\Phi(X),Y)\big]
    ,\quad
    \mathcal{R}_N(\Phi) \eqdef \frac1{N}\sum_{n=1}^N\ell(\Phi(X_n),Y_n)
.
\end{equation*}
\begin{theorem}[Nearly Optimal Sample Complexity Without Explosion for Deep and Wide MLPs]
\label{thm:RademacherBound__IntroVersion}
Given the sample set $\{(X_n,Y_n)\}_{n=1}^N$ where $\{X_n\}_{n=1}^N$ are i.i.d.\ centered, sub-Gaussian random variables with normalized covariance $\mathbb{E}[X_1X_1^{\top}]=\frac{1}{N}\mathbf{I}_d$, and $\mathbf{I}_d$ is the $d\times d$ identity matrix then, for each $\delta \in (0,1)$ 
\begin{equation*}
\begin{aligned}
    \sup_{\Phi\in \mathcal{NN}_{\Delta,W}^{L,1}} \big|\mathcal{R}(\Phi)-\mathcal{R}_N(\Phi)\big|
\in 
    \tilde{\mathcal{O}}\Biggl(
            \frac{
                \sqrt{\log(4/\delta)}
            }{\sqrt{N}}
    +
        \min\biggl\{
                \frac{
                    W^{3\Delta/2}
                }{
                    2^{\Delta}
                    \sqrt{N}
                }
        ,
                \frac{
                    L^{d/(d+3)}
                }{\sqrt[d+3]{N}}
        \biggr\}
    \Biggr)
\end{aligned}
\end{equation*}
holds with probability at least $1-\delta$.
\end{theorem}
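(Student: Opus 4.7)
The plan is to follow the standard \emph{symmetrization $+$ Rademacher complexity $+$ concentration} pipeline, and to bound the Rademacher complexity of the class $\mathcal{NN}_{\Delta,W}^{L,1}$ in two complementary ways whose minimum then appears in the statement. First I would apply the symmetrization inequality together with McDiarmid's bounded differences inequality to the bounded classification loss to reduce the uniform generalization gap to
\[
    2\,\mathbb{E}\bigl[\mathfrak{R}_N(\ell\circ\mathcal{NN}_{\Delta,W}^{L,1})\bigr]
    + C\sqrt{\log(4/\delta)/N}
\]
with probability at least $1-\delta$, which directly produces the isolated concentration term. The Ledoux--Talagrand contraction inequality then peels off the Lipschitz loss and leaves the task of bounding $\mathbb{E}[\mathfrak{R}_N(\mathcal{NN}_{\Delta,W}^{L,1})]$.

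For the first term inside the $\min\{\,\cdot\,\}$ I would exploit the heavy architectural structure of $\mathcal{NN}_{\Delta,W}^{L,1}$: all middle-layer weights lie in the discrete set $\{0,\pm 1/2\}$ and all middle-layer biases vanish, while only the first and last layers carry continuous parameters of magnitude at most $n=1$. I would construct an $\epsilon$-cover of the class by combining a continuous cover of the outer layers with an enumeration of the middle-layer configurations, and propagate the perturbations through the $\Delta$-fold composition via the per-layer operator-norm bound $W/2$ (available since each middle layer is a $W\times W$ matrix with entries in $\{0,\pm 1/2\}$). Each middle layer then contributes a multiplicative factor of order $W^{3/2}/2$ to the covering --- the $W^{3/2}$ from width-dependent counting together with the operator-norm bound, and the explicit $1/2$ from the contraction due to the $\tfrac12$-scaled weights. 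Feeding the resulting covering estimate into Dudley's entropy integral (or, equivalently, into Massart's finite-class lemma applied to a carefully chosen skeleton) yields the first bound $\mathfrak{R}_N(\mathcal{NN}_{\Delta,W}^{L,1})\lesssim W^{3\Delta/2}/(2^{\Delta}\sqrt{N})$, where the geometric $2^{-\Delta}$ factor arises precisely because the middle layers compose contractively.

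For the second term I would discard the architectural information entirely and use only the global $L$-Lipschitz constraint satisfied by every $\Phi\in\mathcal{NN}_{\Delta,W}^{L,1}$. Sub-Gaussianity of $\{X_n\}_{n=1}^N$ with covariance $\tfrac{1}{N}I_d$ implies, via a union bound across the $N$ samples, that $\max_n\|X_n\|\lesssim R$ with high probability for an effective radius $R\asymp\sqrt{d/N}$ up to logarithmic factors. Restricted to the observed sample, every $\Phi$ therefore behaves as an $L$-Lipschitz function on a ball of radius $R$, for which the classical metric entropy bound is $\log\mathcal{N}(\epsilon)\lesssim (LR/\epsilon)^d$. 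Plugging this into Dudley's entropy integral with a truncation at an optimal $\eta$ (necessary because the integrand diverges at $0$ for $d\ge 3$) and then optimising over $\eta$ would yield the second rate $L^{d/(d+3)}N^{-1/(d+3)}$. The non-standard exponent $d+3$ (rather than the $d+2$ one would naively guess from Dudley on a fixed cube) reflects the fact that the radius $R=R(N)$ is itself an $N$-dependent, concentration-controlled quantity, which enters the Dudley optimum nontrivially.

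The main obstacle is the Lipschitz-only bound, since the Dudley chaining has to be executed on a \emph{random} domain whose radius is controlled by sub-Gaussian tails of $\max_n\|X_n\|$; one must combine this high-probability input concentration with the chaining estimate so that all logarithmic penalties are absorbed into $\tilde{\mathcal{O}}(\cdot)$ and do not damage the leading rate. The architectural bound is more mechanical but still requires care in propagating covering perturbations layer by layer through the specific $\{0,\pm 1/2\}$ structure, rather than a generic Lipschitz bound which would lose the crucial $2^{-\Delta}$ factor. Taking the minimum of the two bounds and combining with the concentration term from the first step then gives the claim.
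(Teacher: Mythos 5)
Your high-level plan --- symmetrization plus concentration, then bound the empirical Rademacher complexity by the minimum of an architecture-aware estimate and a Lipschitz-class estimate --- does match the paper's overall structure, which uses the containment $\mathcal{NN}_{\Delta,W}^{L,n}\subset\mathcal{NN}_{\Delta,W}\cap\operatorname{Lip}(\mathbb{R}^d,[0,1],L)$. The problems lie in the technical content of both branches of the minimum. On the architectural branch, note that the class $\mathcal{NN}_{\Delta,W}^{L,n}$ is defined by the \emph{continuous} magnitude constraints in~\eqref{eq:weight_constants}, i.e.\ $\|\mathbf{W}^{(l)}\|_\infty\le\tfrac12$ for the intermediate layers; the discrete set $\{0,\pm\tfrac12\}$ is only a feature of the particular constructed approximator of Proposition~\ref{proposition:ParamterEstimates__RefinedVersion}, not of the hypothesis class. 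So your proposed ``enumeration of the middle-layer configurations'' is an enumeration of a continuum, and there is nothing finite to enumerate. The paper instead invokes the spectral-norm Rademacher bound of Bartlett--Foster--Telgarsky \cite{bartlett2017spectrally}, feeds in per-layer bounds $s_l\le\tfrac12 W^{3/2}$ induced by the magnitude constraint (this is the true source of the $W^{3\Delta/2}/2^{\Delta}$ factor), and controls the remaining data-dependent term $\|\mathbf{X}\|_F\le\sqrt{N}\,\sigma_{\max}(\mathbf{X})$ via Gordon's theorem applied to $\sqrt{N}\,\mathbf{X}$; it is \emph{here}, and only here, that the normalized covariance $\mathbb{E}[X_1X_1^\top]=\tfrac1N\mathbf{I}_d$ is used, through the resulting isotropy of $\sqrt{N}\,\mathbf{X}$.

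On the Lipschitz branch you misattribute the role of the data-concentration hypothesis. The bound $\mathfrak{R}_{\mathcal{Z}}\big(\operatorname{Lip}(\mathbb{R}^d,[0,1],L)\big)\le C_d\,L^{d/(d+3)}N^{-1/(d+3)}$ is invoked from \cite[Lemma 25]{HouKratsios_JMLR_2023} and does not use the normalized covariance assumption at all. Your proposed derivation via Dudley chaining on a random domain of radius $R\asymp\sqrt{d/N}$ does not produce this exponent: with $\log\mathcal{N}(\epsilon)\lesssim(LR/\epsilon)^d$ and the standard truncated Dudley optimisation one gets
\[
\mathfrak{R}_N\;\lesssim\;\inf_{\alpha>0}\Bigl(\alpha+\tfrac{(LR)^{d/2}}{\sqrt{N}}\,\alpha^{1-d/2}\Bigr)\;\asymp\;LR\,N^{-1/d}\;\asymp\;L\sqrt{d}\,N^{-(d+2)/(2d)},
\]
a different rate from $N^{-1/(d+3)}$. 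The claim that the $N$-dependent radius is what explains the exponent $d+3$ is therefore unsubstantiated, and the calculation you sketch would give the wrong power of $N$. You should instead treat the Lipschitz branch as a black-box chaining bound that is independent of the data distribution (as the paper does), and direct the sub-Gaussian/isotropy input exclusively at the architectural branch through the control of $\sigma_{\max}(\mathbf{X})$.
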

Our generalization bound in Theorem~\ref{thm:RademacherBound__IntroVersion} guarantees a nearly optimal sample complexity of $\mathcal{O}(\log(N)/\sqrt{N})$ observed in parametric generalization bounds of ReLU MLPs, e.g.~\cite{neyshabur2015pathSGD_NIPS,bartlett2017spectrally,BartlettHarveyChristopherMehrabian_VCDimReLUMLPs_JMLR_2019}.  However, unlike parametric generalization bounds, our bound in Theorem~\ref{thm:RademacherBound__IntroVersion} converges even in the overparameterized regime where $\Delta$ and $W$ are allowed to be arbitrarily large (compared to the sample size $N$).  Typically, only non-parametric generalization bounds, such as e.g.~\cite{HouKratsios_JMLR_2023}, do not explode when $\Delta$ and $W$ are taken to be arbitrarily large since they can account for the enumerable parametric symmetries in large neural networks; see e.g.~\cite{entezari2021role,ainsworth2023git} for a discussion on these parametric symmetries.  However, unlike the non-parametric bounds, the sample complexity of generalization bound for $\mathcal{NN}_{\Delta,W}^{L,n}$ converges at a \textit{dimension-free} rate.

\begin{figure}[ht]
    \centering
    \begin{subfigure}[b]{0.45\textwidth}
    \centering
        \includegraphics[width=.8\linewidth]{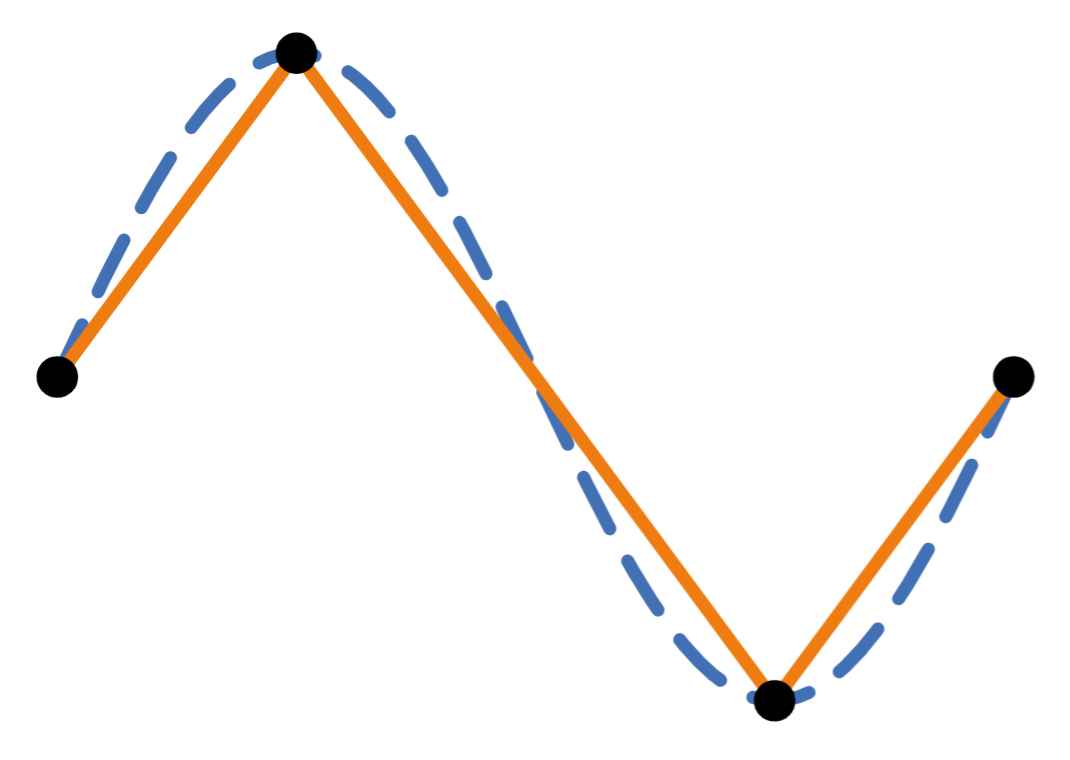}
        \caption{Our Strategy - ``Maximally Regular'' Networks: The networks constructed in Theorem~\ref{thm:main_IntroVersion} interpolate a given grid linearly with the optimal slope between any two grid points.  These ReLU MLPs constructed from finite sample values of the target function can never have Lipschitz constant (or, more generally, modulus of regularity, as shown in Lemma \ref{lem:CPWLApprox_wRegControl}) exceeding that of the target function.   }
        \label{fig:Proof_Strategy_Idea__NEW}
    \end{subfigure}
    \hspace{5pt}
    \begin{subfigure}[b]{0.45\textwidth}
    \centering
        \includegraphics[width=0.8\linewidth]{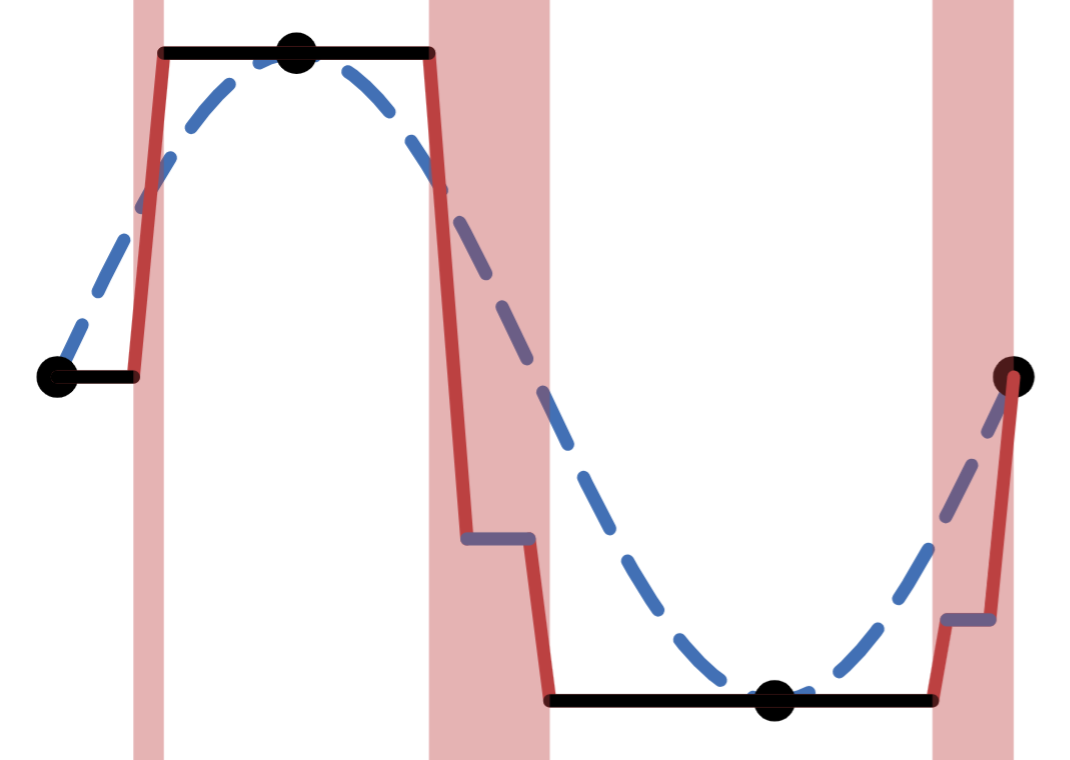}
        \caption{Standard Approach - ``Spiky'' Networks: Following methods such as~\cite{yarotsky2017error,ShenYangZhang_JMPA_OptApprx_ReLU}, ReLU MLPs are constructed by first computing piecewise constant approximators of the target function matching its values at grid points (\textbf{flat black lines}) outside of so-called \textit{trifling regions} (shaded in red).  The piecewise constant approximators are then ``glued together'' via linear interpolation (steep red lines). }
        \label{fig:Proof_Strategy_Idea__OLD}
    \end{subfigure}
    \caption{\textbf{How our method works:} 
    Our method (left) against the usual approach (right) of~\cite{yarotsky2017error,ShenYangZhang_JMPA_OptApprx_ReLU} for ReLU MLPs achieving the optimal convergence rate when approximating a (blue dashed) target function. Both methods memorize the value of the target function (\textbf{black dots}) at a specific set of grid points.
    \hfill\\
    \textbf{What issue does our construction resolve?} 
    The trouble in standard construction (right) is that as approximation becomes more accurate, these small trifling regions become very small, which can result in extremely steep red interpolating segments.  
    Using our Kuhn triangulation-based construction (for the multi-dimensional case), we can construct optimal ReLU approximators with \textit{no trifling regions}, meaning that we do not need these steep red segments; therefore, our ReLU approximators are not irregular.}
    \label{fig:Proof_Strategy_Idea}
\end{figure}

\subsection{Technical Contributions: New Proof Techniques}
We point out that the case where $f$ is only Lipschitz is relatively trivial; see~\cite{KARNIK2024115557,pan2024metricentropylimitsnonlineardynamical,riegler2024generatingrectifiablemeasuresneural,yarotsky_OptimalApproximation_VeryDeep_PMLP_COLT_2018} for similar constructions and see Figure \ref{fig:different_hats} for their illustrations. However, Theorem~\ref{thm:main_IntroVersion} applies to a much broader class of functions (e.g. H\"{o}lder functions), which are beyond the scope of~\cite{KARNIK2024115557,pan2024metricentropylimitsnonlineardynamical,riegler2024generatingrectifiablemeasuresneural} and improved the result of~\cite{yarotsky_OptimalApproximation_VeryDeep_PMLP_COLT_2018}.  
Indeed, the sub-Lipschitz case is of high interest as H\"{o}lder functions are standard in stochastic analysis since almost every path of any non-degenerate continuous semi-martingale (e.g.\ diffusion process) is $\alpha$-H\"{o}lder for any $\alpha<1/2$ and all of rough path theory is concerned with integration against paths of sub-Lipschitz regularity~\cite{hambly2010uniqueness,friz2020course}, and sub-Lipschitz functions are standard in inverse-problems~\cite{frankowska1990some}.  Moreover, in analogy with challenges in rough path theory which was exclusively developed to overcome the challenges of pathwise integration against sub-Lipschitz functions, the Lipschitz case of Theorem~\ref{thm:main_IntroVersion} is relatively straightforward. In contrast, the (sub-)H\"{o}lder case is highly non-trivial. This is because, in the Lipschitz case, one only need to guarantee local Lipschitz-ness (i.e. Lipschitz on each simplex) to have global Lipschitz-ness. However, situations are far more complicated in the general case. For example, local H\"{o}lder continuity (i.e. H\"{o}lder continuous on each simplex) does not guarantee global H\"{o}lder continuity with the same H\"{o}lder constants. To prove this result, we invented a new technique based on path arguments; see the proof of Lemma~\ref{lem:CPWLApprox_wRegControl}.

\begin{figure}[!htbp]
    \centering
    \includegraphics[width=1.0\linewidth]{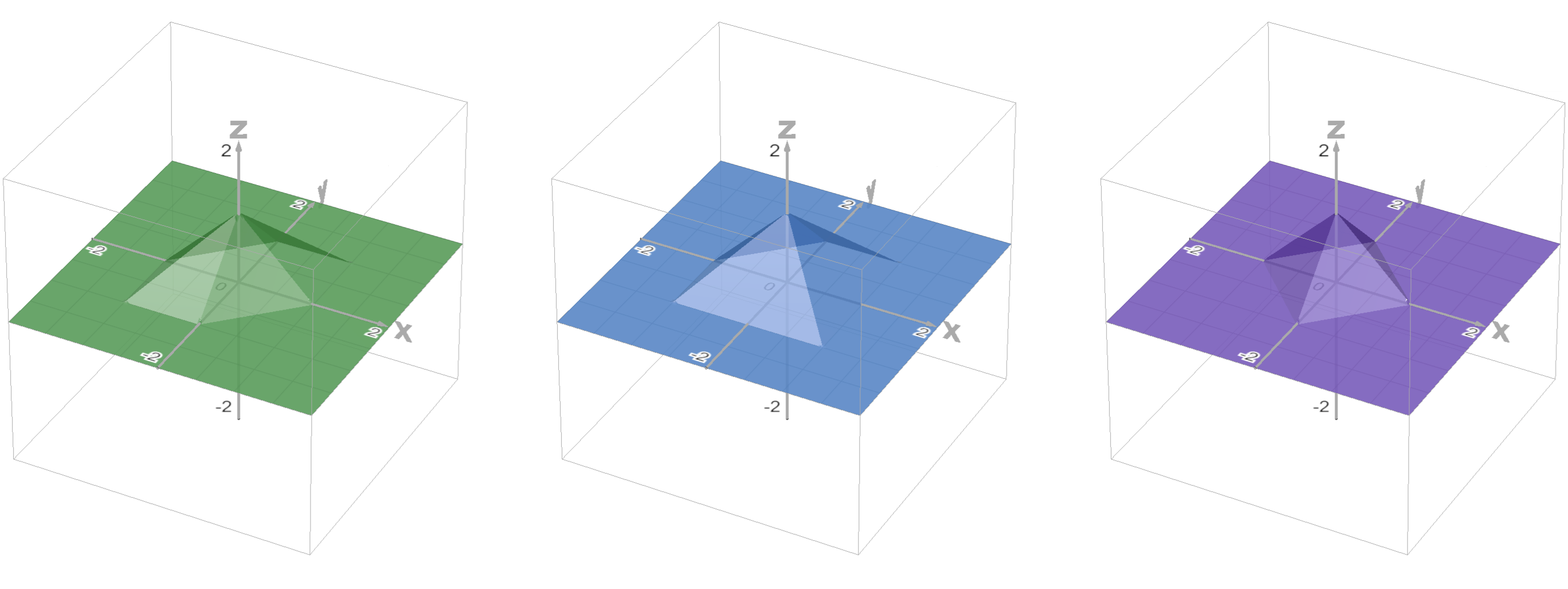}
    \caption{
    Different 2D ``hat functions'' in the literature. These hat functions, or sometimes called ``spike functions'', are used in the partition of unity of $\mathbb{R}^d$. \textbf{Left:} the 2D hat function in \cite{KARNIK2024115557,pan2024metricentropylimitsnonlineardynamical,riegler2024generatingrectifiablemeasuresneural}. \textbf{Middle:} the 2D hat function in \cite{yarotsky_OptimalApproximation_VeryDeep_PMLP_COLT_2018}. \textbf{Right:} our 2D hat function from \textit{this paper}. Unlike the hat functions in this literature, the shape of our hat function varies across different locations; see Figure \ref{fig:hat_2d} for further illustrations and Equation \ref{eq:hat} for their formulas.
    }
    \label{fig:different_hats}
\end{figure}

Our construction of universal MLPs, with one-dimensional cartoons illustrated by~Figure~\ref{fig:Proof_Strategy_Idea}, relies on a new proof technique based on the \textit{Kuhn triangulation} of the hypercube $[0,1]^d$, see~\cite{KuhnTriangulationPaper_1960}, and not on the trifling regions perfected by~\cite{ShenYangZhang_JMPA_OptApprx_ReLU}.  
This new geometric construction allows us to construct approximating ReLU MLPs of maximal regularity.  Surprisingly, Theorem~\ref{thm:KuhnUnique} shows that there is no other triangulation (up to reflections) which can be used to construct an approximating ReLU MLP with a minimal Lipschitz constant; thus, our new construction is essentially unique.

\begin{remark}[Approximation of Smooth Functions]
\label{rem:On__fig:Proof_Strategy_Idea}
The approximate piecewise polynomial variant of this construction sketched in Figure~\ref{fig:Proof_Strategy_Idea__OLD}, which applies for smooth enough functions and which was developed by~\cite{yarotsky_OptimalApproximation_VeryDeep_PMLP_COLT_2018} and subsequently refined by~\cite{yarotsky2020phase,petersen2018optimal,lu2021deep}, uses approximate implementations of piecewise polynomials instead of piecewise constant functions ``outside trifling regions''.   Nevertheless, these methods still ``glue together'' several of these (approximate) local polynomial approximators using steep interpolators on these little ``trifling regions''.  Thus, they, too, still can have little regions where the resulting ReLU MLPs have very steep linear segments.
\end{remark}

\begin{remark}[Approximation in Lipschitz Norm]
\label{rem:LipImpossible}
Theorem~\ref{thm:main_IntroVersion} guarantees that any bounded $L$-Lipschitz function can be uniformly approximated by bounded ReLU MLPs whose Lipschitz constant \textit{never exceeds} $L$. Thus, the approximation happens within the compact (thus separable), by Arzel\`a-Ascoli theorem, set of uniformly bounded $L$-Lipschitz functions in the Banach space of continuous functions on the $d$-dimensional cube $C([0,1]^d)$. 
Our result \textit{does not} claim that one can 
approximate any Lipschitz function $f:[0,1]^d\to \mathbb{R}$ with ReLU MLPs with respect to the Lipschitz norm $\|f\|_{Lip} \eqdef \max_{\mathbf{x} \in [0,1]^d} |f(\mathbf{x})| + \operatorname{Lip}(f)$; which is impossible as the space of the Banach space of Lipschitz functions on $[0,1]^d$ with this norm is not separable whereas the set of ReLU MLPs is.
\end{remark}

\subsection{Further Results}
\label{s:Intro__ss:FurtherResults}

Our analysis yields additional results concerning the expressivity of regular ReLU MLPs.
\paragraph{Memorization/Interpolation with ReLU MLPs of Minimal Lipschitz Constant}
The memorization (i.e.\ interpolation) problem for the MLP model dates back, at least, to~\citep[Theorem 5.1]{Pinkus_MLPApproximation_ActaNumerica_1999} where the author showed, amongst other things, that real-valued shallow MLPs with non-polynomial activation function and width $n$ can memorize/interpolate $n+1$ points in a domain of $\mathbb{R}^d$. 
The following is a shallow version of our main \textit{universal interpolation} result (Theorem~\ref{prop:UnivMemorization_1D}), which guarantees interpolation of $N$ data points with minimal parameter usage and minimal Lipschitz constant is possible with ReLU MLPs depending on $\mathcal{O}(N)$ trainable parameters, constant depth, and $\mathcal{O}(\sqrt{N})$ width.
  
\begin{theorem}[Optimal Interpolation with ReLU MLPs of Maximal Regularity (Shallow Version)]
\label{thrm:min_size_reg_interpol}
Let $((x_n,y_n))_{n=1}^N$ be distinct pairs of training datapoints in $\mathbb{R}\times \mathbb{R}$ (with $x_n<x_{n+1}$ for $n=1,\dots,N-1$).  There exists a ReLU MLP $\Phi:\mathbb{R}\to \mathbb{R}$ of width at most $2 \lceil \sqrt{N} \rceil$, depth $2$, and with at most $2N+8\lceil \sqrt{N} \rceil$ nonzero parameters such that
\[
    \Phi(x_n) = y_n
    \qquad
    \mbox{ for }n=1,\dots,N
\]
Furthermore, $\Phi$ is linear on the intervals $[x_n,x_{n+1}]$ for $n=1,\dots,N-1$, and constant on $(-\infty,x_1]$ and $[x_N,\infty)$.
\end{theorem}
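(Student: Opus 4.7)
My plan is to first identify the target function explicitly and then construct the MLP that realizes it. The uniquely determined $\Phi$ satisfying all the stated conditions is the clamped CPWL interpolant of the data: setting $s_n\eqdef (y_{n+1}-y_n)/(x_{n+1}-x_n)$ for $1\le n<N$ and $s_0\eqdef s_N\eqdef 0$, the slope-increments $b_n\eqdef s_n-s_{n-1}$ satisfy $\sum_{n=1}^N b_n=0$, and the canonical shallow representation
\[
\Phi(x)=y_1+\sum_{n=1}^{N}b_n\,\sigma(x-x_n)
\]
exactly interpolates each $(x_n,y_n)$, is affine on each $[x_n,x_{n+1}]$, and is constant outside $[x_1,x_N]$ (using $\sum_n b_n=0$ for $x>x_N$). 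This already realizes $\Phi$ as a depth-$1$ ReLU MLP of width $N$, so the task is to compress the width to $2\lceil\sqrt N\rceil$ while going to depth~$2$, without losing exact interpolation.

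I would group the breakpoints into $M\eqdef\lceil\sqrt N\rceil$ consecutive blocks $B_1,\dots,B_M$ of size at most $M$, and construct the first hidden layer from at most $2M$ ``reference ramps'' $h_j(x)=\sigma(x-u_j)$, where the $u_j$ are chosen at block boundaries. The algebraic identity
\[
\sigma(x-a)=\sigma\!\bigl(\sigma(x-u_j)-(a-u_j)\bigr)\quad\text{valid for }a\ge u_j
\]
lets each individual breakpoint ReLU $\sigma(x-x_n)$ with $n\in B_j$ be realized by a single layer-2 ReLU applied to an affine function of $h_j$. Naively this demands $N$ layer-2 neurons, so to compress further I would pair two layer-2 neurons per block, each receiving an affine combination of several $h_j$'s, and exploit the fact that a ReLU of a CPWL inner function with $M$ pieces contributes up to $2M$ new breakpoints (the $M$ input-ramp kinks plus up to $M$ zero-crossings of the inner affine combination). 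Splitting each slope-increment $b_n=b_n^{+}-b_n^{-}$ into its positive and negative parts assigns the two layer-2 neurons of block $j$ the two monotone CPWL pieces $\sum_{n\in B_j}b_n^{\pm}\sigma(x-x_n)$. The inter-hidden-layer sparsity pattern is block-wise: each layer-2 neuron for block $j$ is connected only to the $\le M$ layer-1 ramps whose $u_j$'s bracket $B_j$'s range, yielding at most $2M\cdot M=2N$ nonzero inter-hidden-layer weights and $\mathcal O(\sqrt N)$ remaining parameters, matching $2N+8\lceil\sqrt N\rceil$.

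The main obstacle is this third step: showing that for each block $j$, the two paired layer-2 ReLUs can be wired so that the zero-crossings of their inner affine combinations lie exactly at the block's breakpoints $\{x_n\}_{n\in B_j}$ with the prescribed slope-changes $\{b_n^{\pm}\}_{n\in B_j}$. I would address this by an explicit per-block linear recurrence: sweeping through the breakpoints in order, the weights on the layer-1 ramps feeding each layer-2 neuron are chosen so that the partial sum of the inner affine combination crosses zero precisely at the next $x_n$ with the required slope. Gluing across blocks then preserves continuity and the global boundary condition $\sum_n b_n=0$, yielding constancy on $(-\infty,x_1]\cup[x_N,\infty)$ and the minimal Lipschitz constant $\max_n|s_n|=\operatorname{Lip}(\Phi)$.
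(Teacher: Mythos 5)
Your plan correctly identifies the target function, the canonical shallow representation $\Phi(x)=y_1+\sum_n b_n\sigma(x-x_n)$ with slope-increments $b_n$, and the overall strategy of using $M=\lceil\sqrt N\rceil$ blocks with first-layer ramps at block boundaries. However, there is a genuine gap at the compression step (your ``step 3''), and it is not an obstacle that a recurrence can get around: it is a structural impossibility.

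You propose two second-layer ReLUs per block, each applied to an affine combination of the first-layer ramps $h_i(x)=\sigma(x-u_i)$, where all the knots $u_i$ lie at block boundaries, and you want the zero-crossings of these inner affine combinations to land on all of the $\approx M$ interior breakpoints $\{x_n\}_{n\in B_j}$ of block $B_j$. This cannot be done. Every $h_i$ is linear on the open interval between consecutive block boundaries, hence so is any affine combination of them; a function that is linear on $(u_{j-1},u_j)$ can cross zero at most once there, and so its ReLU can introduce at most one new breakpoint inside block $B_j$. Your two paired neurons give at most two new breakpoints per block --- but each block contains about $\sqrt N$ of them. No choice of weights, no ``per-block linear recurrence,'' and no positive/negative split of $b_n$ can circumvent this: the obstruction lives in the piecewise-linear structure of the inner function, not in the coefficients.

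The paper's proof (Proposition~\ref{prop:fit_a_network_with_two_hidden_layers} and Corollary~\ref{cor:fit_a_network_with_two_hidden_layers}) resolves this by organizing the second hidden layer \emph{transversally} rather than per-block. A second-layer neuron is indexed not by a block but by a within-block position $j$ (together with a parity $r\in\{1,2\}$ and a sign). Its inner CPWL $g_j^{(\pm r)}$ has knots only at block boundaries, is designed to cross zero once per alternate block exactly at the $j$-th breakpoint of that block, and has its slope sign flip from one alternate block to the next so that the linear interpolation across the intervening block stays strictly one-signed and contributes no spurious zero-crossing. Since a function linear on each block \emph{can} cross zero once per block, this placement is feasible, and $\sigma(g_j^{(\pm r)})$ reproduces the prescribed derivative jump at exactly those points. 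With $j$ ranging over the $O(\sqrt N)$ positions within a block and two parities and two signs, the second hidden layer has width $O(\sqrt N)$, as required. Swapping your organizational principle --- from ``one pair of neurons handles all breakpoints of one block'' to ``one neuron handles the $j$-th breakpoint of every (alternate) block'' --- is the missing idea.
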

This (shallow) version of our main memorization/interpolation theorem matches the optimal (in the sense of VC-dimension) parameter usage of a ReLU interpolator, as in~\cite{vardi2021optimal}.  The construction of~\cite{vardi2021optimal}, however, can easily be seen to have a large Lipschitz constant, whereas the Lipschitz constant of our interpolating ReLU MLPs cannot be improved as they are exactly the best piecewise linear interpolator of the training data.  
It is worth noting that, in~\citep[Theorem 9.6]{petersen2024mathematical}, the authors recently constructed interpolating ReLU MLPs, which also obtain the optimal Lipschitz constant (with respect to the $\ell^1$ norm) in the multi-dimensional case.  Those networks, however, require depth $\mathcal{O}(\log(N))$ and width $\mathcal{O}(N)$, making them sub-optimal in terms of parameter usage.  Instead, the memorizers of Theorem~\ref{thrm:min_size_reg_interpol} are optimal for both literature streams as they both have minimal parameter usage and minimal Lipschitz constant. 

Our main one-dimensional \textit{universal interpolation theorem} (Theorem~\ref{prop:UnivMemorization_1D}) is a deep version of Theorem~\ref{thrm:min_size_reg_interpol}, where the user can \textit{exactly specify} the depth $L\ge 4$ as well as the layer widths of the interpolating ReLU MLP, subject to the restriction that no MLP layer (bottleneck) is less than $12$ and the total layer widths are at-least $\mathcal{O}(\sqrt{N}/L)$.  What is most interesting about that result is that, unlike the memorizers constructed in~\cite{vershynin2020memory}, which require very large information bottlenecks if the training data points are close together, both Theorem~\ref{thrm:min_size_reg_interpol} and its deep generalization in Theorem~\ref{prop:UnivMemorization_1D}, show that no such restriction is needed.  Moreover, the result shows that for most depth and width specifications, one can interpolate the training data.  

\paragraph{Optimal Global Lipschitz Constant}
The full version of Theorem~\ref{thm:main_IntroVersion}, namely Theorem~\ref{thm:MainTheorem_StableUniversalApproximation}, allows for functions of arbitrary regularity.  Furthermore, Corollary~\ref{cor:main_StableUniversalApproximation___with_OptimalLipschitzExtrapolation} guarantees that $\Phi$ can be chosen to be globally $L$-Lipschitz, not only $L$-Lipschitz on $[0,1]^d$, by increasing its depth by $1$.  A consequence of this is Corollary~\ref{cor:Extrapolation}, which shows that for any $L$-Lipschitz function there is an at-most $L$-Lipschitz ReLU MLP of comparable depth and width to that of Theorem~\ref{thm:main_IntroVersion}, such that
\[
    \sup_{\mathbf{x}\in \mathbb{R}^d}
    \,
        \underbrace{
            |f(\mathbf{x})-\Phi(\mathbf{x})|
        }_{\text{Approximation Error in }[0,1]^d}
        -
        \underbrace{
            L\max_{i=1,\dots,n} [(x_i-1)_+ + (-x_i)_+]
        }_{\text{Extrapolation Error beyond }[0,1]^d}
    \lesssim 
        \frac{1}{n}
\]
where for each $u\in\mathbb{R}$, $u_+\eqdef \max\{0,u\}$ and $\lesssim$ hides a dimensional constant of the order of $\mathcal{O}(Ld)$; here $\max_{i=1,\dots,n} [(x_i-1)_+ + (-x_i)_+]$ is simply the $\ell^{\infty}$ distance to from a point $\mathbf{x}\in \mathbb{R}^d$ to the hypercube $[0,1]^d$.
This is an additive formulation of the multiplicative global approximation result of~\cite{cuchiero2023global}.

\paragraph{Best Achievable Approximation of Discontinuous Target Function}
Our main version (Theorem~\ref{thm:MainTheorem_StableUniversalApproximation}) of Theorem~\ref{thm:RademacherBound__IntroVersion} even applies to ``regular'' but discontinuous target functions.  In these cases, the approximation error does not converge to $0$ but, rather, to the minimal achievable approximation error (details in Section~\ref{s:Prelim__ss:Regularity___sss:Disc}).  This latter result is rather interesting since it allows us to quantify the best \textit{uniform} approximation of discontinuous target functions instead of having to rely on an $L^p$, for $1\le p<\infty$, relaxation of the notion of approximability or via randomization as in~\cite{kratsios2022universal}.  However, as one would expect from the Uniform Limit Theorem; see e.g.~\cite[Theorem 21.6]{MunkresTopBook}, this approximation error \textit{does not converge to $0$} if the target function is genuinely discontinuous; however, we can quantify how small it can be made.

\paragraph{{Additional Structure of the weights and biases in the class $\mathcal{NN}_{\Delta,W}^{L,n}$}}
Proposition~\ref{proposition:ParamterEstimates__RefinedVersion} provides details on the weights and biases, as well as the encoding scheme, used to construct the universal sub-class of $\mathcal{NN}_{\Delta,W}^{L,n}$ described in Theorem~\ref{thm:main_IntroVersion}. 

\subsection{Organization of Paper}
\label{s:Intro__ss:Organization}
Our paper is organized as follows.  
Section~\ref{s:Related} overviews related results in the approximation theory literature, focusing on our main result (Theorem~\ref{thm:RademacherBound__IntroVersion}).
Section~\ref{s:Prelim} contains the preliminary notation, terminology, and background required for the formulation of our main results and their proofs.
Section~\ref{s:Main} contains our main results, these are the full version of our ``regular'' approximation theorem (Theorem~\ref{thm:main_IntroVersion}) as well as the version of our main generalization bound (Theorem~\ref{thm:RademacherBound}) with explicit constants.

Section~\ref{s:Consequences} discusses consequences of our main result, such as extrapolation rates (Corollary~\ref{cor:Extrapolation}), global Lipschitz regularity (Corollary~\ref{cor:main_StableUniversalApproximation___with_OptimalLipschitzExtrapolation}).  A discussion on how our result fit into the modern theory landscape is given in Section~\ref{s:Discussion}; where we also discuss the uniqueness (up to symmetries) of the Kuhn triangulation we used to construct our optimal ReLU MLPs of maximal regularity.  This later result shows that our new geometric argument is essentially unique and cannot be improved.

All proofs are relegated to Section~\ref{s:Proofs}.  Funding and acknowledgments are discussed in Theorem~\ref{s:Acknowledgments}.

\section{Related Literature}
\label{s:Related}

\subsection{Lipschitzness in Neural Networks}
\label{s:Related__ss:LipschitzNeuralNetworks}

Neural networks of a prescribed Lipschitz regularity are common in various areas of deep learning; with applications ranging from generative adversarial learning~\cite{arjovsky2017wasserstein,cao2019multi,korotinneural}, conditional distribution estimation~\cite{benezet2024learning}, to certifiable deep learning~\cite{fazlyab2024certified}, amongst many applications.  This has led to several optimization pipelines, e.g.~\cite{neyshabur2015pathSGD_NIPS,tsuzuku2018lipschitz,qu2023data,gouk2021regularisation,pauli2021training}, and architectural designs, e.g.~\cite{li2019orthogonal,lezcano2019cheap,araujo2023a}, enforcing Lipschitzness of trained neural networks.  Various computational tools, e.g.~\cite{jordan2020exactly,bhowmick2021lipbab,xue2022chordal}, have also been developed to efficiently estimate the Lipschitz constant of neural networks.  Despite extensive work, there is no guarantee that there will be no loss of model expressivity when imposing Lipschitz constraints; our main result fills this gap, thus adding additional theoretical foundations to deep learning areas relying on Lipschitz neural networks.  We mention that the Lipschitz constant of (untrained) randomly initialized neural networks has recently been studied in~\cite{geuchen2023upper}.

\subsection{Approximators with Parameter Restrictions}
\label{s:Related__ss:Related__WeightBounds}
To the best of the authors' knowledge, the tightest available approximation results in the literature, which provide weight and bias size limits, are given in~\cite [Theorem 3.1]{petersen2018optimal}.  There, the authors consider an $L^p$-type approximation, for $0<p<\infty$, of any $\alpha$-H\"{o}lder function on $[0,1]^d$, with $\alpha$-H\"{o}lder coefficient at-most $1$ is shown to be possible using a ReLU MLP depending on $\mathcal{O}(\varepsilon^{-\alpha/d})$ parameters, organized into at-most $(2 + \lceil \log_2(\alpha) \rceil ) (11+\alpha/d)$ layers, and whose weights all belong to $[-\varepsilon^{-s},\varepsilon^{-s}]$; where $s\ge 1$ is an integer depending at-least on the H\"{o}lder exponent ($\alpha$) of the target function and on the dimension.  For simplicity, examining the proofs of \citep[Lemmata A.3]{petersen2018optimal} one sees that $s\ge 7$.  Now,~\citep[Theorem 1]{park2021minimum} guarantees that the width of any universal (in the $L^p$ sense) class of ReLU MLPs must have a width at least $d+1$.  Therefore, these observations, together with the fact that the operator norm $\|\mathbf{A}\|_{op:2\to 2}$ of any $d\times d$ matrix by its componentwise $2$-norm bound (and the elementary bound on $\|\cdot\|_2\le \|\cdot\|_1$): $\|\mathbf{A}\|_{op:2\to 2}\le d \, \max_{i,j=1,\dots,d}\,|\mathbf{A}_{i,j}|$ implies that the upper-bound on the Lipschitz constant $\operatorname{Lip}(\hat{f})$ of ReLU MLP $L^p$ ($p<\infty$) approximator $\hat{f}$ which can be deduced from~\citep[Theorem 3.1]{petersen2018optimal} must be at least
\begin{equation}
\label{eq:LipschitzConstantWeightBounds}
    \frac{
        d^{(2 + \lceil \log_2(\alpha) \rceil ) (11+\alpha/d)}
    }{
        \varepsilon^{7\, 
            (2 + \lceil \log_2(\alpha) \rceil ) (11+\alpha/d)
        }
    }
\ge 
    \frac{
        d^{33}
    }{
        \varepsilon^{231 + 21\,\alpha/d}
    }
.
\end{equation}
Comparing the Lipschitz constant of a neural network whose weights achieve the lower-bound in~\eqref{eq:LipschitzConstantWeightBounds}, to the Lipschitz constant of our main result (Theorem~\ref{thm:main_IntroVersion}) shows that the latter significantly improve the guarantees in the literature; as~\eqref{eq:LipschitzConstantWeightBounds} diverges as the approximation error tends to $0$ while our Lipschitz constant remain at the optimum.  

Approximation of Lipschitz functions with values in $[-1,1]$ by MLPs with the $\operatorname{ReQU}$ activation function and weights in $[-1,1]$ is possible, however, there are no such guarantees for MLPs with the most standard $\operatorname{ReLU}$ activation function.  We mention the work of~\cite{belomestny2023simultaneous} which controls the statistical properties of $\operatorname{ReLU}$ MLPs with bounded weights and biases using their best approximation (assuming it is exogenously bounded) and a correction term.

\subsection{Approximation of Functions and Their Derivatives}
\label{s:Related__ss:Deriv}
The approximation of a function and its derivatives has drawn significant attention in the deep learning for partial differential equations (PDEs) literature~\cite{MR4376568,guhring2021approximation,MR4376559,MR3736669,MR3847747,lei2022solving,MR3847747,MR4514159,MR4539210,MR4555161}.
Guarantees that networks can approximate a function while also approximating its (at least first) derivative date back to~\cite{hornik1990universal_derivatives} with more recent quantitative guarantees being given by \cite{MR4131039,guhring2021approximation,belomestny2023simultaneous,neufeld2023universal} (not all of which use the $\operatorname{ReLU}$ activation function).  Recall that, by the mean value theorem, every once continuously differentiable function is Lipschitz on $[0,1]^d$ with its Lipschitz constant given by the maximum norm of its gradient thereon; and, as a partial converse, every Lipschitz function on $\mathbb{R}^d$ is differentiable almost-everywhere on $[0,1]^d$; see~\cite[Theorem 3.1.6]{Federer_GeometricMeasureTheory_Book_1968}.  
Thus, for the subclass of Lipschitz functions which are once continuously differentiable, these results guarantee that MLPs can approximate these maps on $[0,1]^p$ while also approximating their Lipschitz constant.  

However, there is no guarantee that MLPs can approximate these functions while exactly implementing their Lipschitz constant even locally.  More generally, there is no result that MLPs can approximate functions of lower regularity (e.g.\ H\"{o}lder of sub-H\"{o}lder functions) while also exactly matching their Lipschitz constant \textit{globally}. 

\subsection{Global Universal Approximation}
\label{s:Related__ss:GLobalApproximation}
Theorem~\ref{thm:MainTheorem_StableUniversalApproximation} provides exactly this guarantee showing, in particular, that $\operatorname{ReLU}$ MLPs can \textit{globally} implement concave moduli of continuity of any uniformly continuous functions (e.g.\ H\"{o}lder or Lipschitz functions) while locally approximating them in $[0,1]^d$.  A global guarantee of the extrapolation rate for an approximation on $[0,1]^d$ will also be provided in Corollary~\ref{cor:Extrapolation}, and this is possible due to our guarantee that we may exactly and globally match the modulus of continuity of the target function being approximated.

We note that there do exist \textit{qualitative} global approximation theorems in the deep learning~\cite{cuchiero2023global}, reservoir computing~\cite{LuydmillaJP_UAT_2018}, and Stone-Weirestrass type~\cite{giles1971generalization} approximation literature.  However, although each of those results is qualitative, it is currently not known what the extrapolation rates are for deep learning models, which are only guaranteed to provide a uniform approximation on $[0,1]^d$.


\subsection{Deficits of Parametric Generalization Bounds for Overparameterized MLPs}
Under the assumptions of Theorem~\ref{thm:RademacherBound__IntroVersion}, the Rademacher complexity bounds of~\citep[Theorems 8 and 12]{bartlett2002rademacher} and~\cite{bartlett2017spectrally}, together with some result on random matrices with independent rows in~\citep[Theorem 4.6.1]{Vershynin_HighDimensionalProbBook_2018} (see Appendix~\ref{s:learning_theory_proofs} for details) can only imply the following much weaker bound: for each $0<\delta<1$ the following holds
\begin{equation}
\label{eq:RademacherBound}
    \sup_{\Phi\in \mathcal{NN}_{\Delta,W}^{L,n}} \big|\mathcal{R}(\Phi)-\mathcal{R}_N(\Phi)\big|
\in 
    \mathcal{O}\Big(
        \frac{\sqrt{\log(4/\delta)}}{\sqrt{N}}
    \Big)
    +
    \tilde{\mathcal{O}}\biggl(
            \frac{1}{N^{3/2}}
        +
            \frac{
                W^{3\Delta/2}
                \sqrt{\log(4/\delta)}
            }{\sqrt{N}}
        \biggr)
\end{equation}
with probability at least $1-\delta$.
We also note that, the nearly optimal VC-bounds derived in~\cite{BartlettHarveyChristopherMehrabian_VCDimReLUMLPs_JMLR_2019} for ReLU MLPs also diverge when $W$ and $\Delta$ tend to infinity.

\section{Preliminaries}
\label{s:Prelim}
This section contains the preliminaries needed to formulate our results.  These include both the notation used in this manuscript, as well as the background terminology relating to multilayer perceptrons.

\subsection{Notation}
\label{s:Prelim__ss:Notation}
We use $\mathbb{R},\mathbb{N},\mathbb{N}_+,\mathbb{Z}$ to denote the set of real numbers, non-negative integers, positive integers, and integers, respectively.  For any $r>0$ we write $r\mathbb{Z}^d\eqdef \{x\in \mathbb{R}^d:\, \exists z\in \mathbb{Z}^d\, x=rz\}$.

For any $x\in\mathbb{R}$, let $\lfloor x \rfloor \eqdef \max\{n\in\mathbb{Z} : n \le x \}$ denote the floor of $x$, and $\lceil x \rceil \eqdef \min\{n\in\mathbb{Z} : n \ge x \}$ denote the ceiling of $x$. For any $n\in\mathbb{N}_+$, let $[n]$ denote the set of integers $\{1,2,\cdots,n\}$.

We denote the rectified linear unit (ReLU) activation function by $\sigma:\mathbb{R}\ni t \mapsto \max\{0,t\}\in \mathbb{R}$.  It will always be applied componentwise to any vector, by which we mean $\sigma(\mathbf{x})\eqdef (\sigma(x_i))_{i=1}^d$ for each $\mathbf{x}=(x_1,\cdots,x_d)\in\mathbb{R}^d$ and every $d\in \mathbb{N}_+$.  

Vectors in dimensions above $2$ are denoted by bold lowercase letters. Matrices are denoted by bold uppercase letters.

\subsection{Uniform Regularity}
\label{s:Prelim__ss:Regularity}
We first overview the standard notions of uniform continuity before introducing their discontinuous generalizations for which our approximation results hold.
\subsubsection{Uniform, Lipschitz, and H\"{o}lder Continuity}
\label{s:Prelim__ss:Regularity___sss:UniformContinuity}
A map $\omega:[0,\infty)\to [0,\infty)$ is called a modulus of continuity if $\omega$ is monotonically increasing and continuous (from the right) at $0$ with $\omega(0)=0$.  
Let $(X,\rho_X)$ and $(Y,\rho_Y)$ be metric spaces and $f:X\to Y$.  We say that $\omega$ is a modulus of continuity for $f$ if: for each $x,\tilde{x}\in X$
\[
    \rho_Y(f(x),f(\tilde{x}))\le \omega(\rho_X(x,\tilde{x}))
.
\]

We often consider uniformly continuous functions which are either Lipschitz and H\"{o}lder continuous.
If $L\ge 0$, $0<\alpha \le 1$, and $\omega(x)=Lx^{\alpha}$, then we will say that $f$ is $\alpha$-H\"{o}lder with constant $L$. 
The class of all such $f:X \supseteq\Omega \rightarrow \mathbb{R}$ is denoted by $\operatorname{Hol}(L,\alpha,\Omega)$.
If $\alpha=1$, we say that $f$ is $L$-Lipschitz.  Given any real-valued function $f$ on a subset $X$ of $\mathbb{R}^d$, we use $\operatorname{Lip}(f)$ to denote its optimal Lipschitz constant with respect to the restriction of the $\ell^1$-norm to $X$; i.e.
\[
    \operatorname{Lip}(f) \eqdef 
    \sup\limits_{
        \substack{ \mathbf{x},\mathbf{y} \in X \\
                   \mathbf{x} \ne \mathbf{y}
        }
    }
    \frac{|f(\mathbf{x}) - f(\mathbf{y})|}{\|\mathbf{x} - \mathbf{y}\|_{1}}
.
\]
We are mostly interested in the case where $X$ is a subset of the space $(\mathbb{R}^d,\ell_1)$ and $(Y,\rho_Y)$ is a Euclidean line; where, for $1\le p\le \infty$, $\ell_p$ indicates that we are equipping $\mathbb{R}^d$ with the metric induced by the $\ell^p$ norm $\|\mathbf{x}\|_p^p\eqdef \sum_{i=1}^p\,|x_i|^p$ if $p$ is finite and $\|\mathbf{x}\|_{\infty}\eqdef \max_{i=1,\dots,d}\, |x_i|$ otherwise.  

\subsubsection{Uniform Regularity, Best Achievable Approximation Error, and Discontinuity}
\label{s:Prelim__ss:Regularity___sss:Disc}

We consider the following broad class of (possibly discontinuous) target functions, including uniformly continuous functions. 
The reader which is only interested in approximation guarantees for continuous functions, \textit{where an asymptotic error of $0$ is achievable by ReLU MLPs}, is encouraged to skip this section and swap the term ``modulus of regularity'' for ``concave modulus of continuity'' in the remainder of the paper.
We rely on the following weakened version of a concave modulus of continuity.  


\begin{definition}[Modulus of Regularity]
\label{defn:modulus_regularity}
Let $I=[0,\infty)$ or $I=[0,T]$ for some $T\ge 0$.
A function $\omega:I\to [0,\infty)$ is called a modulus of regularity if: 
\begin{enumerate}
    \item $\omega(0)=0$,
    \item $\omega$ is monotone increasing,
    \item $\omega$ is concave. 
\end{enumerate}  
\end{definition}

Moduli of regularity allow us to quantify the best achievable approximation error, when uniformly approximating discontinuous target functions.  
Before formalizing this, we recall that concave functions on bounded domains $\Omega\subset \mathbb{R}^d$ are continuous on their interior but need not be on the boundary $\partial \Omega$ of $\Omega$.  For example, the indicator function $I_{(0,\infty)}$ of $(0,\infty)$ is concave on $\Omega = [0,\infty)$, continuous on $(0,\infty)$, and has a discontinuity on $\partial \Omega =\{0\}$.
We will often consider the \textit{best achievable approximation error} $\omega^{\star}\in [0,\infty)$, when minimizing such a modulus, which is given by
\[
    \omega^{\star} \eqdef \lim\limits_{t\downarrow 0}\, \omega(t) 
.
\]
\begin{example}[Concave Moduli of Continuity are Moduli of Regularity]
\label{ex:mod_continuity}
If $\omega:[0,\infty)\to[0,\infty)$ is a concave modulus of continuity, then $\omega$ is a modulus of regularity, with $\omega(0)=0$, and $\omega$ is continuous at $0$; i.e.\ $\lim\limits_{t\downarrow 0}\,\omega(t)=0$ and in particular $\omega^{\star}=0$.
\end{example}
The previous example illustrates the main difference between moduli of continuity and moduli of regularity; namely, the value of $0$ need not be achievable by minimizing $\omega$ from the right.  Indeed, if $\omega$ is a modulus of regularity, then it is non-negative and it fixes $0$; thus, $\omega(t) \ge \omega(0)=0$ for all positive values of $t$.  Consequentially, 
\begin{equation}
\label{eq:best_achievable_apprximation_error__vs_0_error}
        \omega(0)
    =
        0
    \le 
        \omega^{\star} 
\end{equation}
for any modulus of regularity.  This inequality can be strict for discontinuous functions.
\begin{example}[Beyond Moduli of Continuity]
\label{ex:modulus_of_heavyside_function}
Fix $M>0$ and $L\ge 0$.  
The function $\omega:[0,\infty)\to [0,\infty)$ given for each $t\in [0,\infty)$ by
\[
    \omega(t)
    \eqdef 
    \begin{cases}
        M+ L\,t & \mbox{ if }  t>0 \\
        0 & \mbox{ if } t=0
    \end{cases}
\]
is a modulus of regularity but not a modulus of continuity, and the inequality in~\eqref{eq:best_achievable_apprximation_error__vs_0_error} is strict.
\end{example}
The class of continuous functions between metric spaces are precisely those functions for which: for every compact subset of their domain, there is a modulus of continuity bounding the distance between the images of all pairs of points therein, see e.g.~\citep[Theorem 27.6]{MunkresTopBook}, and such that \textit{the images of arbitrarily close points are themselves arbitrarily close}.   

If one relaxes the second condition, we are left with the following functions considered in our paper.  Our interest in this class stems from~\eqref{eq:best_achievable_apprximation_error__vs_0_error}, wherein our main result (Theorem~\ref{thm:MainTheorem_StableUniversalApproximation}) shows that for $\omega$-regular function can be approximated by a ReLU MLP with at-most $\omega$-regularity, up to the best achievable approximation error $\omega^{\star}$.  In the very special case where the target function is additionally continuous, we can guarantee the usual type of conclusion: our ReLU MLP approximators can achieve \textit{zero approximation error} asymptotically. 

\begin{figure}[H]
    \centering
\begin{align*}
\vcenter{\hbox{\includegraphics[width=.25\textwidth]{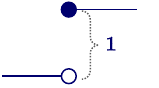}}}
&\qquad\qquad
\begin{aligned}
    h(x) 
\eqdef 
    \begin{cases}
        1, \quad x\ge 0  \\
        0, \quad x < 0
    \end{cases}
\end{aligned}\\
\vcenter{\hbox{\begin{minipage}{4cm}
\end{minipage}}}
& \notag
\end{align*}
\caption{The heaviside function $h$ is $\omega$-regular with $\omega(t)=1$ if $t>0$ and $0$ otherwise.}
\label{fig:mod_disc}
\end{figure}

\begin{definition}[$\omega$-Regular Functions]
\label{defn:omega_regularity}
Let $\omega$ be a modulus of regularity, $(X,\rho)$ and $(Y,\rho')$ be metric spaces.  A map $f:X\to Y$ is said to have $\omega$-regularity at a pair of points $x,x'\in X$ if
\[
        \rho'\big(
            f(x)
            ,
            f(x')
        \big)
    \le 
        \omega\big(
            \rho(x,x')
        \big)
.
\]
If, moreover, $f$ has $\omega$-regularity at all pairs of points $x,x'\in X$ then, we way that $f$ is $\omega$-regular (on $X$).  The set of all $\omega$-regular functions from $(X,\rho)$ to $(\mathbb{R},|\cdot|)$ is denoted by $\operatorname{Lip}_{\omega}(X)$.
\end{definition}
We will say that a function is \textit{regular} if it is $\omega$-regular for some modulus of regularity $\omega$.  
Geometric examples of discontinuous regular functions include quasi-isometric~\cite{LangPlaut_2001_DedicGeomBilipSpaceformEmbeddings} and certain coarse embeddings~\cite{NowakCoarseEmbeddins_2005,MendelNaor_CoarseSuperExpanders_2014}.  Discontinuous regular maps can also be constricted as additive perturbations of Lipschitz functions by some ``deterministic bounded noise'', as follows.

\begin{example}[Additive Perturbations of Lipschitz Functions]
\label{ex:constructions_regular_but_discontinuous}
Fix $L,M\ge 0$ and fix any function $\eta:\mathbb{R}\to [-M/2,M/2]$.  For any $L$-Lipschitz function $f:\mathbb{R}^d\to \mathbb{R}$ define the perturbed function $f_{\eta}:\mathbb{R}^d\to\mathbb{R}$ as sending any $\mathbf{x}\in \mathbb{R}^d$ to
\[
    f_{\eta}(\mathbf{x})\eqdef f(\mathbf{x}) + \eta(\mathbf{x})
.
\]
Then, $f_{\eta}$ is $\omega$-regular with modulus of regularity as in Example~\ref{ex:modulus_of_heavyside_function}. 
\end{example}
Regular functions need not be well-behaved measure-theoretically.  For instance, in the context of Example~\ref{ex:constructions_regular_but_discontinuous} if $A$ is the Vitali set in $\mathbb{R}$ and $\eta=\frac{M}{2}\,I_{A}$ then we see that $f_{\eta}$ need not even be Lebesgue measurable even if $f$ was Lipschitz.  

\begin{example}[Not all Functions are Regular]
Not all functions are regular.  The following map is not regular: $f:[0,1]\to [0,\infty)$ given for each $x\in [0,1]$ by
\[
    f(x)
    \eqdef
    \begin{cases}
        \frac{1}{x} & \mbox{ if } x>0\\
        0 & \mbox{ if } x=0
.
    \end{cases}
\]
\end{example}
However, any bounded function is regular.  Nevertheless, their best achievable approximation error can be large.

\subsection{Multilayer Perceptrons (MLPs) with ReLU Activation Function}
\label{s:Prelim__ss:Notation___sss:MLP}
We now define multilayer perceptions.  We then formulate notions of parameteric and functional regularity often encountered in the literature considered herein.  
\begin{definition}[Multilayer Perceptrons with ReLU Activation Function (ReLU MLPs)]
\label{defn:ReLUMLP}
Let $\Delta \in \mathbb{N}_+$ and consider a multi-index $\mathbf{d}\eqdef [d_1,\dots,d_{\Delta+1}]\in \mathbb{N}_+^d$.  
The class $\mathcal{NN}(\mathbf{d})$ consists of all multilayer perceptrons with ReLU activation function (ReLU MLPs) $\Phi:\mathbb{R}^{d_1}\to\mathbb{R}^{d_{\Delta+1}}$ admitting the following iterative representation
\begin{equation}
\label{eq:MLPRepresentation}
    \begin{aligned}
        \Phi(\mathbf{x}) & = \mathbf{W}^{(\Delta)} \mathbf{x}^{(\Delta)}+\mathbf{b}^{(\Delta)}\\
     \mathbf{x}^{(l+1)}& \eqdef \operatorname{ReLU} \bullet\big(\mathbf{W}^{(l)}\,\mathbf{x}^{(l)} + \mathbf{b}^{(l)}\big) 
        \qquad
        \mbox{ for } l=1,\dots,\Delta-1
    \\
    \mathbf{x}^{(1)} & \eqdef  \mathbf{x}.
    \end{aligned}
\end{equation}
where for $l=1,\dots,\Delta$, $\mathbf{W}^{(l)}$ is a $d_{l+1}\times d_l$-matrix and $\mathbf{b}^{(l)}\in \mathbb{R}^{d_{l+1}}$, and $\operatorname{ReLU}\bullet$ denotes componentwise application of the $\operatorname{ReLU}$ function.  We denote $\operatorname{widthvec}(f)\eqdef[d_2,\dots,d_{\Delta}]$.

Given $L,W,d,D\in \mathbb{N}_+$, we use $\mathcal{NN}(d,D)$ to denote the class of maps $f:\mathbb{R}^d\to \mathbb{R}^D$, $\mathcal{NN}_{\Delta,W}(d,D)$ the subset of $\mathcal{NN}(d,D)$ of ReLU MLPs with depth at-most $\Delta$ and width at-most $W$, and $\mathcal{NN}_{\Delta,W}^L(d,D)$ the subset of $\mathcal{NN}_{\Delta,W}(d,D)$ of $L$-Lipschitz ReLU MLPs therein.  When clear from the context, we suppress the notational dependence on $d$ and $D$.
\end{definition}

Given a ReLU MLP $f$ with representation~\ref{eq:MLPRepresentation}, we call the integers $\Delta-1$ and $\max_{l=2,\dots,\Delta}\,d_l$ its depth and width, respectively.
Let $P(\mathbf{d})  \eqdef  \sum_{l = 1}^{\Delta} d_l (d_{l+1} + 1)$. When discussing the stability of the $\Phi$ on the \textit{parameters} defining it via this representation, we will rely on the following vectorization of the parameters defining its weight matrices (the $\mathbf{W}^{(l)}$) and its biases (the $\mathbf{b}^{(l)}$)
\begin{equation}
\label{eq:identification}
    \mathbb{R}^{P(\mathbf{d})}\,\reflectbox{$\in$}\,\theta \iff (\mathbf{W}^{(l)}, \mathbf{b}^{(l)})_{l=1}^{\Delta}
    \in \prod_{l=1}^{\Delta}\, \big(\mathbb{R}^{d_{l+1}\times d_l}\times\mathbb{R}^{d_{l+1}} \big).
\end{equation}
In particular, this vectorization of the parameters of $\Phi$ allows us to define the maximum norm of a set of weights and biases defining $\Phi$, in representation~\eqref{eq:MLPRepresentation}, as the $\ell^{\infty}$ norm of its parameter vector $\theta$ in~\eqref{eq:identification}; via
\begin{equation}
\label{eq:parameter_norm}
    |\Phi|_{\operatorname{par}}
    \eqdef 
    \max_{i=1,\dots,P(\mathbf{d})}\, |\theta_i|
    ,
\end{equation}
where the operation $|\Phi|_{\operatorname{par}}$ is defined \textit{given} the representation of $\Phi$ in~\eqref{eq:MLPRepresentation}.  
That is, different \textit{representations} of the same function $\Phi$ may have different maximum parameter sizes, meaning that, $|\Phi|_{\operatorname{par}}$ is not a canonical intrinsic quantification of the regularity of $\Phi$.  
However, this is \textit{not} the case for the optimal Lipschitz constant of $\Phi$ which is independent of any parameterization thereof. Thus, the optimal Lipschitz constant of $\Phi$ is a more natural measure of regularity of $\Phi$ than norms on its parameter vectors; given a particular representation.

\section{Main Results}
\label{s:Main}
We now present our main approximation result (Theorem~\ref{thm:main_IntroVersion}) and then the full-version of our statistical guarantee (Theorem~\ref{thm:RademacherBound}).

\subsection{Approximation Guarantees}
\label{s:Main__ss:Approximation}

The following is the full version of our optimal approximation theorem with maximal regularity.
\begin{theorem}[Optimal Regular Approximation by Sample-Interpolating ReLU MLPs]
\label{thm:MainTheorem_StableUniversalApproximation}
    Let $f$ be a function from $[0,1]^d$ to $\mathbb{R}$, and let $\omega:[0,d]\rightarrow [0,\infty)$ be a modulus of regularity of $f$. Then, for any $n\in\mathbb{N}_+$, there exists a ReLU MLP $\Phi$ on $[0,1]^d$ with width at most $8d(n+1)^d$, depth at most $\lceil \log_2{d} \rceil + 4$, at most $16d(n+1)^d$ nonzero parameters, which satisfies the approximation guarantee 
    \[
        \| f-\Phi \|_{L^{\infty}([0,1]^d)} 
        \le 
        \omega\left(\frac{d}{2n}\right)
    \]
    as well as the sample-interpolation guarantee
    \[
        f(\mathbf{x})=\Phi(\mathbf{x}) 
        \,\,
        \mbox{ for all } \mathbf{x}\in
        \left\{ \frac{0}{n}, \frac{1}{n}, \cdots, \frac{n}{n} \right\}^d
    .
    \]
    Furthermore, $\omega$ is a modulus of regularity of $\Phi$ on $[0,1]^d$.
\end{theorem}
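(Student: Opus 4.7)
The plan is to build $\Phi$ as the continuous piecewise-linear (CPWL) interpolant of $f$ on a Kuhn-triangulated grid, then realize that CPWL function exactly as a ReLU MLP of the claimed size. First, I would tile $[0,1]^d$ with the $n^d$ axis-aligned subcubes of side $1/n$ whose vertices live in $\{0/n,1/n,\dots,n/n\}^d$. Next, I would refine each subcube into the $d!$ Kuhn simplices indexed by the permutations $\sigma\in S_d$, where the simplex $\Sigma_\sigma$ with base vertex $\mathbf{v}_0$ has ordered vertex chain $\mathbf{v}_0,\,\mathbf{v}_0+\tfrac{1}{n}e_{\sigma(1)},\,\mathbf{v}_0+\tfrac{1}{n}(e_{\sigma(1)}+e_{\sigma(2)}),\,\dots$. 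On this global triangulation I would define $\Phi$ to be the unique CPWL function that agrees with $f$ at every grid vertex; the barycentric coordinates of each simplex can be computed as differences of consecutive ordered coordinates, which are themselves realizable by a constant-depth ReLU subnetwork. Sample interpolation on the grid is then automatic by construction.

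\textbf{Regularity transfer.} The crucial and delicate step is showing that the CPWL interpolant $\Phi$ inherits the modulus of regularity $\omega$ of $f$ on all of $[0,1]^d$, not just inside a single simplex. Inside any one simplex $\Sigma$, linearity plus convexity of $\|\cdot\|_1$ and concavity of $\omega$ gives
\[
|\Phi(\mathbf{x})-\Phi(\mathbf{y})|\le \sum_i |\lambda_i-\mu_i|\,|f(\mathbf{v}_i)|,
\]
which can be sharpened, via the identity $\Phi(\mathbf{x})-\Phi(\mathbf{y})=\sum_i(\lambda_i-\mu_i)(f(\mathbf{v}_i)-f(\mathbf{v}_0))$, to give the $\omega$-bound on $\Sigma$. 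The honest difficulty, flagged explicitly in the introduction before Lemma~\ref{lem:CPWLApprox_wRegControl}, is the cross-simplex case: local $\omega$-regularity does not obviously yield global $\omega$-regularity when $\omega$ is sub-linear (e.g.\ H\"older). I would invoke Lemma~\ref{lem:CPWLApprox_wRegControl} directly, whose proof argues by connecting any two points $\mathbf{x},\mathbf{y}\in[0,1]^d$ by a piecewise-linear path that crosses the triangulation transversally, summing the local $\omega$-bounds along the path, and then exploiting concavity and monotonicity of $\omega$ together with the fact that the $\ell^1$-lengths of the path segments telescope to $\|\mathbf{x}-\mathbf{y}\|_1$. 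This gives $|\Phi(\mathbf{x})-\Phi(\mathbf{y})|\le \omega(\|\mathbf{x}-\mathbf{y}\|_1)$ globally, which is exactly the regularity claim at the end of the theorem.

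\textbf{ReLU implementation and size bounds.} With $\Phi$ defined abstractly, I would write it as
\[
\Phi(\mathbf{x})=\sum_{\mathbf{v}\in\{0,\dots,n\}^d/n} f(\mathbf{v})\,\phi_{\mathbf{v}}(\mathbf{x}),
\]
where $\phi_{\mathbf{v}}$ is the Kuhn hat function at $\mathbf{v}$ depicted in the right panel of Figure~\ref{fig:different_hats}. Each $\phi_{\mathbf{v}}$ is a $d$-variate CPWL function supported on the $2^d$ adjacent cubes and built from $\min$ and $\max$ of shifted coordinates; using the identities $\min(a,b)=\tfrac{1}{2}(a+b-|a-b|)$ and $\max(a,b)=\tfrac{1}{2}(a+b+|a-b|)$ together with $|t|=\sigma(t)+\sigma(-t)$, these $\min/\max$ cascades are realized by a balanced binary tree of depth $\lceil\log_2 d\rceil$ with all intermediate weights in $\{0,\pm 1/2\}$. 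I would then budget: (i) $2d$ units in the first layer to produce the shifted coordinates $n x_i$ and $1-n x_i$ (the only layer with weights of magnitude up to $n$); (ii) $\lceil\log_2 d\rceil$ layers of fixed $\{0,\pm 1/2\}$-weights to evaluate all hat functions in parallel; (iii) two or three additional layers to zero out negative contributions via $\sigma$ and to form the final linear combination with coefficients $f(\mathbf{v})$ (these occur in the last layer and have magnitude $\le n$, matching the stated range). The width is dominated by the $\mathcal{O}(d(n+1)^d)$ simultaneous hat evaluations, giving the $8d(n+1)^d$ bound; the nonzero parameter count is twice the width since every hat function is expressible with $\mathcal{O}(1)$ nonzero weights per layer; and the total depth is $\lceil\log_2 d\rceil + 4$. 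Proposition~\ref{proposition:ParamterEstimates__RefinedVersion} (announced in the introduction) would be used to certify the detailed weight-structure claims.

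\textbf{Approximation error.} Finally, for any $\mathbf{x}\in[0,1]^d$, pick the Kuhn simplex $\Sigma$ containing $\mathbf{x}$ and write $\mathbf{x}=\sum_i\lambda_i \mathbf{v}_i$. Using $\Phi(\mathbf{v}_i)=f(\mathbf{v}_i)$ and the $\omega$-regularity of $f$,
\[
|f(\mathbf{x})-\Phi(\mathbf{x})|\le \sum_i \lambda_i\,|f(\mathbf{x})-f(\mathbf{v}_i)|\le \sum_i \lambda_i\,\omega(\|\mathbf{x}-\mathbf{v}_i\|_1)\le \omega\!\left(\sum_i \lambda_i\|\mathbf{x}-\mathbf{v}_i\|_1\right)
\]
by concavity and monotonicity of $\omega$. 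A direct Kuhn-simplex computation (the centroid of $\Sigma$ sits at $\ell^1$-distance $\tfrac{d}{2n}\cdot\tfrac{d}{d+1}$ from its farthest vertex, and in general the convex combination $\sum_i\lambda_i\|\mathbf{x}-\mathbf{v}_i\|_1$ is maximized at the centroid with value at most $d/(2n)$) yields the advertised bound $\omega(d/(2n))$. The main obstacle, as anticipated, is the cross-simplex regularity preservation in the H\"older regime; the remaining pieces are essentially bookkeeping on the triangulation and the MLP architecture.
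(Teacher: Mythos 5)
Your proposal follows essentially the same route as the paper: interpolate $f$ at the grid $\{0,1/n,\dots,1\}^d$ by the continuous piecewise linear (CPWL) function on the Kuhn triangulation, write that interpolant as a linear combination $\Phi = \sum_{\mathbf{v}} f(\mathbf{v})\,\phi_{\mathbf{v}}$ of Kuhn hat functions, realize the hats with a balanced max/min tree of depth $\lceil\log_2 d\rceil$ having weights in $\{0,\pm\tfrac12\}$, invoke Lemma~\ref{lem:CPWLApprox_wRegControl} for regularity transfer, and bound the error by concavity plus the per-coordinate $\tfrac12$-bound $\sum_i\lambda_i\|\mathbf{x}-\mathbf{v}_i\|_1\le d/(2n)$. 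The size, depth, and parameter-count estimates all match. At the level of the theorem's proof, this is correct.

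The one genuine concern is your paraphrase of how Lemma~\ref{lem:CPWLApprox_wRegControl} is proved. You describe a single transversal piecewise-linear path from $\mathbf{x}$ to $\mathbf{y}$ whose segment $\ell^1$-lengths $\ell_1,\dots,\ell_k$ telescope to $\|\mathbf{x}-\mathbf{y}\|_1$, followed by ``summing local $\omega$-bounds'' and invoking concavity. But any concave $\omega$ with $\omega(0)=0$ is subadditive, so
\[
\sum_{i=1}^{k}\omega(\ell_i)\;\ge\;\omega\!\left(\sum_{i=1}^{k}\ell_i\right)=\omega(\|\mathbf{x}-\mathbf{y}\|_1),
\]
which is exactly the wrong direction: summing local estimates over a path can only \emph{lose} sharpness in the H\"older regime. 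This is precisely the pitfall that makes the cross-simplex case delicate and that makes the lemma a genuine technical contribution. The paper's actual argument is structurally different: it produces a \emph{pair} of linear paths $\gamma_{\mathbf{x}},\gamma_{\mathbf{y}}$, each confined to a single simplex, arranged so that $\|\gamma_{\mathbf{x}}(t)-\gamma_{\mathbf{y}}(t)\|_1$ varies \emph{linearly} in $t$; it then uses concavity once, as an interpolation inequality between the two endpoint pairs, and recovers $\omega$-regularity at those endpoint pairs by induction on the dimensions of the faces containing them. Since you cite the lemma as a black box, your top-level argument still closes, but if you attempted to fill in the lemma along the lines you sketch, the proof would not go through; you would need to adopt the two-path, face-dimension-induction construction. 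I'd also note that your side remark locating the worst case at the simplex centroid is not what the paper actually proves (the bound $\sum_i\lambda_i\|\mathbf{x}-\mathbf{v}_i\|_1\le d/(2n)$ is obtained coordinate-by-coordinate, not by identifying a maximizer), though the conclusion you state is correct.
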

\begin{remark}
Theorem~\ref{thm:MainTheorem_StableUniversalApproximation} provided guarantees on the modulus of regularity of $\Phi$ on $[0,1]^d$; this guarantee can be extended to all of $\mathbb{R}^d$ at a cost of only a factor of $\mathcal{O}(1)$ more non-zero parameters (Corollary~\ref{cor:main_StableUniversalApproximation___with_OptimalLipschitzExtrapolation} below).  In this way, one may obtain a global extrapolation variant of this result (Corollary~\ref{cor:Extrapolation} below).
\end{remark}
\begin{remark}
\label{rem:structure_EncoderDecoder}
Only the last \textit{linear layer} of $\Phi$ in Theorem~\ref{thm:MainTheorem_StableUniversalApproximation} depends on $f$, and, all the hidden layers of $\Phi$ are independent of $f$.
\end{remark}

Our next consequence of Theorem~\ref{thm:MainTheorem_StableUniversalApproximation} shows that neural networks can indeed solve McShane's extension problem for finite subsets of dyadic lattices in Euclidean space; the following generalizes Corollary~\ref{cor:McShane__IntroVersion}.
\begin{corollary}[Uniformly Continuous Extension by Neural Networks on Finite Subsets of Dyadic Grids]
\label{cor:McShane}
For any finite $A\subseteq [0,1]^d\bigcap \bigcup_{n \in \mathbb{N}_+}\, \frac1{2^n} \mathbb{Z}^d$, any concave modulus of continuity $\omega$, and any $\omega$-uniformly continuous $f:A\to \mathbb{R}$ there is an \textbf{$\omega$-uniformly continuous} ReLU MLP $\Phi:\mathbb{R}^d\to \mathbb{R}$ extending $f$; i.e.\
\[
f(x)=\Phi(x) 
\mbox{ for all }x\in A
.
\]
\end{corollary}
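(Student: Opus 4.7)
The plan is to reduce the extension problem on the arbitrary finite dyadic set $A$ to a grid-interpolation problem to which Theorem~\ref{thm:MainTheorem_StableUniversalApproximation} directly applies, after first performing a classical McShane extension onto the full grid.

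First, since $A$ is finite and $A\subseteq \bigcup_{n\in\mathbb{N}_+}\frac{1}{2^n}\mathbb{Z}^d$, I may choose a common dyadic denominator: there exists $N\in\mathbb{N}_+$ with
\[
    A \subseteq G_N \eqdef \left\{\tfrac{0}{2^N},\tfrac{1}{2^N},\dots,\tfrac{2^N}{2^N}\right\}^d.
\]
Setting $n=2^N$ identifies $G_N$ with the interpolation grid appearing in Theorem~\ref{thm:MainTheorem_StableUniversalApproximation}.

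Second, extend $f$ from $A$ to all of $\mathbb{R}^d$, in particular to $G_N$, using the classical McShane formula
\[
    \tilde{f}(\mathbf{x}) \eqdef \inf_{a\in A}\bigl\{f(a) + \omega(\|\mathbf{x}-a\|_1)\bigr\}.
\]
Because $\omega$ is a concave modulus of continuity with $\omega(0)=0$, it is subadditive on $[0,\infty)$. From this one verifies, via standard McShane arguments, that: \emph{(i)} $\tilde{f}\rvert_A = f$, since for any $a_0\in A$ the $\omega$-uniform continuity of $f$ on $A$ gives $f(a)+\omega(\|a_0-a\|_1)\ge f(a_0)$ for every $a\in A$, with equality attained at $a=a_0$; and \emph{(ii)} $|\tilde{f}(\mathbf{x})-\tilde{f}(\mathbf{x}')|\le \omega(\|\mathbf{x}-\mathbf{x}'\|_1)$ for all $\mathbf{x},\mathbf{x}'\in \mathbb{R}^d$, by the triangle inequality and subadditivity of $\omega$. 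Restricting to $G_N$, the function $\tilde{f}\rvert_{G_N}$ is an $\omega$-regular (in fact $\omega$-uniformly continuous) extension of $f$ to the full grid.

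Third, apply Theorem~\ref{thm:MainTheorem_StableUniversalApproximation} to $\tilde{f}$ with $n=2^N$ to obtain a ReLU MLP $\Phi:\mathbb{R}^d\to \mathbb{R}$ which satisfies $\Phi(\mathbf{x})=\tilde{f}(\mathbf{x})$ for every $\mathbf{x}\in G_N$ and which has $\omega$ as a modulus of regularity on $[0,1]^d$. To upgrade the regularity statement from $[0,1]^d$ to all of $\mathbb{R}^d$, I appeal to the globalized variant stated in the remark following Theorem~\ref{thm:MainTheorem_StableUniversalApproximation}, namely Corollary~\ref{cor:main_StableUniversalApproximation___with_OptimalLipschitzExtrapolation}, which preserves both the grid interpolation and the modulus of regularity while extending $\omega$-regularity to all of $\mathbb{R}^d$ at the expense of only $\mathcal{O}(1)$ additional nonzero parameters. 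Since $A\subseteq G_N$ and $\tilde{f}\rvert_A=f$, this $\Phi$ satisfies $\Phi(x)=f(x)$ for every $x\in A$ and is globally $\omega$-uniformly continuous.

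The essential ingredient that makes this argument work is the dyadic-grid hypothesis on $A$: the main theorem interpolates on a uniform dyadic lattice rather than on an arbitrary finite set, so one must first consolidate $A$ into such a lattice. The potentially tricky step is verifying that the McShane-extended values on $G_N$ remain $\omega$-regular as a function on the grid; this is where concavity (hence subadditivity) of $\omega$ is used. Globality of the final regularity conclusion is the only other subtlety and is precisely what Corollary~\ref{cor:main_StableUniversalApproximation___with_OptimalLipschitzExtrapolation} is designed to deliver.
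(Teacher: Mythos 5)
Your proof is correct and follows essentially the same route as the paper's: identify a common dyadic grid $\{i/n\}^d$ containing $A$, perform a McShane extension of $f$ to the grid (the paper cites~\cite[Theorem~3]{beer2020mcshane} rather than writing out the explicit infimum formula, but this is cosmetic), and then invoke the interpolation guarantee of Theorem~\ref{thm:MainTheorem_StableUniversalApproximation}. Your verification that the McShane extension restricts to $f$ on $A$ and is globally $\omega$-regular (via concavity $\Rightarrow$ subadditivity) is the standard argument and is sound.

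One point where you are actually more careful than the paper: the statement asserts that $\Phi:\mathbb{R}^d\to\mathbb{R}$ is $\omega$-uniformly continuous, which is a global claim. Theorem~\ref{thm:MainTheorem_StableUniversalApproximation} alone only guarantees $\omega$-regularity on $[0,1]^d$, so your additional appeal to Corollary~\ref{cor:main_StableUniversalApproximation___with_OptimalLipschitzExtrapolation} (whose pre-composition with the clip map $p$ preserves the grid interpolation and extends the modulus to $\bar{\omega}\le\omega$ globally) is the right move; the paper's own proof leaves this step implicit.
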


\begin{table}[htp!]%
    \centering
    \begin{tabular}{@{}lc@{}}
    \toprule
    \textbf{Description of MLP} & \textbf{Estimate}\\
    \arrayrulecolor{lightgray}\midrule 
    \textbf{Parameteric Complexity} & \\
    Depth & $\lceil \log_2(d)\rceil +4$
    \\
    Width & $8d(n+1)^d$
    \\
    Nonzero parameters & $16d(n+1)^d$
    \\
    \arrayrulecolor{lightgray}\midrule 
    \arrayrulecolor{lightgray}\midrule 
    \textbf{Regularity} & 
    \\
    Modulus of Regularity 
    & $\Phi \in \operatorname{Lip}_{\omega}(\{\mathbf{x}_i\}_{i=1}^{N(n)})$
    \\
    Parameteric Regularity $|\Phi|_{\operatorname{par}}$ & $\max\{n,\max_{i\in [N(n)]}\,|f(\mathbf{x}_i)|\}$
    \\
    \arrayrulecolor{lightgray}\midrule 
    \arrayrulecolor{lightgray}\midrule 
    \textbf{Misc.\ Structure} & \\
    Encoding Dimension $\dim(\operatorname{dom}(\mathcal{D}))$ & $(n+1)^d$\\
    Support & $\operatorname{supp}(\Phi) \subseteq [-1/n,1+1/n]^d$
    \\
    \bottomrule
    \end{tabular}
    \caption{\textbf{ReLU MLP ($\Phi$) constructed in Theorem~\ref{thm:MainTheorem_StableUniversalApproximation} approximating $f\in \operatorname{Lip}_{\omega}(\{\mathbf{x}_i\}_{i=1}^{N(n)})$ on $[0,1]^d$:} All maxima are indexed over $i$ in $\{1,\dots,N(n)\}$.
    The depth, width, and maximum parameter size $|\Phi|_{\operatorname{par}}$ are defined in~\eqref{eq:parameter_norm}.
    The regularity of the encoder is with respect to the $\ell^{\infty}$ norm on $\mathbb{R}^{N(n)}$.
    By the support $\operatorname{supp}(\Phi)$ of $\Phi$, we mean the set points in $\mathbb{R}^d$ where it is non-zero.
    }
\label{tab:MLP_Description}
\end{table}
Note that, the path-norm bound in Table~\ref{tab:MLP_Description} implies that most weights defining the ReLU MLP must be small.  Indeed, a detailed inspection of the construction of $\Phi$, see Remark~\ref{remark:ParamterEstimates__RefinedVersion}, shows that most weights and biases defining $\Phi$ are in $\{0,\pm 1/2\}$.

A closer look at the proof of Theorem~\ref{thm:MainTheorem_StableUniversalApproximation} reveals much more information about the structure of the weights and biases of the approximating ReLU MLP constructed therein, as well as their dependence on the target function.  We summarize these additional facts in the following Proposition; which is proven during the course of our main result.   The following result thus provides in-depth details of the structure of the decoder mapping the latent code $\mathcal{E}(f)$ to our constructed ReLU MLP.
\begin{proposition}[{Estimates for the Weights and Biases in the Construction of $\Phi$ in Theorem~\ref{thm:MainTheorem_StableUniversalApproximation}}]
\label{proposition:ParamterEstimates__RefinedVersion}
Consider the setting of Theorem~\ref{thm:MainTheorem_StableUniversalApproximation}, represent $\Phi$ as in~\eqref{eq:MLPRepresentation}; note that $1<\Delta\le \lceil \log_2{d} \rceil + 4$.
\begin{enumerate}
    \item[(i)] \textbf{Target Dependant Layers:} The weights $\mathbf{W}^{(\Delta)}$ depend on $f$,
    \item[(ii)] \textbf{Target Independents Layers:} The weights $\{\mathbf{W}^{(l)}\}_{l=1}^{\Delta-1}$ do not depend on $f$ but only depend on $d$, and the biases $\{\mathbf{b}^{(l)}\}_{l=2}^{\Delta}$ are all $0$,
    \item[(iii)] \textbf{Typical Weights:} For each $l=2,\dots,\Delta-1$, $\mathbf{W}^{(l)}_{i,j} \in \{0,\pm 1/2\}$ for each $i,j$,
    \item[(iv)] \textbf{Initial and Terminal Weights:} $\displaystyle
    \|\mathbf{W}^{(1)}\|_{\ell^{\infty}} \le 1, 
    \|\mathbf{W}^{(\Delta)}\|_{\ell^{\infty}} \le \max_{i\in [N(n)]} |f(\mathbf{x}_i)|, 
    \|\mathbf{b}^{(l)}\|_{\ell^{\infty}} \le n$.
\end{enumerate}
\end{proposition}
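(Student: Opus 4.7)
The plan is to verify the four structural claims by reading off the explicit layer-wise form of the network $\Phi$ constructed in the proof of Theorem~\ref{thm:MainTheorem_StableUniversalApproximation}. That construction assembles $\Phi$ as
\[
    \Phi(\mathbf{x}) = \sum_{i=1}^{N(n)} f(\mathbf{x}_i)\, H_i(\mathbf{x}),
\]
where $\{H_i\}_i$ is the family of continuous piecewise linear ``hat functions'' attached to the vertices $\mathbf{x}_i = \mathbf{k}/n$ of the Kuhn triangulation of $[0,1]^d$. Structurally, the network factors into three blocks: an affine encoder translating and rescaling $\mathbf{x}$ to coordinates adapted to the integer lattice; a sequence of min/max layers, organized as a binary reduction tree of depth $\lceil\log_2(d)\rceil$, which produce the values $(H_i(\mathbf{x}))_i$; and a single linear read-out assembling the weighted sum above.

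Claims (i) and (ii) are then immediate from this decomposition: only the read-out matrix $\mathbf{W}^{(\Delta)}$ carries information about $f$, as it is literally a collection of sample values $f(\mathbf{x}_i)$, while the encoder and the hat-function tree depend only on $d$ and the grid resolution $n$. The middle biases $\mathbf{b}^{(l)}$ vanish because, once $\mathbf{x}$ has been placed on the integer lattice by the first affine map, all subsequent operations are min/max comparisons, which are translation-free. For claim (iii) I would invoke the standard ReLU identity
\[
    \max(a,b) = \tfrac{1}{2}(a+b) + \tfrac{1}{2}\,\sigma(a-b) + \tfrac{1}{2}\,\sigma(b-a),
\]
combined with $\min(a,b) = -\max(-a,-b)$ and the identity routing $t = \sigma(t)-\sigma(-t)$, applied at a rescaled level so that each pre-activation at depth $\ell$ sits in the normalized scale $2^{-\ell}$. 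Under this bookkeeping every hidden weight lies in $\{0,\pm\tfrac{1}{2}\}$, and the accumulated power-of-two rescaling is absorbed into $\mathbf{W}^{(\Delta)}$. Finally, claim (iv) is read off directly: the first weight matrix implements the coordinate-wise affine shift $\mathbf{x}\mapsto n\mathbf{x}-\mathbf{k}$ with entries of magnitude at most $1$ after normalization and bias entries bounded by $n$, and the last weight matrix is the sample vector $(f(\mathbf{x}_i))_i$, hence bounded entrywise by $\max_i|f(\mathbf{x}_i)|$.

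The main obstacle is the bookkeeping in claim (iii): ensuring that the factor $\tfrac{1}{2}$ propagates consistently through the min/max tree. A naive implementation of $\sigma(a-b)$ needs weights $\pm 1$, which would violate the claim. The resolution is to maintain a uniform power-of-two normalization across the tree and to route identities through the decomposition $t=\sigma(t)-\sigma(-t)$ at that same scale, so that every copy, negation, and $\max$ primitive uses only $\pm\tfrac{1}{2}$ weights; the compensating factor of $2^{\lceil\log_2 d\rceil}$ is then folded into the final linear read-out, which is consistent with the bound on $\|\mathbf{W}^{(\Delta)}\|_{\ell^\infty}$ in (iv). Verifying this uniform rescaling layer by layer is the only non-trivial step; the remainder of the proposition is a direct inspection of the construction.
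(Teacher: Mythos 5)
Your decomposition of $\Phi$ into an affine encoder, a binary max-tree realizing the hat functions, and a linear read-out depending only on $f$ matches the paper's construction of $\mathbf{T}_{\mathbf{y}}$ via $\mathbf{M}_{d_0}$ in \eqref{eq:max_network_structure}--\eqref{eq:phi_construction}, and your arguments for items (i) and (ii) are correct. The gap is in the bookkeeping for item (iii), and it undermines your own item (iv).

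You correctly observe that a naive implementation of $\sigma(a-b)$ requires $\pm 1$ weights, and propose fixing this by holding each pre-activation at depth $\ell$ at scale $2^{-\ell}$ and then ``folding the compensating factor of $2^{\lceil\log_2 d\rceil}$ into the final read-out, which is consistent with the bound on $\|\mathbf{W}^{(\Delta)}\|_{\ell^\infty}$ in (iv).'' That last clause is false. If each stage of the tree returns $\tfrac12\max(a,b)$ rather than $\max(a,b)$, the tree output is suppressed by $2^{-\lceil\log_2 d\rceil}$; undoing this in the final linear layer forces the entries of $\mathbf{W}^{(\Delta)}$ to be $2^{\lceil\log_2 d\rceil}\tilde{f}(\mathbf{y}_i)$, whose $\ell^\infty$ norm is roughly $d$ times the claimed bound $\max_i|f(\mathbf{x}_i)|$. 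As written, your argument for (iii) and your reading of (iv) cannot both hold.

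The paper avoids this with a magnitude-preserving max primitive: it computes the four activations $\sigma\!\left(\tfrac{\pm a\pm b}{2}\right)$, placing the factor $\tfrac12$ inside the $\sigma$ argument, and the auxiliary (green) linear read-out reconstructing $\max\{a,b\}$ from them uses coefficients $\pm 1$ — so the reconstructed $\max$ is exact, not a shrunken copy. When each green layer is merged into the next stage's $\tfrac12$-scaled pre-activations, the composed hidden weights land in $\{0,\pm\tfrac12\}$ with no accumulated power of two, and $\mathbf{W}^{(\Delta)}$ remains exactly the vector of sample values $\tilde{f}(\mathbf{y}_i)$. Your two-ReLU identity could be implemented the same way — since the arguments entering each max stage are nonnegative, route $a$ through $\sigma(a/2)$ and recover $\max\{a,b\}=\sigma(a/2)+\sigma(b/2)+\sigma((a-b)/2)+\sigma((b-a)/2)$ at full scale — but you must use this magnitude-preserving bookkeeping rather than the telescoping rescaling you describe.
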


We conclude this section by noting that the shape of the optimal ReLU MLPs in Theorem~\ref{thm:MainTheorem_StableUniversalApproximation}
can be very flexible.  As the following variant of that result shows, we may specify ReLU MLPs of virtually any width or depth to obtain our optimal approximation guarantee with maximal regularity. 

\begin{proposition}[Theorem \ref{thm:MainTheorem_StableUniversalApproximation} with Variable Width and Depth]
\label{prop:MainTheorem_AdjustableSize}
    Let $f$ be a function from $[0,1]^d$ to $\mathbb{R}$, and let $\omega:[0,d]\rightarrow [0,\infty)$ be a modulus of regularity of $f$. Then, for any master parameter $n\in \mathbb{N}_+$, depth parameter $L\in\mathbb{N}_+$, and width parameters $m_1,\cdots,m_L\in\mathbb{N}_+$ satisfying
    \[
        m_1 + \cdots + m_L = (n+1)^d
    \]
    there exists a ReLU MLP $\Phi$ on $[0,1]^d$ with width at most $8d \max\{m_1,\cdots,m_L\} + d + 2$, depth at most $L (\lceil \log_2{d} \rceil + 4)$, at most $16d(n+1)^d + L(d+2)$ nonzero parameters, which satisfies the approximation guarantee 
    \[
        \| f-\Phi \|_{L^{\infty}([0,1]^d)} 
        \le 
        \omega\left(\frac{d}{2n}\right)
    \]
    Furthermore, $\omega$ is a modulus of regularity of $\Phi$ on $[0,1]^d$.
\end{proposition}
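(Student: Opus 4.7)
The plan is to exploit the encoder--decoder structure of the ReLU MLP constructed in Theorem~\ref{thm:MainTheorem_StableUniversalApproximation}: its hidden layers compute a family of ``hat functions'' $\{\phi_i\}_{i=1}^{(n+1)^d}$ supported on the Kuhn simplices (independent of $f$, per Remark~\ref{rem:structure_EncoderDecoder}), while the terminal affine layer outputs the interpolant $\Phi(\mathbf{x})=\sum_{i=1}^{(n+1)^d} f(\mathbf{x}_i)\,\phi_i(\mathbf{x})$ using $f$-dependent coefficients. The original construction evaluates all hat functions in parallel; the variant processes them in $L$ serial blocks so that no single layer must be as wide.

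To formalize this, I would partition the $(n+1)^d$ grid points into disjoint groups $G_1,\dots,G_L$ with $|G_l|=m_l$. For each $l$, I would construct a sub-network $N_l$ that computes the partial interpolant $P_l(\mathbf{x})\eqdef \sum_{i\in G_l}f(\mathbf{x}_i)\,\phi_i(\mathbf{x})$ by applying the construction of Theorem~\ref{thm:MainTheorem_StableUniversalApproximation} restricted to the $m_l$ hat functions indexed by $G_l$. The identical bookkeeping as in that theorem gives $N_l$ width at most $8d\,m_l$, depth $\lceil\log_2 d\rceil+4$, and at most $16d\,m_l$ nonzero parameters. Summing the $m_l$ recovers the width and parameter budgets stated in the proposition.

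Next, I would serially compose the $N_l$ using two auxiliary channels that travel alongside the currently-active block. Since inputs $\mathbf{x}\in[0,1]^d$ are componentwise non-negative, $\mathbf{x}$ can be propagated layer-by-layer through $\operatorname{ReLU}$ as the identity using $d$ dedicated pass-through neurons per layer, granting every block access to the original input it needs to compute its hat functions. Two accumulator neurons store the positive and negative parts $(a^+,a^-)$ of the running sum $\sum_{k\le l}P_k(\mathbf{x})$; this split is required because individual partial sums can be negative. At the terminal layer of each $N_l$, I would reconfigure its affine decoder to output $(P_l^+(\mathbf{x}),P_l^-(\mathbf{x}))$ with $P_l=P_l^+-P_l^-$ and $P_l^{\pm}\ge 0$, and a single linear step then updates the accumulator additively. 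Reading out $a^+-a^-$ at the final affine layer yields $\sum_{l=1}^L P_l(\mathbf{x})=\Phi(\mathbf{x})$, which is exactly the function produced by Theorem~\ref{thm:MainTheorem_StableUniversalApproximation}.

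Consequently, the approximation bound $\|f-\Phi\|_{L^\infty([0,1]^d)}\le \omega(d/(2n))$ and the global $\omega$-regularity of $\Phi$ on $[0,1]^d$ transfer immediately from Theorem~\ref{thm:MainTheorem_StableUniversalApproximation}, since we have built the very same target function. The width bound $8d\max_l m_l + d + 2$ and depth bound $L(\lceil\log_2 d\rceil+4)$ are read off from the block structure, and the parameter count decomposes as $\sum_{l=1}^L 16d\,m_l=16d(n+1)^d$ for the hat-function circuits plus an additive $L(d+2)$ for the identity and accumulator connections at the block interfaces. The step requiring most care is ensuring that the auxiliary channels do not interfere with the internal hat-function computations inside each $N_l$: this is handled by zero-weighting all cross-channel connections except at the inter-block interfaces, and by relying on $\mathbf{x}\ge 0$ so that the $\operatorname{ReLU}$ acts as the identity on the pass-through channel. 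The only genuinely subtle point is that the sub-networks $N_l$ individually are not $\omega$-regular, but their sum is---just as in the original theorem---so the global regularity guarantee must be imported from Theorem~\ref{thm:MainTheorem_StableUniversalApproximation} rather than proven block-by-block.
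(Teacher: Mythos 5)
Your proposal follows essentially the same route as the paper's proof: both partition the $(n+1)^d$ hat functions into $L$ groups of sizes $m_1,\dots,m_L$, stack the corresponding sub-networks serially, carry the (non-negative) input along $d$ pass-through neurons so that $\sigma$ acts as the identity on that channel, and maintain a running accumulator split into positive and negative parts via two extra neurons, yielding the $d+2$ auxiliary width overhead. The one minor divergence is cosmetic: you update the accumulator as $(a^+,a^-)\mapsto(a^++P_l^+,\,a^-+P_l^-)$, whereas the paper recomputes $\sigma(\pm T_l(\mathbf{x}))$ from the current running total $T_l=a^+-a^-+P_l$ at each block; both preserve $a^+-a^-=T_l$ and the non-negativity needed for ReLU pass-through, so the final read-out is the same function and the approximation and regularity guarantees transfer from Theorem~\ref{thm:MainTheorem_StableUniversalApproximation} unchanged.
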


In Proposition~\ref{prop:MainTheorem_AdjustableSize}, if we take $L=d$ and $m_1 = m_2 = \cdots = m_d = \lceil (n+1)^d/d \rceil$, then we can conclude that the same approximator $\Phi$ in Theorem \ref{thm:MainTheorem_StableUniversalApproximation} can be implemented by another ReLU MLP with width $\mathcal{O}(n^d)$ and depth $\mathcal{O}(d \log{d})$. 


\subsection{Statistical Guarantees}
\label{s:Main__ss:Stats}

\label{s:Main__ss:Consequences___LearningTheory_I}
Next, we investigate the learning theoretic implications of our main result.  
We consider i.i.d.\ training data for a classification problem $(X_1,Y_1),\dots,(X_N,Y_N)\sim \mathbb{P}$ in $\mathbb{R}^d\times [0,1]$ and a loss function $\ell:\mathbb{R}\times \mathbb{R}\to[0,1]$.  Our objective is to describe the worst-case \textit{generalization gap} for any ReLU MLP $\Phi$ in the class studied in Theorem~\ref{thm:MainTheorem_StableUniversalApproximation}.  The generalization gap for any such MLP is defined as the absolute difference between the true risk $\mathcal{R}(\Phi)$ over the (unseen) test set and the empirical risk $\mathcal{R}(\Phi)$ computed on the training data; where
\begin{equation*}
    \mathcal{R}(\Phi) \eqdef \mathbb{E}_{(X,Y)\sim \mathbb{P}}\big[\ell(\Phi(X),Y)\big]
    ,\quad
    \mathcal{R}_N(\Phi) \eqdef \frac1{N}\sum_{n=1}^N\ell(\Phi(X_n),Y_n)
.
\end{equation*}
Theorem~\ref{thm:MainTheorem_StableUniversalApproximation} and Proposition~\ref{proposition:ParamterEstimates__RefinedVersion} identifies a highly structured subclass of the set of neural networks with depth $\Delta$ and width $W$, for $\Delta,W\in \mathbb{N}_+$, which are \textit{expressive enough} to approximate any $L$-Lipschitz functions with range in $[0,1]$ to a uniform precision of $\omega(d/2n)$ for any prescribed $n\in \mathbb{N}_+$.  Specifically, the class $\mathcal{NN}_{\Delta,W}^{L,n}$ consists of all ReLU MLPs $\Phi:\mathbb{R}^d\to \mathbb{R}$ in $\mathcal{NN}_{\Delta,W}^{L}(d,1)$, see Definition~\ref{defn:ReLUMLP}, for which the weights $\mathbf{W}^{(1)},\dots,\mathbf{W}^{(\Delta)}$ in~\eqref{eq:MLPRepresentation} satisfy
\begin{equation}
\label{eq:weight_constants}
        \|\mathbf{W}^{(1)}\|_{\infty}\le 1
    ,\,
        \max_{l=2,\dots,L-1}\, \|\mathbf{W}^{(l)}\|_{\infty}\le \frac{1}{2}
    \mbox{, and }
        \|\mathbf{W}^{(L)}\|_{\infty} \le n 
.
\end{equation}
\begin{assumption}[Normalized Sub-Gaussian Training Data]
\label{ass:NormalizedData}
The $\mathbb{R}^d\times [0,1]$-valued random variables $(X_1,Y_1)$,$\dots$,$(X_N,Y_N)$ are i.i.d and $X_1$ is centered, sub-Gaussian, with normalized covariance
\[
    \mathbb{E}\big[X_nX_n^{\top}\big] = \frac1{N}\mathbf{I}_d
\]
and are defined on a common probability space $(\Omega,\mathcal{A},\mathbb{P})$; where $\mathbf{I}_d$ is the $d\times d$ identity matrix.
\end{assumption}
Unlike the generalization bound in~\eqref{eq:RademacherBound}, deep and wide neural networks generalize well in practice.  On the contrary, several recent results studying gradient dynamics of ``infinitely wide'' neural networks suggest that highly overparameterized neural networks (which may be extremely deep and wide) tend to generalize well and tend to converge to highly regular networks after being optimized by gradient descent on the training data.  This is not captured by the generalization bound in~\eqref{eq:RademacherBound} since, when either $\Delta$ or $W$ become large, then the generalization bound in~\eqref{eq:RademacherBound} diverges, even if $L$ is held constant.  

The divergence of~\eqref{eq:RademacherBound} is counter-intuitive since $\mathcal{NN}_{\Delta,W}^{L,n}$ is contained in the class of $L$-Lipschitz functions on $\mathbb{R}^d$ with image in $[0,1]$; which is totally bounded by Arzel\`a-Ascoli theorem, and thus its Rademacher complexity should be controllable by Dudley's entropy integral estimate (see e.g.~\citep[Corollary 2.2.8]{VanderVaart_WeakConvEmpiricalProcesses}).  The reason is that the bound in~\eqref{eq:RademacherBound} is inherited from bounding the Rademacher complexity of the larger class $\mathcal{NN}_{\Delta,W}$ which becomes unbounded as the depth and width parameters grow.  Upon noting that the Rademacher complexity of the class $\mathcal{NN}_{\Delta,W}^{L,n}$ must be no larger than that of the class of $L$-Lipschitz functions with values in $[0,1]$ and no larger than that of the class $\mathcal{NN}_{\Delta,W}$, then using the bound on the former computed for instance in~\cite[Lemma 25]{HouKratsios_JMLR_2023}, we may improve~\eqref{eq:RademacherBound} so that it remains bounded as $\Delta$ and $W$ tend to infinity.
\begin{theorem}[Non-Exploding Generalization Bounds for Arbitrarily Deep and Wide ReLU MLPs]
\label{thm:RademacherBound}
Let $\Delta,W,n,d\in \mathbb{N}_+$, $L,L_{\ell}>0$, and consider a $L_{\ell}$-Lipschitz loss function $\ell:\mathbb{R}^2\to [0,1]$.
Under Assumption~\ref{ass:NormalizedData}, for each $\delta \in (0,1)$ the generalization gap $\sup_{\Phi\in \mathcal{NN}_{\Delta,W}^{L,n}} \big|\mathcal{R}(\Phi)-\mathcal{R}_N(\Phi)\big|$ is bounded above by
\begin{equation*}
\resizebox{.99\hsize}{!}{$%
\begin{aligned}
    \underbrace{
        \frac{
            \sqrt{8\log(4/\delta)}
        }{
            \sqrt{N}
        }
    }_{\text{Prob. Satisfaction}}
    +
    2\,L_{\ell}\,
    \min\biggl\{
            \underbrace{
             \frac{4}{N^{3/2}}
                        +
                        \frac{
                            26\log(N)\log(2W)
                            \,
                            W^{3\Delta/2}
                        }{
                            2^{\max(\{0,\Delta-2\}}
                            N
                        }
                        \,
                        \big(
                            \sqrt{d}
                            +
                            C_{d,X_1}\,\sqrt{N}
                            +
                            C_{d,X_1}\,\sqrt{\ln(4/\delta)}
                        \big)
            }_{\text{Parameter Space}}
    ,
            \underbrace{
                C_d\,\frac{L^{\frac{d}{d+3}}}{N^{\frac{1}{d+3}}}
            }_{\text{Function Space}}
    \biggr\}
\end{aligned}
$}%
\end{equation*}
with probability at least $1-\delta$; where $C_d\eqdef \big(8(d+1)^2 (16)^d\big)^{1/(d+3)} + \frac{2^{5/2}\,16^{d/(d+3)}}{(18(d+1))^{(d+1)/(d+3)}}$ and $C_{d,X_1}>0$ is a constant depending only on $X_1$ and $d$.
\end{theorem}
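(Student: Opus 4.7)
The plan is to derive Theorem~\ref{thm:RademacherBound} via the classical symmetrization-plus-Rademacher-complexity pipeline, then bound the Rademacher complexity of the loss-composed class $\{(x,y)\mapsto \ell(\Phi(x),y):\Phi\in \mathcal{NN}_{\Delta,W}^{L,n}\}$ in two essentially different ways and take their minimum.  Because the loss takes values in $[0,1]$ and the samples are i.i.d., McDiarmid's inequality applied to $\sup_\Phi |\mathcal{R}(\Phi)-\mathcal{R}_N(\Phi)|$ yields, with probability at least $1-\delta/2$, the usual bound
\[
\sup_{\Phi\in \mathcal{NN}_{\Delta,W}^{L,n}} \bigl|\mathcal{R}(\Phi)-\mathcal{R}_N(\Phi)\bigr|
\le 2\,\mathfrak{R}_N(\ell\circ \mathcal{NN}_{\Delta,W}^{L,n}) + \sqrt{8\log(4/\delta)/N},
\]
which is exactly the probability-satisfaction summand.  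By Ledoux--Talagrand contraction, $\mathfrak{R}_N(\ell\circ \mathcal{F})\le L_{\ell}\mathfrak{R}_N(\mathcal{F})$, so it remains to bound $\mathfrak{R}_N(\mathcal{NN}_{\Delta,W}^{L,n})$ by the minimum of the parameter-space and function-space expressions.

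For the \textbf{function-space bound}, I use that every $\Phi\in \mathcal{NN}_{\Delta,W}^{L,n}$ is an $L$-Lipschitz function with image in $[0,1]$ (after clipping is absorbed into $\ell$), hence the class embeds into $\operatorname{Lip}_L([0,1]^d,[0,1])$.  The covering-number / Dudley-entropy bound for this class, e.g.~\citep[Lemma 25]{HouKratsios_JMLR_2023}, is nonparametric and gives the $C_d\,L^{d/(d+3)}N^{-1/(d+3)}$ rate; this term does \emph{not} depend on $\Delta$ or $W$, which is why it will control the minimum in the overparameterized regime.  Its derivation is independent of the data distribution and of $n$ beyond the Lipschitz constant, so this half of the minimum is immediate.

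For the \textbf{parameter-space bound}, I invoke the spectral-norm Rademacher bound of~\cite{bartlett2017spectrally} (or the peeling argument of~\citep[Theorem~1]{neyshabur2015pathSGD_NIPS}) applied to the class $\mathcal{NN}_{\Delta,W}$ intersected with our weight constraints.  The bound scales with $\tfrac{1}{N}\,\|X\|_{op}\,\prod_{\ell=1}^{\Delta}\|\mathbf{W}^{(\ell)}\|_{op}\,\bigl(\sum_\ell \text{(Frobenius terms)}\bigr)$ up to a logarithmic factor in $N$ and $W$.  Plugging in the weight estimates from Proposition~\ref{proposition:ParamterEstimates__RefinedVersion} and~\eqref{eq:weight_constants} (namely $\|\mathbf{W}^{(1)}\|_{op}\lesssim \sqrt{Wd}$, $\|\mathbf{W}^{(\ell)}\|_{op}\lesssim W/2$ for the $\Delta-2$ middle layers, and $\|\mathbf{W}^{(\Delta)}\|_{op}\lesssim n\sqrt{W}$) yields a product of spectral norms of order $W^{3\Delta/2}/2^{\Delta-2}$, matching the stated $W^{3\Delta/2}/2^{\max\{0,\Delta-2\}}$ factor (with the additional $1/N^{3/2}$ remainder absorbing the summed Frobenius correction).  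The remaining factor $\|X\|_{op}$ is a norm of the data matrix formed from i.i.d.\ centered sub-Gaussian rows with $\mathbb{E}[X_nX_n^\top]=\tfrac{1}{N}\mathbf{I}_d$; by Vershynin's deviation inequality for sub-Gaussian matrices~\citep[Theorem~4.6.1]{Vershynin_HighDimensionalProbBook_2018}, with probability at least $1-\delta/2$ one has $\|X\|_{op}\le \sqrt{d}+C_{d,X_1}\sqrt{N}+C_{d,X_1}\sqrt{\ln(4/\delta)}$, producing the third factor in the parameter-space term.

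Combining the two Rademacher bounds via $\min\{\cdot,\cdot\}$ and taking a union bound over the two $\delta/2$ events gives the claim.  The main obstacle will be the careful bookkeeping in the parameter-space bound: one has to convert the infinity-norm weight constraints of~\eqref{eq:weight_constants} to operator-norm estimates (which introduces factors of $\sqrt{W}$ per layer and hence the exponent $3\Delta/2$ rather than $\Delta$), identify how the $1/2$ constraint in the middle layers forces the factor $2^{-(\Delta-2)}$ that prevents depth-induced blow-up of that branch, and then plug in the sub-Gaussian operator-norm tail of Vershynin without introducing spurious $W$ or $\Delta$ dependence.  The function-space branch is comparatively routine once the class inclusion into $\operatorname{Lip}_L$ is recorded.
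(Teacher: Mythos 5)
Your proposal follows essentially the same route as the paper: a Bartlett--Mendelson/McDiarmid generalization bound (the paper's Theorems 8 and 12 of~\cite{bartlett2002rademacher} subsume your symmetrization plus Ledoux--Talagrand contraction step), a $\min$ over the parameter-space spectral bound of~\cite{bartlett2017spectrally} and the nonparametric Lipschitz covering bound of~\cite[Lemma 25]{HouKratsios_JMLR_2023}, Vershynin's sub-Gaussian operator-norm estimate for $\|\mathbf{X}\|_{\mathrm{op}}$, and a union bound over the two $\delta/2$ events. The decomposition, the two halves of the $\min$, and the source of the $W^{3\Delta/2}/2^{\max\{0,\Delta-2\}}$ and $C_d L^{d/(d+3)} N^{-1/(d+3)}$ factors all coincide with the paper's argument.
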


\section{{Implications of Theorem~\ref{thm:MainTheorem_StableUniversalApproximation}}}
\label{s:Consequences}
We now showcase the breadth of Theorem \ref{thm:MainTheorem_StableUniversalApproximation} by examining some of its implications from approximation theory to learning theory.  

\subsection{Approximation Theory: Extrapolation Rates}
\label{s:Main__ss:Consequences___Extrapolation}
Our main result guarantees that the approximating ReLU MLP $\Phi$ has a (modulus of) regularity which is no worse than that of the target function.  This allows us to deduce a sharp extrapolation rate for our approximator outside of $[0,1]^d$.

We can thus directly obtain a version of a \textit{global universal approximation theorem}.  However, our result differs from that of~\cite{cuchiero2023global}, which leverage topologies stronger than the uniform convergence on compacts topology on $C(\mathbb{R}^d)$ but weaker than the uniform topology thereon on suitable subspaces; e.g.\ the strict topology on continuous bounded functions studied by~\cite{ToddStoneWeirestrassStricTop_1965,giles1971generalization,Timofte2Khan_SWExtNLC_2018}.  Instead, we do not discount/compress our error using weight functions outside the unit cube but rater we quantify how fast it grows, using the moduli of \textit{both} the target function and its neural network approximator.  

The first step towards this result is given by our first consequence of Theorem~\ref{thm:MainTheorem_StableUniversalApproximation}.  Namely, this first corollary shows that we can easily extend the domain of the interpolator $\Phi$ in Theorem \ref{thm:MainTheorem_StableUniversalApproximation} to all of $\mathbb{R}^d$ while maintaining its regularity.
To formulate this consequence, we consider the distance $\operatorname{dist}_{[0,1]^d}$ from any point to the $d$-dimensional unit cube, defined for each $\mathbf{x}\in \mathbb{R}^d$ by
\[
    \operatorname{dist}_{[0,1]^d}(\mathbf{x})\eqdef \min_{\mathbf{z}\in 
        [0,1]^d
    }\, 
        \|\mathbf{z}-\mathbf{x}\|_1
.
\]

\begin{corollary}[Theorem~\ref{thm:MainTheorem_StableUniversalApproximation} with Optimal Global Regularity]
\label{cor:main_StableUniversalApproximation___with_OptimalLipschitzExtrapolation}
    For any $n\in\mathbb{N}_+$ and $f:[0,1]^d \rightarrow\mathbb{R}$ with modulus of regularity $\omega:[0,d] \rightarrow\mathbb{R}$, there exists a ReLU MLP $\Phi: \mathbb{R}^d \rightarrow \mathbb{R}$ with width at most $8d(n+1)^d$, depth at most $\lceil \log_2{d} \rceil + 5$, and no more than $18d(n+1)^d$ nonzero parameters such that 
    \[
        \|f-\Phi\|_{L^{\infty}([0,1]^d)} 
        \le 
        \omega\left(\frac{d}{2n}\right)
    \]
    Moreover, the following extension of $\omega$ is a modulus of regularity of $\Phi$ on $\mathbb{R}^d$:
    \begin{equation}
    \label{eq:omega_ext}
        \bar{\omega}(x) \eqdef
        \begin{cases}
            \omega(x), & \text{if } x\in[0,d]  \\
            \omega(d), & \text{if } x > d 
        \end{cases},
        \quad \forall x\in[0,\infty)
    \end{equation}
\end{corollary}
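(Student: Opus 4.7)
The strategy is to precompose $\Phi'$ (the ReLU MLP given by Theorem~\ref{thm:MainTheorem_StableUniversalApproximation}) with a ReLU-implementable projection onto $[0,1]^d$, so that on the cube nothing changes, while outside the cube the regularity of $\Phi$ is inherited from the 1-Lipschitzness of the projection and the monotonicity of $\omega$.

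Concretely, I would define the coordinatewise clip $\pi:\mathbb{R}^d\to[0,1]^d$ by
\[
    \pi(\mathbf{x})_i \eqdef \sigma(x_i)-\sigma(x_i-1) = \min\{1,\max\{0,x_i\}\},
\]
which is realized exactly by a single hidden layer of width $2d$: the affine map $x_i\mapsto(x_i,\,x_i-1)$, a ReLU, and the linear recombination $[\mathbf{I},-\mathbf{I}]$. The recombination requires no ReLU, hence it can be absorbed into the first affine map of $\Phi'$ by replacing $\mathbf{W}^{(1)}$ with $[\mathbf{W}^{(1)},-\mathbf{W}^{(1)}]$. Setting $\Phi\eqdef\Phi'\circ\pi$ then produces a single ReLU MLP whose depth is exactly one larger than that of $\Phi'$, i.e.\ at most $\lceil\log_2 d\rceil+5$; whose width equals $\max\{2d,\,8d(n+1)^d\}=8d(n+1)^d$; and whose nonzero-parameter count increases by at most $3d$ for the clip layer ($2d$ weights and $d$ biases equal to $-1$) plus the duplication of the nonzero entries of $\mathbf{W}^{(1)}$; the bound $18d(n+1)^d$ follows from the sparsity information in Proposition~\ref{proposition:ParamterEstimates__RefinedVersion}.

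Two verifications remain. The approximation bound is immediate: $\pi$ restricts to the identity on $[0,1]^d$, hence $\Phi|_{[0,1]^d}=\Phi'|_{[0,1]^d}$ and $\|f-\Phi\|_{L^\infty([0,1]^d)}\le\omega(d/(2n))$. For the global modulus, the scalar clip is 1-Lipschitz on $\mathbb{R}$, so $\pi$ is 1-Lipschitz in $\|\cdot\|_1$; since $\pi(\mathbb{R}^d)\subseteq[0,1]^d$, its image has $\|\cdot\|_1$-diameter at most $d$, so
\[
    \|\pi(\mathbf{x})-\pi(\mathbf{y})\|_1 \le \min\{\|\mathbf{x}-\mathbf{y}\|_1,\,d\}
    \qquad\text{for all }\mathbf{x},\mathbf{y}\in\mathbb{R}^d.
\]
Combining this with the $\omega$-regularity of $\Phi'$ on $[0,1]^d$ and the monotonicity of $\omega$ on $[0,d]$ yields
\[
    |\Phi(\mathbf{x})-\Phi(\mathbf{y})| \le \omega\big(\|\pi(\mathbf{x})-\pi(\mathbf{y})\|_1\big) \le \bar\omega\big(\|\mathbf{x}-\mathbf{y}\|_1\big),
\]
since the middle expression equals $\omega(\|\mathbf{x}-\mathbf{y}\|_1)$ when the argument is at most $d$ and is bounded by $\omega(d)$ otherwise. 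A short check shows that $\bar\omega$ is itself a modulus of regularity: monotonicity and $\bar\omega(0)=0$ are immediate, and concavity on $[0,\infty)$ follows from concavity of $\omega$ on $[0,d]$ together with the constancy of $\bar\omega$ on $[d,\infty)$.

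The main obstacle is the parameter accounting: one must extract enough sparsity of $\mathbf{W}^{(1)}$ from the construction behind Theorem~\ref{thm:MainTheorem_StableUniversalApproximation} and Proposition~\ref{proposition:ParamterEstimates__RefinedVersion} to guarantee that the duplication $\mathbf{W}^{(1)}\mapsto[\mathbf{W}^{(1)},-\mathbf{W}^{(1)}]$ contributes no more than $2d(n+1)^d-3d$ extra nonzero entries. All remaining steps—depth and width bookkeeping, the observation that the composition remains a bona fide ReLU MLP, and the regularity chain above—are routine.
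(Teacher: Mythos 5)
Your proposal matches the paper's own proof essentially step by step: both precompose the approximator from Theorem~\ref{thm:MainTheorem_StableUniversalApproximation} with the coordinatewise clip $p(x)=\sigma(x)-\sigma(x-1)$, observe that $p$ is $1$-Lipschitz with image of $\ell^1$-diameter $d$ so that the regularity chain gives $\bar\omega$, note that $p$ is the identity on $[0,1]^d$ for the uniform bound, and then do the parameter accounting. The only thing the paper does more explicitly is the final count (it starts from the sharper $(16d-2)(n+1)^d$ and adds $2d(n+1)^d$ from duplicating the first-layer weights plus $3d$ from the clip layer, then uses $2(n+1)^d\ge 3d$), but this is exactly the bookkeeping you flagged as the remaining obstacle.
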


Using Corollary~\ref{cor:Extrapolation}, we are able to deduce the following ``global universal approximation theorem'' with \textit{additive correction term}, instead of the usual \textit{multiplicative corrective weight} used in the global approximation literature~\cite[Theorem 4.13]{cuchiero2023global}, in the reservoir computing literature, e.g.~\cite[Corollary 9]{LuydmillaJP_UAT_2018}, or in the deep learning for dynamical systems literature; e.g.~\cite[Theorem 4.11]{AcciaioKratsiosPammer_GHT_2024}. 

\begin{corollary}[Extrapolation Bounds for ReLU MLP Approximators]
\label{cor:Extrapolation}
For any $n\in\mathbb{N}_+$ and $f:\mathbb{R}^d \rightarrow\mathbb{R}$ with modulus of regularity $\omega:[0,d]\rightarrow\mathbb{R}$ on $[0,1]^d$, 
there is a ReLU MLP $\Phi:\mathbb{R}^d\to \mathbb{R}$ with width at most $8d(n+1)^d$, depth at most $\lceil \log_2{d} \rceil + 5$, and no more than $18d(n+1)^d$ nonzero parameters satisfying
\[
    \sup_{\mathbf{x}\in \mathbb{R}^d}
    \,
        \underbrace{
            |f(\mathbf{x})-\Phi(\mathbf{x})|
        }_{\text{Approximation in }[0,1]^d}
        -
        \underbrace{
            \bar{\omega}(\operatorname{dist}_{[0,1]^d}(\mathbf{x}))
        }_{\text{Extrapolation: beyond }[0,1]^d}
    \le 
        \bar{\omega}\left(\frac{d}{2n}\right)
.
\]
where $\bar{\omega}$ is defined in \eqref{eq:omega_ext}. Furthermore, $\bar{\omega}$ is a modulus of regularity of $\Phi$ on $\mathbb{R}^d$.
\end{corollary}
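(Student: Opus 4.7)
The plan is to deduce Corollary~\ref{cor:Extrapolation} as an immediate consequence of Corollary~\ref{cor:main_StableUniversalApproximation___with_OptimalLipschitzExtrapolation}. I would apply the latter to $f|_{[0,1]^d}$, obtaining a ReLU MLP $\Phi:\mathbb{R}^d\to\mathbb{R}$ of the stated architectural size (depth at most $\lceil \log_2 d\rceil + 5$, width at most $8d(n+1)^d$, and at most $18d(n+1)^d$ nonzero parameters) enjoying two properties: (a) $\|f-\Phi\|_{L^{\infty}([0,1]^d)}\le \omega(d/(2n))$, and (b) $\bar{\omega}$ is a modulus of regularity of $\Phi$ on all of $\mathbb{R}^d$. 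Property (b) directly recovers the final sentence of the corollary, so all that remains is to upgrade the local bound in (a) to the advertised global extrapolation estimate.

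Fix $\mathbf{x}\in\mathbb{R}^d$ and let $\mathbf{z}$ denote its $\ell^1$-projection onto $[0,1]^d$, obtained componentwise by $z_i=\max\{0,\min\{1,x_i\}\}$. A direct check gives $\|\mathbf{x}-\mathbf{z}\|_1=\operatorname{dist}_{[0,1]^d}(\mathbf{x})$, and the componentwise clipping formula makes this projection non-expansive in $\ell^1$. Applying the triangle inequality through $\mathbf{z}$,
\[
|f(\mathbf{x})-\Phi(\mathbf{x})|\le |f(\mathbf{x})-f(\mathbf{z})|+|f(\mathbf{z})-\Phi(\mathbf{z})|+|\Phi(\mathbf{z})-\Phi(\mathbf{x})|,
\]
the middle term is bounded by $\omega(d/(2n))=\bar{\omega}(d/(2n))$ by (a), while the last term is bounded by $\bar{\omega}(\|\mathbf{x}-\mathbf{z}\|_1)=\bar{\omega}(\operatorname{dist}_{[0,1]^d}(\mathbf{x}))$ by (b).

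The main obstacle is the first term $|f(\mathbf{x})-f(\mathbf{z})|$, because the hypothesis only asserts a modulus of regularity for $f$ on $[0,1]^d$, not globally. I expect the resolution to exploit the specific architecture delivered by Corollary~\ref{cor:main_StableUniversalApproximation___with_OptimalLipschitzExtrapolation}: the extra layer of depth compared with Theorem~\ref{thm:MainTheorem_StableUniversalApproximation} is most naturally used to \emph{prepend} a componentwise clipping via $\operatorname{clip}(t,0,1)=\sigma(t)-\sigma(t-1)$, so that $\Phi$ factors through the projection onto $[0,1]^d$ and $\Phi(\mathbf{x})=\Phi(\mathbf{z})$ identically on $\mathbb{R}^d$. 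This eliminates the third term entirely, and under the natural reading of the statement wherein $f$ off $[0,1]^d$ is interpreted via its projection-based extension (which automatically inherits $\bar{\omega}$-regularity globally by non-expansivity of the projection plus monotonicity of $\bar{\omega}$), the first term is dominated by $\bar{\omega}(\operatorname{dist}_{[0,1]^d}(\mathbf{x}))$. Combining the three pieces yields
\[
|f(\mathbf{x})-\Phi(\mathbf{x})|-\bar{\omega}(\operatorname{dist}_{[0,1]^d}(\mathbf{x}))\le \bar{\omega}(d/(2n))
\]
uniformly in $\mathbf{x}\in\mathbb{R}^d$, which is the claim.
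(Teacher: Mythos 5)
Your proposal is correct and follows the paper's own proof essentially step-for-step: invoke Corollary~\ref{cor:main_StableUniversalApproximation___with_OptimalLipschitzExtrapolation} to obtain $\Phi$ with global modulus $\bar{\omega}$ and the stated architecture, decompose $|f(\mathbf{x})-\Phi(\mathbf{x})|$ by the triangle inequality through the clipping map $p(\mathbf{x})$, and observe that $\Phi=\Phi\circ p$ (the extra depth layer in that corollary is precisely a prepended $p$), which makes the term $|\Phi(\mathbf{x})-\Phi(p(\mathbf{x}))|$ vanish identically rather than merely be $\bar{\omega}$-small. The subtlety you flag about $|f(\mathbf{x})-f(p(\mathbf{x}))|$ is real: the hypothesis as written only gives $f$ a modulus of regularity on $[0,1]^d$, yet the paper's proof applies $\bar{\omega}$ to $f$ across the boundary of the cube, so the corollary must be read as tacitly assuming $\bar{\omega}$ is a modulus of regularity for $f$ on all of $\mathbb{R}^d$; your projection-extension reading is an equally workable way to close that gap (and in fact makes the first term exactly zero), and both readings deliver the advertised bound $\bar{\omega}(d/(2n))+\bar{\omega}(\operatorname{dist}_{[0,1]^d}(\mathbf{x}))$.
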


\section{Discussion}
\label{s:Discussion}

We now discuss some technical points of our main result, explaining how certain steps in our construction cannot be improved on.  The proofs of each of these results are relegated to Section~\ref{s:Proofs__ss:Discussion} below.

\subsection{The Kuhn Triangulation is Unique for Regular Approximation}
\label{s:Discussion__ss:KuhnTriUnique}

The multi-dimensional analogue of the cartoon of our one-dimensional regular ReLU approximator illustrated in Figure~\ref{fig:Proof_Strategy_Idea__NEW}, relies on a specific triangulation of the $d$-dimensional cube $[0,1]^d$.  This triangular called the \emph{Kuhn triangulation}, defined shortly afterwards, is the most ``regular'' triangulation in the sense that it is the only triangulation whose continuous piecewise linear approximators always preserve the regularity of the function that it approximates under some mild conditions; see~\cite[Lemma 1]{kim2023algebraic}.  This triangulation is used in one of the key technical steps in proving Theorem~\ref{thm:MainTheorem_StableUniversalApproximation}; namely, Lemma~\ref{lem:CPWLApprox_wRegControl}.

\begin{figure}[!htbp]
\centering

\includegraphics[width=0.3\linewidth]{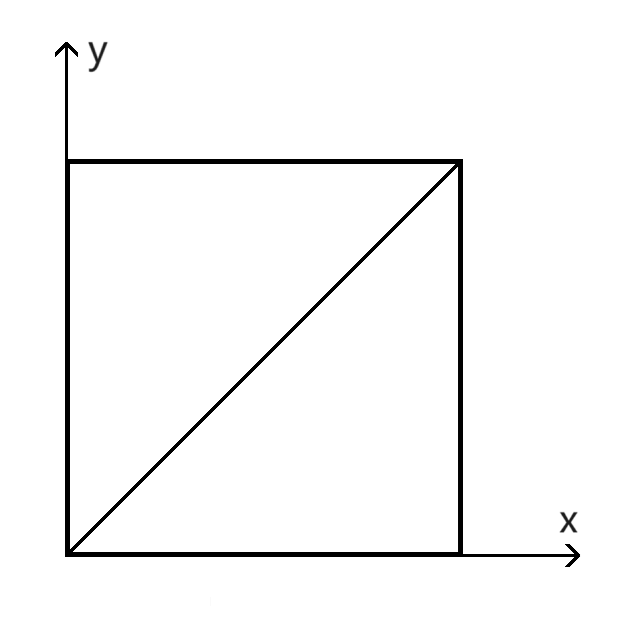}
\hspace{10pt}
\includegraphics[width=0.3\linewidth]{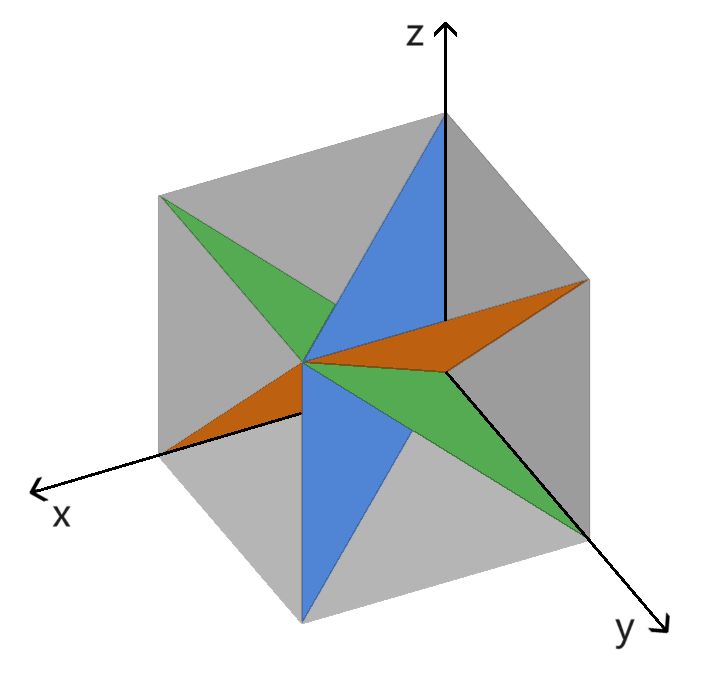}

\caption{Kuhn triangulations of a 2D (left) and a 3D (right) unit cube. }
\label{fig:kuhn_local}
\end{figure}

\begin{definition}[Kuhn triangulation]
\label{defn:Kuhn_triangulation}
    For any $d\in\mathbb{N}_+$, the $d!$ different simplices 
    \begin{equation}
    \label{eq:kuhn_triang_d!_simplices}
        \operatorname{conv}\left(\left\{
        \sum_{i=1}^k \mathbf{e}_{\tau(i)}: 0 \le k \le d
        \right\}\right),
        \quad \tau\in S_d
    \end{equation}
    form a triangulation of $[0,1]^d$, where $S_d$ is the symmetric group on $d$ letters, and $\mathbf{e}_1,\cdots,\mathbf{e}_d$ are the standard basis vectors of $\mathbb{R}^{d}$. This triangulation is called the Kuhn triangulation.
\end{definition}

It is natural to ask if another triangulation could have been used during the proof of our main result.  Surprisingly, this is not the case as any other triangulation, other than reflections of the Kuhn triangulation, yield approximators with larger Lipschitz constants.  This is the content of the following result which serves as a type of converse to Lemma \ref{lem:CPWLApprox_wRegControl}.
\begin{theorem}[Uniqueness of the Kuhn Triangulation - up to Reflections]
\label{thm:KuhnUnique}
    The Kuhn triangulation is the only triangulation (up to reflections) that makes Lemma \ref{lem:CPWLApprox_wRegControl} true for the case $n=1$. 
\end{theorem}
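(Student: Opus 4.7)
My plan is to prove the contrapositive: if $\mathcal{T}$ is a triangulation of $[0,1]^d$ with vertices in $\{0,1\}^d$ that is \emph{not} the Kuhn triangulation (up to reflections), then I will exhibit a Lipschitz function on $\{0,1\}^d$ whose continuous piecewise-linear interpolant on $\mathcal{T}$ strictly inflates the Lipschitz constant, contradicting Lemma~\ref{lem:CPWLApprox_wRegControl}. The argument splits into a local characterization of Lipschitz-preserving simplices and a global rigidity argument forcing such simplices to assemble into a Kuhn triangulation.

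\textbf{Local step.} I first claim that a $d$-simplex $S$ with vertices $v_0,\dots,v_d\in\{0,1\}^d$ has the property that every piecewise-linear interpolant $L_S$ on $S$ satisfies $\|\nabla L_S\|_\infty\le \operatorname{Lip}(f|_{\{v_0,\dots,v_d\}})$ (both measured in the $\ell^1$ metric) if and only if, for every $k\in\{1,\dots,d\}$, some pair of vertices of $S$ satisfies $v_i-v_j=\pm e_k$; that is, $S$ possesses a unit edge in every coordinate direction. Sufficiency is immediate since $\partial_k L_S=f(v_i)-f(v_j)$ along such a unit edge. For necessity, suppose that some coordinate $k^{\star}$ admits no unit edge in $S$. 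Then any pair $v_i,v_j\in S$ with $|v_{i,k^{\star}}-v_{j,k^{\star}}|=1$ must satisfy $\|v_i-v_j\|_1\ge 2$, so setting $f(v_i)=2v_{i,k^{\star}}$ is $1$-Lipschitz on $\{v_0,\dots,v_d\}$. Extending $f$ to $\{0,1\}^d$ via the McShane formula preserves the $\ell^1$-Lipschitz constant of $1$ globally, whereas $L_S$ is forced to be the linear map $2x_{k^{\star}}$ (matching $f$ at the $d+1$ affinely independent vertices of $S$), with Lipschitz constant $2$, violating the preservation required by Lemma~\ref{lem:CPWLApprox_wRegControl}.

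\textbf{Global step.} Assume now that every simplex of $\mathcal{T}$ has unit edges in all $d$ coordinate directions. Each simplex $S$ then carries $d$ designated coordinate unit edges on its $d+1$ vertices; since these $d$ edges use $d$ distinct coordinate directions, they cannot form a cycle, so they form a spanning tree of $S$. If this tree is a path, $S$ is a Kuhn simplex after possibly reflecting the cube via $x_k\mapsto 1-x_k$ on a subset of coordinates. If instead the tree contains a vertex $v^{\star}$ of degree at least three, then after such a reflection I may assume $v^{\star}=0$ and that the claw $\{0,e_{k_1},e_{k_2},e_{k_3}\}$ sits inside $S$. Considering the $(d-1)$-face of $S$ opposite $v^{\star}$---in the prototypical case $S=\{0,e_1,\dots,e_d\}$ this is the outer face $\{e_1,\dots,e_d\}$---the adjacent $d$-simplex of $\mathcal{T}$ sharing this face has an additional vertex $w\in\{0,1\}^d\setminus\{v^{\star}\}$. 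A direct case analysis shows that such a $w$ supplies at most two coordinate unit edges into $\{e_1,\dots,e_d\}$ (and only when $w=e_i+e_j$ for $i\neq j$), so for $d\ge 3$ the adjacent simplex necessarily misses a coordinate unit edge, contradicting the standing assumption. Hence every simplex of $\mathcal{T}$ must be a reflected Kuhn simplex, and a short combinatorial argument enforcing consistency of the reflections across shared $(d-1)$-faces then identifies $\mathcal{T}$ with a global reflection of the standard Kuhn triangulation.

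\textbf{Main obstacle.} The most delicate step is extending the global argument to branching simplices in dimensions $d\ge 4$, where the spanning tree of $S$ can be more intricate than a $3$-star and the coordinates missing from the $(d-1)$-face opposite $v^{\star}$ must be tracked carefully. My plan is to isolate a claw $\{v^{\star},v^{\star}+e_{k_1},v^{\star}+e_{k_2},v^{\star}+e_{k_3}\}$ inside $S$, restrict attention to the link of the complementary $(d-4)$-face within $\mathcal{T}$, and reduce the obstruction to the three-dimensional outer-face argument by affine projection onto the span of the claw; in this projected picture any candidate adjacent simplex still misses at least one of $k_1,k_2,k_3$, closing the contradiction in full generality.
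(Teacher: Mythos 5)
Your proposal takes a genuinely different route from the paper. The paper argues by dimensional induction on the faces of the cube: the restriction of $\mathcal{T}$ to each $(d-1)$-face is shown to be a triangulation, the induction hypothesis identifies it as a reflected Kuhn triangulation, and then an explicit $1$-Lipschitz counterexample $f=\sigma(x_1-x_2)+\sum_{i\ge 3}x_i$ is used to force each hyperplane $x_i-x_j=0$ to separate simplices, with a final consistency argument for the remaining hyperplanes involving $x_d$. You instead establish a clean local characterization (a simplex preserves $\ell^1$-Lipschitz constants under interpolation iff it contains a unit edge $\pm e_k$ for every coordinate $k$) and then attempt a combinatorial rigidity argument to show all such simplices must be reflected Kuhn simplices fitting together consistently. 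Your local step is correct and rigorous: sufficiency follows from $\partial_k L_S = f(v_i)-f(v_j)$ along a unit edge, and necessity via the function $f(v)=2v_{k^\star}$ together with McShane extension, which shows the interpolant on $S$ is forced to be $x\mapsto 2x_{k^\star}$, doubling the Lipschitz constant. This local characterization is a genuinely attractive structural observation not present in the paper's proof.

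However, the global step contains two real gaps. First, the proposed reduction of the branching case to the three-dimensional ``prototypical'' picture via affine projection onto the span of a claw is unconvincing as stated: an affine projection of the cube does not carry the triangulation $\mathcal{T}$ to a triangulation, so one cannot appeal to the $d=3$ analysis directly. A workable replacement does exist, though: after reflecting so that $v^{\star}=0$, the support of every vertex in the face $F$ opposite $v^{\star}$ is contained in a single set $D_i$ (the directions of the subtree rooted at the $i$-th child of $v^{\star}$), and these $D_i$'s partition $\{1,\dots,d\}$ into at least three nonempty parts; tracking the support of the candidate new vertex $w$ of the adjacent simplex shows $w$ can supply unit edges in at most one of the directions $k_1,k_2,k_3$, so the adjacent simplex violates the local condition. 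Second, and more seriously, the concluding ``short combinatorial argument enforcing consistency of the reflections across shared faces'' is not supplied, and it is not obvious: the fact that each simplex of $\mathcal{T}$ is individually a reflected Kuhn simplex does not immediately force all of them to share a common cube diagonal, which is what a single global reflection requires. The paper's face-induction strategy sidesteps this difficulty by first pinning down the Kuhn structure on a facet of the cube and propagating inward; if you want to keep your local-to-global framework, you will need an argument (perhaps along the lines that two reflected Kuhn simplices sharing a $(d-1)$-face must share the same long diagonal) to close this gap.
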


\subsection{Parameter Usage Compared to State-of-the-Art Approximation}
\label{s_Discussion__ss:SOTA_w_RegulartyTracked}
This section contains a statement and derivation of the main result of the state-of-the-art (SOTA) approximation theorem for ReLU MLPs~\cite{ShenYangZhang_JMPA_OptApprx_ReLU} which \textit{includes estimates on the Lipschitz constant of the ReLU MLP constructed therein}.  We emphasize that the optimality criterion in the following approximation theorem and its predecessor was parameter usage, not regularity.

\begin{theorem}[{SOTA Universal Approximation Theorem of~\cite{ShenYangZhang_JMPA_OptApprx_ReLU} with Regularity Quantification}]
\label{thm:main_SOTA_Benchmark}
    For any $2\le n\in\mathbb{N}_+$ and $f\in\operatorname{Hol}(\nu,\alpha,[0,1]^d)$ with $\alpha\in(0,1]$, we have:
    \begin{enumerate}
        \item If $d=1$, then $\exists\Phi\in\mathcal{NN}(\#\text{input}=1; \operatorname{widthvec}=[2n+1,2n])$ with $\mathcal{O}(n^2)$ nonzero parameters such that 
        $$\|f-\Phi\|_{L^{\infty}([0,1])} \le \frac{\nu}{\left(2n^2\right)^{\alpha}}$$
        $$\Phi\in\operatorname{Hol}(\nu,\alpha,[0,1])$$
        \item If $d>1$, then $\exists\Phi\in\mathcal{NN}(\#\text{input}=d; \text{width} \le \max \{(2n+3)d, 6d+3, 2\lceil (n+1)^{d/2} \rceil+2d \};\text{depth} \le 23d+9)$ with $\mathcal{O}(dn^d)$ nonzero parameters such that
        $$\|f-\Phi\|_{L^{\infty}([0,1]^d)} \le \nu \left( 
        \frac{d^2}{n(2d+1)}
        \right)^{\alpha}$$
        $$\operatorname{Lip}(\Phi) \le 
        \nu (n+1)^d (2d+1) d^{\alpha}$$
    \end{enumerate}
\end{theorem}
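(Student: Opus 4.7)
The plan is to essentially invoke the construction of~\cite{ShenYangZhang_JMPA_OptApprx_ReLU} verbatim to obtain the depth, width, parameter count, and $L^{\infty}$ error bounds, and then to add to that construction the new regularity estimates (Hölder in the 1D case, Lipschitz in the multi-dimensional case) that are not part of the original statement. The architectures in the two cases differ substantially, so I will handle them separately.

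For the case $d=1$, I would take $\Phi$ to be the continuous piecewise linear interpolant of $f$ on the uniform grid $\{k/(2n^2)\}_{k=0}^{2n^2}$ of $[0,1]$. This interpolant is implementable by the standard two hidden layer SYZ ``tooth function'' assembly with $\operatorname{widthvec}=[2n+1,2n]$ and $\mathcal{O}(n^2)$ nonzero parameters, and the $L^\infty$ error bound $\nu/(2n^2)^\alpha$ is immediate from the modulus of continuity of $f$ on each grid interval of length $1/(2n^2)$. To upgrade this to $\Phi\in\operatorname{Hol}(\nu,\alpha,[0,1])$, I would invoke Lemma~\ref{lem:CPWLApprox_wRegControl}: since on each grid segment $\Phi$ is linear between $f$-values which differ by at most $\nu(1/(2n^2))^\alpha$, it is $(\nu,\alpha)$-Hölder on each piece; the lemma's path-based argument then promotes this to global Hölder continuity with the same constant.

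For the case $d>1$, I would directly invoke the SYZ construction, which partitions $[0,1]^d$ into a grid of side length $\sim 1/n$, approximates $f$ by a step function outside the so-called trifling regions (of width comparable to $1/(n(2d+1))$, say), and interpolates linearly across the trifling regions to produce a continuous approximator. This gives all the claimed counts and the uniform error $\nu(d^2/(n(2d+1)))^\alpha$ as in~\cite{ShenYangZhang_JMPA_OptApprx_ReLU}. The new part is the Lipschitz estimate: since the approximator is continuous piecewise linear, its Lipschitz constant equals the maximum slope of its linear pieces. Outside trifling regions the function is constant, so contributions come only from trifling segments, on which $\Phi$ linearly interpolates between two neighboring cell values whose $f$-difference is at most $\nu\,d^\alpha$ (the Hölder oscillation across the unit cube), divided by the trifling-region width. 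Carefully accounting for the $(n+1)^d$ cells and the $(2d+1)$ factor from the trifling geometry yields the bound $\operatorname{Lip}(\Phi)\le \nu(n+1)^d(2d+1)d^\alpha$.

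The main obstacle is the Lipschitz bookkeeping in the multi-dimensional case: one must make sure the trifling-region geometry is such that the $\ell^1$-Lipschitz norm (rather than a looser coordinate-wise bound) is controlled, and must verify that the implementing bit-extraction layers of the SYZ construction do not silently inflate the effective slope (e.g.\ through the approximate step functions used to select grid indices). In contrast with the 1D case, there is \emph{no} hope of retaining the true Hölder constant $\nu$ of $f$: as illustrated in Figure~\ref{fig:Proof_Strategy_Idea__OLD} and Remark~\ref{rem:On__fig:Proof_Strategy_Idea}, the steep trifling-region interpolators force $\operatorname{Lip}(\Phi)$ to blow up polynomially in $n$, which is precisely the pathology that motivates the Kuhn-triangulation-based construction of Theorem~\ref{thm:MainTheorem_StableUniversalApproximation}.
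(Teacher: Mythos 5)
Your 1D sketch is close but has two issues. First, the grid $\{k/(2n^2)\}_{k=0}^{2n^2}$ has $2n^2+1$ points, which overshoots the memorization budget of a $[2n+1,2n]$ network (Proposition~\ref{prop:fit_a_network_with_two_hidden_layers} with $M=2n+1$, $N=\lfloor (n+1)/2\rfloor$ gives $MN\approx n^2$); the paper uses the coarser grid $\{i/n^2\}_{i=0}^{n^2}$, for which the $L^\infty$ bound $\nu/(2n^2)^\alpha$ follows from Lemma~\ref{lem:continuous_piecewise_linear_approximators_preserve_regularity} since the error is $\omega(\text{spacing}/2)=\omega(1/(2n^2))$. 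Second, and more importantly, your reasoning that ``$\Phi$ is $(\nu,\alpha)$-Hölder on each piece, and the lemma promotes this to global Hölder'' is exactly the inference that the paper warns against: for $\alpha<1$, piecewise Hölder continuity with constant $\nu$ does \emph{not} imply global Hölder continuity with the same constant. The actual mechanism in Lemma~\ref{lem:continuous_piecewise_linear_approximators_preserve_regularity} is a convexity-split argument using the concavity of $\omega$ and the fact that $\Phi$ matches $f$ at grid points; you should invoke that lemma's conclusion directly rather than attempt a local-to-global upgrade.

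The $d>1$ case is where your proposal genuinely breaks down. You claim to ``directly invoke the SYZ construction'' to get the stated width, depth, and $L^\infty$ bounds, but the theorem's counts do not follow from~\cite{ShenYangZhang_JMPA_OptApprx_ReLU} verbatim: the $6d+3$ term in the width and the $23d+9$ depth come from the paper's \emph{modified} construction, which replaces SYZ's $3^d$-fold offset-and-median scheme by a median over only $2d+1$ trifling-region offsets $\Omega_1,\dots,\Omega_{2d+1}$ (chosen so every $\mathbf{x}$ lies outside at least $d+1$ of them, making the median accurate via Lemma~\ref{lem:median_estimators_have_small_errors}). Without the median, a single SYZ-type approximator has uncontrolled error inside its own trifling region, so your sketch cannot deliver the claimed uniform bound; nor do you account for implementing the median in ReLU (Lemma~\ref{lem:relu_for_computing_median}), which is what contributes the $6d+3$ width and most of the $23d+9$ depth. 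Your Lipschitz argument also misidentifies the source of the $(n+1)^d$ factor: it is not a slope-across-trifling-region quantity, but rather $\operatorname{Lip}(\pi_k)\le(n+1)^d(2d+1)$ for the base-$(n+1)$ cell-encoding map $\pi_k(\mathbf{x})=\sum_i(n+1)^{i-1}p_k(x_i)$, composed with $\operatorname{Lip}(\phi_k)\le\nu d^\alpha$ for the 1D memorizer (here $d^\alpha$ arises because consecutive encoding indices can correspond to geometrically distant cells, with $\ell^1$-distance up to $d$), followed by regularity-preservation of the median (Lemma~\ref{lem:median_function_preserves_regularity}). Your sketch of ``neighboring cell values divided by trifling width'' does not track this encoding structure, which is where the leading-order Lipschitz blowup actually lives.
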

Observe that the minor improvement of the main result of~\cite{ShenYangZhang_JMPA_OptApprx_ReLU}, given in Theorem~\ref{thm:main_SOTA_Benchmark}, yields significantly more irregular MLPs than our main theorem due to sharp descents between trifling regions (illustrated by the red bands in Figure~\ref{fig:Proof_Strategy_Idea__OLD}).  
In contrast, our Kuhn triangulation-based construction resolves this issue optimally since the ReLU MLPs constructed using it do not require any trifling region.

\begin{remark}[{Improvement of Constant for Variant of~\cite[Theorem 1.1]{ShenYangZhang_JMPA_OptApprx_ReLU}}]
In the multi-dimensional case, our version of the main result of~\cite{ShenYangZhang_JMPA_OptApprx_ReLU} achieves the optimal approximation rate with a more efficient parameter usage than the original formulation in~\cite{ShenYangZhang_JMPA_OptApprx_ReLU}.  
In our version of their result, we achieve the same approximation rate while dropping the dependence of the constant on $d$ from exponential to linear.
\end{remark}


\subsection{Path Norms and Lipschitz Constants}
 We mention that $|\Phi|_{\operatorname{par}}$, defined in~\eqref{eq:parameter_norm}, is closely related to the so-called ``path-norm'' seen in the literature, defined by
\begin{equation*}
    |\Phi|_{\operatorname{path}} \eqdef \prod_{l=1}^{\Delta}\,\|\mathbf{W}^{(l)}\|_{\operatorname{op}}
\end{equation*}
where $\|\cdot\|_{\operatorname{op}}$ denotes the spectral norm of a matrix.  The path norm is typically used to quantify the regularity of a parametric representation of a neural network~\cite{NeyshaburetALPathnorm_NIPS2017}, and which is often used as a regularizer during training~\cite{neyshabur2015pathSGD_NIPS}.  The path-norm provides readily computable upper bound on the Lipschitz constant of $\Phi$, which can be easily computed from the parameter space of $\Phi$.  This is in stark contrast to the exact Lipschitz constant of ReLU MLPs, which can be difficult to exactly in practice~\cite{virmaux2018lipschitz}.  Here, we mention that $|\Phi|_{\operatorname{par}}$ can easily be estimated from the values in Table~\ref{tab:MLP_Description}; however, one readily sees that the path norm bounded in this way can severely overestimate the Lipschitz constant of the function \textit{implemented} by the ReLU MLP in Theorem~\ref{thm:main_IntroVersion}.

\section{Proofs}
\label{s:Proofs}

\subsection{Additional Notation During Proofs}
\label{s:Proofs__ss:FurtherNotation}
During our proofs, we will also adopt the following notational conventions, in addition to those described in Section~\ref{s:Prelim__ss:Notation}.
A single layer of an MLP is considered as a column vector. 
For each $m\in \mathbb{N}_+$, the median of the real numbers $x_1,x_2,\cdots,x_m$, denoted by $\operatorname{median}(x_1,x_2,\cdots,x_m)$, is
\begin{equation}
\label{eq:Median_Function}
    \operatorname{median}(x_1,x_2,\cdots,x_m) 
\eqdef 
    \begin{cases}
        x_{(m+1)/2} & \mbox{ if } m \mbox{ is odd} \\
        (x_{(m/2)}+x_{(m/2+1)})/2 & \mbox{ if } m \mbox{ is even} \\
    \end{cases}
\end{equation}
where $\{x_{(i)}\}_{i=1}^m=\{x_i\}_{i=1}^m$ and $x_{(1)}\le \dots \le x_{(m)}$.
With a slight abuse of notation, we sometimes abbreviate a column vector in the following way: for example, if $\mathbf{x} = \begin{bmatrix} x_1 & x_2 \end{bmatrix}^{\top}$ and $\mathbf{y} = \begin{bmatrix} y_1 & y_2 & y_3 \end{bmatrix}^{\top}$, then 
$$\begin{bmatrix} 
    \mathbf{x} & \sigma(\pm a \pm b) & \mathbf{y} 
\end{bmatrix}^{\top}
\eqdef
\begin{bmatrix} 
    x_1 & x_2 
    & \sigma(a+b) & \sigma(a-b) & \sigma(-a+b) & \sigma(-a-b)
    & y_1 & y_2 & y_3
\end{bmatrix}^{\top}
.
$$
For any $p\in[1,\infty)$, the $\ell^p$ norm of a vector $\mathbf{x}=(x_1,x_2,\cdots,x_d)\in\mathbb{R}^d$ is defined as 
$$\| \mathbf{x} \|_p \eqdef (|x_1|^p + |x_2|^p + \cdots + |x_d|^p)^{1/p}$$
and its $\ell^{\infty}$ norm is defined as 
$$\| \mathbf{x} \|_{\infty} \eqdef \max\{|x_1|,|x_2|,\cdots,|x_d|\}$$
The $L^p$ norm of a function $f:\Omega\subseteq\mathbb{R}^d \rightarrow \mathbb{R}$ is defined as 
$$\| f \|_{L^p(\Omega)} \eqdef 
\left( \int_{\Omega} |f|^p d\mu \right) ^ {1/p}$$
where $\mu$ is the Lebesgue measure, and its $L^{\infty}$ norm is defined as 
$$\| f \|_{L^{\infty}(\Omega)} \eqdef 
\inf\{ C\ge 0: |f| \le C \text{ almost everywhere on } \Omega \}.$$
This paper uses the $\ell^{1}$ norm on the domain; for instance, when defining Lipschitz constants and regularity. We use the (uniform) $L^{\infty}$ norm when quantifying approximation estimates.

We use $\mathcal{O}$ for the big $O$ notation, which is written as $\tilde{\mathcal{O}}$ when logarithmic factors are ignored. We use $\Omega$ for the big $\Omega$ notation when it is clear from the context. 

For a set $S$, let $\operatorname{card}(S)$ denote its cardinality. For two sets $S_1$ and $S_2$, let $S_1 \times S_2 \eqdef \{(s_1,s_2) : s_1 \in S_1, s_2 \in S_2\}$ be the Cartesian product of $S_1$ and $S_2$, and denote the Cartesian product of $S_1$ with itself $d$ times for $d\in\mathbb{N}_+$ as $S_1^d$, i.e. 
$S_1^d \eqdef \prod_{i=1}^d\, S_1= \{ (s_1,s_2,\cdots,s_d) : s_1,s_2,\cdots,s_d \in S_1 \}
$.

For a finite subset $S=\{\mathbf{v}_1,\mathbf{v}_2,\cdots,\mathbf{v}_n\}$ of $\mathbb{R}^d$, let $\operatorname{conv}(S)$ denote the convex hull of $S$, i.e. 
$$\operatorname{conv}(S) \eqdef 
\{ \lambda_1 \mathbf{v}_1 + \lambda_2 \mathbf{v}_2 
+ \cdots
+ \lambda_n \mathbf{v}_n :
\lambda_1,\cdots,\lambda_n \ge 0,
\lambda_1 + \lambda_2 + \cdots + \lambda_n = 1\}
.
$$
For a set of sets $S$, let $\cup S \eqdef \bigcup_{S'\in S} S'$ denote the union of all the sets in $S$.

We say that a $1$-dimensional function $f:\Omega\subseteq\mathbb{R}\rightarrow\mathbb{R}$ is continuous piecewise linear if there exists a finite collection of intervals $I_1,\cdots,I_m$ such that $\Omega\subseteq I_1\cup\cdots\cup I_m$, and $f$ is linear on $I_1,\cdots,I_m$. We say that $x\in\Omega$ is a \textit{break point} of $f$ if the left and right-hand derivatives of $f$ at $x$ are different. For example, $\sigma$ is a continuous piecewise linear function on $\mathbb{R}$, and $x=0$ is the only break point of $\sigma$.

In what follows, given a ReLU MLP $\Phi$ with representation~\eqref{eq:MLPRepresentation}; when convenient, we will use $\operatorname{depth}(\Phi)$ to denote its \textit{depth} of a network $\Phi$ and $\operatorname{width}(\Phi)$ to denote its width.
Emulating~\cite{ShenYangZhang_JMPA_OptApprx_ReLU}, we use $\mathcal{NN}(c_1;c_2;\cdots;c_m)$ to denote the class of functions implemented by ReLU MLPs which satisfy conditions $c_1,c_2,\cdots,c_m$. For example, $\mathcal{NN}(\#\text{input}=2;\text{depth}\le 3;\text{width} = 4;\#\text{output}=5)$ denote the class of functions from $\mathbb{R}^2$ to $\mathbb{R}^5$ implemented by ReLU MLPs which have depth at most $3$ and width equal to $4$.

One example of this is the notation $\mathcal{NN}(\mathbf{d})$ introduced in Definition~\ref{defn:ReLUMLP}.  In this case, the condition is $\operatorname{widthvec}(\Phi)=\mathbf{d}$.

\paragraph{Compositional Notation}
During the course of our analysis, it will be convenient to describe ReLU MLPs via the role of each of their (sets of) layers.  Specifically, the structure of a ReLU MLP $\Phi$ is represented in the following way: suppose $\Phi = \mathcal{L}_m \circ (\sigma \circ \mathcal{L}_{m-1}) \circ \cdots \circ (\sigma \circ \mathcal{L}_2) \circ (\sigma \circ \mathcal{L}_1)$ where the $\mathcal{L}_i$'s are affine transformations, then we express it as 
\begin{align*}
    \mathbf{x}
    \ &\Longrightarrow \
    (\sigma \circ \mathcal{L}_1) (\mathbf{x})
    \ \Longrightarrow \
    (\sigma \circ \mathcal{L}_2) \circ (\sigma \circ \mathcal{L}_1) (\mathbf{x})
    \\
    \ &\Longrightarrow \
    (\sigma \circ \mathcal{L}_{m-1}) \circ \cdots \circ (\sigma \circ \mathcal{L}_2) \circ (\sigma \circ \mathcal{L}_1) (\mathbf{x})
    \\
    \ &\Longrightarrow \
    \mathcal{L}_m \circ (\sigma \circ \mathcal{L}_{m-1}) \circ \cdots \circ (\sigma \circ \mathcal{L}_2) \circ (\sigma \circ \mathcal{L}_1) (\mathbf{x})
    \\
    \ &= \ \Phi(\mathbf{x})
\end{align*}
In other words, if $\mathbf{x}_1,\mathbf{x}_2,\cdots,\mathbf{x}_{m-1}$ are the $1,2,\cdots,(m-1)$-th hidden layer of $\Phi$ and $\mathbf{x}_m$ is the output layer, then the structure of $\Phi$ is expressed as 
$$\mathbf{x} \ \Longrightarrow \ 
\mathbf{x}_1 \ \Longrightarrow \
\mathbf{x}_2 \ \Longrightarrow \
\cdots \ \Longrightarrow \
\mathbf{x}_{m-1} \ \Longrightarrow \
\mathbf{x}_m
\ = \ \Phi(\mathbf{x})$$
If the structure of the network $\Phi$ with input $\mathbf{x}$ and output $\Phi(\mathbf{x})$ is clear from the context, then in the following expression, we mean that $\mathbf{x}$ and $\Phi(\mathbf{x})$ are connected via the network $\Phi$: 
\[
    \cdots
    \ \Longrightarrow \
    \begin{bmatrix}
        \vdots \\ \mathbf{x} \\ \vdots
    \end{bmatrix}
    \ \Longrightarrow \ 
    \begin{bmatrix}
        \vdots \\ \Phi(\mathbf{x}) \\ \vdots
    \end{bmatrix}
    \ \Longrightarrow \
    \cdots
\]
In this case, there may be additional hidden layers between the two layers shown above. Finally, if one of the layers we constructed in fact did not use any activation functions, then it can be integrated with the layer after it, an does not require an extra layer to process. We color them in green to indicate that we can ignore these layers when estimating the size of the network. For example, if $\Phi \eqdef \mathcal{L}_4 \circ (\sigma \circ \mathcal{L}_3) \circ (\mathcal{L}_2) \circ (\sigma \circ \mathcal{L}_1)$ is constructed as a network with three hidden layers $\mathbf{x}_1 \eqdef \sigma \circ \mathcal{L}_1 (\mathbf{x})$, $\mathbf{x}_2 \eqdef \mathcal{L}_2 (\mathbf{x}_1)$ and $\mathbf{x}_3 \eqdef \sigma \circ \mathcal{L}_3 (\mathbf{x}_2)$, and $\mathbf{x}_2$ does not use any activation functions, then $\Phi$ can be implemented by a network with only two hidden layers: $\Phi = \mathcal{L}_4 \circ (\sigma \circ (\mathcal{L}_3 \circ \mathcal{L}_2)) \circ (\sigma \circ \mathcal{L}_1)$, whose structure is expressed as 
\[
    \mathbf{x}
    \ \Longrightarrow \
    \mathbf{x}_1
    \ \Longrightarrow \
    {\color{persiangreen} \mathbf{x}_2 }
    \ \Longrightarrow \
    \mathbf{x}_3
    \ \Longrightarrow \
    \Phi(\mathbf{x})
\]

\subsection{Computational Lemmata}
\label{s:Compute}

We now compile a sequence of lemmata showing that various key functions can be exactly implemented by ``small'' MLPs; such as the median function and piecewise linear memorizes.  Moreover, when relevant, we show that the MLPs implementing these functions do not alter the regularity of their inputs (explained rigorously below).

\subsubsection{The Median Function}
\label{s:Compute__ss:Median}
This section focuses on the median function; defined in the following lemma.  The implementation of the median function will allow us to construct MLPs with lighted absolute parameter usage (including constants) than in the available literature while matching the available optimal approximation rates.
\begin{lemma}
\label{lem:median_estimators_have_small_errors}
    Let $d\in\mathbb{N}_+$, and $x_1,x_2,\cdots,x_{2d+1}\in\mathbb{R}$. Assume $x_1,\cdots,x_{d+1}\in [y-\varepsilon, y+\varepsilon]$ for some $y\in\mathbb{R}$ and $\varepsilon>0$. Then, $\operatorname{median}(x_1,x_2,\cdots,x_{2d+1})\in [y-\varepsilon, y+\varepsilon]$. 
\end{lemma}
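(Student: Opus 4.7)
The plan is a straightforward pigeonhole argument based directly on the order-statistic definition of the median in \eqref{eq:Median_Function}. Since the number of inputs $m=2d+1$ is odd, the median equals $x_{(d+1)}$, the $(d+1)$-th smallest among $x_1,\dots,x_{2d+1}$. The point is that the hypothesis places $d+1$ of the values inside the band $[y-\varepsilon,y+\varepsilon]$, which leaves at most $d$ values outside this band on each side; this is exactly the slack needed to trap the $(d+1)$-th order statistic.

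Concretely, I would argue by contradiction in two symmetric cases. Suppose first that $x_{(d+1)} > y+\varepsilon$. Then the $d+1$ values $x_{(d+1)},x_{(d+2)},\dots,x_{(2d+1)}$ are all strictly greater than $y+\varepsilon$, so the number of indices $i$ for which $x_i > y+\varepsilon$ is at least $d+1$. But by hypothesis at least $d+1$ of the $x_i$'s lie in $[y-\varepsilon,y+\varepsilon]$, hence at most $(2d+1)-(d+1)=d$ of them can satisfy $x_i>y+\varepsilon$, a contradiction. An identical counting argument applied to $x_{(d+1)}<y-\varepsilon$, using that $x_{(1)},\dots,x_{(d+1)}$ would then all be strictly less than $y-\varepsilon$, yields the other contradiction. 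Combining both, $y-\varepsilon\le x_{(d+1)}\le y+\varepsilon$, which is precisely the claim.

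There is essentially no obstacle here: the entire content of the lemma is the pigeonhole count above, and no continuity, topology, or ReLU machinery is needed. The only mild subtlety to mention cleanly in the write-up is that the $d+1$ indices landing in the band need not be any particular order statistics (they are just \emph{some} subset of size $d+1$), so the contradiction should be phrased in terms of cardinalities of index sets rather than specific order-statistic positions. This lemma will then feed into the later implementation of $\operatorname{median}$ as a small ReLU MLP, where it supplies the key stability statement: passing an odd number of ``noisy copies'' of a value through a median operation does not enlarge the error interval.
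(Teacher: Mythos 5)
Your argument is correct and rests on the same pigeonhole count as the paper's proof: with $2d+1$ values and $d+1$ of them trapped in the band, the $(d+1)$-th order statistic cannot escape it. The paper phrases it directly (sandwiching $y_{d+1}$ between two in-band values $x_n$ and $x_m$) while you phrase it by contradiction on cardinalities, but the underlying counting idea is identical.
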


\begin{proof}
Let's relabel $x_1,x_2,\cdots,x_{2d+1}$ as $y_1 \le y_2 \le \cdots \le y_{2d+1}$. Then, at least one of $x_1,x_2,\cdots,x_{d+1}$, denoted $x_n$, is in the set $\{y_1,y_2,\cdots,y_{d+1} \}$, since the complement of this set only has $d$ elements. Similarly, at least one of $x_1,x_2,\cdots,x_{d+1}$, denoted $x_m$, is in the set $\{y_{d+1},y_{d+2},\cdots,y_{2d+1} \}$. Therefore,
$$\operatorname{median}(x_1,x_2,\cdots,x_{2d+1})=y_{d+1}\in [x_n,x_m]\subseteq [y-\varepsilon, y+\varepsilon].$$
\end{proof}

\begin{lemma}
\label{lem:relu_for_computing_median}
    For any $d\in\mathbb{N}_+$, the median function on $2d+1$ non-negative inputs can be implemented by a ReLU MLP with width $6d+3$ and depth $11d+3$.
\end{lemma}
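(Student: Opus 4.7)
The plan is to construct a ReLU MLP that implements a sorting network on the $2d+1$ non-negative inputs and then returns the middle entry of the sorted tuple. Sorting is a classical primitive built from pairwise compare-and-swap ``comparators,'' so the task reduces to (i) realizing a comparator in a ReLU MLP of small width and constant depth, and (ii) composing enough comparators so that the $(d{+}1)$-th coordinate of the output is guaranteed to equal the median.

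The comparator is built from the following identities. Given $a, b \ge 0$, a single hidden layer of width $3$ computing $\sigma(a)$, $\sigma(b)$, $\sigma(a-b)$, followed by an affine combination, produces
\[
    \max(a,b) = \sigma(a-b) + \sigma(b), \qquad \min(a,b) = \sigma(a) - \sigma(a-b).
\]
Since $\max(a,b), \min(a,b) \ge 0$, these outputs can be fed directly into further comparators via the pre-activation of the next hidden layer, so comparators chain freely without leaving the non-negative cone (which is crucial: the identities above require non-negativity to simulate max/min correctly). The sorting primitive I would use is the \emph{odd-even transposition sort}: alternating sweeps of pairwise comparators on pairs at odd and even positions. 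It is classical that $n$ such sweeps sort $n$ elements, so for $n = 2d+1$ we use $2d+1$ sweeps; each sweep needs $d$ parallel comparators plus a single pass-through unit implemented as $\sigma(y_i) = y_i$. After the final sweep, an affine output layer reads off the $(d{+}1)$-th sorted coordinate.

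The main obstacle is simply depth/width accounting to land inside the stated envelope $\operatorname{depth} \le 11d+3$ and $\operatorname{width} \le 6d+3$. The algorithmic content is simple, but the precise constants $11$ and $6$ depend on how carefully one lays out each sweep: how many ReLU layers a single compare-exchange costs (linear combinations can be absorbed into adjacent affine maps, but every new ReLU application costs a layer), how many auxiliary units must be held in parallel so that $\sigma(a)$, $\sigma(b)$ and $\sigma(a-b)$ are simultaneously visible when recombining, and how much overhead the input read-in and final extraction of the middle coordinate add. Since one could also achieve roughly the same bound with other schemes (e.g.\ iterated ``bubble the max to the front'' over $d+1$ rounds, with each round implemented by a chain of adjacent comparators) the constants $11$ and $6$ in the lemma should be viewed as convenient upper bounds coming from the specific layout chosen; rigorously verifying them amounts to writing out the weight matrices for one generic round and multiplying by the number of rounds, plus a constant initial/final overhead.
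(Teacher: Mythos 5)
Your proposal takes a genuinely different (and simpler) route than the paper, and it works. The paper's proof implements a bespoke iterative median-finding algorithm: it first computes $x_0=\min_i n_i$ and $s_0=\min_i D(n_i)$ where $D(x)\eqdef\sum_i |x-n_i|$, then runs $d$ refinement steps $x_k = x_{k-1} + \tfrac{D(x_{k-1})-s_0}{2d+1-2k}$ and proves by induction that $x_d$ lands on the median; the two sequential min-sweeps and the $d$ refinement iterations are what produce the paper's $8d+4$ and $3d-1$ depth contributions. Your approach replaces all of this with a sorting network (odd-even transposition sort) followed by an affine readout of the $(d{+}1)$-th sorted entry. The comparator identities $\max(a,b)=\sigma(a-b)+\sigma(b)$ and $\min(a,b)=\sigma(a)-\sigma(a-b)$ are correct for non-negative $a,b$, and the observation you flag — that the outputs stay in the non-negative cone so comparators chain without modification — is exactly the point that makes the construction go through. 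On the accounting you are more tentative than you need to be: each odd or even sweep is a single hidden layer of width $3d+1$ (three ReLU units $\sigma(a),\sigma(b),\sigma(a-b)$ per compare-exchange and one pass-through unit for the unpaired coordinate), and $2d+1$ sweeps suffice to sort $2d+1$ entries by the classical odd-even transposition bound, so you obtain width $3d+1$ and depth $2d+1$. That is strictly inside the lemma's stated envelope of $6d+3$ and $11d+3$, so the existence claim follows even with the ``convenient upper bounds'' left unverified; the only thing missing from your write-up is the (routine) weight-matrix bookkeeping, which in fact comes out more favorably than you anticipated.
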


\begin{proof}
We label the nodes as $n_1,n_2,\cdots,n_{2d+1}$, and let 
$$D(x) \eqdef \sum_{i=1}^{2d+1} |x-n_i|$$
be the total distance from $x$ to all the nodes. Then, $D$ attains its minimum exactly at the median since there are odd number of nodes. We will use the following algorithm to compute the median:
\begin{enumerate}
    \item Compute $x_0 \eqdef \min\{n_1,n_2,\cdots,n_{2d+1}\}$
    \item Compute $s_0 \eqdef \min\{D(n_1),D(n_2),\cdots,D(n_{2d+1})\}$
    \item For $i=1,2,\cdots,d$, let $\displaystyle x_i=x_{i-1}+\frac{D(x_{i-1})-s_0}{2d+1-2i}$
    \item Output $x_d$
\end{enumerate}

We first show the correctness of this algorithm. We relabel the nodes as $m_1\le m_2\le\cdots\le m_{2d+1}$ and proceed by induction on $i$. For $i=0$, $x_0=m_1\in [m_1,m_{d+1}]$. For $i=k$, suppose $x_k\in [m_{k+1},m_{d+1}]$ for some $k\le d-1$, we will show that $x_{k+1}\in [m_{k+2},m_{d+1}]$.

If $x_k\in [m_{k+1},m_{k+2}]$, then noting that for $r\le d$, 
$$D(m_r)-D(m_{r+1})=(2(d-r)+1)(m_{r+1}-m_r)$$
and the same holds when $m_{r+1}$ changes to any $x\in [m_{r},m_{r+1}]$, in particular:
$$D(m_{k+1})-D(x_k)=(2(d-r-1)+1)(x_k-m_{k+1})$$
so we have
\begin{align*}
    x_{k+1} &= x_k+\frac{D(x_k)-s_0}{2d-1-2k} = x_k+\frac{D(x_k)-D(m_{d+1})}{2(d-k)-1} \\
    &= x_k+\frac{D(m_{k+1})-(2(d-k-1)+1)(x_k-m_{k+1})-D(m_{d+1})}{2(d-k)-1} \\
    &= x_k-(x_k-m_{k+1})+\frac{D(m_{k+1})-D(m_{d+1})}{2(d-k)-1} \\
    &= m_{k+1}+\frac{1}{2(d-k)-1} \sum_{j=1}^{d-k} (2j-1)(m_{d+2-j}-m_{d+1-j})
\end{align*}
then 
$$x_{k+1} \le m_{k+1} + \sum_{j=1}^{d-k} (m_{d+2-j}-m_{d+1-j}) = m_{d+1}$$
$$x_{k+1} \ge m_{k+1} + \frac{1}{2(d-k)-1} \sum_{j=d-k}^{d-k} (2j-1)(m_{d+2-j}-m_{d+1-j}) = m_{k+2}$$
hence $x_{k+1}\in [m_{k+2},m_{d+1}]$.

Otherwise, $x_k\in [m_{k+2},m_{d+1}]$, then suppose $x_k\in [m_{k'+1},m_{k'+2}]$ for some $k<k'\le d-1$, so we know from above that 
$$x_{k}+\frac{D(x_{k})-D(m_{d+1})}{2d-1-2k'} \le m_{d+1}$$
thus
$$m_{k+2} \le x_k \le x_{k+1} = x_{k}+\frac{D(x_{k})-D(m_{d+1})}{2d-1-2k} \le x_{k}+\frac{D(x_{k})-D(m_{d+1})}{2d-1-2k'} \le m_{d+1}$$
hence $x_{k+1}\in [m_{k+2},m_{d+1}]$, so we completed the induction step.

Therefore, $x_i\in [m_{i+1},m_{d+1}]$ for $i=1,2,\cdots,d$, so $x_d\in [m_{d+1},m_{d+1}]$, $x_d=m_{d+1}$, which is the median.

In the following, we will use this algorithm to construct our ReLU MLP. In the rest of this proof, column vectors represent layers of neurons, and functions apply to them entry-wise, whose outputs are still column vectors.

\textbf{Step 1}: Compute $x_0$ and $s_0$

It is easy to see that
$$x=\sigma(x)-\sigma(-x), \quad |x|=\sigma(x)+\sigma(-x),
\quad \forall x\in\mathbb{R}$$
then 
$$\min\{x,y\} = \frac{x+y}{2} - \frac{|x-y|}{2} = \frac{\sigma(x+y)}{2} - \frac{\sigma(-x-y)}{2} - \frac{\sigma(x-y)}{2} - \frac{\sigma(-x+y)}{2},
\quad \forall x,y\in\mathbb{R}$$
so we can use this formula to construct the following building block for computing $s_0$: denote the column vector $\begin{bmatrix} n_1 & n_2 & \cdots & n_{2d+1} \end{bmatrix}^{\top}$ as $\mathbf{n}$, and let $\mathbf{n}_k$ be the column vector obtained from $\mathbf{n}$ by deleting its $k$th row, then 
$$D(n_k) = \begin{bmatrix} 1 & 1 & \cdots & 1 \end{bmatrix} \cdot [|n_k-\mathbf{n}_k|] 
= \begin{bmatrix} 1 & 1 & \cdots & 1 \end{bmatrix} \cdot \begin{bmatrix} 
\sigma(n_k-\mathbf{n}_k) \\ \sigma(\mathbf{n}_k-n_k) \end{bmatrix}$$
Using these results, we can compute $x_0$ and $d_0$ via the following network:
\begin{align*}
    \begin{bmatrix} \mathbf{n} \\ \min\limits_{1\le i\le k-1}{n_i} \\ \min\limits_{1\le i\le k-1}{D(n_i)} \end{bmatrix} 
    \ &\Longrightarrow \
    \begin{bmatrix} \mathbf{n} \\ \min\limits_{1\le i\le k-1}{n_i} \\ \sigma(n_k-\mathbf{n}_k) \\ \sigma(\mathbf{n}_k-n_k) \\ \min\limits_{1\le i\le k-1}{D(n_i)} \end{bmatrix}
    \ \Longrightarrow \
    \begin{bmatrix} \mathbf{n} \\ \sigma\left(\pm\min\limits_{1\le i\le k-1}{n_i} \pm n_k\right) \\ D(n_k) \\ \min\limits_{1\le i\le k-1}{D(n_i)} \end{bmatrix}
    \\
    \ &\Longrightarrow \
    \begin{bmatrix} \mathbf{n} \\ \min\limits_{1\le i\le k}{n_i} \\ 
    \sigma\Big(
    \pm\min\limits_{1\le i\le k-1}{D(n_i)} \pm D(n_k)
    \Big) 
    \end{bmatrix}
    \ \Longrightarrow \
    \begin{bmatrix} \mathbf{n} \\ \min\limits_{1\le i\le k}{n_i} \\ \min\limits_{1\le i\le k}{D(n_i)} \end{bmatrix}
\end{align*}
where $\sigma(\pm a\pm b)$ abbreviates the four neurons $\sigma(a+b),\sigma(a-b),\sigma(-a+b)$ and $\sigma(-a-b)$. Note that the terms without the $\sigma$ are non-negative and thus are unaffected by the activation function $\sigma$. Therefore, by connecting $2d$ such networks we can output $x_0=\min\limits_{1\le i\le k-1}{n_i}$ and $s_0=\min\limits_{1\le i\le k-1}{D(n_i)}$. Its width is dominated by the second layer in the above, which is $(2d+1)+1+2d+2d+1=6d+3$, and its depth is $4(2d+1)=8d+4$ (not counting the first layer, which is the input layer).

\textbf{Step 2}: Compute $x_i$ for $i=1,2,\cdots,d$

We use the following network:
$$
\begin{bmatrix} \mathbf{n} \\ x_{k-1} \\ s_0 \end{bmatrix}
\ \Longrightarrow \
\begin{bmatrix} \mathbf{n} \\ \sigma(x_{k-1}-\mathbf{n}) \\ \sigma(\mathbf{n}-x_{k-1}) \\ x_{k-1} \\ s_0 \end{bmatrix}
\ \Longrightarrow \
\begin{bmatrix} \mathbf{n} \\ D(x_{k-1}) \\ x_{k-1} \\ s_0 \end{bmatrix}
\ \Longrightarrow \
\begin{bmatrix} \mathbf{n} \\ \displaystyle x_{k-1}+\frac{D(x_{k-1})-s_0}{2d+1-2k}=x_k \\ s_0 \end{bmatrix}
$$
Therefore, by connecting $d$ such networks we can output $x_d$, which is the median as proven above. Its width is dominated by the second layer in the above, which is $(2d+1)+2d+2d+1+1=6d+3$, and its depth is $3d-1$ (not counting the first and the last layer, since the first layer is shared with the network in the previous step, and the last layer is the output).

Connecting the two network in steps 1 and 2 gives the desired ReLU MLP for computing the median of non-negative inputs. In total, this network has width $6d+3$ and depth $(8d+4)+(3d-1)=11d+3$.

\end{proof}

\begin{lemma}[Median function preserves regularity]
\label{lem:median_function_preserves_regularity}
    If $f_1,f_2,\cdots,f_n\in\operatorname{Hol}(\nu,\alpha,X)$ are Lipschitz functions from a metric space $X$ to $\mathbb{R}$, then $f \eqdef \operatorname{median}(f_1,f_2,\cdots,f_n)\in\operatorname{Hol}(\nu,\alpha,X)$.
\end{lemma}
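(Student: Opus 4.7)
The strategy is to reduce the claim to the single assertion that the median map $M:\mathbb{R}^n\to\mathbb{R}$, defined in~\eqref{eq:Median_Function}, is $1$-Lipschitz with respect to the $\ell^\infty$ norm on $\mathbb{R}^n$. Granting this momentarily, for any $x,y\in X$ one has
\begin{align*}
|f(x)-f(y)|
&= \bigl|M\bigl(f_1(x),\dots,f_n(x)\bigr) - M\bigl(f_1(y),\dots,f_n(y)\bigr)\bigr| \\
&\le \max_{i=1,\dots,n} |f_i(x)-f_i(y)|
\;\le\; \nu\,\rho(x,y)^{\alpha},
\end{align*}
which is precisely $f\in\operatorname{Hol}(\nu,\alpha,X)$. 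So the entire content of the lemma sits in the $\ell^\infty$-contractivity of $M$.

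To establish that, I would first verify the two-variable inequalities
\[
|\max(a,b)-\max(c,d)| \le \max\{|a-c|,|b-d|\},
\qquad
|\min(a,b)-\min(c,d)| \le \max\{|a-c|,|b-d|\},
\]
by a short case-analysis on which argument achieves the extremum on each side. Iterating pairwise, the coordinate functionals $x\mapsto \max_{i\in S}x_i$ and $x\mapsto \min_{i\in S}x_i$ are then $1$-Lipschitz in $\ell^\infty$ for every nonempty $S\subseteq\{1,\dots,n\}$. Next, I would appeal to the classical min-max representation of order statistics,
\[
x_{(k)} \;=\; \max_{\substack{S\subseteq\{1,\dots,n\}\\ |S|=n-k+1}} \,\min_{i\in S}\, x_i,
\]
which exhibits the $k$-th smallest value of $(x_1,\dots,x_n)$ as a composition of pairwise $\max$ and $\min$ operations; hence each order statistic is $1$-Lipschitz in $\ell^{\infty}$. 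In the odd case $n=2k+1$, $M=x_{(k+1)}$ inherits this property directly; in the even case $n=2k$, $M=\tfrac{1}{2}\bigl(x_{(k)}+x_{(k+1)}\bigr)$ is a convex combination of two $1$-Lipschitz maps and so is itself $1$-Lipschitz in $\ell^{\infty}$ by the triangle inequality.

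The argument is essentially self-contained and I do not anticipate a genuine obstacle: the only care required is a uniform handling of the two parities of $n$ together with the (elementary) verifications of the two-variable $\max$/$\min$ inequalities and of the min-max formula for order statistics. I note in passing that exactly the same reasoning yields the stronger statement that $M$ is $1$-Lipschitz in $\ell^{\infty}$ universally, so that the conclusion of the lemma also holds for any concave modulus of continuity in place of the H\"older modulus $\nu t^{\alpha}$, and more generally for any modulus of regularity in the sense of Definition~\ref{defn:modulus_regularity}; this extra generality will be useful in the sequel when combining median-based constructions with the results of Lemma~\ref{lem:CPWLApprox_wRegControl}.
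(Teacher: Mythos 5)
Your proof is correct, and it takes a genuinely different route from the paper's. The paper's argument is a path argument: for fixed $\mathbf{x},\mathbf{y}$, it linearly interpolates the images, setting $y_i(t)=(1-t)f_i(\mathbf{x})+tf_i(\mathbf{y})$, observes that the sorted order of $(y_1(t),\dots,y_n(t))$ changes at most $\binom{n}{2}$ times as $t$ runs over $[0,1]$, and on each intermediate interval identifies the median $y(t)$ with a single $y_i(t)$ (odd $n$) or the average of two (even $n$), so that $|y'(t)|\le C:=\nu\rho(\mathbf{x},\mathbf{y})^\alpha$; continuity of the median then yields $|f(\mathbf{x})-f(\mathbf{y})|=|y(1)-y(0)|\le C$. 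Your approach instead isolates the essential structural fact --- that the median map $M:\mathbb{R}^n\to\mathbb{R}$ is $1$-Lipschitz in $\ell^\infty$ --- and proves it cleanly via the min-max representation of order statistics, reducing the lemma to a one-line corollary. The two arguments establish the same inequality, but yours is more modular: the $\ell^\infty$-contractivity of $M$ is extracted as a reusable primitive rather than proved implicitly inline, and your proof makes transparent the generalization to arbitrary moduli of regularity (which the paper also needs downstream but leaves implicit in its H\"older-specific statement). One small point worth keeping in mind if you write this up: the min-max identity $x_{(k)}=\max_{|S|=n-k+1}\min_{i\in S}x_i$ follows by a pigeonhole observation, so the ``classical'' status you grant it should be accompanied by that one-line justification for the reader who has not seen it.
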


\begin{proof}
Take any $\mathbf{x},\mathbf{y}\in X$, then $|f_i(\mathbf{x})-f_i(\mathbf{y})| \le \nu\|\mathbf{x}-\mathbf{y}\|_X^{\alpha} =:C$ for all $i$. Let $y_i:[0,1]\xrightarrow[]{} \mathbb{R}$ be a linear function such that $y_i(0)=f_i(\mathbf{x}),y_i(1)=f_i(\mathbf{y})$ for all $i$, then $y_i'\le C$ for all $i$. For any $t\in[0,1]$, let $p(t)=(s_1,s_2,\cdots,s_n)$, where $s_1,s_2,\cdots,s_n$ is a permutation of $1,2,\cdots,n$ such that $f_{s_1}(t)\le f_{s_2}(t)\le\cdots\le f_{s_n}(t)$, and $s_i \le s_j$ whenever $f_{s_i}(t)=f_{s_j}(t)$. Consider any $1\le i<j\le n$, there are two cases:
\begin{enumerate}
    \item If $f_i(\mathbf{x})-f_i(\mathbf{y}) = f_j(\mathbf{x})-f_j(\mathbf{y})$, then $y_i' \equiv y_j'$, $y_i(t)-y_j(t)$ is a constant, so the relative position of $i$ and $j$ in $p(t)$ does not change as $t$ goes from $0$ to 1;
    \item If $f_i(\mathbf{x})-f_i(\mathbf{y}) \ne f_j(\mathbf{x})-f_j(\mathbf{y})$, then as $t$ goes from $0$ to 1, $y_i(t)$ and $y_j(t)$ coincide at most once, so the relative position of $i$ and $j$ in $p(t)$ changes at most once.
\end{enumerate}
Therefore, $p(t)$ changes at most $\frac{n(n-1)}{2}$ times as $t$ goes from $0$ to 1, and the points where $p(t)$ changes split $[0,1]$ into finite intervals. Let $y(t)$ be the median of $y_1(t),\cdots,y_n(t)$, then in each of these intervals: 
\begin{enumerate}
    \item If $n$ is odd, then $y(t)=y_i(t)$ for some fixed $i$, so $|y'(t)|=|y_i'(t)| \le C$;
    \item If $n$ is even, then $y(t)=\frac{y_i(t)}{2}+\frac{y_j(t)}{2}$ for some fixed $i$ and $j$, so $|y'(t)| \le \left|\left(\frac{y_i(t)}{2}\right)'\right|+\left|\left(\frac{y_j(t)}{2}\right)'\right| \le \frac{C}{2}+\frac{C}{2} = C$.
\end{enumerate}
Therefore, $|y'(t)| \le C$ in each of these intervals, then $|f(\mathbf{x})-f(\mathbf{y})| = |y(1)-y(0)| \le C = \nu\|\mathbf{x}-\mathbf{y}\|_X^{\alpha}$.
\end{proof}

The next lemma shows that one dimensional continuous piecewise linear approximators have the same regularity as the function they approximate. See Lemma \ref{lem:CPWLApprox_wRegControl} for a generalization to higher dimensions.

\begin{lemma}[Continuous piecewise linear approximators preserve regularity, and $L^{\infty}$ error estimate]
\label{lem:continuous_piecewise_linear_approximators_preserve_regularity}
    Let $f$ be a function from $[a,b]$ to $\mathbb{R}$ with modulus of regularity $\omega$, and let $\Phi$ be a continuous piecewise linear function on $[a,b]$ that passes through the points $\{(x_i,f(x_i))\}_{i=0}^n$ for $a=x_0<x_1<x_2<\cdots<x_n=b$, and is linear on $[x_0,x_1],[x_1,x_2],\cdots,[x_{n-1},x_n]$. Then, we have the error estimate 
    $$\| f-\Phi \|_{L^{\infty}([a,b])} \le \omega \left(
    \max\limits_{i=0,1,\cdots,n-1}{\frac{|x_{i+1}-x_i|}{2}}
    \right)$$
    Moreover, $\omega$ is a modulus of regularity of $\Phi$. 
\end{lemma}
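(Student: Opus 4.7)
The lemma has two parts, an $L^\infty$ bound and a modulus-of-regularity transfer from $f$ to $\Phi$. Both exploit the same two ingredients: (a) on each linear piece $[x_i,x_{i+1}]$ the interpolant $\Phi(x)$ is by construction a convex combination $(1-t)f(x_i)+tf(x_{i+1})$ with $t=(x-x_i)/h_i$ and $h_i\eqdef x_{i+1}-x_i$; (b) Jensen's inequality for concave $\omega$, in the form $\alpha\omega(a)+(1-\alpha)\omega(b)\le \omega(\alpha a+(1-\alpha)b)$.

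\textbf{Error bound.} Combining (a) with the triangle inequality and the $\omega$-regularity of $f$ immediately gives
\[
    |f(x)-\Phi(x)| \le (1-t)\,\omega(t h_i) + t\,\omega\bigl((1-t)h_i\bigr).
\]
Applying Jensen and then monotonicity of $\omega$ with the elementary bound $2t(1-t)\le 1/2$ yields $|f(x)-\Phi(x)|\le \omega(h_i/2)$. Maximizing over $i$ gives the stated $L^\infty$ estimate.

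\textbf{Regularity transfer.} Fix $x<y$ in $[a,b]$ and bound $|\Phi(y)-\Phi(x)|\le M-m$, where $M=\max_{[x,y]}\Phi$ and $m=\min_{[x,y]}\Phi$. Since $\Phi$ is continuous and piecewise linear, $M$ and $m$ are each attained at a grid breakpoint $x_k\in[x,y]$ (where $\Phi=f$) or at one of the endpoints $x,y$. I will do a short case analysis, reducing every case to an $\omega(y-x)$ estimate via (a) and (b): if both extrema sit at breakpoints $x_k,x_l$, then $|f(x_k)-f(x_l)|\le \omega(|x_k-x_l|)\le \omega(y-x)$ directly; if exactly one extremum is at an endpoint, say $x$, expand $f(x_k)-\Phi(x)=(1-\alpha)(f(x_k)-f(x_i))+\alpha(f(x_k)-f(x_{i+1}))$ and Jensen collapses the bound to $\omega(x_k-x)\le \omega(y-x)$; if both extrema lie at $x$ and $y$, a bilinear expansion of $\Phi(y)-\Phi(x)$ into the four cross-differences $f(x_p)-f(x_q)$ with $p\in\{j,j+1\}$, $q\in\{i,i+1\}$ again collapses under Jensen, because the convex-combination weights telescope the corresponding arguments to exactly $y-x$. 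The sub-case where $x,y$ lie in the same linear piece is handled separately using the slope bound $|s_i|\le \omega(h_i)/h_i$ together with the fact that $\omega(t)/t$ is non-increasing (from concavity plus $\omega(0)=0$).

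\textbf{Main obstacle.} The only delicate point, and the reason the argument is not a one-line triangle-inequality calculation, is that a concave $\omega$ with $\omega(0)=0$ is subadditive in the \emph{wrong} direction for us: $\omega(a)+\omega(b)\ge \omega(a+b)$. Consequently, naively chaining local bounds across consecutive linear pieces only gives $\sum_k\omega(z_{k+1}-z_k)\ge \omega(y-x)$, which is the opposite of what is needed. The extremal/Jensen argument above sidesteps this difficulty, and it is the one-dimensional precursor of the multi-dimensional path argument flagged by the authors in Lemma~\ref{lem:CPWLApprox_wRegControl}.
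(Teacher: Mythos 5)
Your proof is correct and rests on the same core mechanism as the paper's: express $\Phi$ at each argument as a convex combination of grid values $f(x_i), f(x_{i+1})$ (resp. $f(x_j), f(x_{j+1})$), bound each cross-difference by $\omega$ of the corresponding gap, and collapse the weighted sum via Jensen's inequality for concave $\omega$, with the convex weights telescoping the arguments to exactly $|y-x|$. Two organizational differences are worth noting. First, your four-term bi-convex expansion (weights $(1-\beta)(1-\alpha), (1-\beta)\alpha, \beta(1-\alpha), \beta\alpha$) is symmetric in the two interpolation nodes and needs no case split, whereas the paper chooses a three-term convex decomposition of $y-x$ that requires the WLOG assumption $A\le B$; both land on $\omega(y-x)$. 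Second, your max--min wrapper ($|\Phi(y)-\Phi(x)|\le M-m$ with $M,m$ attained at breakpoints or endpoints of $[x,y]$) is a pleasant way to organize the case analysis, but it is not load-bearing: your "both extrema at endpoints" case is already the bound for arbitrary $x,y$, and the other cases are strict specializations of it. Your slope argument for the same-piece case (using that $\omega(t)/t$ is non-increasing, which follows from concavity plus $\omega(0)=0$) is equivalent to the paper's direct concavity inequality $\omega(y-x)\ge\frac{y-x}{h_m}\omega(h_m)$. Finally, your remark that subadditivity of concave $\omega$ obstructs naive piecewise chaining (giving $\sum\omega(z_{k+1}-z_k)\ge\omega(y-x)$, the wrong inequality) correctly identifies why a global Jensen argument is needed rather than local ones stitched together.
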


\begin{proof}

We first show that $\Phi$ has at least the same regularity as $f$. Take any $x,y\in[a,b]$, and assume without loss of generality that $x\le y$. If $x,y$ lie in the same interval in $[x_0,x_1],[x_1,x_2],\cdots,[x_{n-1},x_n]$, then $x,y\in[x_m,x_{m+1}]$ for some index $m$, by concavity of $\omega$, 
\begin{align*}
    \omega(y-x) 
    &= 
    \omega\left(
        \left( 1 - \frac{y-x}{x_{m+1}-x_m} \right) \cdot 0
        +
        \frac{y-x}{x_{m+1}-x_m} \cdot (x_{m+1}-x_m)
    \right)
    \\ 
    &\ge 
    \left( 1 - \frac{y-x}{x_{m+1}-x_m} \right) \omega(0)
    +
    \frac{y-x}{x_{m+1}-x_m} \omega(x_{m+1}-x_m)
    \\
    &\ge 
    \frac{y-x}{x_{m+1}-x_m} \omega(x_{m+1}-x_m)
    \\
    &\ge 
    \frac{y-x}{x_{m+1}-x_m} | f(x_{m+1}) - f(x_m) |
    \\
    &= 
    \frac{y-x}{x_{m+1}-x_m} | \Phi(x_{m+1}) - \Phi(x_m) |
    \\
    &=
    | \Phi(y) - \Phi(x) |
\end{align*}
where the last equality follows from the fact that $\Phi$ is linear on $[x_m,x_{m+1}]$. Otherwise, $x,y$ lie in different intervals in $[x_0,x_1],[x_1,x_2],\cdots,[x_{n-1},x_n]$, then $x_p \le x \le x_{p+1} \le x_q \le y \le x_{q+1}$ for some indices $p,q$, and let 
$$A \eqdef \frac{x-x_p}{x_{p+1}-x_p}, \quad
B \eqdef \frac{x_{q+1}-y}{x_{q+1}-x_q}$$
so $0 \le A,B \le 1$. Without loss of generality, assume that $A\le B$. Then, 
$$\Phi(x) = (1-A) \Phi(x_p) + A \Phi(x_{p+1}), \quad
  \Phi(y) = B \Phi(x_q) + (1-B) \Phi(x_{q+1})$$
Since 
\begin{align*}
    y-x
    &=
    \left(\frac{x_{q+1}-y}{x_{q+1}-x_q} - \frac{x-x_p}{x_{p+1}-x_p}\right)
    (x_q-x_p)
    + \left(1-\frac{x_{q+1}-y}{x_{q+1}-x_q}\right)
    (x_{q+1}-x_p)  \\
    &+ \frac{x-x_p}{x_{p+1}-x_p}(x_q - x_{p+1})  \\
    &= (B-A)(x_q-x_p) + (1-B)(x_{q+1}-x_p) + A(x_q - x_{p+1})
\end{align*}
with $B-A\ge 0, 1-B\ge 0, A\ge 0$ and $(B-A)+(1-B)+A=1$, by concavity of $\omega$, we have 
\begin{align*}
    \omega(y-x) 
    &= \omega( (B-A)(x_q-x_p) + (1-B)(x_{q+1}-x_p) + A(x_q - x_{p+1}) )  \\
    &\ge (B-A) \omega(x_q-x_p)
    + (1-B) \omega (x_{q+1}-x_p)
    + A \omega (x_q - x_{p+1})
    \\
    &\ge (B-A) |\Phi(x_q) - \Phi(x_p)|
    + (1-B) |\Phi(x_{q+1}) - \Phi(x_p)|
    + A |\Phi(x_q) - \Phi(x_{p+1})|
    \\
    &\ge (B-A) (\Phi(x_q) - \Phi(x_p))
    + (1-B) (\Phi(x_{q+1}) - \Phi(x_p))
    + A (\Phi(x_q) - \Phi(x_{p+1}))
    \\
    &= (B \Phi(x_q) + (1-B) \Phi(x_{q+1})) 
    - ((1-A) \Phi(x_p) + A \Phi(x_{p+1}))
    \\
    &= \Phi(y) - \Phi(x)
\end{align*}
Similarly, we can show that $\omega(y-x) \ge \Phi(x) - \Phi(y)$, thus 
$$\omega(|y-x|) = \omega(y-x) \ge |\Phi(y) - \Phi(x)|$$
Therefore, we always have $|\Phi(y) - \Phi(x)| \le \omega(|y-x|)$ for any $x,y\in[a,b]$, thus $\omega$ is a modulus of regularity of $\Phi$.

Now we prove the upper bound for the $L^{\infty}$ error. Take any $x\in[a,b]$, then $x\in[x_k,x_{k+1}]$ for some index $k$. For convenience, let $L \eqdef x_{k+1}-x_k$, then we have 
\allowdisplaybreaks
\begin{align*}
    |f(x) - \Phi(x)|
    &= 
    \left| f(x) - 
        \left(
            \frac{x_{k+1}-x}{L} \Phi(x_k)
            + \frac{x-x_k}{L} \Phi(x_{k+1})
        \right) 
    \right|
    \\
    &=\left| f(x) - \left(
        \frac{x_{k+1}-x}{L} f(x_k)
        + \frac{x-x_k}{L} f(x_{k+1})
    \right) \right|
    \\
    &=
    \left|
    \frac{x_{k+1}-x}{L} (f(x) - f(x_k))
    +
    \frac{x-x_k}{L} (f(x) - f(x_{k+1}))
    \right|
    \\
    &\le 
    \frac{x_{k+1}-x}{L} |f(x) - f(x_k)|
    +
    \frac{x-x_k}{L} |f(x) - f(x_{k+1})|
    \\
    &\le
    \frac{x_{k+1}-x}{L} \omega(x - x_k)
    +
    \frac{x-x_k}{L} \omega(x_{k+1} - x)
    \\
    &\le
    \omega \left(
        \frac{x_{k+1}-x}{L} (x - x_k)
        +
        \frac{x-x_k}{L} (x_{k+1} - x)
    \right)
    \\
    &= 
    \omega \left(
        \frac{(L-(x - x_k))(x - x_k)}{L} 
        +
        \frac{(x - x_k)(L-(x - x_k))}{L} 
    \right)
    \\
    &\le
    \omega \left(
        \frac{\left(\frac{L^2}{4}\right)}{L} 
        +
        \frac{\left(\frac{L^2}{4}\right)}{L} 
    \right)
    \\
    &=
    \omega\left( \frac{L}{2} \right)
    \\
    &\le
    \omega \left(
    \max\limits_{i=0,1,\cdots,n-1}{\frac{|x_{i+1}-x_i|}{2}}
    \right)
\end{align*}
Since $x\in[a,b]$ was chosen arbitrarily, we conclude that 
$$\| f-\Phi \|_{L^{\infty}([a,b])} \le \omega \left(
    \max\limits_{i=0,1,\cdots,n-1}{\frac{|x_{i+1}-x_i|}{2}}
\right).
$$
This completes our proof.
\end{proof}

\subsubsection{One-Dimensional Memorizers with Optimal Regularity}
\label{s:Compute__ss:Memorizers_OptimalRegularity}


\begin{proposition}[Efficient Universal Memorization by Two-Hidden-Layer MLPs with Optimal Regularity]
\label{prop:fit_a_network_with_two_hidden_layers}
Let $M,N\in\mathbb{N}_+$. For any set of $MN$ samples $(x_{i},y_{i})_{i=1}^{MN}\subseteq\mathbb{R}^2$ (where $x_1<x_2<\cdots<x_{MN}$), there exists a ReLU MLP $\Phi$ with $\operatorname{widthvec}=[M,4N-2]$ that can memorize this sample set; i.e. 
\[
    \Phi(x_i)=y_i \qquad \mbox{for } i=1,\dots,MN
.
\]
Furthermore, $\Phi$ is linear on the intervals $[x_i,x_{i+1}]$ for $i=1,2,\cdots,MN-1$, and it is constant on each of the segment $(-\infty,x_1]$ and $[x_{MN},\infty)$. The number of nonzero parameters in $\Phi$ is at most $2MN+2M+8N-4$. 
\end{proposition}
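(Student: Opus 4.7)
My plan is to construct $\Phi$ explicitly as a compressed realization of the canonical piecewise linear interpolator of the sample set.

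\textbf{Setup and canonical expansion.} I partition the $MN$ samples into $M$ consecutive groups of $N$ points and set $a_j \eqdef x_{(j-1)N+1}$ for $j=1,\dots,M$. I take the first hidden layer to be $h_j(x)\eqdef\sigma(x-a_j)$, which uses $2M$ nonzero parameters and has the property that for $x\in[a_j,a_{j+1}]$ we have $h_i(x)=x-a_i$ for $i\le j$ and $h_i(x)=0$ for $i>j$. The identity
\[
\sigma(x-x_i) \;=\; \sigma\big(h_{j(i)}(x)-(x_i-a_{j(i)})\big),\qquad j(i)\eqdef \lceil i/N\rceil,
\]
lets me rewrite the canonical ramp decomposition
\[
\Phi(x) \;=\; y_1 + \sum_{i=1}^{MN} c_i\,\sigma(x-x_i),\qquad c_i\eqdef s_i-s_{i-1},\ s_0=s_{MN}=0
\]
(where $s_i$ is the desired slope on $(x_i,x_{i+1})$) entirely in terms of the first-layer outputs; the telescoping relation $\sum_i c_i=0$ forces the constant-tail behaviour. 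A direct implementation would use $MN$ second-layer neurons, and the content of the proof is the compression to $4N-2$.

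\textbf{Compressing the second layer.} I organize the samples by their position $k\in\{1,\dots,N\}$ within their group. The boundary position $k=1$ coincides with $a_j$, already a kink of $h_j$; so two second-layer neurons suffice to carry the signed sum $\sum_j c_{(j-1)N+1}h_j$ through the network, via the pointwise identity $t=\sigma(t)-\sigma(-t)$ applied to that linear combination. For each of the $N-1$ interior positions $k\in\{2,\dots,N\}$, I allocate four second-layer neurons, giving $4(N-1)+2=4N-2$ in total. The decisive observation is that a single neuron $\sigma(\sum_{i=1}^M W_i h_i(x)+\beta)$ is itself piecewise linear in $x$, with at most one zero-crossing in each region $[a_j,a_{j+1}]$; in region $j$ this zero-crossing lands at $x_{(j-1)N+k}$ iff
\[
\sum_{i\le j} W_i\,\big(x_{(j-1)N+k}-a_i\big) + \beta \;=\; 0,
\]
which is lower-triangular in $j$ and admits an explicit inductive solution for $W_1,\dots,W_M$ once $\beta$ is fixed. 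The four neurons assigned to position $k$ are used in pairs---two carrying positive signed slope-change contributions and two carrying negative contributions---so that, combined with signed output weights, the full vector of desired slope changes $\{c_{(j-1)N+k}\}_{j=1}^M$ can be realized.

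\textbf{Verification and parameter count.} By construction every kink of $\Phi$ sits at some $x_i$; evaluating at $x_i$ and telescoping the slope-change contributions recovers $\Phi(x_i)=y_i$. The left tail is constant because every $h_j$ vanishes for $x\le x_1$, hence every second-layer neuron is constant there; the right tail is constant because for $x\ge x_{MN}$ every second-layer neuron is positive and linear, and the sum of their contributions has total slope $\sum_i c_i=0$. For the parameter count: layer one contributes $2M$; the first-to-second layer matrix contributes at most $2MN$ nonzero entries, since the two $k=1$ neurons use at most $M$ first-layer outputs each while the four neurons at each interior position use a carefully sparsified pattern; the $4N-2$ second-layer biases add $4N-2$; and the output layer contributes $4N-2$ signed output weights with the output bias absorbed into $y_1$. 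Summing gives $2M+2MN+(4N-2)+(4N-2)=2MN+2M+8N-4$.

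\textbf{Main obstacle.} The technically delicate step is establishing that four second-layer neurons really suffice for each interior position $k$, both to place the $M$ kinks at the correct $x$-locations and to realize arbitrary signed slope-change patterns across the $M$ groups, all while respecting the $2MN$ sparsity budget on the first-to-second layer connections and without interfering with the kink structure at other positions. The zero-crossing constraints reduce to a lower-triangular linear system for the $W_i$'s once $\beta$ is chosen, but compatibility with arbitrary slope-change targets requires exploiting the extra degrees of freedom provided by the pairing into ``positive'' and ``negative'' neurons and by the output-layer coefficients: the two pairs at each position are chosen so that their cumulative within-region slope patterns are complementary and together span the signed-difference space needed to realize the target $c_{(j-1)N+k}$'s.
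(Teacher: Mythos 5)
Your overall architecture matches the paper's construction closely: first hidden layer $\sigma(x-a_j)$ with $a_j=x_{(j-1)N+1}$, a compressed second hidden layer of $4N-2$ neurons ($2$ carrying the kinks at the group boundaries via $t=\sigma(t)-\sigma(-t)$, plus $4$ per interior position), and a final linear read-out. The canonical ramp decomposition and the ``lower-triangular'' observation about zero-crossings are also correct. However, the step you flag as the ``Main obstacle'' is in fact a genuine gap, and you have not closed it.

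The difficulty is not merely one of organizing degrees of freedom; there is a structural obstruction. To make a single pre-activation $g=\sum_i W_i h_i + \beta$ cross zero once in every region $[a_j,a_{j+1}]$, its slope must alternate in sign across regions, since $g$ is continuous and affine on each region. But once you prescribe the crossing location in region $j$ and the slope magnitude there (both forced by the target: the crossing must be at $x_{(j-1)N+k}$ and the jump magnitude must be $|c_{(j-1)N+k}|$), the value $g(a_{j+1})$ is determined, which then constrains region $j+1$; in general this over-determined chain is inconsistent with the required crossing location in region $j+1$. The paper resolves this by \emph{batching}: for each interior $k$, the constrained regions are split into odd-indexed and even-indexed intervals, so that constrained regions are never adjacent and the linear extension through the unconstrained gap absorbs the continuity mismatch. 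Combined with the split by jump sign (because $\sigma(g)$ can only produce upward derivative jumps at zero-crossings), this yields $2\times 2 = 4$ neurons per interior position. Your description of ``two carrying positive\ldots and two carrying negative\ldots contributions'' captures the sign split but omits the batching entirely; without it, the claim that the crossings can all be placed correctly is false in general (a simple $M=2$ example with all-positive jumps at a fixed $k$ already fails). Relatedly, your bound of ``at most $2MN$ nonzero entries'' on the first-to-second layer matrix is asserted rather than derived; in the paper it is a consequence of the explicit construction of the batched functions $g_k^{(\pm r)}$, each of which has at most $2\operatorname{card}(X_k^{(\pm r)})$ break points and hence at most that many nonzero incoming weights.
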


\begin{proof}

Let $f:\mathbb{R}\rightarrow\mathbb{R}$ be the function to be implemented, i.e.\ it is the unique function on $\mathbb{R}$ such that:
\begin{enumerate}
    \item $f(x_i)=y_i$ for $i=1,2,\cdots,MN$;
    \item $f$ is linear on $[x_i,x_{i+1}]$ for $i=1,2,\cdots,MN-1$;
    \item $f$ is constant on each of $(-\infty,x_1]$ and $[x_{MN},\infty)$.
\end{enumerate}
Then, it suffices to prove that $f$ can be implemented by a ReLU MLP with $\operatorname{widthvec}=[M,4N-2]$.

Let's relabel the samples as $(x_{ij},y_{ij})_{i,j=1}^{M,N}$ such that $x_{ij}$ increases as $(i,j)$ increases lexicographically, i.e. 
\[
    x_{11} < x_{12} < \cdots < x_{1N}
    < 
    x_{21} < x_{22} < \cdots < x_{2N}
    < \cdots < 
    x_{M1} < x_{M2} < \cdots < x_{MN}
\]

For notational convenience, denote the $M+1$ intervals $(-\infty,x_{11}], [x_{11},x_{21}], [x_{21},x_{31}],\cdots,[x_{M-1,1},x_{M,1}]$ and $[x_{M1}, \infty)$ as $I_0,I_1,\cdots,I_M$, respectively. The notations $x_{ij}$ and $x_{i,j}$ have the same meaning. 

The idea of the proof is matching the jumps in the derivative of $f$ step by step (see Figure \ref{fig:2hidden_layer}): for $r=1,2$ and $j=2,3,\cdots,N$, let $S_r \eqdef \{ I_{r+2k}: k\in\mathbb{N}, 1 \le r+2k \le M \} = \{I_r,I_{r+2},I_{r+4},\cdots\}$, and construct the continuous piecewise linear function $g_j^{(+r)}$ in two steps: 
\begin{enumerate}
    \item \label{cond:derivative_jump_up} \textbf{Matching upward derivative jumps at the $j$th sample point in intervals in $S_r$}: let
    \[
        X_j^{(+r)} 
        \eqdef 
        \{
            x_{r+2k,j}:
            k\in\mathbb{N},
            f'_+(x_{r+2k,j}) - f'_-(x_{r+2k,j}) > 0
        \}
    \]
    be the set of the $j$th sample points in intervals in $S_r$ where $f$ has upward derivative jumps. Suppose the elements in $X_j^{(+r)}$ are $x_{i_1,j} < x_{i_2,j} < \cdots < x_{i_m,j}$, then for each $1\le m'\le m$, let $g_j^{(+r)}$ be linear on $I_{i_{m'}}$ with slope $(-1)^{m'} \big( f'_+(x_{i_{m'},j}) - f'_-(x_{i_{m'},j}) \big)$, and $g_j^{(+r)}(x_{i_{m'}}) = 0$;
    \item \label{cond:smooth_extension} \textbf{Smooth extension to the rest of $\mathbb{R}$}: we have already defined $g_j^{(+r)}$ on some distinct closed intervals in Step \ref{cond:derivative_jump_up} (unless $X_j^{(+r)}$ is empty, in which case simply let $g_j^{(+r)}$ be the zero function on $\mathbb{R}$), which separate $\mathbb{R}$ into some segments. For each of these segments, if it is infinite then we have already defined $g_j^{(+r)}$ at its only endpoint, then let $g_j^{(+r)}$ be constant on this segment matching its value at this endpoint; if it is finite then we have already defined $g_j^{(+r)}$ at both of its endpoints, then let $g_j^{(+r)}$ be linear on this segment such that it matches both of its values at the two endpoints.
\end{enumerate}

\begin{figure}[!htbp]
    \centering
    \includegraphics[width=1.0\linewidth]{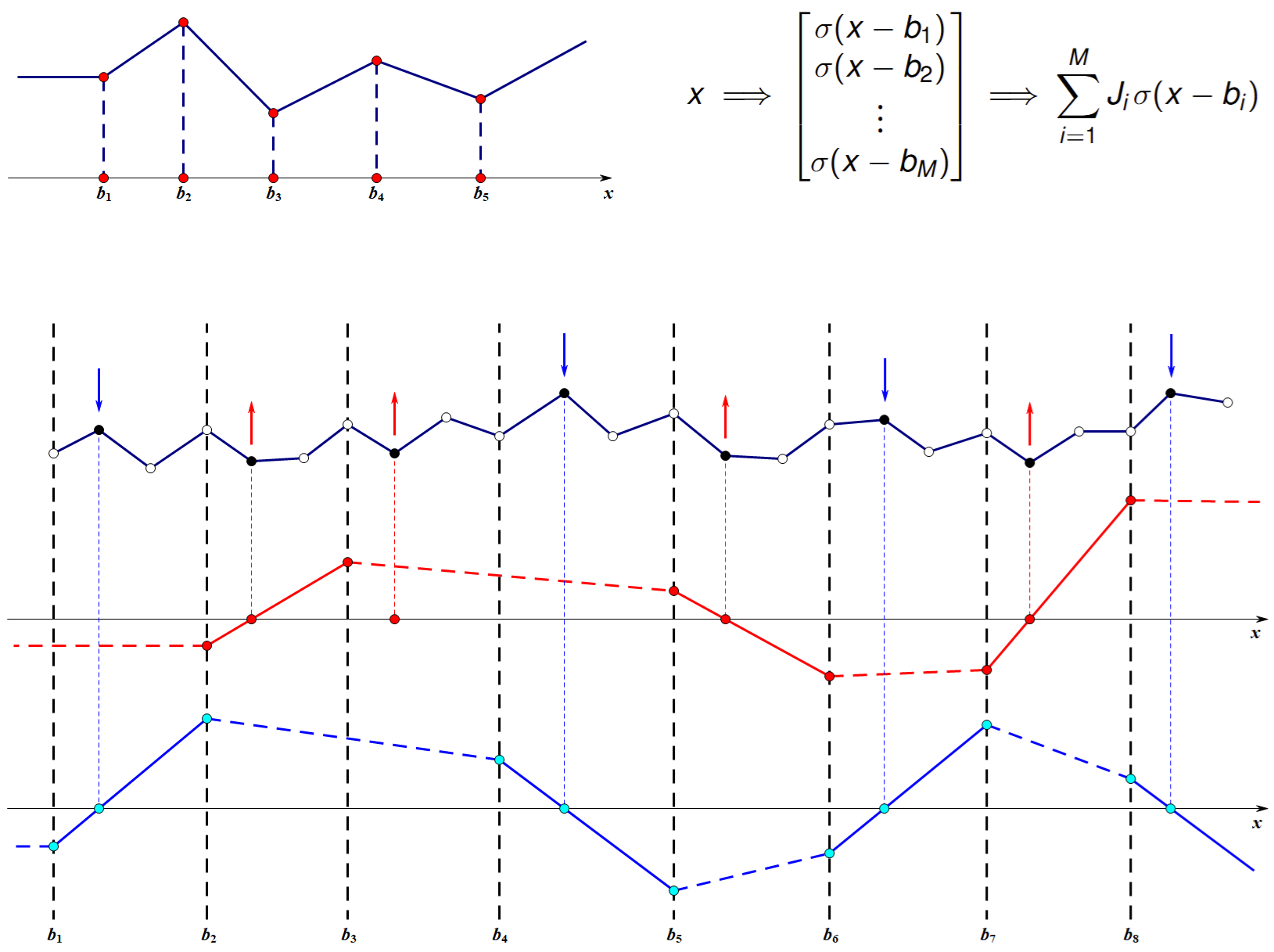}
    \caption{
    The idea of matching derivative jumps using a 2-hidden-layer ReLU MLP. As illustrated in the upper figure, using a single hidden layer, one can express any piecewise linear function whose break points occur in $b_1<\cdots<b_M$ (the biases of the first hidden layer) and which is constant on $(-\infty,b_1]$. Then, in the lower figure, we construct $4N-2$ of these functions to match the derivative jumps of the target function $f$ (the broken line function on the top, where red upward-pointing arrows indicate upward derivative jumps, and blue downward-pointing arrows indicate downward derivative jumps) in the second hidden layer. Here we take $M=8, N=3$ as an example, and two such functions (the red and blue broken lines) are shown, respectively matching half of upward derivative jumps (here we call them $X_1,\cdots,X_p$ for convenience) and half of downward derivative jumps of $f$ at $x_{12},x_{22},\cdots,x_{M2}$. Take the red one for example, it is constructed in two steps: in Step 1, we construct linear pieces (indicated by the solid line segments) in the intervals $[x_{i,1},x_{i+1,1}]$ which respectively contain $X_1,\cdots,X_p$, the absolute values of their slopes are the magnitudes of the respective derivative jumps, with the signs of these slopes alternates along the $x$-axis, and they cross the $x$-axis at $X_1,\cdots,X_p$, respectively; in Step 2, we connect these pieces linearly (via the dashed line segments), and extend horizontally to $\pm\infty$. This red broken line would be the output of a single neuron in the second hidden layer. Upon taking the ReLU, it creates upward derivative jumps exactly at $X_1,\cdots,X_p$ with magnitudes matching that of the target function, without introducing other derivative jumps. Likewise, the blue broken line is constructed similarly to match downward derivative jumps. In the end, we add up the ReLUs of all such broken line functions to match all derivative jumps of $f$, then add another suitable linear function to recover $f$ entirely.
    }
    \label{fig:2hidden_layer}
\end{figure}

We define $X_j^{(+r)}$ and $g_j^{(-r)}$ similar to the above, except that the condition $f'_+(x_{r+2k,j}) - f'_-(x_{r+2k,j}) > 0$ in Step \ref{cond:derivative_jump_up} is changed to $f'_+(x_{r+2k,j}) - f'_-(x_{r+2k,j}) < 0$. It is easy to see that all break points of $g_j^{(\pm r)}$ belong to $\{x_{11},x_{21},\cdots,x_{M1}\}$, and it does not change sign on the intervals $\{I_0,I_1,\cdots,I_M\} \backslash S_r$. Thus, all break points of $\sigma(g_j^{(\pm r)})$ belong to $\{x_{11},x_{21},\cdots,x_{M1}\} \cup \{x_{r,j},x_{r+2,j},x_{r+4,j},\cdots\}$. Moreover, for any $x\in\{x_{r,j},x_{r+2,j},x_{r+4,j},\cdots\}$, if $f'$ has an upward jump at $x$ then $\sigma(g_j^{(+r)})$ also has an upward jump in the derivative with the same magnitude, if $f'$ has an downward jump at $x$ then $\sigma(g_j^{(-r)})$ has an upward jump in the derivative with the same magnitude, thus $\sigma(g_j^{(+r)}) - \sigma(g_j^{(-r)})$ has the same jump in derivatives at $x$ as $f$. Therefore, the function 
\[
    \sum_{\substack{1\le r\le 2 \\ 2\le j\le N}} 
    \left(\sigma(g_j^{(+r)}) - \sigma(g_j^{(-r)})\right)
\]
and $f$ has the same jumps in derivatives at $\{x_{ij}: 1\le i\le M, 2\le j\le N \}$, thus all break points of the function 
\[
    g_0 \eqdef f - 
    \sum_{\substack{1\le r\le 2 \\ 2\le j\le N}} 
    \sigma(g_j^{(+r)})
    +
    \sum_{\substack{1\le r\le 2 \\ 2\le j\le N}} 
    \sigma(g_j^{(-r)})
\]
belong to $\{x_{11},x_{21},\cdots,x_{M1}\}$.

We have 
\begin{align}
\label{decomposition_of_approximant_function}
    f
    &= 
    \sum_{\substack{1\le r\le 2 \\ 2\le j\le N}} 
    \sigma(g_j^{(+r)})
    -
    \sum_{\substack{1\le r\le 2 \\ 2\le j\le N}} 
    \sigma(g_j^{(-r)})
    + g_0  \nonumber  \\
    &= 
    \sum_{\substack{1\le r\le 2 \\ 2\le j\le N}} 
    \sigma(g_j^{(+r)})
    -
    \sum_{\substack{1\le r\le 2 \\ 2\le j\le N}} 
    \sigma(g_j^{(-r)})
    + \sigma(g_0) - \sigma(-g_0)
\end{align}

We now show that $f$ can be represented by a ReLU MLP with $\operatorname{widthvec}=[M,4N-2]$. For notational convenience, we will let $X_i=x_{i1}$ for $i=1,\cdots,M$ in the following.

We know from above that for any function $g\in \{g_{j}^{(\pm r)}: 1\le r\le 2, 2\le j\le N\}\cup\{\pm g_0\}$, the set of break points of $g$ is contained in $\{X_1,X_2,\cdots,X_M\}$, and $g$ is constant on $(-\infty,X_1]$. Then, by comparing the jumps in derivatives as what we did above, we can see that the function
\[
    g(x) - g(X_1) -  \sum_{i=1}^{M} \left(g'_{+}(X_i)-g'_{-}(X_i)\right)\sigma(x-X_i)
\]
has no break points (thus is linear on all of $\mathbb{R}$) and is $0$ for all $x\le X_1$, hence is $0$ on all of $\mathbb{R}$. That is,
\begin{align}
    g(x) &= g(X_1) +  \sum_{i=1}^{M} \left(g'_{+}(X_i)-g'_{-}(X_i)\right)\sigma(x-X_i)  \nonumber  \\
    &= \begin{pmatrix} g'_{+}(X_1)-g'_{-}(X_1) \\ g'_{+}(X_2)-g'_{-}(X_2) \\ \vdots \\ g'_{+}(X_M)-g'_{-}(X_M) \end{pmatrix}
    \cdot \sigma\left(x \begin{pmatrix} 1 \\ 1 \\ \vdots \\ 1 \end{pmatrix} - 
    \begin{pmatrix} X_1 \\ X_2 \\ \vdots \\ X_M \end{pmatrix}\right) + g(X_1),
    \quad
    \forall x \in \mathbb{R} \label{eq:second_layer}
\end{align}
Therefore, $g$ can be represented by a neural network with one hidden layer consisting of $M$ neurons, with a single input neuron and a single output neuron. Moreover, the weight matrix between the input and the hidden layer is always $\begin{bmatrix} 1 & 1 & \cdots & 1 \end{bmatrix}^{\top}$, and the bias vector for the hidden layer is always $\begin{bmatrix} -X_1 & -X_2 & \cdots & -X_M \end{bmatrix}^{\top}$, which is the same for all such $g$. Therefore, all these $g$'s can share their input layers and hidden layers, so we can stack their output neurons ($4N-2$ in total, one for each $g$) in the second hidden layer of the final network such that their outputs are unaffected by each other, then finally $f$ can be implemented by adding one more neuron in the output layer based on Equation \ref{decomposition_of_approximant_function}.

Finally, we tally the nonzero parameters in $\Phi$. 
For the $4N-2$ neurons in the second hidden layer, $4N-4$ of them are $g_j^{(\pm r)}$ for $r=1,2$ and $j=2,3,\cdots,N$. From Step \ref{cond:derivative_jump_up}, we can see that $g_j^{(+r)}$ has at most $2\operatorname{card} \big( X_j^{(+r)} \big)$ break points (since Step \ref{cond:smooth_extension} did not introduce additional ones), all of which belong to $\{X_1,X_2,\cdots,X_M\}$. Thus, at most $2\operatorname{card} \big( X_j^{(+r)} \big)$ weights between the first hidden layer and the neuron that outputs $g_j^{(+r)}$ are nonzero. Therefore, the number of nonzero parameters between the two hidden layers is at most 
\begin{align*}
    &\sum_{\substack{1\le r\le 2 \\ 2\le j\le N}} 
    2\operatorname{card} \big( X_j^{(+r)} \big)
    +
    \sum_{\substack{1\le r\le 2 \\ 2\le j\le N}} 
    2\operatorname{card} \big( X_j^{(-r)} \big)
    +
    2M
    \\
    =
    & \ 2 \operatorname{card}
    \big( \big\{
        x_{ij}:
        1\le i\le M,
        2\le j\le N,
        f'_+(x_{r+2k,j}) - f'_-(x_{r+2k,j}) \ne 0
    \big\} \big)
    +
    2M
    \\
    \le
    & \ 2M(N-1) + 2M
    \\
    =
    & \ 2MN
\end{align*}
From Equation \ref{decomposition_of_approximant_function}, the bias for the output is $0$. Therefore, $\Phi$ has at most $2MN+M+4N-2$ nonzero weights and at most $M+4N-2$ nonzero biases. Altogether, $\Phi$ has at most $2MN+2M+8N-4$ nonzero parameters. 
\end{proof}

In Proposition \ref{prop:fit_a_network_with_two_hidden_layers}, if there are $K$ sample points and take 
\[
    M=2\lceil \sqrt{K} \rceil, 
    \quad
    N=\left\lceil \frac{\lceil \sqrt{K} \rceil}{2} \right\rceil
\]
then, upon noticing that
\[
    MN \ge K,
    \quad
    4N-2 
    = 
    4
    \left\lceil
        \frac{\lceil \sqrt{K} \rceil}{2}
    \right\rceil
    -2
    \le
    4
    \frac{\lceil \sqrt{K} \rceil + 1}{2}
    -2
    =
    2\lceil \sqrt{K} \rceil
\]
we directly deduce the following corollary.

\begin{corollary}
\label{cor:fit_a_network_with_two_hidden_layers}
Let $K\in\mathbb{N}_+$. For any set of $K$ samples $(x_{i},y_{i})_{i=1}^{K}\subseteq\mathbb{R}^2$ (where $x_1<x_2<\cdots<x_{K}$), there exists a ReLU MLP $\Phi$ with width at most $2\lceil \sqrt{K} \rceil$, depth $2$, and with at most $2K+8\lceil \sqrt{K} \rceil$ nonzero parameters that can memorize this sample set; i.e. 
\[
    \Phi(x_i)=y_i \qquad \mbox{for } i=1,\dots,K
.
\]
Furthermore, $\Phi$ is linear on the intervals $[x_i,x_{i+1}]$ for $i=1,2,\cdots,K-1$, and it is constant on each of the segment $(-\infty,x_1]$ and $[x_{K},\infty)$.
\end{corollary}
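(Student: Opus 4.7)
The plan is to apply Proposition~\ref{prop:fit_a_network_with_two_hidden_layers} after padding the $K$-sample set to have exactly $MN$ samples, for a carefully chosen $(M,N)$. Set $M \eqdef 2\lceil \sqrt{K} \rceil$ and $N \eqdef \lceil \lceil \sqrt{K} \rceil / 2 \rceil$. Then $MN \ge \lceil\sqrt{K}\rceil^2 \ge K$, and the elementary estimate $4N-2 \le 2\lceil\sqrt{K}\rceil$ displayed in the paragraph preceding the statement controls the second hidden-layer width. This already gives the desired bounds on the architecture: width $\max\{M,4N-2\} \le 2\lceil\sqrt{K}\rceil$ and depth $2$.

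Next, let $f:\mathbb{R}\to\mathbb{R}$ be the continuous piecewise linear function determined by the original $K$ samples (linear on $[x_i,x_{i+1}]$, constant on $(-\infty,x_1]$ and $[x_K,\infty)$). Insert $MN-K$ auxiliary abscissae in the open intervals $(x_i,x_{i+1})$ and assign to each auxiliary abscissa $x'$ the value $f(x')$. The resulting $MN$-point sample set is compatible with $f$ in the sense that every new point lies on a linear segment of $f$; consequently, the piecewise linear interpolant of the augmented sample set coincides with $f$ on all of $\mathbb{R}$. Applying Proposition~\ref{prop:fit_a_network_with_two_hidden_layers} to the augmented set produces a ReLU MLP $\Phi$ with $\operatorname{widthvec}=[M,4N-2]$ realizing $f$ exactly, which in particular memorizes the original samples, is linear on each $[x_i,x_{i+1}]$, and is constant outside $[x_1,x_K]$.

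The one delicate point is the bound on nonzero parameters, since the proposition's generic bound $2MN+2M+8N-4$ is too loose for our purposes (e.g.\ when $\lceil\sqrt{K}\rceil$ is odd it yields $2K+10\lceil\sqrt{K}\rceil$, not $2K+8\lceil\sqrt{K}\rceil$). The plan is to revisit the counting step in the proof of Proposition~\ref{prop:fit_a_network_with_two_hidden_layers}: the nonzero weights between the two hidden layers are counted as twice the number of \emph{break points} of $f$ (those produced by $g_0$ at the $j=1$ abscissae together with those produced by the $g_j^{(\pm r)}$ at $j\ge 2$). By construction every break point of $f$ lies at one of the original $K$ abscissae, so this count is bounded by $2K$ rather than by $2MN$. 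Combining with the $M$ weights feeding the first hidden layer, the $4N-2$ weights feeding the output, and the $M+4N-2$ nontrivial biases, the total parameter count is at most $2K+2M+8N-4\le 2K+8\lceil\sqrt{K}\rceil$, using $2M\le 4\lceil\sqrt{K}\rceil$ and $8N-4\le 4\lceil\sqrt{K}\rceil$.

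The main obstacle is therefore not the invocation of the proposition but the sharpening of the parameter bound: one must look \emph{inside} the proof to see that each term there actually scales with the number of genuine derivative jumps of the target function, and then use the fact that our padding introduces no new jumps. Once this is observed, the arithmetic of the previous paragraph closes the argument.
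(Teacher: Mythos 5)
Your instinct is right that the paper's ``we directly deduce'' cannot be taken at face value: with $M=2\lceil\sqrt{K}\rceil$ and $N=\lceil\lceil\sqrt{K}\rceil/2\rceil$ the generic bound $2MN+2M+8N-4$ from Proposition~\ref{prop:fit_a_network_with_two_hidden_layers} can exceed $2K+8\lceil\sqrt{K}\rceil$ (your odd-$\lceil\sqrt{K}\rceil$ computation is correct), so one must indeed reopen the parameter tally inside the proposition's proof. Where your argument goes wrong is in the assertion that the hidden-to-hidden weight count is bounded by $2K$. What the proof of Proposition~\ref{prop:fit_a_network_with_two_hidden_layers} actually yields for those weights is
\[
\sum_{\substack{1\le r\le 2\\ 2\le j\le N}} 2\operatorname{card}\bigl(X_j^{(+r)}\bigr)
+\sum_{\substack{1\le r\le 2\\ 2\le j\le N}} 2\operatorname{card}\bigl(X_j^{(-r)}\bigr)
+2M
\;=\;
2K_2'+2M,
\]
where $K_2'$ counts the derivative jumps of $f$ at positions with second index $j\ge 2$. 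The $2M$ is not a count of break points of $f$ at the $j=1$ abscissae; it bounds the break points of $g_0$ and $-g_0$, and $g_0=f-\sum\sigma(g_j^{(+r)})+\sum\sigma(g_j^{(-r)})$ can acquire a derivative jump at a $j=1$ abscissa $X_i$ even when $f$ has none there, because the terms $\sigma(g_j^{(\pm r)})$ themselves break at those positions. So ``every term scales with the derivative jumps of $f$'' is not true of the $g_0$ term, and in general the refined count gives $2K+4M+8N-4$, which only yields $2K+12\lceil\sqrt{K}\rceil$ by your arithmetic.

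To make the idea work one needs to additionally arrange the padding so that every $j=1$ abscissa $X_1,\dots,X_M$ is an \emph{original} sample carrying a genuine derivative jump, giving $K_1'=M$ and hence $2K_2'+2M=2(K-K_1')+2M=2K$. This placement is possible exactly when $K\ge M=2\lceil\sqrt{K}\rceil$, which fails for $K\in\{1,2,3,5\}$; those small cases need a separate choice (for instance $M=K,\ N=1$, for which $g_0=f$ has at most $K$ break points and the generic bound $4K+4$ is comfortably below $2K+8\lceil\sqrt{K}\rceil$). Without the $K_1'=M$ condition and the small-$K$ case split, the $2K$ step does not follow from the proof of Proposition~\ref{prop:fit_a_network_with_two_hidden_layers} as written. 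The paper's own one-line justification suffers from the same issue, so this is a gap in the source as much as in your write-up --- but it is a gap that your proposal does not yet close.
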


\begin{theorem}[Efficient Universal Memorization by Deep MLPs with Optimal Regularity]
\label{prop:UnivMemorization_1D}
\hfill\\
Let $K,L\in \mathbb{N}_+$ with $L\ge 3$.  
Given any set of $K$ samples $(x_i,y_i)_{i=1}^{K}\subseteq\mathbb{R}^2$ (with $x_1<x_2<\cdots<x_K$) and any $11< n_1,n_2,\cdots,n_L\in\mathbb{N}_+$ satisfying the constraint:
    \begin{equation}
    \label{eq:num_of_sample_pts}
        K 
        \le 
        \sum_{b=1}^{L-1}
        \left(
            (n_b-11) 
            \left\lfloor \frac{n_{b+1}-9}{4} \right\rfloor
            -
            2
        \right)
    \end{equation}
    there exists a ReLU MLP $\Phi$ with widthvec $[n_1,n_2,\cdots,n_L,8]$ that can memorize this sample set; i.e.\
    \[
        \Phi(x_i)=y_i \qquad \mbox{for } i=1,\dots,K
    .
    \]
    Furthermore, $\Phi$ is linear on the intervals $[x_i,x_{i+1}]$ for $i=1,2,\cdots,K-1$, and it is constant on each of the segment $(-\infty,x_1]$ and $[x_K,\infty)$. The number of nonzero parameters in $\Phi$ is at most
    \[
        2K +  23 \sum_{b=1}^{L} n_b - 121L .
    \]
\end{theorem}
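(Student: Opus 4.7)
The plan is to view the required deep network as a cascade of $L-1$ two-hidden-layer memorizers, one per consecutive pair of hidden layers $(b,b+1)$, each interpolating a disjoint block of the training samples. First I choose a partition $K = K_1 + K_2 + \cdots + K_{L-1}$ with $K_b + 2 \le (n_b - 11)\lfloor (n_{b+1}-9)/4 \rfloor$, which is possible exactly because~\eqref{eq:num_of_sample_pts} holds; the ``$+2$'' slack lets block $b$'s memorizer include the two sample points bordering its group, so adjacent blocks stitch together smoothly into a globally piecewise linear function. The samples are split in order: $G_1$ contains the $K_1$ leftmost samples, $G_2$ the next $K_2$, and so on.

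Next I build the network layer by layer. In each hidden layer I reserve a small constant number of neurons to (i) propagate $\sigma(x)$ and $\sigma(-x)$, which reconstructs $x=\sigma(x)-\sigma(-x)$ in the next linear map, and (ii) propagate the running accumulator of block contributions, again via a $\sigma(\cdot)/\sigma(-\cdot)$ pair. The remaining $n_b - 11$ neurons in layer $b$ and $n_{b+1}-9$ neurons in layer $b+1$ are devoted to the block-$b$ memorizer obtained from Proposition~\ref{prop:fit_a_network_with_two_hidden_layers} applied to $G_b$, using parameters $M_b = n_b - 11$ and $4N_b - 2 \le n_{b+1} - 9$, together with the small auxiliary circuitry that adds this block's output into the accumulator. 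Because each block's memorizer is constant outside the $x$-range of $G_b$, the sum over all blocks is piecewise linear with the correct break-point structure and constant tails, and the construction of Proposition~\ref{prop:fit_a_network_with_two_hidden_layers} guarantees exact interpolation at every $x_i$ once the overlapping boundary samples are included. The terminal hidden layer of width $8$ is where the accumulator, carried through the network as a pair $\sigma(\pm\cdot)$, is linearly reconstructed and combined with the output of the last block before being emitted as the scalar $\Phi(x)$.

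The parameter count is obtained block by block: Proposition~\ref{prop:fit_a_network_with_two_hidden_layers} gives at most $2K_b + 2M_b + 8N_b - 4$ nonzero parameters for block $b$, while the passthrough of $\sigma(\pm x)$ and $\sigma(\pm\text{accumulator})$ costs only $O(1)$ nonzero weights per layer. Summing over $b=1,\dots,L-1$, bounding $M_b\le n_b$ and $4N_b-2\le n_{b+1}$, and absorbing the passthrough overhead into the per-layer constant yields the claimed bound $2K + 23\sum_b n_b - 121L$. The principal obstacle will be the careful bookkeeping needed to glue the blocks together: one must keep the passthrough channels sign-split as $\sigma(\pm\cdot)$ to survive each ReLU layer (which forces the lower bound $n_b>11$), choose the additive offsets at block boundaries so that the cumulative function takes the prescribed value $y_i$ at each $x_i$ rather than merely the correct jump at each break point, and verify that the constants hidden in the ``$11$'', ``$9$'', ``$23$'' and ``$121$'' in the statement are all achievable with the specific $O(1)$ overhead counts coming from the ReLU identity $x=\sigma(x)-\sigma(-x)$ and from the second-layer structure exposed in the proof of Proposition~\ref{prop:fit_a_network_with_two_hidden_layers}.
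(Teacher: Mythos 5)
Your high-level plan matches the paper's intent: partition the samples into $L-1$ blocks governed by~\eqref{eq:num_of_sample_pts}, run a two-hidden-layer memorizer (Proposition~\ref{prop:fit_a_network_with_two_hidden_layers}) per block, carry $\sigma(\pm x)$ and a running accumulator through every layer, and reconstruct at the final width-$8$ layer. However, there is a genuine gap at the heart of the construction, and it is the one place the paper actually has to work hard.

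You assign block $b$ to hidden layers $b$ and $b{+}1$ for $b=1,\dots,L-1$, which means every interior layer $b{+}1$ must simultaneously serve two structurally incompatible roles: as the \emph{second} hidden layer of block $b$'s memorizer (neurons of the form $\sigma(g_j^{(b)})$ where $g_j^{(b)}$ is affine in layer $b$'s outputs) and as the \emph{first} hidden layer of block $(b{+}1)$'s memorizer (neurons of the form $\sigma(x - X_i^{(b+1)})$). Your resource accounting allocates $n_{b+1}-9$ neurons of layer $b{+}1$ to block $b$ alone, and block $b{+}1$ independently needs $n_{b+1}-11$ there, which together with the passthrough channels adds to roughly $2n_{b+1}$ neurons in a layer of width $n_{b+1}$ --- so the two sets \emph{must} overlap, i.e., the very same neurons must be reused. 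You never explain how a $\sigma$ of an affine combination of the previous layer can double as a clean $\sigma(x - B_j)$ ramp, and this is false in general. The paper's proof resolves this by modifying $g_j^{(b)}$ into $\tilde{g}_j^{(b)} = g_j^{(b)} + p_j^{(b)}\sigma(x - X_{b+1}) + q_j^{(b)}\sigma(x - Y_{b+1})$ so that $\sigma(\tilde{g}_j^{(b)})$ has exactly one derivative jump beyond $X_{b+1}$ located at the next block's breakpoint $B_j^{(b+1)}$, and then applies an explicit affine recovery $\mathcal{L}_j^{(b)}(x,\cdot)$ to produce the mimic layer $\tilde{\mathbf{h}}_1^{(b+1)}$ agreeing with the genuine first-layer neurons of $\Phi_{b+1}$ on the relevant interval. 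It then cleans up the stray derivative jumps each modified block introduces outside its interval via the clipping construction $Q_b = p_b\circ q_b - w_b x$.

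Two secondary issues. First, your gluing scheme of sharing boundary \emph{sample} points between adjacent blocks does not work with Proposition~\ref{prop:fit_a_network_with_two_hidden_layers} as a black box: each block's memorizer is constant (not zero) outside its range, so naively summing $L-1$ such memorizers over-counts; the paper instead introduces dedicated separation points $X_B, Y_B, Z_B$ between batches (not sample points) and designs each $f_b$ to \emph{vanish} outside $[Z_b, X_{b+1}]$, which is also where several of the eleven reserved auxiliary neurons per layer actually go ($\sigma(x-X)$, $\sigma(x-Y)$, $\sigma(x-Z)$, the clipping neurons for $q_b$, and the $\sigma(\pm UV)$ pair), not just the $\sigma(\pm x)$ and accumulator channels you list. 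Second, because of the preceding points, your parameter count cannot be derived as stated: the modification weights $p_j^{(b)}, q_j^{(b)}$, the clipping affine maps, and the $UV$ channel contribute nonzero parameters beyond what Proposition~\ref{prop:fit_a_network_with_two_hidden_layers} yields per block, and the paper's constants $23$ and $121$ are computed by tallying these explicitly. So the proposal has the right skeleton, but the merge of shared layers --- the actual content of the theorem --- is missing.
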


We note that, the factors $23$ and $121$ above can be further improved. For simplicity, here we only prove this weaker bound below, which already shows the significance of this theorem: as the minimum width $n_{\operatorname{min}}$ (i.e. minimum number of neurons in a single hidden layer) goes to infinity, the efficiency of parameter usage
\[
    \operatorname{Efficiency} 
    \eqdef
    \frac
    {\text{degree of freedom of sample set}}
    {\text{number of nonzero parameters}}
    \cdot
    100\%
\]
goes to $100\%$. This is because, assuming equality is achieved in Equation \ref{eq:num_of_sample_pts},
then we have
\begin{align*}
    \operatorname{Efficiency}  
    &\ge
    \frac
    {K\operatorname{dim(\mathbb{R}^2)}}
    {2K +  23 \sum_{b=1}^{L} n_b - 121L}
    \cdot 100\%
    \\
    &\ge
    \frac{2K}{2K + 23 \sum_{b=1}^{L} n_b}
    \cdot 100\%
    \\
    &\ge
    \frac{2K}
    {
        2K + 23 (K+2(L-1)) 
        / 
        \left\lfloor \frac{n_{\operatorname{min}}-9}{4} \right\rfloor
    }
    \cdot 100\%
    \\
    &\ge
    \frac{2}
    {
        2 + 23 (1+2(L-1)) 
        / 
        \left\lfloor \frac{n_{\operatorname{min}}-9}{4} \right\rfloor
    }
    \cdot 100\%
    \\
    &\rightarrow
    100\%
\end{align*}
when the depth $L$ is kept fixed (or $L=o(n_{\operatorname{min}})$) as $n_{\operatorname{min}} \rightarrow \infty$. The same is true for Proposition \ref{prop:fit_a_network_with_two_hidden_layers}.

\begin{proof}[{Proof of Theorem~\ref{prop:UnivMemorization_1D}}]
Let $f:\mathbb{R}\rightarrow\mathbb{R}$ be the function to be implemented, i.e.\ it is the unique function on $\mathbb{R}$ such that:
\begin{enumerate}
    \item $f(x_i)=y_i$ for $i=1,2,\cdots,K$;
    \item $f$ is linear on $[x_i,x_{i+1}]$ for $i=1,2,\cdots,K-1$;
    \item $f$ is constant on each of $(-\infty,x_1]$ and $[x_K,\infty)$.
\end{enumerate}
Then, it suffices to prove that $f$ can be implemented by a ReLU MLP with $\operatorname{widthvec}=[n_1,n_2,\cdots,n_L,8]$.

Clearly $x_1,\cdots,x_K$ contain all possible break points of $f$, so we may assume that all of them are break points of $f$, i.e. the left and right hand derivatives of $f$ at any of the $x_i$'s are different.

The idea of the proof is the same as in the proof of Proposition \ref{prop:fit_a_network_with_two_hidden_layers}: instead of implementing the target function $f$ directly, we match the jumps in its derivative, which is mostly done by adding up $L-1$ functions $f_1,\cdots,f_{L-1}$ (to be defined later), each matching those jumps in different intervals. 

We divide the break points $\{x_i\}_{i=1}^K$ of $f$ into $L-1$ batches: for a batch number $b=1,2,\cdots,L-1$, let $k_b \eqdef n_b - 11$ (and $k_L \eqdef n_L - 11$), and denote the $b$th batch of break points to be 
\[
    x_1^{(b)},x_2^{(b)},\cdots,x_{K_b}^{(b)}, \quad
    K_b \eqdef (n_b-11) 
        \left\lfloor \frac{n_{b+1}-9}{4} \right\rfloor - 2
\]
and let $X_B,Y_B,Z_B\in\mathbb{R}$ for $B=1,2,\cdots,L$ be ``separation points'' between different batches of break points such that
\begin{align*}
    X_1 < Y_1 < Z_1 
    &< x_1^{(1)}<\cdots<x_{K_1}^{(1)}
    < X_2 < Y_2 < Z_2 
    \\
    &< x_2^{(2)}<\cdots<x_{K_2}^{(2)}
    < X_3 < Y_3 < Z_3 
    \\
    &< \cdots  
    \\
    &< x_{L-1}^{(2)}<\cdots<x_{K_{L-1}}^{(2)}
    < X_L < Y_L < Z_L 
\end{align*}
Let $f_b:\mathbb{R} \rightarrow \mathbb{R}$ be a function that satisfies the following conditions:
\begin{enumerate}
    \item \label{cond:fb_has_few_breakpoints} \textbf{Continuous piecewise linear on $\mathbb{R}$ with few ``pieces''}: all break points of $f_b$ belong to 
    \[
        \big\{ x_1^{(b)},x_2^{(b)},\cdots,x_{K_b}^{(b)}, Z_b, X_{b+1} \big\}
    ;
    \]
    \item \label{cond:match_jumps_at_bth_batch} \textbf{Matching derivative jumps at the $b$th batch of break points}: 
    \[
        (f_b)'_+ (x) - (f_b)'_- (x)
        =
        f'_+ (x) - f'_- (x),
        \quad
        \text{for } x = x_1^{(b)},x_2^{(b)},\cdots,x_{K_b}^{(b)}
    \]
    \item \label{cond:vanish_at_other_batches} \textbf{Vanishing at other batches of break points}: 
    \[
        f_b(x) = 0,
        \quad
        \forall x\in (-\infty, Z_b] \cup [X_{b+1}, \infty)
    \]
\end{enumerate}
Then, adding up all the $f_b$'s would recover most informations of $f$. 

We first need to show the existence of these $f_b$'s: let $t_b$ be the linear function on $\mathbb{R}$ such that $t_b(Z_b) = f(Z_b)$ and $t_b(X_{b+1}) = f(X_{b+1})$, then it is easy to verify that the function $f_b$ defined by 
\[
    f_b(x) \eqdef 
    \begin{cases}
        f(x) - t_b(x), \quad &\text{if } x \in [Z_b, X_{b+1}]  \\
        0,             \quad &\text{otherwise}
    \end{cases}
\]
satisfies Conditions \ref{cond:fb_has_few_breakpoints} to \ref{cond:vanish_at_other_batches}.

Now, for each $f_b$, we build a network $\Phi_b$ implementing $f_b$ using the construction in the proof of Proposition \ref{prop:fit_a_network_with_two_hidden_layers}: for $b=1,2,\cdots,L-1$, take $M=k_b,N= \lfloor (k_{b+1}+2)/4 \rfloor$ and sample set $\big\{\big(x_i^{(b)}, f_b(x_i^{(b)})\big)\big\}_{i=1}^{MN-2}$ (namely the $b$th batch of samples) along with the two endpoints $(Z_b,0)$ and $(X_{b+1},0)$ in Proposition \ref{prop:fit_a_network_with_two_hidden_layers}, and then denote the structure of the resulting network as
\[
    x
    \ \Longrightarrow \ 
    \mathbf{h}_1^{(b)}(x) 
    \ \Longrightarrow \ 
    \mathbf{h}_2^{(b)}(x) 
    \ \Longrightarrow \ 
    \Phi_b(x)
\]
where $\mathbf{h}_1^{(b)}(x),\mathbf{h}_2^{(b)}(x)$ are column vectors representing the two hidden layers, and $\mathbf{h}_1^{(b)},\mathbf{h}_2^{(b)}$ are considered as functions of $x\in\mathbb{R}$. By Proposition \ref{prop:fit_a_network_with_two_hidden_layers}, the length of $\mathbf{h}_1^{(b)}(x)$ is $M=k_b$, and the length of $\mathbf{h}_2^{(b)}(x)$ is $4N-2 \le k_{b+1}$, which we treat as exactly $k_{b+1}$ for simplicity. Moreover, we know that 
\[
    \mathbf{h}_1^{(b)}(x) = 
    \begin{bmatrix}
        \sigma\big(x - B_1^{(b)}\big)  \\
        \vdots  \\
        \sigma\big(x - B_{k_b}^{(b)}\big)
    \end{bmatrix},
    \quad \forall x \in \mathbb{R}
\]

Fix a batch number $1\le b\le L-2$. Define $I_b \eqdef [Z_b,X_{b+1}]$, as $\Phi_b$ only matches the jumps in $f'$ inside this interval. Next, we modify $\Phi_b$ so that its second hidden layer $\mathbf{h}_2^{(b)}$ can also serve as the first hidden layer in $\Phi_{b+1}$ on the next interval $I_{b+1}$. There are $k_{b+1}$ functions in $\mathbf{h}_2^{(b)}$, we denote them as $\sigma\big(g_1^{(b)}\big), \sigma\big(g_2^{(b)}\big), \cdots, \sigma\big(g_{k_{b+1}}^{(b)}\big)$, i.e.
\[
    \mathbf{h}_2^{(b)} = 
    \begin{bmatrix}
        \sigma\big(g_1^{(b)}\big)  \\
        \vdots  \\
        \sigma\big(g_{k_{b+1}}^{(b)}\big)
    \end{bmatrix}
\]
Take any $g_j^{(b)}$ with $1\le j\le k_{b+1}$, which is an affine transformation of $\mathbf{h}_1^{(b)}$, and it is linear on $[X_{b+1},\infty)$. Let 
\[
    \tilde{g}_j^{(b)} (x) \eqdef g_j^{(b)} (x) 
    + 
    p_j^{(b)} \sigma\big(x - X_{b+1}\big) 
    + 
    q_j^{(b)} \sigma\big(x - Y_{b+1}\big),
    \quad
    \forall x\in\mathbb{R}
\]
for suitable choices of $p_j^{(b)},q_j^{(b)}\in\mathbb{R}$ such that the following conditions hold (see Figure \ref{fig:1D_batch_tip} for illustration): 
\begin{enumerate}
    \item $\tilde{g}_j^{(b)} \big( Y_{b+1} \big) \ne 0$;
    \item $\tilde{g}_j^{(b)} ( B_j^{(b+1)} ) = 0$;
    \item $\tilde{g}_j^{(b)} (X_{b+1}) \tilde{g}_j^{(b)} (Y_{b+1}) \ge 0$, so that $\sigma\big(\tilde{g}_j^{(b)}\big)$ doesn't have any additional break points between $X_{b+1}$ and $B_j^{(b+1)}$.
\end{enumerate}

\begin{figure}[!htbp]
    \centering
    \includegraphics[width=1.0\linewidth]{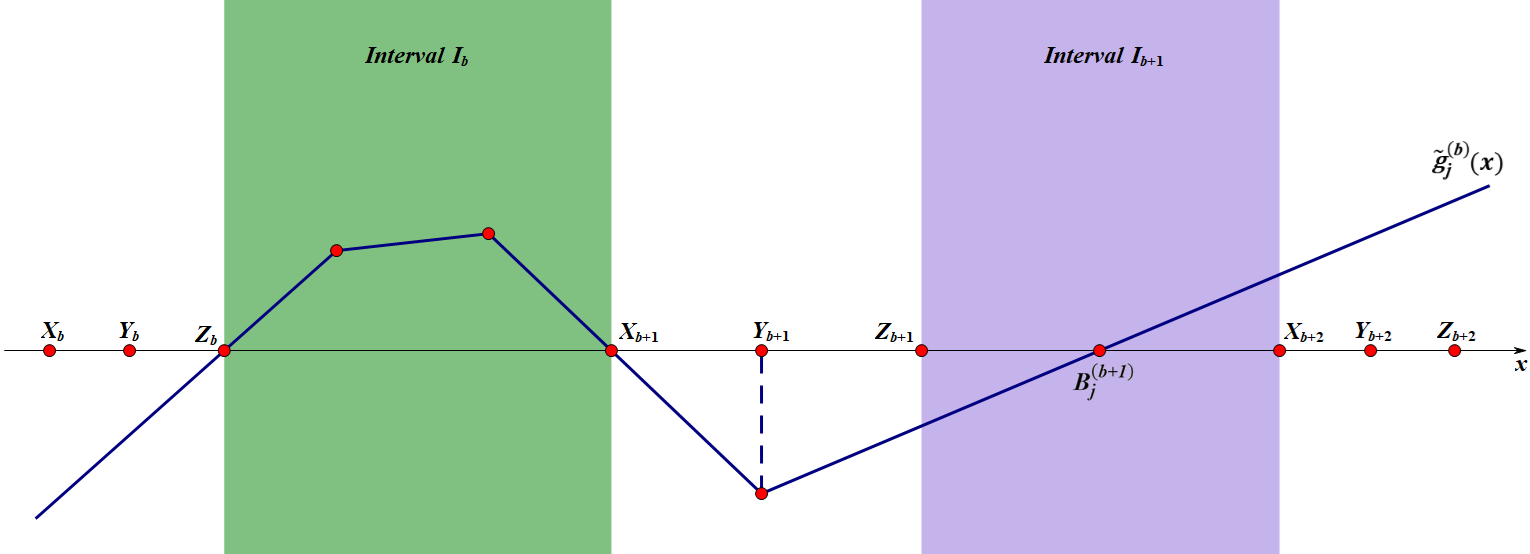}
    \caption{Modification of $g_j^{(b)}$ into $\tilde{g}_j^{(b)}$. This is done by adding $p_j^{(b)} \sigma\big(x - X_{b+1}\big) 
    + 
    q_j^{(b)} \sigma\big(x - Y_{b+1}\big)$ to $g_j^{(b)}(x)$ for suitable choices of the real numbers $p_j^{(b)}$ and $q_j^{(b)}$. In this example, $p_j^{(b)}=0$ and $q_j^{(b)}>0$. The idea is to readjust its slope on the intervals $[X_{b+1},Y_{b+1}]$ and $[Y_{b+1},\infty)$ so that after applying the ReLU, it creates a single nonzero derivative jump at $B_j^{(b+1)}$ on $[X_{b+1},\infty)$, just like the function $\sigma(x-B_j^{(b+1)})$ does, so that it can be used in the same way as the latter, which was used in the first hidden layer of the two-hidden-layer network $\Phi_{b+1}$ for fitting the $(b+1)$-th batch. That is, we ``merged'' the second hidden layer of $\Phi_b$ and the first hidden layer of $\Phi_{b+1}$, which reduces depth and parameter usage.
    }
    \label{fig:1D_batch_tip}
\end{figure}

Since $B_j^{(b+1)} \ge Z_{b+1} > Y_{b+1}$, there exist such choices for $p_j^{(b)},q_j^{(b)}\in\mathbb{R}$. Also, we note that the $\tilde{g}_j^{(b)}$ is a modification of the $g_j^{(b)}$ outside $I_b$, i.e. 
\[
    \tilde{g}_j^{(b)} (x)
    =
    g_j^{(b)} (x),
    \quad \forall x \in I_b
\]
thus we define the following modification layer: 
\begin{equation}
\label{eq:def_of_h2bar}
    \Bar{\mathbf{h}}_2^{(b)} 
    \eqdef 
    \begin{bmatrix}
        \sigma \big( \tilde{g}_1^{(b)} \big)  \\
        \vdots  \\
        \sigma \big( \tilde{g}_{k_{b+1}}^{(b)} \big)
    \end{bmatrix}
\end{equation}
and we have that 
\[
    \Bar{\mathbf{h}}_2^{(b)} (x) 
    =
    \mathbf{h}_2^{(b)} (x) ,
    \quad \forall x \in I_b
\]
Let $\mathcal{L}_{\mathbf{h}_2^{(b)},\Phi_b}$ be the affine transformation that maps the second hidden layer $\mathbf{h}_2^{(b)}$ of $\Phi_b$ to its output. Then, the function
\[
    \tilde{\Phi}_b 
    \eqdef
    \mathcal{L}_{\mathbf{h}_2^{(b)},\Phi_b} 
    (\Bar{\mathbf{h}}_2^{(b)})
\]
``mimics'' the output of $\Phi_b$ on $I_b$, in the sense that
\[
    \tilde{\Phi}_b (x)
    = 
    \mathcal{L}_{\mathbf{h}_2^{(b)},\Phi_b} 
    (\Bar{\mathbf{h}}_2^{(b)} (x))
    =
    \mathcal{L}_{\mathbf{h}_2^{(b)},\Phi_b} 
    (\mathbf{h}_2^{(b)} (x))
    =
    \Phi_b(x),
    \quad \forall x \in I_b
\]
Here, we emphasize that $\tilde{\Phi}_b$ is treated as a function, not a network. 

Now, $\tilde{g}_j^{(b)}$ is linear on $[Z_{b+1},\infty)$ and crosses the $x$-axis at $B_j^{(b+1)}$ with slope $s_j^{(b)}\ne 0$, so we have 
\[
\sigma \big( \tilde{g}_j^{(b)} (x) \big)  
= 
\begin{cases}
    s_j^{(b)} \sigma\big(x - B_j^{(b+1)}\big), \quad
    &\text{if } s_j^{(b)}>0  
    \\
    s_j^{(b)}x - s_j^{(b)} \sigma\big(x - B_j^{(b+1)}\big), \quad
    &\text{if } s_j^{(b)}<0
\end{cases},
\quad \forall x\in [Z_{b+1},\infty)
\]
Correspondingly, we define affine transformations 
\[
\mathcal{L}_j^{(b)} (x,y) = 
\begin{cases}
    y/s_j^{(b)}, \quad
    &\text{if } s_j^{(b)}>0  
    \\
    -y/s_j^{(b)} + x, \quad
    &\text{if } s_j^{(b)}<0
\end{cases},
\quad \forall x,y\in \mathbb{R}
\]
which recover $\sigma(x - B_j)$ on $[Z_{b+1},\infty)$ upon composing with $\sigma \big( \tilde{g}_j^{(b)} \big)$, i.e. 
\[
\mathcal{L}_j^{(b)} \Big(x, \sigma \big( \tilde{g}_j^{(b)}(x) \big) \Big) 
= 
\sigma\big(x - B_j^{(b+1)}\big),
\quad \forall x\in [Z_{b+1},\infty)
\]
We will see that, in the final construction of $\Phi$, we will keep a copy of the input $x$ in every hidden layer. Now, we have the following ``mimic layer'' having exactly the same behavior as $\mathbf{h}_1^{(b+1)}$ on $I_{b+1} \subseteq [Z_{b+1},\infty)$: 
\begin{equation}
\label{eq:def_of_h1bar}
    \tilde{\mathbf{h}}_1^{(b+1)} (x)
    \eqdef 
    \begin{bmatrix}
        \mathcal{L}_1^{(b)} 
        \Big(x, \sigma \big( \tilde{g}_1^{(b)}(x) \big) \Big)  \\
        \vdots  \\
        \mathcal{L}_{k_{b+1}}^{(b)} 
        \Big(x, \sigma \big( \tilde{g}_{k_{b+1}}^{(b)}(x) \big) \Big)  \\
    \end{bmatrix},
    \quad \forall x \in \mathbb{R}
\end{equation}
In other words, 
\[
    \tilde{\mathbf{h}}_1^{(b+1)} (x) 
    =
    \mathbf{h}_1^{(b+1)} (x) ,
    \quad \forall x \in I_{b+1}
\]
Notice that, from Equations \ref{eq:def_of_h1bar} and \ref{eq:def_of_h2bar}, we can see that the function $\tilde{\mathbf{h}}_1^{(b+1)} (x)$ is simply an affine transformation of $\begin{bmatrix} \Bar{\mathbf{h}}_2^{(b)} (x) & x \end{bmatrix}^{\top}$, and thus does not need an extra layer to process. 

So far, we have tweaked the hidden layers $\mathbf{h}_2^{(b)}$ and $\mathbf{h}_1^{(b+1)}$ into $\Bar{\mathbf{h}}_2^{(b)}$ and $\tilde{\mathbf{h}}_1^{(b+1)}$ respectively, so that $\tilde{\mathbf{h}}_1^{(b+1)}$ does not need an extra layer to process. At the same time, we have a collection of the resulting tweaked versions of $\Phi_1,\cdots,\Phi_{L-1}$, namely $\tilde{\Phi}_1,\cdots,\tilde{\Phi}_{L-1}$, with $\tilde{\Phi}_b$ having the same output as $\Phi_b$ on $I_b$, for all batch number $b$. However, these $\tilde{\Phi}_b$'s cannot be used directly, as they have other derivative jumps outside $I_b$, which we now eliminate by post-processing the data $\tilde{\Phi}_b (x)$ together with $x$. For each $b=1,2,\cdots,L-1$, let $w_b>0$ be large enough so that the function 
\[
    q_b(x) \eqdef \tilde{\Phi}_b (x) + w_b x,
    \quad \forall x\in\mathbb{R}
\]
is strictly increasing on $\mathbb{R}$. Since $\tilde{\Phi}_b$ is continuous piecewise linear with finite number of ``pieces'', its Lipschitz constant is finite, then $w_b$ only need to be larger than this constant, so such choice of $w_b$ exists. Define the 1D map projecting $\mathbb{R}$ to $q_b(I_b) = [q_b(Z_b), q_b(X_{b+1})]$: 
\[
    p_b(y)
    \eqdef 
    \sigma(y - q_b(Z_b)) 
    - 
    \sigma(y - q_b(X_{b+1}))
    +
    q_b(Z_b)
    =
    \begin{cases}
        q_b(Z_b),     \quad &\text{if } y < q_b(Z_b)  \\
        y,            \quad &\text{if } q_b(Z_b) \le y \le q_b(X_{b+1})  \\
        q_b(X_{b+1}), \quad &\text{if } y > q_b(X_{b+1})  \\
    \end{cases},
    \quad \forall y \in \mathbb{R}
\]
and then let 
\[
    Q_b(x)
    \eqdef
    p_b ( q_b(x) ) - w_b x,
    \quad \forall x \in \mathbb{R}
\]
We claim that $Q_b(x) = \tilde{\Phi}_b (x)$ for all $x\in I_b$, and $Q_b$ is linear on each of the two segments of $\mathbb{R}\backslash I_b$. Indeed, if $x\in I_b = [Z_b, X_{b+1}]$, then $q_b(Z_b) \le q_b(x) \le q_b(X_{b+1})$ as $q_b$ is strictly increasing, thus $Q_b(x) = p_b (q_b(x)) - w_b x = q_b(x) - w_b x = \tilde{\Phi}_b (x)$. If $x < Z_b$, then $q_b(x) < q_b(Z_b)$, thus $Q_b(x) = p_b (q_b(x)) - w_b x = q_b(Z_b) - w_b x$, which is linear. If $x > X_{b+1}$, then $q_b(x) > q_b(X_{b+1})$, thus $Q_b(x) = p_b (q_b(x)) - w_b x = q_b(X_{b+1}) - w_b x$, which is linear.

Therefore, $Q_b = \tilde{\Phi}_b = \Phi_b$ on $I_b$, which has break points $\big\{x_i^{(b)}\big\}_{i=1}^{MN-2}$ along with $Z_b, X_{b+1}$. Moreover, $Q_b$ has the same derivative jumps at $\big\{x_i^{(b)}\big\}_{i=1}^{MN-2}$ as the target function $f$. Thus, the function $f - \sum_{b=1}^{L-1} Q_b$ only has derivative jumps at $X_1,\cdots,X_{L}$ and $Z_1,\cdots,Z_{L}$. Then, by choosing appropriate real coefficients $u_b$'s and $v_b$'s, the function
\[
    f(x) - 
    \sum_{b=1}^{L-1} Q_b(x)
    -
    \sum_{b=1}^{L} u_b \sigma(x-X_b)
    -
    \sum_{b=1}^{L} v_b \sigma(x-Z_b)
\]
will have no break points on $\mathbb{R}$, which makes it a linear function, say $ux+v$ for some $u,v\in\mathbb{R}$. Then, 
\begin{equation}
\label{eq:decomp_of_target_func}
    f(x)
    =
    \sum_{b=1}^{L-1} Q_b(x)
    +
    \sum_{b=1}^{L} u_b \sigma(x-X_b)
    +
    \sum_{b=1}^{L} v_b \sigma(x-Z_b)
    +
    ux
    +
    v,
    \quad \forall x \in \mathbb{R}
\end{equation}

We can now implement $f$ based on Equation \ref{eq:decomp_of_target_func}. 
We will keep a copy of the original input $x$ via the identity $\sigma(x) - \sigma(-x) = x$ ($\forall x\in\mathbb{R}$) in every hidden layer of the final network $\Phi$, but for notational convenience we will omit the two neurons $\sigma(\pm x)$ in the following representations. Also, we define 
\[
    UV_{A,B}(x) 
    \eqdef 
    \sum_{b=A}^{B} u_b \sigma(x-X_b)
    +
    \sum_{b=A}^{B} v_b \sigma(x-Z_b),
    \quad \forall x \in \mathbb{R}
\]
to simplify notations later.

Now, we start by constructing the first few layers of $\Phi$ computing $\sum_{b=1}^{L-1} Q_b(x)$ as follows: 
\allowdisplaybreaks
\begin{align}
\label{eq:phi_layer_detail}
    x 
    \ &\Longrightarrow \ 
    \begin{bmatrix}
        \mathbf{h}_1^{(1)} (x)  \\  
        \sigma(x - X_{2})  \\  
        \sigma(x - Y_{2})  \\
        \sigma(x - Z_{2})
    \end{bmatrix}
    \ \Longrightarrow \ 
    {\color{persiangreen}
    \begin{bmatrix}
        g_1^{(1)} (x)  \\  
        \vdots  \\  
        g_{k_2}^{(1)} (x)  \\
        \sigma(x - X_{2})  \\  
        \sigma(x - Y_{2})  \\
        \sigma(x - Z_{2})
    \end{bmatrix}
    }
    \ \Longrightarrow \ 
    {\color{persiangreen}
    \begin{bmatrix}
        \tilde{g}_1^{(1)} (x)  \\  
        \vdots  \\  
        \tilde{g}_{k_2}^{(1)} (x)  \\
        UV_{2,2}(x)
    \end{bmatrix}
    }
    \ \Longrightarrow \
    \begin{bmatrix}
        \Bar{\mathbf{h}}_2^{(1)} (x)  \\
        \sigma(x - X_{3})  \\  
        \sigma(x - Y_{3})  \\
        \sigma(x - Z_{3})  \\
        \sigma( \pm UV_{2,2}(x) )
    \end{bmatrix}
    \nonumber
    \\
    \ &\Longrightarrow \ 
    {\color{persiangreen}
    \begin{bmatrix} 
        \tilde{\mathbf{h}}_1^{(2)}  \\
        \tilde{\Phi}_1(x)  \\  
        \sigma(x - X_{3})  \\  
        \sigma(x - Y_{3})  \\
        \sigma(x - Z_{3})  \\
        UV_{2,2}(x)
    \end{bmatrix}
    }
    \ \Longrightarrow \ 
    {\color{persiangreen}
    \begin{bmatrix}
        g_1^{(2)} (x)  \\  
        \vdots  \\  
        g_{k_3}^{(2)} (x)  \\
        q_1(x)  \\  
        \sigma(x - X_{3})  \\  
        \sigma(x - Y_{3})  \\
        \sigma(x - Z_{3})  \\
        UV_{2,2}(x)
    \end{bmatrix}
    }
    \ \Longrightarrow \ 
    {\color{persiangreen}
    \begin{bmatrix}
        \tilde{g}_1^{(2)} (x)  \\  
        \vdots  \\  
        \tilde{g}_{k_3}^{(2)} (x)  \\
        q_1(x) - q_1(Z_1)  \\
        q_1(x) - q_1(X_2)  \\
        UV_{2,3}(x)
    \end{bmatrix}
    }
    \ \Longrightarrow \
    \begin{bmatrix}
        \Bar{\mathbf{h}}_2^{(2)} (x)  \\
        \sigma(x - X_{4})  \\  
        \sigma(x - Y_{4})  \\
        \sigma(x - Z_{4})  \\
        \sigma( q_1(x) - q_1(Z_1) )  \\
        \sigma( q_1(x) - q_1(X_2) )  \\
        \sigma( \pm UV_{2,3}(x) ) 
    \end{bmatrix}
    \nonumber
    \\
    \ &\Longrightarrow \ 
    {\color{persiangreen}
    \begin{bmatrix} 
        \tilde{\mathbf{h}}_1^{(3)}  \\
        \tilde{\Phi}_2(x)  \\  
        \sigma(x - X_{4})  \\  
        \sigma(x - Y_{4})  \\
        \sigma(x - Z_{4})  \\
        Q_1(x)  \\
        UV_{2,3}(x)
    \end{bmatrix}
    }
    \ \Longrightarrow \ 
    {\color{persiangreen}
    \begin{bmatrix}
        g_1^{(3)} (x)  \\  
        \vdots  \\  
        g_{k_4}^{(3)} (x)  \\
        q_2(x)  \\  
        \sigma(x - X_{4})  \\  
        \sigma(x - Y_{4})  \\
        \sigma(x - Z_{4})  \\
        Q_1(x)  \\
        UV_{2,3}(x)
    \end{bmatrix}
    }
    \ \Longrightarrow \ 
    {\color{persiangreen}
    \begin{bmatrix}
        \tilde{g}_1^{(3)} (x)  \\  
        \vdots  \\  
        \tilde{g}_{k_4}^{(3)} (x)  \\
        q_2(x) - q_2(Z_2)  \\
        q_2(x) - q_2(X_3)  \\
        Q_1(x)  \\
        UV_{2,4}(x)
    \end{bmatrix}
    }
    \ \Longrightarrow \
    \begin{bmatrix}
        \Bar{\mathbf{h}}_2^{(3)} (x)  \\
        \sigma(x - X_{5})  \\  
        \sigma(x - Y_{5})  \\
        \sigma(x - Z_{5})  \\
        \sigma( q_2(x) - q_2(Z_2) )  \\
        \sigma( q_2(x) - q_2(X_3) )  \\
        \sigma( \pm Q_1(x) )  \\ 
        \sigma( \pm UV_{2,4}(x) ) 
    \end{bmatrix}
\end{align}
As always, auxiliary layers are colored in green to indicate that we can ignore these layers when estimating the size of the final network, as they are affine transformations that do not use any activation functions, thus each of them can be integrated with the layer after it and does not require an extra layer to process. Removing the auxiliary layers and then extending this pattern, we obtain the final construction of the network $\Phi$: 
\begin{align*}
    x 
    \ &\Longrightarrow \ 
    \begin{bmatrix}
        \mathbf{h}_1^{(1)} (x)  \\  
        \sigma(x - X_{2})  \\  
        \sigma(x - Y_{2})  \\
        \sigma(x - Z_{2})  \\
    \end{bmatrix}
    \ \Longrightarrow \ 
    \begin{bmatrix}
        \Bar{\mathbf{h}}_2^{(1)} (x)  \\
        \sigma(x - X_{3})  \\  
        \sigma(x - Y_{3})  \\
        \sigma(x - Z_{3})  \\
        \sigma ( \pm UV_{2,2}(x) )
    \end{bmatrix}
    \ \Longrightarrow \ 
    \begin{bmatrix}
        \Bar{\mathbf{h}}_2^{(2)} (x)  \\
        \sigma(x - X_{4})  \\  
        \sigma(x - Y_{4})  \\
        \sigma(x - Z_{4})  \\
        \sigma( q_1(x) - q_1(Z_1) )  \\
        \sigma( q_1(x) - q_1(X_2) )  \\
        \sigma( \pm UV_{2,3}(x) ) 
    \end{bmatrix}
    \\
    \ &\Longrightarrow \ 
    \begin{bmatrix}
        \Bar{\mathbf{h}}_2^{(3)} (x)  \\
        \sigma(x - X_{5})  \\  
        \sigma(x - Y_{5})  \\
        \sigma(x - Z_{5})  \\
        \sigma( q_2(x) - q_2(Z_2) )  \\
        \sigma( q_2(x) - q_2(X_3) )  \\
        \sigma( \pm Q_1(x) )  \\ 
        \sigma( \pm UV_{2,4}(x) ) 
    \end{bmatrix}
    \ \Longrightarrow \
    \begin{bmatrix}
        \Bar{\mathbf{h}}_2^{(4)} (x)  \\
        \sigma(x - X_{6})  \\  
        \sigma(x - Y_{6})  \\
        \sigma(x - Z_{6})  \\
        \sigma( q_3(x) - q_3(Z_3) )  \\
        \sigma( q_3(x) - q_3(X_4) )  \\
        \sigma( \pm (Q_1(x)+Q_2(x)) )  \\ 
        \sigma( \pm UV_{2,5}(x) ) 
    \end{bmatrix}
    \ \Longrightarrow \
    \cdots
    \\
    \ &\Longrightarrow \
    \begin{bmatrix}
        \Bar{\mathbf{h}}_2^{(L-2)} (x)  \\
        \sigma(x - X_{L})  \\  
        \sigma(x - Y_{L})  \\
        \sigma(x - Z_{L})  \\
        \sigma( q_{L-3}(x) - q_{L-3}(Z_{L-3}) )  \\
        \sigma( q_{L-3}(x) - q_{L-3}(X_{L-2}) )  \\
        \sigma\big( \pm \sum_{b=1}^{L-4} Q_b(x) \big)  \\ 
        \sigma( \pm UV_{2,L-1}(x) ) 
    \end{bmatrix}
    \ \Longrightarrow \
    \begin{bmatrix}
        \Bar{\mathbf{h}}_2^{(L-1)} (x)  \\
        \sigma(x - X_{1})  \\  
        \sigma(x - Y_{1})  \\
        \sigma(x - Z_{1})  \\
        \sigma( q_{L-2}(x) - q_{L-2}(Z_{L-2}) )  \\
        \sigma( q_{L-2}(x) - q_{L-2}(X_{L-1}) )  \\
        \sigma\big( \pm \sum_{b=1}^{L-3} Q_b(x) \big)  \\
        \sigma ( \pm UV_{2,L}(x) )
    \end{bmatrix}
    \\ 
    \ &\Longrightarrow \
    \begin{bmatrix}
        \sigma( q_{L-1}(x) - q_{L-1}(Z_{L-1}) )  \\
        \sigma( q_{L-1}(x) - q_{L-1}(X_{L}) )  \\
        \sigma\big( \pm \sum_{b=1}^{L-2} Q_b(x) \big)  \\
        \sigma ( \pm UV_{1,L}(x) )
    \end{bmatrix}
    \\
    \ &\Longrightarrow \
    {\color{persiangreen}
    \begin{bmatrix}
        \sum_{b=1}^{L-1} Q_b(x)  \\
        UV_{1,L}(x)  \\
        ux+v
    \end{bmatrix}
    }
    \ \Longrightarrow \
    \sum_{b=1}^{L-1} Q_b(x) 
    + 
    UV_{1,L}(x)
    +
    ux+v
    \ = \ 
    f(x)
\end{align*}
Together with the omitted two neurons $\sigma(\pm x)$, each hidden layer in $\Phi$ has at most $11$ additional neurons other than those of the $\mathbf{h}$'s. Therefore, it has $\operatorname{widthvec}$ at most $[k_1+11, k_2+11, \cdots, k_L+11, 8] = [n_1,n_2,\cdots,n_L,8]$. 

Finally, we tally the nonzero parameters in $\Phi$. From the detailed structure of $\Phi$ shown in Formula \ref{eq:phi_layer_detail}, we can see that most connections (weights) between pairs of hidden layers are the same as their counterparts in the network $\Phi_b$: the neurons in $\mathbf{h}_1^{(1)}$ and $\Bar{\mathbf{h}}_2^{(1)}$ are connected in the same way as the neurons in $\mathbf{h}_1^{(1)}$ and $\mathbf{h}_2^{(1)}$ are connected; for $b=1,2,\cdots,L-1$, the neurons in $\Bar{\mathbf{h}}_2^{(b)}$ and $\Bar{\mathbf{h}}_2^{(b+1)}$ are connected in the same way as the neurons in $\mathbf{h}_1^{(b+1)}$ and $\mathbf{h}_2^{(b+1)}$ are connected. From the proof of Proposition \ref{prop:fit_a_network_with_two_hidden_layers}, we know that there are at most $2 k_b \lfloor (k_{b+1}+2)/4 \rfloor$ nonzero weights between $\mathbf{h}_1^{(b)}$ and $\mathbf{h}_2^{(b)}$. Therefore, assuming the worst case that each hidden layer has $11$ additional neurons other than those in the $\mathbf{h}$'s, we conclude that the number of nonzero weights in $\Phi$ is at most 
\begin{align*}
    W
    &\eqdef
    \sum_{b=1}^{L-1}
    \left(
        2 k_b \left\lfloor \frac{k_{b+1}+2}{4} \right\rfloor
        +
        11 k_b
        +
        11 k_{b+1}
        +
        11 \cdot 11
    \right)
    +
    k_1
    +
    8(k_L+11)
    +
    8
    \\
    &\le
    2K
    +
    22 \sum_{b=1}^{L} k_b
    +
    121(L-1)
    +
    88+8
    \\
    &\le
    2K
    +
    22 \sum_{b=1}^{L} (n_b - 11)
    +
    121L
    -
    25
    \\
    &=
    2K
    +
    22 \sum_{b=1}^{L} n_b 
    -
    121L
    -
    25
\end{align*}
and the number of biases in $\Phi$ is at most 
\[
    B \eqdef 
    \sum_{b=1}^{L} n_b 
    + 8 + 1
\]
Altogether, the number of nonzero parameters in $\Phi$ is at most
\[
    W + B 
    \le 
    2K +  23 \sum_{b=1}^{L} n_b - 121L .
\]
This completes our proof.
\end{proof}

The following example elucidates Proposition~\ref{prop:UnivMemorization_1D}, and frames it in a way which is comparable to the memorization guarantees for MLPs derived in~\cite{vershynin2020memory,vardi2021optimal}.  Though those networks utilize fewer neurons when performing their memorization, they exhibit a much larger Lipschitz constant due to their highly irregular structure; since they are only designed to optimize bit-extraction of~\cite{telgarsky2016benefits,safran2017depth}(see~\cite{baldi2019capacity} for estimates on the maximal bit-extraction ``capacity'' of MLPs).
\begin{example}[A 1D MLP with width $W$ and depth $D$ can memorize $\mathcal{O}(W^2 D)$ data points with optimal Lipschitz constant]
\label{ex:1dMemorization}
Given any set of $K$ samples $(x_i,y_i)_{i=1}^{K}\subseteq\mathbb{R}^2$, where $x_1<x_2<\cdots<x_K$, and
\[
    K = 
    \left(
        (W-11) \left\lfloor \frac{W-9}{4} \right\rfloor - 2 
    \right)
    (D-2)
\]
there exists an MLP with width at most $W$ and depth at most $D$ that can memorize this sample set in such a way that it is linear on $[x_i,x_{i+1}]$ for $i=1,2,\cdots,K-1$, and is constant on each of $(-\infty,x_1]$ and $[x_K,\infty)$. The number of nonzero parameters in $\Phi$ is at most
\[
    2K + (23W-121)D .
\]
\end{example}

\subsection{ReLU Approximations on Polytopes via the Kuhn Triangulation}
\label{s:Compute__ssLKuhn}

In this section, we generalize the concept of continuous piecewise linear approximators and the methods for constructing them to higher dimensions. As we will see, compared to the ones in Theorem \ref{thm:main_SOTA_Benchmark}, these high dimensional continuous piecewise linear approximators require more width, but they can achieve about the same level of global error with much higher regularity (lower Lipschitz constant).

\begin{definition}[Simplex]
    A $d$-dimensional simplex (or $d$-simplex) $S$ is the convex hull of $d+1$ points $\mathbf{v}_1,\cdots,\mathbf{v}_{d+1}$ in $\mathbb{R}^d$ that are affinely independent, i.e. the vectors
    $$\mathbf{v}_{d+1}-\mathbf{v}_1,\mathbf{v}_{d+1}-\mathbf{v}_2,\cdots,\mathbf{v}_{d+1}-\mathbf{v}_d$$
    are linearly independent in $\mathbb{R}^d$, and
    $$S \eqdef 
    \operatorname{conv}(\{\mathbf{v}_1,\cdots,\mathbf{v}_{d+1}\})
    = \left\{
    \sum_{i=1}^{d+1} \lambda_i \mathbf{v}_i :
    \lambda_1,\cdots,\lambda_{d+1} \ge 0, \sum_{i=1}^{d+1} \lambda_i = 1
    \right\}.$$
    Furthermore, $\mathbf{v}_1,\cdots,\mathbf{v}_{d+1}$ are called the vertices of the simplex $S$.
\end{definition}

\begin{definition}[Triangulation of polytopes]
    A triangulation of a polytope $P$ in $\mathbb{R}^d$ is a finite collection of $d$-simplices $\{S_i\}_{i=1}^n$ such that
    \begin{enumerate}
        \item $S_1 \cup S_2 \cup \cdots \cup S_n = P$;
        \item The set of all their vertices is the vertex set of $P$ 
              \footnote{Some literatures allow the existence of additional vertices inside $P$, but here we require that there be no additional vertices other than those of $P$ to simplify later arguments.};
        \item Their interiors are pairwise disjoint.
    \end{enumerate}
\end{definition}

Triangulations are generally not unique. For example, Figure \ref{fig:triangulations_of_hexagon} shows two different triangulations of a hexagon.

\begin{figure}[!htbp]
    \centering
    \begin{subfigure}[b]{0.45\textwidth}
    \centering
        \includegraphics[width=.65\linewidth]{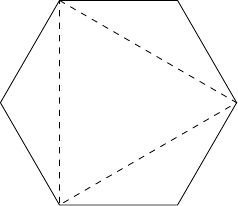}
        \label{fig:triangulations_of_hexagon__a}
    \end{subfigure}
    \hspace{-50pt}
    \begin{subfigure}[b]{0.45\textwidth}
    \centering
        \includegraphics[width=.65\linewidth]{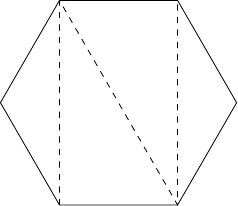}
        \label{fig:triangulations_of_hexagon__b}
    \end{subfigure}
    \caption{Two different triangulations of a hexagon. }
    \label{fig:triangulations_of_hexagon}
\end{figure}

The next lemma shows how to triangulate all unit cubes $\{[i,i+1]: i\in\mathbb{Z}\}^d$ of the entire $\mathbb{R}^{d}$ simultaneously using hyperplanes. The triangulation of each of these unit cubes will be a Kuhn triangulation.

\begin{lemma}
\label{lem:triangulation_of_Rd}
    The set of hyperplanes in $\mathbb{R}^{d}$
    $$H \eqdef \{ x_i \pm x_j = 2k: 1\le i<j \le d, k\in\mathbb{Z} \}
    \cup
    \{ x_i = k: 1\le i \le d, k\in\mathbb{Z} \}$$
    triangulates each of the unit cubes in $\{[i,i+1]: i\in\mathbb{Z}\}^d$.
\end{lemma}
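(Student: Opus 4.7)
My plan is to fix an arbitrary unit cube $Q_{\mathbf{a}} = \prod_{i=1}^d [a_i, a_i+1]$ with $\mathbf{a} \in \mathbb{Z}^d$, and reduce the problem to the known fact that the Kuhn triangulation of $[0,1]^d$ (Definition~\ref{defn:Kuhn_triangulation}) is a valid triangulation. The bridge will be the parity-dependent affine isometry
\[
    y_i =
    \begin{cases}
        x_i - a_i, & \text{if } a_i \text{ is even,} \\
        a_i + 1 - x_i, & \text{if } a_i \text{ is odd,}
    \end{cases}
\]
which maps $Q_{\mathbf{a}}$ bijectively onto $[0,1]^d$, and under which I will show that the hyperplanes in $H$ pull back to exactly those of the Kuhn triangulation of $[0,1]^d$.

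For the ``forward'' direction I would push the Kuhn separating hyperplanes $y_i = y_j$ and the coordinate planes $y_i \in \{0,1\}$ through the inverse map and verify that the preimages lie in $H$. A parity case-split is needed: if $a_i$ and $a_j$ share parity, then $y_i = y_j$ pulls back to $x_i - x_j = a_i - a_j$ with $a_i - a_j \in 2\mathbb{Z}$; if they differ, it pulls back to $x_i + x_j = a_i + a_j + 1$ with $a_i + a_j + 1 \in 2\mathbb{Z}$. Either way the resulting hyperplane lies in $H$, and $y_i \in \{0,1\}$ trivially corresponds to $x_i \in \{a_i, a_i+1\} \subset \mathbb{Z}$.

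The main work is the converse: verifying that no other hyperplane of $H$ meets the interior of $Q_{\mathbf{a}}$. I would use that $x_i - x_j$ ranges over $[a_i - a_j - 1, a_i - a_j + 1]$ on $Q_{\mathbf{a}}$, so that the equation $x_i - x_j = 2k$ admits at most a handful of integer solutions $k$; when $a_i - a_j$ is odd the two relevant planes meet $Q_{\mathbf{a}}$ only along a codimension-$2$ face of the $(i,j)$-face and thus do not cut the interior, and when $a_i - a_j$ is even a single plane passes diagonally through the $(i,j)$-face, recovering the Kuhn plane identified above. An entirely analogous computation with ranges $[a_i + a_j, a_i + a_j + 2]$ handles $x_i + x_j = 2k$, and the coordinate planes $x_i = k$ only contribute boundary faces $x_i \in \{a_i, a_i+1\}$.

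The hard part will be the bookkeeping of this parity case-analysis, ensuring both that no spurious hyperplane cuts the interior of $Q_{\mathbf{a}}$ and that every hyperplane of $H$ which does so is accounted for. Once the interior-cutting hyperplanes of $H$ within $Q_{\mathbf{a}}$ are shown to coincide with the pullback of the Kuhn hyperplanes of $[0,1]^d$, the conclusion follows immediately by transporting the valid Kuhn triangulation of $[0,1]^d$ back through the affine isometry, which preserves simplices, their disjoint interiors, and their union.
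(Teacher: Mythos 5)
Your proposal is correct, and it takes a genuinely different route from the paper in the key step. Both proofs begin by reducing to a single cube: the paper observes that $\cup H$ has period $2$ along each axis and is symmetric under reflection, while you make the same reduction explicit via the parity-dependent affine isometry $y_i = x_i - a_i$ or $y_i = a_i+1-x_i$. Both proofs then verify the ``forward'' direction that the Kuhn separating hyperplanes $y_i = y_j$ correspond to hyperplanes in $H$. The difference is in showing that $H$ does not over-subdivide $[0,1]^d$. The paper argues indirectly: it proves that any $d$ hyperplanes in $H$ which intersect in a single point must intersect at a lattice point, via an induction that ends with a graph-theoretic argument about cycles formed by the constraint equations. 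You argue directly: using the fact that $x_i - x_j$ and $x_i + x_j$ each range over an interval of length $2$ on $Q_{\mathbf{a}}$, you determine exactly which hyperplanes of $H$ have a $(d-1)$-dimensional intersection with the interior of $Q_{\mathbf{a}}$ and show these are precisely the pullbacks of the Kuhn planes (the remaining candidates touch $Q_{\mathbf{a}}$ only along codimension-$2$ boundary faces). Your range analysis is more elementary and avoids the induction entirely; the paper's stronger statement about $d$-wise intersections being lattice points is not actually needed for this lemma. One small point worth stating explicitly in a full write-up: you should note that a hyperplane which does not meet the open cube $\operatorname{int}(Q_{\mathbf{a}})$ is irrelevant to the induced subdivision of $Q_{\mathbf{a}}$, so the arrangement of $H$ inside $Q_{\mathbf{a}}$ is determined entirely by the interior-cutting hyperplanes you identify.
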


\begin{proof}
Let $\cup H \eqdef \bigcup_{h\in H} h$ be the union of the hyperplanes (when considered as subsets of $\mathbb{R}^d$) in $H$. By the definition of $H$, $\cup H$ has the following two properties:
\begin{enumerate}
    \item \label{property_1} $\cup H$ has period $2$ along each axis:
    $$\mathbf{z} \in \cup H \Longleftrightarrow
      (\mathbf{z} + 2\mathbf{e}_i) \in \cup H, 
      \quad i=1,2,\cdots,d$$
    \item \label{property_2} $\cup H$ is symmetric about each axis:
    $$\mathbf{z} \in \cup H \Longleftrightarrow
      (- \mathbf{z}) \in \cup H,
      \quad i=1,2,\cdots,d$$
\end{enumerate}
By property \ref{property_1}, we only need to show that $\cup H$ triangulates $[-1,1]^d$. By property \ref{property_2}, we only need to show that $\cup H$ triangulates $[0,1]^d$. 

By Definition \ref{defn:Kuhn_triangulation}, the $d!$ simplices
\begin{equation}
    \operatorname{conv}\left(\left\{
    \sum_{i=1}^k \mathbf{e}_{\tau(i)}: 0 \le k \le d
    \right\}\right),
    \quad \tau\in S_d
\end{equation}
form the Kuhn triangulation of $[0,1]^d$. Fix any permutation $\tau\in S_d$. The simplex 
\[
    \operatorname{conv}\left(\left\{
    \sum_{i=1}^k \mathbf{e}_{\tau(i)}: 0 \le k \le d
    \right\}\right)
\]
has $d+1$ faces
$$\operatorname{conv}
\left(
    \left\{
        \sum_{i=1}^k \mathbf{e}_{\tau(i)}: 0 \le k \le d, k \ne j
    \right\}
\right),
\quad j=0,1,\cdots,d.$$
For $j=0$, the corresponding face is contained in the hyperplane $x_{\tau(1)}=1$ in $H$. For $j=d$, the corresponding face is contained in the hyperplane $x_{\tau(d)}=0$ in $H$. For $1\le j\le d-1$, the corresponding face is contained in the hyperplane $x_{\tau(j)} - x_{\tau(j+1)}=0$ in $H$. Therefore, all the faces of all $d!$ simplices are contained in some hyperplanes in $H$. It remains to show that the intersection point of any $d$ hyperplanes in $H$ (if any) is a lattice point.

Let $h_1,h_2,\cdots,h_d$ be any $d$ hyperplanes in $H$ that intersect at a single point $(x_1',\cdots,x_d')$, i.e.
$$h_1 \cap h_2 \cap \cdots \cap h_d = \{ (x_1',\cdots,x_d') \}$$
We will proceed by induction to show that this is a lattice point, and the statement is clearly true for $d=1,2$. Thus, we assume that $d\ge 3$.

If one of these hyperplanes has the form $x_{i_0}=k$, then the remaining $d-1$ hyperplanes, when treated as equations and substituted $x_{i_0}=k$, are $d-1$ equations on $d-1$ variables $\{x_1,\cdots,x_d\}\backslash\{x_{i_0}\}$, which have the form $x_i\pm x_j = 2k$ (if $i\ne i_0\ne j$) or $x_{i_0} \pm x_j = 2k$ and have a unique solution $\{x_1',\cdots,x_d'\}\backslash\{x_{i_0}'\}$ by our assumption above. By induction, $\{x_1',\cdots,x_d'\}\backslash\{x_{i_0}'\}\subset\mathbb{Z}$, and we also have $x_{i_0}'=k\in\mathbb{Z}$, so $(x_1',\cdots,x_d')\in\mathbb{Z}^d$.

If any two of these hyperplanes have the form $x_{i_0}+x_{j_0}=2k$ and $x_{i_0}-x_{j_0}=2l$, then $x_{i_0}'=k+l\in\mathbb{Z}, x_{j_0}'=k-l\in\mathbb{Z}$. Similar to above, the remaining $d-2$ hyperplanes, when treated as equations and substituted $x_{i_0}=k+l$ and $x_{j_0}=k-l$, are $d-2$ equations on $d-2$ variables $\{x_1,\cdots,x_d\}\backslash\{x_{i_0},x_{j_0}\}$, which have the form $x_i\pm x_j = 2k$ (if $i,j\notin\{i_0,j_0\}$) or $x_{i_0} \pm x_j = 2k$ or $x_{j_0} \pm x_j = 2k$ and have a unique solution $\{x_1',\cdots,x_d'\}\backslash\{x_{i_0}',x_{j_0}'\}$ by our assumption above. By induction, $\{x_1',\cdots,x_d'\}\backslash\{x_{i_0}',x_{j_0}'\} \subset\mathbb{Z}$, and we also have $x_{i_0}'=k+l\in\mathbb{Z}, x_{j_0}'=k-l\in\mathbb{Z}$, so $(x_1',\cdots,x_d')\in\mathbb{Z}^d$.

Finally, if $h_1,h_2,\cdots,h_d$ all have the form $x_i\pm x_j=2k$ and they don't contain two hyperplanes of the form $x_i+x_j=2k$ and $x_i-x_j=2k$ for any $1\le i<j\le d$, then consider the (undirected simple) graph 
$G=(\mathcal{V},\mathcal{E})$, where $\mathcal{V}=\{x_1,x_2,\cdots,x_d$\} is the vertex set, and edge set 
$$\mathcal{E}=\{ \{x_i,x_j\}: \text{there is a hyperplane of the form } 
x_i\pm x_j = 2k \}$$
Since $G$ has $d$ vertices and $d$ edges, it has a cycle. Let 
$\{x_{i_1},x_{i_2}\}$, $\{x_{i_2},x_{i_3}\},\cdots,\{x_{i_{m-1}}$, $x_{i_m}\}$, $\{x_{i_m},x_{i_1}\}$ be a shortest cycle, so that $x_{i_1},\cdots,x_{i_m}$ are distinct. Then, $h_1,h_2,\cdots,h_d$ contain the following $m-1$ hyperplanes:
$$x_{i_s} \pm x_{i_{s+1}} = 2k_s, \quad s=1,2,\cdots,m-1$$
By adding or subtracting these equations, we obtain an equation of the form
$$
x_{i_1} \pm x_{i_m} = 2(k_1 \pm k_2 \pm \cdots \pm k_m)
$$
Without loss of generality, assume that the above equation is of the form $x_{i_1} + x_{i_m} = 2k_0$ for some $k_0\in\mathbb{Z}$. If the hyperplane $h_{i_m,i_1}$ corresponding to the edge $\{x_{i_m},x_{i_1}\}$ has the form $x_{i_1} + x_{i_m} = 2l_0$, then $h_1 \cap h_2 \cap \cdots \cap h_d$ is either empty (if $l_0\ne k_0$) or infinite (if $l_0 = k_0$ and the intersection is not empty, since the equation for $h_{i_m,i_1}$ would be redundant and there are essentially only $d-1$ equations for $d$ variables), which contradicts our assumption that $h_1,h_2,\cdots,h_d$ intersect at a single point. Therefore, the hyperplane corresponding to the edge $\{x_{i_m},x_{i_1}\}$ has the form $x_{i_1} - x_{i_m} = 2l_0$, so $x_{i_1}'=k_0+l_0\in\mathbb{Z}$, $x_{i_m}'=k_0-l_0\in\mathbb{Z}$. Proceeding by induction as above, we conclude that $(x_1',\cdots,x_d')\in\mathbb{Z}^d$ is a lattice point.

\end{proof}

\begin{figure}[!htbp]
\centering

\includegraphics[width=0.33\linewidth]{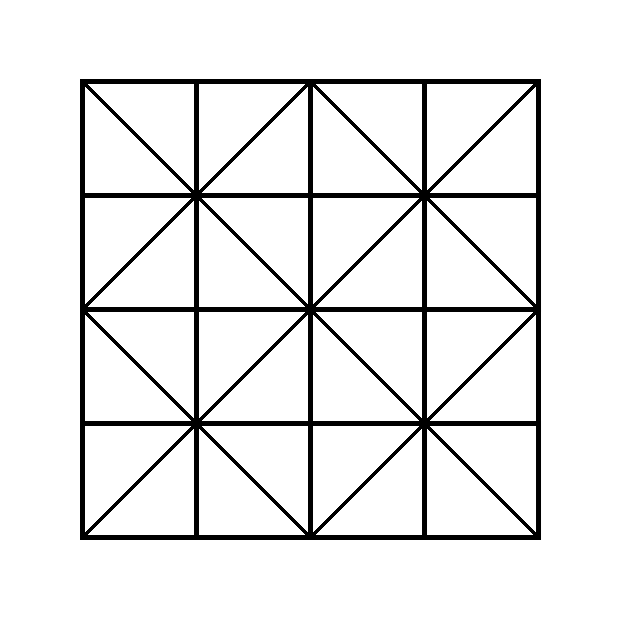}
\hspace{20pt}
\includegraphics[width=0.33\linewidth]{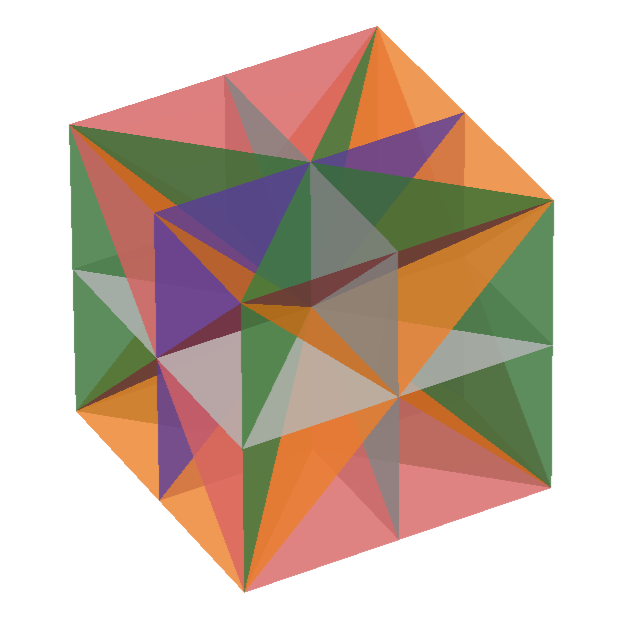}

\caption{Finer Kuhn Triangulation (Definition~\ref{defn:Kuhn_triangulation}) of a $2$-dimensional square (left), and a $3$-dimensional cube (right), than in Figure~\ref{fig:kuhn_local}.}
\label{fig:kuhn_global}
\end{figure}

\begin{definition}[Continuous piecewise linear functions on polytopes]
    Let $f$ be a function from some polytope $P\subseteq\mathbb{R}^{d}$ to $\mathbb{R}$. We say that $f$ is continuous piecewise linear on $P$ if $f$ is continuous on $P$, and there exists a triangulation of $P$ such that $f$ is affine on each of its simplices.
\end{definition}

We first show how to construct continuous piecewise linear approximators on $[0,n]^d$ for some positive integer $n$.

\begin{lemma}
\label{lem:local_affine_implies_global_affine}
    Let $\Omega\subseteq\mathbb{R}^d$ be a connected open set, and $f:\Omega\rightarrow\mathbb{R}$ be a function such that for every point in $\Omega$, it has a neighborhood on which $f$ is affine. Then, $f$ is affine on $\Omega$.
\end{lemma}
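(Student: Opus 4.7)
The plan is a standard clopen/connectedness argument. Fix any base point $p_0 \in \Omega$; by hypothesis, there is a neighborhood $V_0 \subseteq \Omega$ of $p_0$ and an affine function $A_0:\mathbb{R}^d \to \mathbb{R}$ with $f|_{V_0} = A_0|_{V_0}$. Define
\[
    U \eqdef \{ p \in \Omega : \text{there is a neighborhood } V \text{ of } p \text{ on which } f = A_0 \}.
\]
The goal is to show $U = \Omega$, which immediately gives that $f$ agrees with the single affine function $A_0$ everywhere on $\Omega$. Since $\Omega$ is connected, it suffices to show that $U$ is nonempty, open, and closed in $\Omega$.

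Nonemptiness is clear since $p_0 \in U$. Openness is likewise immediate from the definition: if $p \in U$ witnessed by a neighborhood $V$, then every point of $V$ also has $V$ itself as a witnessing neighborhood, so $V \subseteq U$. The only real content is closedness (relative to $\Omega$). Suppose $p \in \Omega$ is a limit point of $U$. By hypothesis, there is an open neighborhood $W \subseteq \Omega$ of $p$ and an affine function $A_p$ with $f|_W = A_p|_W$. Since $p$ is a limit point of $U$ and $W$ is open, there exists a point $q \in U \cap W$, and by definition of $U$ there is a neighborhood $V_q \subseteq \Omega$ of $q$ on which $f = A_0$. Shrinking if necessary, we may assume $V_q \subseteq W$, whence $A_0|_{V_q} = f|_{V_q} = A_p|_{V_q}$. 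The key mini-lemma is that two affine functions on $\mathbb{R}^d$ that coincide on any nonempty open subset of $\mathbb{R}^d$ must be identically equal: this follows because their difference is affine and vanishes on a set containing an affinely independent spanning $(d+1)$-tuple (any open set contains such a tuple). Therefore $A_p = A_0$ on all of $\mathbb{R}^d$, and in particular $f = A_0$ on $W$, so $W \subseteq U$ and $p \in U$. Thus $U$ is closed in $\Omega$.

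By connectedness, $U = \Omega$, so $f \equiv A_0$ on $\Omega$ and $f$ is affine. I do not anticipate any real obstacle; the single subtle point is invoking the uniqueness statement for affine functions agreeing on an open set, which is elementary and could even be inlined as a short lemma or handled by evaluating $A_0 - A_p$ on a small simplex inside $V_q$.
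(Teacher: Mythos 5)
Your proof is correct, and it takes a genuinely different route from the paper. The paper fixes a local affine extension $g$ near a base point $\mathbf{x}$, assumes for contradiction that $f(\mathbf{y}) \ne g(\mathbf{y})$ somewhere, runs a continuous path from $\mathbf{x}$ to $\mathbf{y}$, and derives a contradiction at the infimum time $t_0$ where the equality $f\circ\gamma = g\circ\gamma$ first fails, using the local affine neighborhood around $\gamma(t_0)$ together with the fact that two affine functions agreeing on infinitely many collinear points with an accumulation point must coincide. Your argument instead packages the same underlying connectedness fact as a clopen decomposition: define $U$ as the set of points locally agreeing with a single fixed affine $A_0$, and show $U$ is nonempty, open, and closed in $\Omega$, so $U = \Omega$. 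The two approaches buy slightly different things: yours avoids any explicit appeal to paths (and hence implicitly to the fact that connected open subsets of $\mathbb{R}^d$ are path-connected), and isolates the only nontrivial step as the elementary uniqueness lemma for affine functions agreeing on an open set, which you correctly justify via $d+1$ affinely independent points. The paper's argument, while ultimately sound, is a bit more delicate because the infimum/limit-point bookkeeping has to be carried out carefully. Your version is the more standard and arguably cleaner proof of this statement; there are no gaps in it.
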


\begin{proof}
Fix any $\mathbf{x}\in\Omega$. Then, there exists a neighborhood of $\mathbf{x}$ on which $f\equiv g$, for some affine function $g$ defined on all of $\mathbb{R}^d$.

Assume for contradiction that there is some $\mathbf{y}\in\Omega$ such that $f(\mathbf{y})\ne g(\mathbf{y})$, so $\mathbf{y}\ne \mathbf{x}$. Since $\Omega$ is connected, there exists a path $\gamma$ that connects $\mathbf{x}$ and $\mathbf{y}$ inside $\Omega$, i.e. $\gamma:[0,1]\rightarrow\Omega$ is continuous and $\gamma(0)=\mathbf{x}$, $\gamma(1)=\mathbf{y}$. Let
\[
    T \eqdef \{ t\in[0,1]: f(\gamma(t)) \ne g(\gamma(t)) \},
    \quad
    t_0 \eqdef \inf T
\]
Since $f(\mathbf{y})\ne g(\mathbf{y})$, $f(\gamma(1))\ne g(\gamma(1))$, so $t_0\le 1$ exists. Since $\gamma(0)=\mathbf{x}$ and $f\equiv g$ on some neighborhood of $\mathbf{x}$, $t_0>0$. By assumption, there exists an open neighborhood $U\subseteq\Omega$ of $\gamma(t_0)$ on which $f$ is affine. Since $\gamma(t_0)$ either belongs to $\gamma(T)$ or is a limit point of $\gamma(T)$, and $\gamma(T)\subseteq\mathbb{R}^d\backslash U$ which is closed, $\gamma(t_0)\in\mathbb{R}^d\backslash U$, so $\gamma(t_0)\ne \mathbf{x}$. As $t\rightarrow t_0$, $\gamma(t)\rightarrow\gamma(t_0)$, and since $\gamma(t_0)\ne\mathbf{x}=\gamma(0)$, there are infinite points in $\gamma([0,t_0])$ that are inside $U$. Since both $f$ and $g$ are affine on $U$ and $f\equiv g$ on $\gamma([0,t_0])\cap U$ which is infinite, $f\equiv g$ on $U$. 

However, there exists a sequence $(t_i)_{i=1}^{\infty}$ in $T$ that converges to $t_0$, so $\gamma(t_i)\rightarrow\gamma(t_0)\in U$ as $i\rightarrow\infty$, then there exists some $i_0$ large enough so that $\gamma(t_{i_0})\in U$, but $f(\gamma(t_{i_0})) \ne g(\gamma(t_{i_0}))$, which contradicts our conclusion above.
Therefore, $f\equiv g$ on $\Omega$, thus $f$ is affine on $\Omega$.
\end{proof}

The following lemma provides an explicit formula for the ``hat functions'' for the Kuhn triangulation. Figure~\ref{fig:hat_2d} shows two hat functions for the triangulation in Lemma \ref{lem:triangulation_of_Rd}.

\begin{figure}[!htbp]
    \centering
    \includegraphics[width=0.8\linewidth]{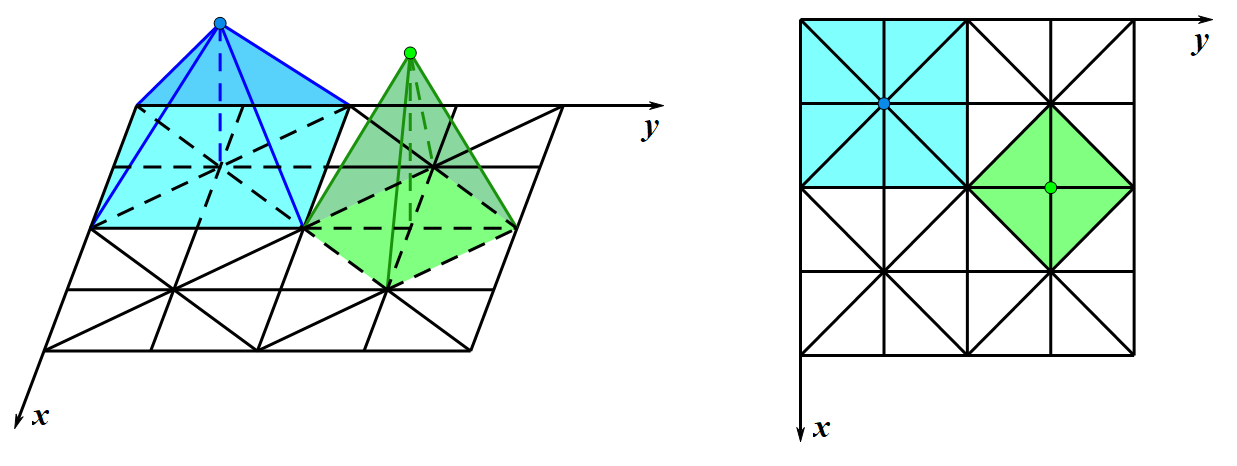}
    \caption{The supports of two hat functions in 3D view (left) and 2d view (right) with respect to the triangulation in Lemma~\ref{lem:triangulation_of_Rd}. Each of them is affine on any of the smallest triangles, so any linear combinations of them is continuous piecewise linear with respect to the triangulation in Lemma~\ref{lem:triangulation_of_Rd}.}
    \label{fig:hat_2d}
\end{figure}

\begin{lemma}[Constructing continuous piecewise linear approximators in $\mathbb{R}^d$]
\label{lem:construct_CPL_approximator}
    Let $f$ be a function from $[0,n]^d$ to $\mathbb{R}$ for some $n\in\mathbb{N}_+$, then the function $\Phi$ defined by
    $$\Phi(\mathbf{x})  \eqdef  
    \sum_{
    \mathbf{y}\in\{0,1,\cdots,n\}^d
    }
    f(\mathbf{y})
    \sigma
    \left(
        1 - 
        \max
        \left\{
            | x_i - y_i | :
            y_i \text{ is even}
        \right\}
        -
        \max
        \left\{
            | x_i - y_i | :
            y_i \text{ is odd}
        \right\}
    \right)
    $$
    for $\mathbf{x} = (x_1,\cdots,x_d) \in [0,n]^d$ is continuous piecewise linear on each of the $n^d$ unit cubes $\{[0,1],[1,2],\cdots,[n-1,n]\}^d$ of $[0,n]^d$ with respect to the triangulation in Lemma \ref{lem:triangulation_of_Rd}, and
    $$\Phi(\mathbf{y}) = f(\mathbf{y}), \quad
    \forall \mathbf{y} \in\{0,1,\cdots,n\}^d$$
\end{lemma}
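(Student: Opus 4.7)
The plan is to handle the two claims separately. The interpolation property $\Phi(\mathbf{y}) = f(\mathbf{y})$ for $\mathbf{y} \in \{0,\ldots,n\}^d$ reduces to showing that the hat function
\[
    h_{\mathbf{y}}(\mathbf{x}) \eqdef \sigma\Bigl(1 - \max\{|x_i - y_i|: y_i \text{ even}\} - \max\{|x_i - y_i|: y_i \text{ odd}\}\Bigr)
\]
(with the convention $\max \emptyset = 0$) satisfies $h_{\mathbf{y}}(\mathbf{z}) = \delta_{\mathbf{y},\mathbf{z}}$ on the grid $\{0,\ldots,n\}^d$. This is immediate because the differences $|z_i - y_i|$ are non-negative integers there: at $\mathbf{z} = \mathbf{y}$ both maxes vanish and the argument of $\sigma$ equals $1$; while if $\mathbf{z} \ne \mathbf{y}$ some coordinate differs by at least $1$, placing a value $\ge 1$ into one of the maxes and making the argument of $\sigma$ non-positive.

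The substantive claim is piecewise linearity with respect to the triangulation in Lemma \ref{lem:triangulation_of_Rd}. Continuity of $\Phi$ is automatic as a finite sum of compositions of continuous functions, so by Lemma \ref{lem:local_affine_implies_global_affine} it suffices to show $\Phi$ is affine on the interior of each simplex. Fix a unit cube $C = \prod_i [a_i, a_i+1]$ with $a_i \in \{0,\ldots,n-1\}$. First I would observe that only hat functions at vertices of $C$ contribute on $C$: if some $y_i \notin \{a_i, a_i+1\}$ then $|x_i - y_i| \ge 1$ throughout $C$ and $h_{\mathbf{y}} \equiv 0$ there. Then I introduce the parity-adjusted coordinates $s_i \eqdef x_i - a_i$ if $a_i$ is even and $s_i \eqdef a_i + 1 - x_i$ if $a_i$ is odd, an affine bijection $C \to [0,1]^d$. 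A direct check shows this change of variables sends every hyperplane of the triangulation that meets the interior of $C$ to the standard Kuhn hyperplane $s_i = s_j$, so the simplices restricted to $C$ correspond to the standard Kuhn simplices $T_\pi \eqdef \{s \in [0,1]^d: s_{\pi(1)} \ge \cdots \ge s_{\pi(d)}\}$, indexed by permutations $\pi$ of $\{1,\ldots,d\}$.

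The technical heart is re-expressing $h_{\mathbf{y}}$ in $s$-coordinates for each vertex $\mathbf{y} = (a_1+\epsilon_1,\ldots,a_d+\epsilon_d)$ of $C$. Writing $s(\mathbf{y})$ for the image of $\mathbf{y}$ under the change of variables, a case check on the four parity combinations of $(a_i, \epsilon_i)$ shows that $y_i$ is even iff $s(\mathbf{y})_i = 0$ and odd iff $s(\mathbf{y})_i = 1$; correspondingly $|x_i - y_i|$ equals $s_i$ when $y_i$ is even and $1 - s_i$ when $y_i$ is odd. This simplifies the hat function to
\[
    h_{\mathbf{y}}(\mathbf{x}) = \sigma\Bigl(\min_{i \in B_y} s_i - \max_{i \in A_y} s_i\Bigr), \qquad B_y \eqdef \{i: y_i \text{ odd}\}, \quad A_y \eqdef \{i: y_i \text{ even}\}
\]
(empty min read as $1$, empty max as $0$). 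On $T_\pi$ the min is attained at $s_{\pi(k')}$ with $k' = \max\{k: \pi(k) \in B_y\}$ and the max at $s_{\pi(k'')}$ with $k'' = \min\{k: \pi(k) \in A_y\}$. If $B_y$ is not a prefix of the sequence $\pi(1),\ldots,\pi(d)$, equivalently $\mathbf{y}$ is not a vertex of $T_\pi$, then $k' > k''$ forces the argument of $\sigma$ to be $\le 0$ on $T_\pi$ and $h_{\mathbf{y}} \equiv 0$ there. For each vertex $\mathbf{v}_k$ of $T_\pi$ (where $B_{\mathbf{v}_k} = \{\pi(1),\ldots,\pi(k)\}$) a short calculation gives the affine expressions $h_{\mathbf{v}_0} = 1 - s_{\pi(1)}$, $h_{\mathbf{v}_k} = s_{\pi(k)} - s_{\pi(k+1)}$ for $1 \le k \le d-1$, and $h_{\mathbf{v}_d} = s_{\pi(d)}$, which telescope to $1$ and are exactly the barycentric coordinates on $T_\pi$. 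Hence $\Phi|_{T_\pi} = \sum_{k=0}^d f(\mathbf{v}_k)\, h_{\mathbf{v}_k}$ is affine on $T_\pi$.

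The main obstacle is locating this parity-dependent change of variables: reflecting the coordinate precisely when $a_i$ is odd is what simultaneously turns the awkward hyperplanes $x_i + x_j = 2k$ into the standard $s_i = s_j$ and makes $|x_i - y_i|$ depend only on the parity of $y_i$ rather than also on the parity of $a_i$. Once this normalization is performed, the rest is a combinatorial matching of hat functions with barycentric coordinates through the prefix structure of the permutation $\pi$.
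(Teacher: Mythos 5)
Your proof is correct, but it takes a genuinely different route from the paper's. The paper also reduces to local affinity via Lemma \ref{lem:local_affine_implies_global_affine}, but then argues hat function by hat function: at any point $\mathbf{x}$ off the union of hyperplanes, it shows that the two maxima in $S_{\mathbf{y}}$ each have a unique argmax (ties or zero differences would place $\mathbf{x}$ on a hyperplane), hence $S_{\mathbf{y}}$ is locally a single affine branch and $\sigma(S_{\mathbf{y}})$ is locally affine. You instead work simplex by simplex: the parity-dependent change of variables $s_i = x_i - a_i$ or $a_i + 1 - x_i$ normalizes each unit cube to $[0,1]^d$ with the standard Kuhn hyperplanes $s_i = s_j$, rewrites the hat function as $\sigma(\min_{B_y} s_i - \max_{A_y} s_i)$, and then the prefix analysis shows that on a fixed simplex $T_\pi$ each surviving hat function is exactly one of the barycentric coordinates $1-s_{\pi(1)}$, $s_{\pi(k)}-s_{\pi(k+1)}$, $s_{\pi(d)}$. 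The paper's argument is shorter and purely local; yours is more structural and yields, as a byproduct, the explicit identification of the hat functions with barycentric coordinates (and hence the partition-of-unity property), which makes it more transparent why $\Phi$ is the unique piecewise-linear interpolant on this triangulation.

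One small remark: you invoke Lemma \ref{lem:local_affine_implies_global_affine} to reduce to the interior of each simplex, but your argument actually shows $\Phi$ is affine on each closed simplex $T_\pi$ directly (the argument of $\sigma$ is nonnegative there), so the appeal to that lemma is unnecessary; it is the paper's local-to-global argument that genuinely needs it.
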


\begin{proof}
For convenience, we first define
\begin{equation}
\label{eq:hat}
T_{\mathbf{y}}(\mathbf{x})  \eqdef  
\sigma
\left(
    1 - 
    \max
    \left\{
        | x_i - y_i | :
        y_i \text{ is even}
    \right\}
    -
    \max
    \left\{
        | x_i - y_i | :
        y_i \text{ is odd}
    \right\}
\right), \quad
\forall \mathbf{x}\in\mathbb{R}^d
\end{equation}
to be the \emph{hat function}\footnote{This is the same as the hat function (or nodal basis function) discussed in~\cite{goujon2023stable_ACHA}, and here we have provided its formula explicitly when the underlying triangulation is the one in Lemma~\ref{lem:triangulation_of_Rd}.} centred at $\mathbf{y}$. Then, 
$$\Phi(\mathbf{x}) = 
\sum_{
\mathbf{y}\in\{0,1,\cdots,n\}^d
}
f(\mathbf{y})
T_{\mathbf{y}}(\mathbf{x})
$$

For any $\mathbf{y},\mathbf{z}\in\{0,1,\cdots,n\}^d$ with $\mathbf{y} \ne \mathbf{z}$, there is some $i$ such that $|y_i-z_i| \ge 1$, so $T_{\mathbf{z}}(\mathbf{y})=0$, thus
$$\Phi(\mathbf{y}) = 
f(\mathbf{y}) T_{\mathbf{y}}(\mathbf{y})
+
\sum_{\substack{
\mathbf{z} \in \{0,1,\cdots,n\}^d \\
\mathbf{z} \ne \mathbf{y}
}}
f(\mathbf{z})
T_{\mathbf{z}}(\mathbf{y})
=
f(\mathbf{y}) T_{\mathbf{y}}(\mathbf{y})
=
f(\mathbf{y})
$$

Now we show that $\Phi$ is continuous piecewise linear with respect to the triangulations in Lemma \ref{lem:triangulation_of_Rd}. We only need to show that the hat functions $\{T_{\mathbf{y}}\}_{\mathbf{y}\in\{0,1,\cdots,n\}^d}$ are continuous piecewise linear. Since the hat functions are obviously continuous, by Lemma \ref{lem:local_affine_implies_global_affine}, we only need to show that each hat function is locally affine at every point outside the union of those hyperplanes.

Take any $\mathbf{x}\in\mathbb{R}^d$ that is outside the union of these hyperplanes, and fix an arbitrary $\mathbf{y}\in\{0,1,\cdots,n\}^d$. We need to find a neighborhood of $\mathbf{x}$ on which $T_{\mathbf{y}}$ is affine. For notational convenience, let
$$
S_{\mathbf{y}}(\mathbf{z})  \eqdef  
1 - 
\max
\left\{
    | z_i - y_i | :
    y_i \text{ is even}
\right\}
-
\max
\left\{
    | z_i - y_i | :
    y_i \text{ is odd}
\right\}, \quad
\forall \mathbf{z}\in\mathbb{R}^d
$$
If $S_{\mathbf{y}}(\mathbf{x})=0$, then there are some $i,j$ with $i\ne j$ such that
$$1 - |x_i-y_i| - |x_j-y_j| = 0, \quad 
y_i \text{ is even}, y_j \text{ is odd}$$
removing the absolute values results in an equation of the form
$$\pm x_i \pm x_j = \pm y_i \pm y_j + 1$$
where $\pm y_i \pm y_j + 1$ is even, contradicting the assumption that $\mathbf{x}$ is outside those hyperplanes. If $S_{\mathbf{y}}(\mathbf{x}) < 0$, then $T_{\mathbf{y}}\equiv\sigma(S_{\mathbf{y}})\equiv 0$ on some neighborhood of $\mathbf{x}$, and we are done. Therefore, in the following we will assume that $S_{\mathbf{y}}(\mathbf{x})>0$, so there is some neighborhood $U_1$ of $\mathbf{x}$ on which $S_{\mathbf{y}}>0$.

If $| x_i - y_i | = 0$ for some $i$, then $x_i=y_i\in\mathbb{Z}$, contradicting the assumption that $\mathbf{x}$ is outside those hyperplanes. Then, $| x_i - y_i | \ne 0$ for all $i$.

If there are some $i,j$ with $i\ne j$ such that $y_i$ and $y_j$ are both even, and $|x_i-y_i| = |x_j-y_j|$, then removing the absolute values results in an equation of the form $\pm x_i \pm x_j = \pm y_i \pm y_j$ where $\pm y_i \pm y_j$ is even, contradicting the assumption that $\mathbf{x}$ is outside those hyperplanes. Then, there is a unique $i_0$ such that 
$$\max
\left\{
    | x_i - y_i | :
    y_i \text{ is even}
\right\}
= 
| x_{i_0} - y_{i_0} |, \quad
y_{i_0} \text{ is even}$$
thus $|x_i-y_i| < | x_{i_0} - y_{i_0} |$ for all $i\ne i_0$ such that $y_i$ is even. Therefore, there exists a neighborhood $U_2$ of $\mathbf{x}$ such that for any $\mathbf{z}\in U_2$, $z_{i_0}-y_{i_0}$ does not change sign and 
$\max
\left\{
    | z_i - y_i | :
    y_i \text{ is even}
\right\}
= 
| z_{i_0} - y_{i_0} |$
, so the function 
$\max
\left\{
    | z_i - y_i | :
    y_i \text{ is even}
\right\}$ 
of $\mathbf{z}$ is affine on $U_2$. Similarly, there exists a neighborhood $U_3$ of $\mathbf{x}$ such that the function 
$\max
\left\{
    | z_i - y_i | :
    y_i \text{ is odd}
\right\}$ 
of $\mathbf{z}$ is affine on $U_3$.

Now take $U = U_1 \cap U_2 \cap U_3$, so $U$ is also a neighborhood of $\mathbf{x}$. Moreover, $S_{\mathbf{y}}>0$ on $U$, so $T_{\mathbf{y}}\equiv S_{\mathbf{y}}$ on $U$. Since $U\subseteq U_2$ and $U\subseteq U_3$, the functions 
$\max
\left\{
    | z_i - y_i | :
    y_i \text{ is even}
\right\}$ 
and
$\max
\left\{
    | z_i - y_i | :
    y_i \text{ is odd}
\right\}$ 
of $\mathbf{z}$ are affine on $U$, so $S_{\mathbf{y}}$ is affine on $U$, hence $T_{\mathbf{y}}$ is also affine on $U$.
\end{proof}

\begin{lemma}[Continuous piecewise linear approximators with respect to the Kuhn triangulation preserve regularity under $\ell^1$ norm]
\label{lem:CPWLApprox_wRegControl}
    Let $f$ be a function from $[0,n]^d$ to $\mathbb{R}$ for some $n\in\mathbb{N}_+$. Let $\omega:[0,nd]\rightarrow\mathbb{R}$ be a modulus of regularity of $f$ under $\ell^1$ norm of $\mathbb{R}^d$. Let $\Phi:[0,n]^d\rightarrow\mathbb{R}$ be continuous piecewise linear on each of the $n^d$ unit cubes $\{[0,1],[1,2],\cdots,[n-1,n]\}^d$ with respect to the triangulation in Lemma \ref{lem:triangulation_of_Rd}, and
    $$\Phi(\mathbf{y}) = f(\mathbf{y}), \quad
    \forall \mathbf{y} \in\{0,1,\cdots,n\}^d$$
    Then, $\Phi$ satisfies the approximation guarantee
    $$\| f-\Phi \|_{L^{\infty}([0,n]^d)} \le 
    \omega \left( \frac{d}{2} \right)$$
    Furthermore, $\omega$ is a modulus of regularity of $\Phi$ under $\ell^1$ norm of $\mathbb{R}^d$.
\end{lemma}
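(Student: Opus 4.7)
Fix $\mathbf{x} \in [0,n]^d$. The point $\mathbf{x}$ lies in some Kuhn simplex with vertices $\mathbf{u}_0,\dots,\mathbf{u}_d$, where $\mathbf{u}_k = \mathbf{u}_0 + \sum_{i=1}^{k} \mathbf{e}_{\tau(i)}$ for some permutation $\tau$. Since $\Phi$ is affine on this simplex and matches $f$ at its vertices, $\Phi(\mathbf{x}) = \sum_{k=0}^d \lambda_k f(\mathbf{u}_k)$ for barycentric coordinates $\lambda_k$. The $\omega$-regularity of $f$ together with Jensen's inequality (concavity of $\omega$) yields
\[
    |f(\mathbf{x}) - \Phi(\mathbf{x})| \le \sum_{k} \lambda_k\, \omega(\|\mathbf{x} - \mathbf{u}_k\|_1) \le \omega\Big(\sum_{k} \lambda_k \|\mathbf{x} - \mathbf{u}_k\|_1\Big).
\]
Setting $s_j \eqdef x_{\tau(j)} - (\mathbf{u}_0)_{\tau(j)}$, one has $1 \ge s_1 \ge \cdots \ge s_d \ge 0$, $\lambda_k = s_k - s_{k+1}$ (with $s_0 = 1, s_{d+1} = 0$), and $\|\mathbf{x} - \mathbf{u}_k\|_1 = k - 2(s_1 + \cdots + s_k) + (s_1 + \cdots + s_d)$. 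A short summation-by-parts then reduces $\sum_k \lambda_k \|\mathbf{x} - \mathbf{u}_k\|_1$ to $2\sum_{j=1}^d s_j(1 - s_j)$, which is at most $d/2$ since $s_j(1-s_j) \le 1/4$, giving the approximation bound.

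\textbf{Plan for regularity preservation: Lipschitz case first.}
I will first settle the $L$-Lipschitz case ($\omega(t) = Lt$): on each Kuhn simplex, $\Phi$ is affine with gradient having components $a_j = f(\mathbf{u}_j) - f(\mathbf{u}_{j-1})$ in coordinate $\tau(j)$; since $\|\mathbf{u}_j - \mathbf{u}_{j-1}\|_1 = 1$, each $|a_j| \le L$, so $\|\nabla \Phi\|_\infty \le L$ on each simplex. Because $\Phi$ is continuous and $[0,n]^d$ is convex, integrating along the straight segment between any two points $\mathbf{x}, \tilde{\mathbf{x}}$ yields $|\Phi(\mathbf{x}) - \Phi(\tilde{\mathbf{x}})| \le L \|\mathbf{x} - \tilde{\mathbf{x}}\|_1$, i.e.\ the global Lipschitz bound. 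In particular, the Kuhn CPL interpolant of any $1$-Lipschitz grid function is $1$-Lipschitz in $\ell^1$ on $[0,n]^d$; this will be the indispensable input for the general case.

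\textbf{Plan for the general $\omega$ case via optimal transport; main obstacle.}
The crux is the general concave $\omega$. The naive ``path'' argument bounding $|\Phi(\mathbf{x}) - \Phi(\tilde{\mathbf{x}})|$ by summing $\omega$-bounds over the sub-segments of the straight segment from $\mathbf{x}$ to $\tilde{\mathbf{x}}$ \emph{fails}, because the subadditivity $\omega(a+b) \le \omega(a) + \omega(b)$ of concave $\omega$ points in the wrong direction for concatenation. The key observation instead is the partition-of-unity representation $\Phi(\mathbf{z}) = \int f\, d\mu_\mathbf{z}$, where $\mu_\mathbf{z} \eqdef \sum_\mathbf{y} T_\mathbf{y}(\mathbf{z}) \delta_\mathbf{y}$ is a probability measure on the lattice with mean $\mathbf{z}$. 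For any coupling $\pi$ of $\mu_\mathbf{x}$ and $\mu_{\tilde{\mathbf{x}}}$, concavity of $\omega$ gives
\[
    |\Phi(\mathbf{x}) - \Phi(\tilde{\mathbf{x}})| \le \int \omega(\|\mathbf{y} - \mathbf{z}\|_1)\, d\pi \le \omega\Big(\int \|\mathbf{y} - \mathbf{z}\|_1\, d\pi\Big).
\]
Minimizing over $\pi$ reduces the task to bounding the $\ell^1$-Wasserstein distance $W_1(\mu_\mathbf{x}, \mu_{\tilde{\mathbf{x}}})$. By Kantorovich--Rubinstein duality, $W_1(\mu_\mathbf{x}, \mu_{\tilde{\mathbf{x}}}) = \sup_\phi\big(\hat\phi(\mathbf{x}) - \hat\phi(\tilde{\mathbf{x}})\big)$ over $\phi$ that are $1$-Lipschitz on the grid in $\ell^1$, where $\hat\phi$ is the Kuhn CPL interpolant of $\phi$; the Lipschitz case from the preceding paragraph shows every such $\hat\phi$ is $1$-Lipschitz on $[0,n]^d$, so $W_1(\mu_\mathbf{x}, \mu_{\tilde{\mathbf{x}}}) \le \|\mathbf{x} - \tilde{\mathbf{x}}\|_1$. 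Combining with the concavity bound closes the argument.
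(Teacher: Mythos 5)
Your proof is correct, and the regularity-preservation half is a genuinely different argument from the one in the paper, so a comparison is worthwhile. For the $L^\infty$ bound, both you and the paper reduce $|f(\mathbf{x})-\Phi(\mathbf{x})|$ via barycentric coordinates and Jensen's inequality to bounding $\sum_k \lambda_k\|\mathbf{x}-\mathbf{u}_k\|_1$, and both arrive at the $d/2$ bound by a $t(1-t)\le 1/4$ estimate per coordinate; your summation-by-parts identity $\sum_k\lambda_k\|\mathbf{x}-\mathbf{u}_k\|_1 = 2\sum_j s_j(1-s_j)$ is a clean closed form, but this part is essentially the paper's argument in a different parametrization. The regularity part is where you diverge. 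The paper proves $\omega$-regularity of $\Phi$ at an arbitrary pair $(\mathbf{x},\mathbf{y})$ by explicitly constructing a pair of linear paths $\gamma_{\mathbf{x}}^\ast,\gamma_{\mathbf{y}}^\ast$ contained in single simplices, arranged so that the $\ell^1$-distance between corresponding points evolves linearly, and then runs an induction on the sum of the dimensions of the faces containing the two points; the concavity of $\omega$ plus the affinity of $\Phi$ on each simplex lets them push regularity from lattice-point endpoints outward along those paths. You instead observe that $\Phi(\mathbf{z})=\int f\,d\mu_{\mathbf{z}}$ for the (nonnegative, partition-of-unity) hat weights, invoke Jensen on a coupling to reduce to bounding $W_1(\mu_{\mathbf{x}},\mu_{\mathbf{y}})$, and then use Kantorovich--Rubinstein duality to reduce the $W_1$ bound to the $1$-Lipschitz case of the very lemma being proved --- which you settle elementarily via the gradient bound $\|\nabla\Phi\|_\infty\le L$ on each simplex (here it matters that consecutive Kuhn vertices differ by a unit coordinate step, so each partial derivative is a first difference of $f$ at $\ell^1$-distance $1$). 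This is correct and, in my view, cleanly modular: the partition-of-unity structure and the Lipschitz case are isolated as the two load-bearing facts, and the concave-$\omega$ generalization follows by a black-box duality. What the paper's argument buys in return is that it is entirely self-contained --- no Wasserstein distance or duality theorem is needed --- and it is constructive in the sense of exhibiting the witnessing paths, which also feeds directly into the paper's later uniqueness result (Theorem~\ref{thm:KuhnUnique}) about why the Kuhn triangulation is forced. One small caution in writing up your version: you should state explicitly that the hat functions $T_{\mathbf{y}}$ form a partition of unity (nonnegative and summing to $1$ on $[0,n]^d$), since this is the fact that makes $\mu_{\mathbf{z}}$ a probability measure; it follows from $T_{\mathbf{u}_k}(\mathbf{z})=\lambda_k$ on each simplex, which is proved in Lemma~\ref{lem:construct_CPL_approximator} but is not foregrounded there.
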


During the proof of Lemma~\ref{lem:CPWLApprox_wRegControl} we will make the following notational and terminological conventions. 
For any vector $\mathbf{z}\in\mathbb{R}^d$, we denote its $i$-th coordinate by $(\mathbf{z})_i$. The notation $\mathbf{z}_i$ without brackets will have other contextual meanings specified when appropriate. We say that a function $g:\mathbb{R}^d \rightarrow \mathbb{R}$ has $\omega$-regularity at a pair of points $\mathbf{u},\mathbf{v}\in\mathbb{R}^d$ if 
$| g(\mathbf{u}) - g(\mathbf{v}) | \le \omega (\| \mathbf{u} - \mathbf{v} \|_1)$.

\begin{proof}
We turn the problem into a path-finding problem: observe that, for any pair of $\mathbf{x},\mathbf{y}\in [0,n]^d$, if we can find a pair of corresponding finite linear paths $\gamma_{\mathbf{x}},\gamma_{\mathbf{y}}: [0,T] \rightarrow [0,n]^d$ and a time $t_0\in[0,T]$ such that the following conditions hold:
\begin{enumerate}
    \item \label{cond:path_1} $\gamma_{\mathbf{x}}(0),\gamma_{\mathbf{x}}(1)$ are contained in the same simplex;
    \item \label{cond:path_2} $\gamma_{\mathbf{y}}(0),\gamma_{\mathbf{y}}(1)$ are contained in the same simplex;
    \item \label{cond:path_3} $\mathbf{x} = \gamma_{\mathbf{x}}(t_0)$;
    \item \label{cond:path_4} $\mathbf{y} = \gamma_{\mathbf{y}}(t_0)$;
    \item \label{cond:path_5} As $t$ changes linearly from $0$ to $1$, 
    $\| \gamma_{\mathbf{x}}(t) - \gamma_{\mathbf{y}}(t) \| _ 1$
    changes linearly from $\|\gamma_{\mathbf{x}}(0) - \gamma_{\mathbf{y}}(0)\|_1$ to $\|\gamma_{\mathbf{x}}(1) - \gamma_{\mathbf{y}}(1)\|_1$;
    \item \label{cond:path_6} $\Phi$ has $\omega$-regularity at $\gamma_{\mathbf{x}}(0),\gamma_{\mathbf{y}}(0)$ and $\gamma_{\mathbf{x}}(1),\gamma_{\mathbf{y}}(1)$.
\end{enumerate}
then we can conclude that 
$\Phi$ also has $\omega$-regularity at $\mathbf{x},\mathbf{y}$; here, by $\gamma_{\mathbf{x}},\gamma_{\mathbf{y}}$ being linear we mean that each of their coordinates are linear functions of $t$.

This is because, by conditions \ref{cond:path_1} and \ref{cond:path_2}, $\Phi$ is linear on the line segment connecting $\gamma_{\mathbf{x}}(0),\gamma_{\mathbf{x}}(1)$ as well as the line segment connecting $\gamma_{\mathbf{y}}(0),\gamma_{\mathbf{y}}(1)$, so
$$\Phi(\mathbf{x}) = 
\frac{t_0}{T} \Phi(\gamma_{\mathbf{x}}(1)) + 
\frac{T-t_0}{T} \Phi(\gamma_{\mathbf{x}}(0)), 
\quad
\Phi(\mathbf{y}) = 
\frac{t_0}{T} \Phi(\gamma_{\mathbf{y}}(1)) + 
\frac{T-t_0}{T} \Phi(\gamma_{\mathbf{y}}(0)) $$
by conditions \ref{cond:path_3},\ref{cond:path_4} and \ref{cond:path_5}, we have that 
$$\| \mathbf{x} - \mathbf{y} \|_1 = 
\frac{t_0}{T} \|\gamma_{\mathbf{x}}(1) - \gamma_{\mathbf{y}}(1)\|_1
+ 
\frac{T-t_0}{T} \|\gamma_{\mathbf{x}}(0) - \gamma_{\mathbf{y}}(0)\|_1$$
thus
\begin{align*}
    |\Phi(\mathbf{x}) - \Phi(\mathbf{y})| 
    &= \left| \frac{t_0}{T} 
              ( \Phi(\gamma_{\mathbf{x}}(1)) - 
              \Phi(\gamma_{\mathbf{y}}(1)) ) 
              +
              \frac{T-t_0}{T} 
              ( \Phi(\gamma_{\mathbf{x}}(0)) - \Phi(\gamma_{\mathbf{y}}(0)) ) 
        \right|  \\
    &\le \frac{t_0}{T} 
         | \Phi(\gamma_{\mathbf{x}}(1)) - \Phi(\gamma_{\mathbf{y}}(1)) |
         +
         \frac{T-t_0}{T} 
         | \Phi(\gamma_{\mathbf{x}}(0)) - \Phi(\gamma_{\mathbf{y}}(0)) |  \\
    &\le \frac{t_0}{T} 
         \omega(\|\gamma_{\mathbf{x}}(1) - \gamma_{\mathbf{y}}(1)\|_1)
         +
         \frac{T-t_0}{T} 
         \omega(\|\gamma_{\mathbf{x}}(0) - \gamma_{\mathbf{y}}(0)\|_1)  \tag{by condition \ref{cond:path_6}}  \\
    &\le \omega
         \left( \frac{t_0}{T} 
                \|\gamma_{\mathbf{x}}(1) - \gamma_{\mathbf{y}}(1)\|_1
                + 
                \frac{T-t_0}{T} 
                \|\gamma_{\mathbf{x}}(0) - \gamma_{\mathbf{y}}(0)\|_1 
         \right)  
         \tag{since $\omega$ is concave}  \\
    &= \omega(\|\mathbf{x} - \mathbf{y}\|_1)
\end{align*}
In other words, for any pair of $\mathbf{x},\mathbf{y}\in [0,n]^d$, if we can find two finite linear paths such that:
\begin{enumerate}[label=\arabic*$'$]
    \item \label{cond:path_1'} Each path is entirely contained in some simplex;
    \item \label{cond:path_2'} $\omega$-regularity of $\Phi$ is assured at the pair of starting points and the pair of ending points;
    \item \label{cond:path_3'} As two points travel along the two paths, the $\ell^1$ distance between them changes linearly;
    \item \label{cond:path_4'} As two points travel along the two paths, they reach $\mathbf{x},\mathbf{y}$ respectively at the same time,
\end{enumerate}
then the concavity of $\omega$ would guarantee the $\omega$-regularity of $\Phi$ at any intermediate pair of points, including $\mathbf{x},\mathbf{y}$.

Now, we fix any pair of $\mathbf{x},\mathbf{y}\in [0,n]^d$. 
For any $\mathbf{z}\in [0,n]^d$, suppose it is contained in some simplex $P_{\mathbf{z}}$ with vertices $\mathbf{u}_1,\cdots,\mathbf{u}_{d+1}\in \{0,1,\cdots,n\}^d$, then there exists $\lambda_1,\cdots,\lambda_{d+1}\in[0,1]$ such that
$$\mathbf{z} = \lambda_1 \mathbf{u}_1 + \cdots + \lambda_{d+1} \mathbf{u}_{d+1}, \quad
\lambda_1 + \cdots + \lambda_{d+1} = 1$$
We say that $\mathbf{z}$ is contained in the interior of a $k$-dimensional face if exactly $k+1$ of $\lambda_1,\cdots,\lambda_{d+1}$ are nonzero, and define
\[
        d_{\mathbf{z}}  
    \eqdef  
        \min 
        \{ 
            k\in\mathbb{N}_+ : 
            \mathbf{z} \text{ is contained in the interior of a $k$-dimensional face}
        \}
\]
Using the idea above, we will later find some linear paths that ``push'' $\mathbf{x},\mathbf{y}$ to 
some boundary points $\mathbf{x}',\mathbf{y}'$ of some $(d_{\mathbf{x}}-1),(d_{\mathbf{y}}-1)$-dimensional faces which contain $\mathbf{x},\mathbf{y}$ respectively, where we can use induction on $d_{\mathbf{x}} + d_{\mathbf{y}}$
to conclude $\omega$-regularity at $\mathbf{x},\mathbf{y}$.

Without loss of generality, assume $\mathbf{x}\in [0,1]^d$. We first assume that $\mathbf{y}\in [0,1]^d$ and demonstrate how to find such paths for $\mathbf{x}$ and $\mathbf{y}$, then we generalize the result. 
As argued above, let $F_{\mathbf{x}},F_{\mathbf{y}}\subseteq[0,1]^d$ be $(d_{\mathbf{x}}-1),(d_{\mathbf{y}}-1)$-dimensional faces which contain $\mathbf{x},\mathbf{y}$, and $P_{\mathbf{x}},P_{\mathbf{y}}$ be some simplices in $[0,1]^d$ which contain $F_{\mathbf{x}},F_{\mathbf{y}}$, respectively. 

First, we define two infinite linear paths $\gamma_{\mathbf{x}},\gamma_{\mathbf{y}}: [0,\infty) \rightarrow \mathbb{R}^d$ by:
$$\gamma_{\mathbf{x}}(t)  \eqdef  t\mathbf{x}, \gamma_{\mathbf{y}}(t)  \eqdef  t\mathbf{y}, \quad \forall t\in[0,\infty)$$
Note that, $\|\gamma_{\mathbf{x}}(t) - \gamma_{\mathbf{y}}(t)\|_1 = t\|\mathbf{x} - \mathbf{y}\|_1$ changes linearly with time, and both $\mathbf{x}, \mathbf{y}$ are reached at the same time $t=1$, thus conditions \ref{cond:path_3'} and \ref{cond:path_4'} are satisfied. Next, we modify $\gamma_{\mathbf{x}}$ and $\gamma_{\mathbf{y}}$ so that conditions \ref{cond:path_1'} and \ref{cond:path_2'} are satisfied, while ensuring that both conditions \ref{cond:path_3'} and \ref{cond:path_4'} remain satisfied.

From Equation \ref{eq:kuhn_triang_d!_simplices}, let $\tau_{\mathbf{x}},\tau_{\mathbf{y}}\in S_d$ be the permutations corresponding to the simplices $P_{\mathbf{x}},P_{\mathbf{y}}$. 
Let
$\{ \mathbf{v}_1,\cdots,\mathbf{v}_{d_{\mathbf{x}}} \},
\{ \mathbf{w}_1,\cdots,\mathbf{w}_{d_{\mathbf{y}}} \}
\subseteq \{0,1\}^d$
be the vertices of $F_{\mathbf{x}},F_{\mathbf{y}}$. 
Then, $\mathbf{v}_i$'s and $\mathbf{w}_i$'s have the form
$$\{ \mathbf{v}_1,\cdots,\mathbf{v}_{d_{\mathbf{x}}} \} = 
\left\{ 
    \sum_{i=1}^{a_1} \mathbf{e}_{\tau_{\mathbf{x}}(i)},
    \cdots,
    \sum_{i=1}^{a_{d_{\mathbf{x}}}} \mathbf{e}_{\tau_{\mathbf{x}}(i)}
\right\},\quad
0\le a_1 < a_2 < \cdots < a_{d_{\mathbf{x}}} \le d$$
$$\{ \mathbf{w}_1,\cdots,\mathbf{w}_{d_{\mathbf{y}}} \} = 
\left\{ 
    \sum_{i=1}^{b_1} \mathbf{e}_{\tau_{\mathbf{y}}(i)},
    \cdots,
    \sum_{i=1}^{b_{d_{\mathbf{y}}}} \mathbf{e}_{\tau_{\mathbf{y}}(i)}
\right\},\quad
0\le b_1 < b_2 < \cdots < b_{d_{\mathbf{y}}} \le d$$
Define new paths $\gamma_{\mathbf{x}}^{\ast}$ and $\gamma_{\mathbf{y}}^{\ast}$ as follows: let $(\gamma_{\mathbf{x}}^{\ast})_i=(\gamma_{\mathbf{x}})_i$ for all coordinate components for which $i\ne\tau_{\mathbf{x}}(1),\cdots,\tau_{\mathbf{x}}(a_1)$, otherwise let $(\gamma_{\mathbf{x}}^{\ast})_i \equiv 1$; similarly, let $(\gamma_{\mathbf{y}}^{\ast})_i=(\gamma_{\mathbf{y}})_i$ for all coordinate components for which $i\ne\tau_{\mathbf{y}}(1),\cdots,\tau_{\mathbf{y}}(b_1)$, otherwise let $(\gamma_{\mathbf{y}}^{\ast})_i \equiv 1$. Finally, let
$$T  \eqdef  \max \{ t\in[0,\infty): 
               \gamma_{\mathbf{x}}^{\ast}([0,t])\subseteq F_{\mathbf{x}}, 
               \gamma_{\mathbf{y}}^{\ast}([0,t])\subseteq F_{\mathbf{y}} \}$$
For any $i=\tau_{\mathbf{x}}(1),\cdots,\tau_{\mathbf{x}}(a_1)$, since $(\mathbf{v}_1)_i,\cdots,(\mathbf{v}_{d_{\mathbf{x}}})_i = 1$ and $\mathbf{x}$ is contained in the convex hull of $\{ \mathbf{v}_1,\cdots,\mathbf{v}_{d_{\mathbf{x}}} \}$, $(\mathbf{x})_i = 1$, thus $\gamma_{\mathbf{x}}^{\ast}(1) = \gamma_{\mathbf{x}}(1) = \mathbf{x} \in F_{\mathbf{x}}$, and since
$$\gamma_{\mathbf{x}}^{\ast}(0) 
= \sum_{i=1}^{a_1} \mathbf{e}_{\tau_{\mathbf{x}}(i)} 
\in \{ \mathbf{v}_1,\cdots,\mathbf{v}_{d_{\mathbf{x}}} \}
\subseteq F_{\mathbf{x}}, \quad
$$
where $F_{\mathbf{x}}$ is convex, $\gamma_{\mathbf{x}}^{\ast}([0,1]) \subseteq F_{\mathbf{x}}$. Similarly, $\gamma_{\mathbf{y}}^{\ast}([0,1]) \subseteq F_{\mathbf{y}}$. Thus, $T\ge 1$. 
If $T$ is infinite, then both paths $\gamma_{\mathbf{x}}^{\ast},\gamma_{\mathbf{y}}^{\ast}$ are constant, in this case both $\mathbf{x},\mathbf{y}$ must be lattice points, at which $\Phi$ has $\omega$-regularity since the values of $\Phi$ and $f$ are the same at $\mathbf{x},\mathbf{y}$. Therefore, we may assume that $T\ge 1$ is finite.

We now show that the finite linear paths $\gamma_{\mathbf{x}}^{\ast}(t),\gamma_{\mathbf{y}}^{\ast}(t)$ ($t\in [0,T]$) satisfy conditions \ref{cond:path_1'} to \ref{cond:path_4'}. Condition \ref{cond:path_1'} is automatically satisfied by the definition of $T$. We have proved above that $\gamma_{\mathbf{x}}^{\ast}(1) = \mathbf{x}$, and $\gamma_{\mathbf{y}}^{\ast}(1) = \mathbf{y}$ can be proved similarly, also $T\ge 1$, thus condition \ref{cond:path_4'} is satisfied. 

For condition \ref{cond:path_2'}, since
$$\gamma_{\mathbf{x}}^{\ast}(0) 
= \sum_{i=1}^{a_1} \mathbf{e}_{\tau_{\mathbf{x}}(i)}
\in \{0,1\}^d, \quad
\gamma_{\mathbf{y}}^{\ast}(0) 
= \sum_{i=1}^{b_1} \mathbf{e}_{\tau_{\mathbf{y}}(i)}
\in \{0,1\}^d
\quad$$
are lattice points on which the values of $\Phi$ and $f$ are the same, we have $\omega$-regularity of $\Phi$ at $\gamma_{\mathbf{x}}^{\ast}(0)$ and $\gamma_{\mathbf{y}}^{\ast}(0)$.

Since the faces $F_{\mathbf{x}},F_{\mathbf{y}}$ of the convex polytopes $S_{\mathbf{x}},S_{\mathbf{y}}$ are themselves convex polytopes, they are the convex hull of their vertices:
$$F_{\mathbf{x}} = \{
    p_1' \mathbf{v}_1 + 
    \cdots + 
    p_{d_{\mathbf{x}}}' \mathbf{v}_{d_{\mathbf{x}}} :
    p_1', \cdots, p_{d_{\mathbf{x}}}' \ge 0,
    p_1' + \cdots + p_{d_{\mathbf{x}}}' = 1
\}$$
$$F_{\mathbf{y}} = \{
    q_1' \mathbf{w}_1 + 
    \cdots + 
    q_{d_{\mathbf{y}}}' \mathbf{w}_{d_{\mathbf{y}}} :
    q_1', \cdots, q_{d_{\mathbf{y}}}' \ge 0,
    q_1' + \cdots + q_{d_{\mathbf{y}}}' = 1
\}$$
By the definition of $T$, at least one of the ending points $\gamma_{\mathbf{x}}^{\ast}(T),\gamma_{\mathbf{y}}^{\ast}(T)$ lies on the boundary of its face. Thus, without loss of generality, we may assume that $\gamma_{\mathbf{x}}^{\ast}(T)$ lies on the boundary of $F_{\mathbf{x}}$. Then, when $\gamma_{\mathbf{x}}^{\ast}(T)$ is expressed in the form 
$p_1' \mathbf{v}_1 + 
\cdots + 
p_{d_{\mathbf{x}}}' \mathbf{v}_{d_{\mathbf{x}}}$, 
some of the coefficients $p_i'$ will be $0$ (unless $d_{\gamma_{\mathbf{x}}^{\ast}(T)}=1$, which is the base case), so $\gamma_{\mathbf{x}}^{\ast}(T)$ lies in the interior of some $d_{\gamma_{\mathbf{x}}^{\ast}(T)}$-dimensional face with $d_{\gamma_{\mathbf{x}}^{\ast}(T)} < d_{\mathbf{x}}$, while $\gamma_{\mathbf{y}}^{\ast}(T)$ is still contained in the $d_{\mathbf{y}}$-dimensional face $F_{\mathbf{y}}$, so $d_{\gamma_{\mathbf{y}}^{\ast}(T)}\le d_{\mathbf{y}}$. By the induction hypothesis, since $d_{\gamma_{\mathbf{x}}^{\ast}(T)}+d_{\gamma_{\mathbf{y}}^{\ast}(T)} < d_{\mathbf{x}}+d_{\mathbf{y}}$, $\Phi$ has $\omega$-regularity at $\gamma_{\mathbf{x}}^{\ast}(T),\gamma_{\mathbf{y}}^{\ast}(T)$. In the base case where $d_{\gamma_{\mathbf{x}}^{\ast}(T)}=d_{\gamma_{\mathbf{y}}^{\ast}(T)}=1$, $\mathbf{x}$ and $\mathbf{y}$ are lattice points on which the values of $\Phi$ and $f$ are the same, so we have $\omega$-regularity of $\Phi$ at $\mathbf{x}$ and $\mathbf{y}$. Therefore, condition \ref{cond:path_2'} is satisfied. 

Finally, for condition \ref{cond:path_3'}, let $i\in\{1,2,\cdots,d\}$ be arbitrary. As $t$ changes linearly from $0$ to $T$, $(\gamma_{\mathbf{x}}(t))_i - (\gamma_{\mathbf{y}}(t))_i = t((\mathbf{x})_i - (\mathbf{y})_i)$ changes linearly and does not change sign. By definitions of $\gamma_{\mathbf{x}}^{\ast}$ and $\gamma_{\mathbf{y}}^{\ast}$, compared to $\gamma_{\mathbf{x}}$ and $\gamma_{\mathbf{y}}$, $(\gamma_{\mathbf{x}}^{\ast})_i$ and $(\gamma_{\mathbf{y}}^{\ast})_i$ are either the same or always equal to $1$. Then, there are three possibilities:
\begin{enumerate}
    \item If both $(\gamma_{\mathbf{x}}^{\ast}(t))_i \equiv (\gamma_{\mathbf{x}}(t))_i$ and $(\gamma_{\mathbf{y}}^{\ast}(t))_i \equiv (\gamma_{\mathbf{y}}(t))_i$ for all $t\in [0,T]$, then as $t$ changes linearly from $0$ to $T$, $(\gamma_{\mathbf{x}}^{\ast}(t))_i - (\gamma_{\mathbf{y}}^{\ast}(t))_i = (\gamma_{\mathbf{x}}(t))_i - (\gamma_{\mathbf{y}}(t))_i = t((\mathbf{x})_i - (\mathbf{y})_i)$ changes linearly and does not change sign;
    \item If exactly one of $(\gamma_{\mathbf{x}}^{\ast}(t))_i$ and $(\gamma_{\mathbf{y}}^{\ast}(t))_i$ always equal to $1$, without loss of generality assume $(\gamma_{\mathbf{x}}^{\ast}(t))_i \equiv 1$, then $(\gamma_{\mathbf{x}}^{\ast}(t))_i - (\gamma_{\mathbf{y}}^{\ast}(t))_i = 1 - (\gamma_{\mathbf{y}}(t))_i \ge 1 - 1 = 0$ changes linearly with $t$ and does not change sign;
    \item If both of $(\gamma_{\mathbf{x}}^{\ast}(t))_i$ and $(\gamma_{\mathbf{y}}^{\ast}(t))_i$ always equal to $1$, then $(\gamma_{\mathbf{x}}^{\ast}(t))_i - (\gamma_{\mathbf{y}}^{\ast}(t))_i \equiv 0$ changes linearly with $t$ and does not change sign.
\end{enumerate}
Therefore, $(\gamma_{\mathbf{x}}^{\ast}(t))_i - (\gamma_{\mathbf{y}}^{\ast}(t))_i$ always changes linearly with $t$ and does not change sign, thus
$$\| \gamma_{\mathbf{x}}^{\ast}(t) - \gamma_{\mathbf{y}}^{\ast}(t) \|_1 = 
\sum_{i=1}^d |(\gamma_{\mathbf{x}}^{\ast}(t))_i - (\gamma_{\mathbf{y}}^{\ast}(t))_i|$$
changes linearly with $t$, hence condition \ref{cond:path_3'} is satisfied.

Now, we have found suitable paths for $\mathbf{y}\in[0,1]^d$, next we show how to find such paths for an arbitrary $\mathbf{y}'\in[0,n]^d$. 
Suppose $\mathbf{y}'\in Q_{\mathbf{y}'}  \eqdef  [m_1,m_1+1] \times [m_2,m_2+1] \times \cdots \times [m_d,m_d+1]$ for some integers $m_1,\cdots,m_d$. Define the function $T_{\mathbf{y}'} : [0,1]^d \rightarrow Q_{\mathbf{y}'}$ by: 
$$(T_{\mathbf{y}'}(\mathbf{z}))_i  \eqdef  
\begin{cases}
    (\mathbf{z})_i + 2k_i,  \quad 
    \text{if } m_i = 2k_i \text{ for some } k_i \in \mathbb{N}  \\
    2k_i - (\mathbf{z})_i,  \quad 
    \text{if } m_i = 2k_i - 1 \text{ for some } k_i \in \mathbb{N}  \\
\end{cases},
\quad i=1,2,\cdots,d$$
Then $T_{\mathbf{y}'}$ is a composition of reflections and translations, thus it is a linear bijection between $[0,1]^d$ and $Q_{\mathbf{y}'}$. 

Let $\cup H$ be the union of the hyperplanes (which are considered as subsets of $\mathbb{R}^d$) in $H$, which is a subset of $\mathbb{R}^d$ that triangulates every unit cube in $\{[i,i+1]: i\in\mathbb{Z}\}^d$. By the definition of $H$, $\cup H$ has the following two properties:
\begin{enumerate}
    \item Translation invariance:
    $$\mathbf{z} \in \cup H \Longleftrightarrow
      (\mathbf{z} + 2\mathbf{e}_i) \in \cup H, 
      \quad i=1,2,\cdots,d$$
    \item Reflection invariance:
    $$\mathbf{z} \in \cup H \Longleftrightarrow
      (2\mathbf{e}_i - \mathbf{z}) \in \cup H,
      \quad i=1,2,\cdots,d$$
\end{enumerate}
Then, by the definition of $T_{\mathbf{y}'}$, the part of $\cup H$ inside $Q_{\mathbf{y}'}$ is exactly the image of the part of $\cup H$ inside $[0,1]^d$ under $T_{\mathbf{y}'}$, and vice versa: the part of $\cup H$ inside $[0,1]^d$ is exactly the image of the part of $\cup H$ inside $Q_{\mathbf{y}'}$ under $T_{\mathbf{y}'}^{-1}$. Therefore, the simplices in $[0,1]^d$ and the simplices in $Q_{\mathbf{y}'}$ are in one-to-one correspondence via $T_{\mathbf{y}'}$.

Let $\mathbf{y}  \eqdef  T_{\mathbf{y}'}^{-1}(\mathbf{y}')$. Since $\mathbf{x},\mathbf{y} \in [0,1]^d$, by the arguments above, there exist finite linear paths $\gamma_{\mathbf{x}}^{\ast}(t),\gamma_{\mathbf{y}}^{\ast}(t)$ ($t\in[0,T]$) that satisfy conditions \ref{cond:path_1'} to \ref{cond:path_4'}. Using the same notations as above, let $F_{\mathbf{x}},F_{\mathbf{y}}\subseteq[0,1]^d$ be $(d_{\mathbf{x}}-1),(d_{\mathbf{y}}-1)$-dimensional faces which contain $\mathbf{x},\mathbf{y}$, and $P_{\mathbf{x}},P_{\mathbf{y}}$ be some simplices in $[0,1]^d$ which contain $F_{\mathbf{x}},F_{\mathbf{y}}$, respectively.

Define the path for $\mathbf{y}'$ by:
$$\gamma_{\mathbf{y}'}^{\ast}(t)  \eqdef  
T_{\mathbf{y}'} ( \gamma_{\mathbf{y}}^{\ast}(t) ),
\quad t\in[0,T]$$
We now show that the finite linear paths $\gamma_{\mathbf{x}}^{\ast}(t),\gamma_{\mathbf{y}'}^{\ast}(t)$ ($t\in[0,T]$) satisfy condition \ref{cond:path_1'} to \ref{cond:path_4'}. Since $\gamma_{\mathbf{y}}^{\ast}([0,T]) \subseteq P_{\mathbf{y}}$, $\gamma_{\mathbf{y}'}^{\ast}([0,T]) \subseteq T_{\mathbf{y}'}(P_{\mathbf{y}})$ and $T_{\mathbf{y}'}(P_{\mathbf{y}})$ is a simplex, so condition \ref{cond:path_1'} is satisfied. Similar as above, since $\gamma_{\mathbf{x}}^{\ast}(0),\gamma_{\mathbf{y}'}^{\ast}(0) = T_{\mathbf{y}'} ( \gamma_{\mathbf{y}}^{\ast}(0) )$ are lattice points and $\gamma_{\mathbf{x}}^{\ast}(T),\gamma_{\mathbf{y}'}^{\ast}(T) = T_{\mathbf{y}'} ( \gamma_{\mathbf{y}}^{\ast}(T) )$ are in the boundary of their faces $F_{\mathbf{x}},T_{\mathbf{y}'}(F_{\mathbf{y}})$, by doing induction on $d_{\mathbf{x}}+d_{\mathbf{y}'}$, we conclude that $\Phi$ has $\omega$-regularity at $\gamma_{\mathbf{x}}^{\ast}(0),\gamma_{\mathbf{y}'}^{\ast}(0)$ and $\gamma_{\mathbf{x}}^{\ast}(T),\gamma_{\mathbf{y}'}^{\ast}(T)$, so condition \ref{cond:path_2'} is satisfied. Since $\gamma_{\mathbf{x}}^{\ast}(1) = \mathbf{x}$ and $\gamma_{\mathbf{y}'}^{\ast}(1) = T_{\mathbf{y}'} ( \gamma_{\mathbf{y}}^{\ast}(1) ) = T_{\mathbf{y}'}(\mathbf{y}) = \mathbf{y}'$, condition \ref{cond:path_4'} is satisfied.

Finally, for condition \ref{cond:path_3'}, if $(\mathbf{y})_i \in [0,1]$, then $m_i = 0$, $k_i = 0$, $(\gamma_{\mathbf{y}'}^{\ast})_i = 2k_i + (\gamma_{\mathbf{y}}^{\ast})_i = (\gamma_{\mathbf{y}}^{\ast})_i$, and since $(\gamma_{\mathbf{x}}^{\ast}(t))_i - (\gamma_{\mathbf{y}}^{\ast}(t))_i$ always changes linearly with $t$ and does not change sign, so does $(\gamma_{\mathbf{x}}^{\ast}(t))_i - (\gamma_{\mathbf{y}'}^{\ast}(t))_i$. Otherwise, if $(\mathbf{y})_i > 1$, then $m_i \ge 1$, $k_i \ge 1$, $(\gamma_{\mathbf{y}'}^{\ast})_i \ge 2k_i - (\gamma_{\mathbf{y}}^{\ast})_i \ge 2 - 1 = 1 \ge (\gamma_{\mathbf{x}}^{\ast})_i$, so $(\gamma_{\mathbf{x}}^{\ast})_i - (\gamma_{\mathbf{y}'}^{\ast})_i$ does not change sign, and obviously it changes linearly with $t$. Thus, 
$$\| \gamma_{\mathbf{x}}^{\ast}(t) - \gamma_{\mathbf{y}'}^{\ast}(t) \|_1 = 
\sum_{i=1}^d |(\gamma_{\mathbf{x}}^{\ast}(t))_i - (\gamma_{\mathbf{y}'}^{\ast}(t))_i|$$
changes linearly with $t$, hence condition \ref{cond:path_3'} is satisfied.

Therefore, the paths $\gamma_{\mathbf{x}}^{\ast},\gamma_{\mathbf{y}'}^{\ast}$ for $\mathbf{x},\mathbf{y}'$ indeed satisfy conditions \ref{cond:path_1'} to \ref{cond:path_4'}.
By the argument above, $\Phi$ has $\omega$-regularity at $\mathbf{x},\mathbf{y}'$. Since $\mathbf{x},\mathbf{y}'$ were chosen arbitrarily, we conclude that $\omega$ is also a modulus of regularity of $\Phi$ under $\ell^1$ norm of $\mathbb{R}^d$.

For the error estimate, fix any $\mathbf{x}\in[0,n]^d$. Suppose $\mathbf{x}\in Q_{\mathbf{x}}  \eqdef  [n_1,n_1+1] \times [n_2,n_2+1] \times \cdots \times [n_d,n_d+1]$ for some integers $n_1,\cdots,n_d$. Let $\mathbf{u}_1,\cdots,\mathbf{u}_{d+1}$ be the vertices of a simplex $P_{\mathbf{x}}\subseteq Q_{\mathbf{x}}$ which contains $\mathbf{x}$, then there exist $\lambda_1,\cdots,\lambda_{d+1} \ge 0$ such that 
$$\mathbf{x} = \lambda_1 \mathbf{u}_1 + \cdots + \lambda_{d+1} \mathbf{u}_{d+1}, \quad
\lambda_1 + \cdots + \lambda_{d+1} = 1$$
Since $\Phi$ is affine on this simplex,
\[
    \Phi(\mathbf{x}) 
    = 
    \sum_{i=1}^{d+1} \lambda_i \Phi(\mathbf{u}_i)
    =
    \sum_{i=1}^{d+1} \lambda_i f(\mathbf{u}_i)
\]
thus
\allowdisplaybreaks
\begin{align}
\nonumber
    | \Phi(\mathbf{x}) - f(\mathbf{x}) | 
    &= | \lambda_1 (f(\mathbf{u}_1) - f(\mathbf{x}))
       + \cdots
       + \lambda_{d+1} (f(\mathbf{u}_{d+1}) - f(\mathbf{x})) |  \\
\nonumber
    &\le \lambda_1 | f(\mathbf{u}_1) - f(\mathbf{x}) |
         + \cdots
         + \lambda_{d+1} | f(\mathbf{u}_{d+1}) - f(\mathbf{x}) |  \\
\nonumber
    &\le \lambda_1 \omega( \| \mathbf{u}_1 - \mathbf{x} \|_1 )
         + \cdots
         + \lambda_{d+1} \omega( \| \mathbf{u}_{d+1} - \mathbf{x} \|_1 )  \\
\label{eq:cancave}
    &
    \le \omega( \lambda_1 \| \mathbf{u}_1 - \mathbf{x} \|_1
                 + \cdots
                 + \lambda_{d+1} \| \mathbf{u}_{d+1} - \mathbf{x} \|_1 )  
\end{align}
where~\eqref{eq:cancave} follows by the concavity of the modulus $\omega$,
and 
\begin{align*}
    \sum_{j=1}^{d+1}
    \lambda_j \| \mathbf{u}_j - \mathbf{x} \|_1
    &= \sum_{j=1}^{d+1} 
       \lambda_j \sum_{i=1}^d | (\mathbf{u}_j)_i - (\mathbf{x})_i |  \\
    &= \sum_{i=1}^d 
       \sum_{j=1}^{d+1}  \lambda_j | (\mathbf{u}_j)_i - (\mathbf{x})_i |  \\
    &= \sum_{i=1}^d \left(
       \sum_{\substack{j=1 \\ (\mathbf{u}_j)_i = n_i}} ^ {d+1}  
       \lambda_j | (\mathbf{u}_j)_i - (\mathbf{x})_i |
       +
       \sum_{\substack{j=1 \\ (\mathbf{u}_j)_i = n_i + 1}} ^ {d+1}  
       \lambda_j | (\mathbf{u}_j)_i - (\mathbf{x})_i |
       \right)  \\
    &= \sum_{i=1}^d \left(
       \sum_{\substack{j=1 \\ (\mathbf{u}_j)_i = n_i}} ^ {d+1}  
       \lambda_j ((\mathbf{x})_i - n_i)
       +
       \sum_{\substack{j=1 \\ (\mathbf{u}_j)_i = n_i + 1}} ^ {d+1}  
       \lambda_j (n_i + 1 - (\mathbf{x})_i)
       \right)  \\
    &= \sum_{i=1}^d \left(
       ((\mathbf{x})_i - n_i)
       \sum_{\substack{j=1 \\ (\mathbf{u}_j)_i = n_i}} ^ {d+1}  
       \lambda_j 
       +
       (1 - ((\mathbf{x})_i - n_i))
       \sum_{\substack{j=1 \\ (\mathbf{u}_j)_i = n_i + 1}} ^ {d+1}  
       \lambda_j 
       \right)  \\
\end{align*}
for which we have 
\begin{align*}
    (\mathbf{x})_i - n_i 
    &= \left( \sum_{j=1}^{d+1}  \lambda_j (\mathbf{u}_j)_i \right) - n_i  \\
    &= \left( 
       \sum_{\substack{j=1 \\ (\mathbf{u}_j)_i = n_i}} ^ {d+1}  
       \lambda_j n_i 
       +
       \sum_{\substack{j=1 \\ (\mathbf{u}_j)_i = n_i + 1}} ^ {d+1}  
       \lambda_j (n_i + 1)
       \right) - n_i  \\
    &= \sum_{\substack{j=1 \\ (\mathbf{u}_j)_i = n_i + 1}} ^ {d+1}  
       \lambda_j  \\
    &= 1 - \sum_{\substack{j=1 \\ (\mathbf{u}_j)_i = n_i}} ^ {d+1}  
       \lambda_j
\end{align*}
thus
$$((\mathbf{x})_i - n_i)
\sum_{\substack{j=1 \\ (\mathbf{u}_j)_i = n_i}} ^ {d+1}  
\lambda_j
=
\left(
1 - \sum_{\substack{j=1 \\ (\mathbf{u}_j)_i = n_i}} ^ {d+1}  
\lambda_j
\right)
\sum_{\substack{j=1 \\ (\mathbf{u}_j)_i = n_i}} ^ {d+1}  
\lambda_j
\le 
\frac{1}{4}$$
$$(1 - ((\mathbf{x})_i - n_i))
\sum_{\substack{j=1 \\ (\mathbf{u}_j)_i = n_i + 1}} ^ {d+1}  
\lambda_j
=
\left(
1 - \sum_{\substack{j=1 \\ (\mathbf{u}_j)_i = n_i + 1}} ^ {d+1}  
\lambda_j
\right)
\sum_{\substack{j=1 \\ (\mathbf{u}_j)_i = n_i + 1}} ^ {d+1}  
\lambda_j
\le 
\frac{1}{4}$$
Combining the results above, we have that
\begin{align*}
    \sum_{j=1}^{d+1}
    \lambda_j \| \mathbf{u}_j - \mathbf{x} \|_1 
    &= \sum_{i=1}^d \left(
       ((\mathbf{x})_i - n_i)
       \sum_{\substack{j=1 \\ (\mathbf{u}_j)_i = n_i}} ^ {d+1}  
       \lambda_j 
       +
       (1 - ((\mathbf{x})_i - n_i))
       \sum_{\substack{j=1 \\ (\mathbf{u}_j)_i = n_i + 1}} ^ {d+1}  
       \lambda_j 
       \right)  \\
    &\le \sum_{i=1}^d \left(
         \frac{1}{4} 
         +
         \frac{1}{4}
         \right)  \\
    &= \frac{d}{2}
\end{align*}
Therefore,
$$|\Phi(\mathbf{x}) - f(\mathbf{x}) |
\le \omega 
    \left( 
    \sum_{j=1}^{d+1}
    \lambda_j \| \mathbf{u}_j - \mathbf{x} \|_1  
    \right) 
\le \omega\left( \frac{d}{2} \right)$$
Since $\mathbf{x}\in[0,n]^d$ was chosen arbitrarily, we conclude that
\[
        \| f-\Phi \|_{L^{\infty}([0,n]^d)} 
    \le 
        \omega\left( \frac{d}{2} \right)
.
\]
This concludes our proof.
\end{proof}
Indeed, we may deduce an even stronger conclusion.  Namely, we only used information of $f$ at the lattice points, so the modulus of regularity $\omega_{\Phi}$ of $\Phi$ can be the minimum concave function that satisfies
$$\omega_{\Phi}(i) = 
\max\limits_
    {\substack{
        \mathbf{x},\mathbf{y}\in\{0,1,\cdots,n\}^d  \\
        \|\mathbf{x} - \mathbf{y}\|_1 = i  
        }
    }
|f(\mathbf{x}) - f(\mathbf{y})|,\quad \text{for } i=0,1,\cdots,nd$$
which is a polygonal function. Therefore, we have the following corollary.

\begin{corollary}
\label{cor:CPWLApprox_wRegControl_cor}
    Let $f$ be a function from $[0,n]^d$ to $\mathbb{R}$ for some $n\in\mathbb{N}_+$. Let $\omega_f$ be a modulus of regularity of $f$ under $\ell^1$ norm of $\mathbb{R}^d$. Let $\Phi:[0,n]^d\rightarrow\mathbb{R}$ be continuous piecewise linear on each of the $n^d$ unit cubes $\{[0,1],[1,2],\cdots,[n-1,n]\}^d$ with respect to the triangulation in Lemma \ref{lem:triangulation_of_Rd}, and
    $$\Phi(\mathbf{y}) = f(\mathbf{y}), \quad
    \forall \mathbf{y} \in\{0,1,\cdots,n\}^d$$
    Then, any monotone increasing and concave function $\omega_{\Phi}$ that satisfies the following condition is a modulus of regularity of $\Phi$ under $\ell^1$ norm of $\mathbb{R}^d$:
    $$\omega_{\Phi}(i) = 
    \max\limits_
    {\substack{
        \mathbf{x},\mathbf{y}\in\{0,1,\cdots,n\}^d  \\
        \|\mathbf{x} - \mathbf{y}\|_1 = i  
        }
    }
    |f(\mathbf{x}) - f(\mathbf{y})|,\quad 
    \text{for } i=0,1,\cdots,nd$$
    Moreover, 
    $$\| f-\Phi \|_{L^{\infty}([0,n]^d)} \le 
    \omega_f \left( \frac{d}{2} \right)$$
\end{corollary}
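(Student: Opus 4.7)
The plan is to reduce the corollary to Lemma~\ref{lem:CPWLApprox_wRegControl}, but to run the argument of that lemma with a \emph{different} modulus of regularity than the one originally supplied for $f$. There are two separate conclusions, corresponding to two invocations.

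For the $L^{\infty}$ error bound $\|f-\Phi\|_{L^{\infty}([0,n]^d)} \le \omega_f(d/2)$, I would apply Lemma~\ref{lem:CPWLApprox_wRegControl} directly with $\omega = \omega_f$. This is immediate and does not require the tighter lattice-based modulus $\omega_{\Phi}$.

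For the modulus-of-regularity claim about $\Phi$, the key observation is that the path-based argument inside the proof of Lemma~\ref{lem:CPWLApprox_wRegControl} uses only three properties of $\omega$: \emph{(i)} concavity, needed for the Jensen-type inequalities $\lambda\omega(a)+(1-\lambda)\omega(b)\le \omega(\lambda a+(1-\lambda)b)$ that propagate bounds along linear paths; \emph{(ii)} monotonicity, used to compare bounds on $\ell^1$-distances; and \emph{(iii)} the fact that $\omega$ dominates $|f(\mathbf{u})-f(\mathbf{v})|$ for lattice pairs $\mathbf{u},\mathbf{v}\in\{0,1,\ldots,n\}^d$, since the path-based induction only ever invokes the $\omega$-regularity of $\Phi$ at the lattice-point endpoints $\gamma_{\mathbf{x}}^{\ast}(0),\gamma_{\mathbf{y}}^{\ast}(0)$ and $\gamma_{\mathbf{x}}^{\ast}(T),\gamma_{\mathbf{y}}^{\ast}(T)$ produced by that construction (and here $\Phi=f$). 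By hypothesis $\omega_{\Phi}$ satisfies (i) and (ii); by its defining equation at the integer values $i=0,1,\ldots,nd$ (the only values of $\|\mathbf{u}-\mathbf{v}\|_1$ that arise when $\mathbf{u},\mathbf{v}$ are lattice points), it satisfies (iii). I would therefore repeat the induction on $d_{\mathbf{x}}+d_{\mathbf{y}}$ from the proof of Lemma~\ref{lem:CPWLApprox_wRegControl} verbatim, with $\omega$ replaced by $\omega_{\Phi}$, to conclude that $\omega_{\Phi}$ is a modulus of regularity of $\Phi$ on $[0,n]^d$ under the $\ell^1$ norm.

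The main obstacle is expositional rather than technical: one must carefully confirm that no step of the lemma's proof secretly uses $\omega$ to bound $|f(\mathbf{p})-f(\mathbf{q})|$ for a pair $\mathbf{p},\mathbf{q}$ that are \emph{not} both lattice points, since outside the lattice we have no a priori control of $f$ by $\omega_{\Phi}$. A direct scan of the proof shows that the only places $f$ appears under $\omega$ are at the lattice-point endpoints produced by the construction of $\gamma_{\mathbf{x}}^{\ast},\gamma_{\mathbf{y}}^{\ast}$, so the substitution is safe. Once this bookkeeping is verified, both conclusions of Corollary~\ref{cor:CPWLApprox_wRegControl_cor} follow.
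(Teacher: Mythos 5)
Your proposal is correct and takes essentially the same route as the paper: the paper's justification for Corollary~\ref{cor:CPWLApprox_wRegControl_cor} is precisely the remark that the proof of Lemma~\ref{lem:CPWLApprox_wRegControl} only ever invokes the modulus to bound $|f(\mathbf{u})-f(\mathbf{v})|$ at lattice pairs (where $\Phi=f$ and $\|\mathbf{u}-\mathbf{v}\|_1\in\{0,1,\dots,nd\}$), so the path-induction goes through verbatim with $\omega_{\Phi}$, while the $L^\infty$ estimate is a direct application of the lemma with $\omega_f$. Your careful scan for hidden uses of $\omega$ at non-lattice pairs is exactly the bookkeeping the paper implicitly asserts when it says ``we only used information of $f$ at the lattice points.''
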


\subsection{{Proof of the Main Result (Theorem~\ref{thm:MainTheorem_StableUniversalApproximation})}}
\label{s:Compute__ss:MainResult}
Using the results above, we are able to derive our first main theorem, namely Theorem~\ref{thm:MainTheorem_StableUniversalApproximation}.  

\begin{proof}[{Proof of Theorem~\ref{thm:MainTheorem_StableUniversalApproximation}}]
To apply the previous results, we first scale $f$ into a function $\Tilde{f}$ defined on $[0,n]^d$: 
\begin{equation}
\label{eq:rescalingdomain}
\Tilde{f}(\mathbf{x})  \eqdef  f\left(\frac{\mathbf{x}}{n}\right), \quad 
\forall \mathbf{x}\in[0,n]^d
\end{equation}
Since $\omega$ is a modulus of regularity of $f$, for all $\mathbf{x},\mathbf{y}\in[0,1]^d$, we have 
$$|\Tilde{f}(\mathbf{x}) - \Tilde{f}(\mathbf{y})| 
= \left|
      f\left(\frac{\mathbf{x}}{n}\right) 
      - 
      f\left(\frac{\mathbf{y}}{n}\right)
  \right|
\le \omega\left( 
        \left\| 
            \frac{\mathbf{x}}{n} - \frac{\mathbf{y}}{n} 
        \right\|_1 
    \right)
=   \omega\left( 
        \frac{\|\mathbf{x} - \mathbf{y}\|_1}{n}
    \right)$$
then $\Tilde{\omega}$ defined below is a modulus of regularity of $\Tilde{f}$: 
$$\Tilde{\omega}(x)  \eqdef  \omega\left(\frac{x}{n}\right), \quad 
\forall x\in[0,nd]$$
Let
$$\Tilde{\Phi}(\mathbf{x})  \eqdef  
    \sum_{
    \mathbf{y}\in\{0,1,\cdots,n\}^d
    }
    \Tilde{f}(\mathbf{y})
    T_{\mathbf{y}}(\mathbf{x})
$$
where $T_{\mathbf{y}}$ be the hat function 
$$T_{\mathbf{y}}(\mathbf{x}) \eqdef 
\sigma
    \left(
        1 - 
        \max
        \left\{
            | x_i - y_i | :
            y_i \text{ is even}
        \right\}
        -
        \max
        \left\{
            | x_i - y_i | :
            y_i \text{ is odd}
        \right\}
    \right)$$
for $\mathbf{x} = (x_1,\cdots,x_d) \in [0,n]^d$. By Lemma \ref{lem:construct_CPL_approximator}, $\Tilde{\Phi}$ is continuous piecewise linear on each of the $n^d$ unit cubes $\{[0,1],[1,2],\cdots,[n-1,n]\}^d$ of $[0,n]^d$ with respect to the triangulation in Lemma \ref{lem:triangulation_of_Rd}, and
\begin{equation}
\label{eq:interpolation}
\Tilde{\Phi}(\mathbf{y}) = \Tilde{f}(\mathbf{y}), \quad
\forall \mathbf{y} \in\{0,1,\cdots,n\}^d
\end{equation}
Then, by Lemma \ref{lem:CPWLApprox_wRegControl}, $\Tilde{\omega}$ is also a modulus of regularity of $\Tilde{\Phi}$ under $\ell^1$ norm of $\mathbb{R}^d$. Moreover, 
$$\| \Tilde{f}-\Tilde{\Phi} \|_{L^{\infty}([0,n]^d)} \le 
\Tilde{\omega} \left( \frac{d}{2} \right)$$
Now let 
\begin{equation}
\label{eq:rescalingPhi}
\Phi(\mathbf{x})  \eqdef  \Tilde{\Phi}(n\mathbf{x}), \quad 
\forall \mathbf{x}\in[0,1]^d
\end{equation}
Then, we deduce can $\Phi$ uniformly approximates $f$ on $[0,1]^d$ since
$$\| f-\Phi \|_{L^{\infty}([0,1]^d)}
= \| \Tilde{f}-\Tilde{\Phi} \|_{L^{\infty}([0,n]^d)} 
\le \Tilde{\omega} \left( \frac{d}{2} \right)
= \omega\left(\frac{d}{2n}\right)$$

Define the number of samples point $N(n)\eqdef (1+n)^d$ and the grid $\{\mathbf{x}_i\}_{i=1}^{N(n)}\eqdef n^{-1}\cdot \mathbb{X}_n$.  Note that, together~\eqref{eq:rescalingdomain} and~\eqref{eq:interpolation} along with the definition of $\Phi$ given in~\eqref{eq:rescalingPhi} implies that
\[
    \Phi(\mathbf{x}_i) = f(\mathbf{x}_i)
    \qquad
    \forall i=1,\dots,N(n)
.
\]
In other words, $\Phi$ interpolates $f$ on the grid $\{\mathbf{x}_i\}_{i=1}^{N(n)}$.  Thus, $\sum_{i=1}^{N(n)}\, |\Phi(\mathbf{x}_i)-f(\mathbf{x}_i)|=0$.

Since $\Tilde{\omega}$ is a modulus of regularity of $\Tilde{\Phi}$, for all $\mathbf{x},\mathbf{y}\in[0,1]^d$, 
$$| \Phi(\mathbf{x}) - \Phi(\mathbf{y}) |
= | \Tilde{\Phi}(n\mathbf{x}) - \Tilde{\Phi}(n\mathbf{y}) |
\le \Tilde{\omega} ( \|n\mathbf{x} - n\mathbf{y}\|_1 )
= \omega( \|\mathbf{x} - \mathbf{y}\|_1 )$$
Therefore, $\omega$ is a modulus of regularity of $\Phi$. Now we remain to show that $\Phi$ can be implemented by a ReLU MLP with width at most $8d(n+1)^d$ and depth at most $\lceil \log_2{d} \rceil + 4$.

It is easy to see that
$$x=\sigma(x)-\sigma(-x), \quad |x|=\sigma(x)+\sigma(-x),
\quad \forall x\in\mathbb{R}$$
then 
$$\max\{x,y\} = \frac{x+y}{2} + \frac{|x-y|}{2} = \frac{\sigma(x+y)}{2} - \frac{\sigma(-x-y)}{2} + \frac{\sigma(x-y)}{2} + \frac{\sigma(-x+y)}{2},
\quad \forall x,y\in\mathbb{R}$$
Using this formula, for any $k\in\mathbb{N}_+$, we may compute the maximum value function on $2^k$ arguments via the following network structure:
\begin{align}
\label{eq:max_network_structure}
    \begin{bmatrix} x_1^{(1)} \\ \vspace{-5pt} \\ x_2^{(1)} \\ \vspace{-5pt} \\ x_3^{(1)} \\ \vspace{-5pt} \\ x_4^{(1)} \\ \vspace{-5pt} \\ \vdots \\ \vspace{-5pt} \\ x_{2^k-1}^{(1)} \\ \vspace{-5pt} \\ x_{2^k}^{(1)} \end{bmatrix}
    \ &\Longrightarrow \
    \begin{bmatrix} 
        \displaystyle
        \sigma\left(\frac{\pm x_1^{(1)} \pm x_2^{(1)}}{2}\right) \\ 
        \vspace{-3pt} \\ 
        \displaystyle
        \sigma\left(\frac{\pm x_3^{(1)} \pm x_4^{(1)}}{2}\right) \\ 
        \vspace{-3pt} \\ \vdots \\ \vspace{-3pt} \\ 
        \displaystyle
        \sigma\left(\frac{\pm x_{2^k-1}^{(1)} \pm x_{2^k}^{(1)}}{2}\right)
    \end{bmatrix}
    \ \Longrightarrow \
    {\color{persiangreen}
    \begin{bmatrix} \max\{x_1^{(1)},x_2^{(1)}\} \eqdef x_1^{(2)} \\ \vspace{-3pt} \\ \max\{x_3^{(1)},x_4^{(1)}\} \eqdef x_2^{(2)} \\ \vspace{-3pt} \\ \vdots \\ \vspace{-3pt} \\ \max\{x_{2^k-1}^{(1)},x_{2^k}^{(1)}\} \eqdef x_{2^{k-1}}^{(2)} \end{bmatrix}
    }
    \nonumber
    \\
    \ &\Longrightarrow \
    \begin{bmatrix} 
        \displaystyle
        \sigma\left(\frac{\pm x_1^{(2)} \pm x_2^{(2)}}{2}\right) \\ 
        \vspace{-3pt} \\ 
        \displaystyle
        \sigma\left(\frac{\pm x_3^{(2)} \pm x_4^{(2)}}{2}\right) \\ 
        \vspace{-3pt} \\ \vdots \\ \vspace{-3pt} \\ 
        \displaystyle
        \sigma\left(\frac{\pm x_{2^{k-1}-1}^{(2)} \pm x_{2^{k-1}}^{(2)}}{2}\right)
    \end{bmatrix}
    \ \Longrightarrow \
    {\color{persiangreen}
    \begin{bmatrix} \max\{x_1^{(2)},x_2^{(2)}\} \eqdef x_1^{(3)} \\ \vspace{-3pt} \\ \max\{x_3^{(2)},x_4^{(2)}\} \eqdef x_2^{(3)} \\ \vspace{-3pt} \\ \vdots \\ \vspace{-3pt} \\ \max\{x_{2^{k-1}-1}^{(2)},x_{2^{k-1}}^{(2)}\} \eqdef x_{2^{k-2}}^{(3)} \end{bmatrix}
    }
    \nonumber
    \\
    \vspace{-3pt}
    \nonumber
    \\
    \ &\Longrightarrow \ \cdots \ \Longrightarrow \
    {\color{persiangreen}
    \begin{bmatrix} \max\{x_1^{(k-1)},x_2^{(k-1)}\} \eqdef x_1^{(k)} \\ \vspace{-3pt} \\ \max\{x_3^{(k-1)},x_4^{(k-1)}\} \eqdef x_2^{(k)} \end{bmatrix}
    }
    \nonumber
    \\
    \vspace{-3pt}
    \nonumber
    \\
    \ &\Longrightarrow \
    \begin{bmatrix} 
        \displaystyle
        \sigma\left(\frac{\pm x_1^{(k)} \pm x_2^{(k)}}{2} \right) 
    \end{bmatrix}
    \ \Longrightarrow \
    {\color{persiangreen}
    \begin{bmatrix} \max\{x_1^{(k)},x_2^{(k)}\} \end{bmatrix}
    }
    = 
    {\color{persiangreen}
    \begin{bmatrix} 
        \max\{x_1^{(1)},x_2^{(1)},\cdots,x_{2^k}^{(1)}\} 
    \end{bmatrix}
    }
\end{align}
where $\sigma(\pm a\pm b)$ abbreviates the four neurons $\sigma(a+b),\sigma(a-b),\sigma(-a+b)$ and $\sigma(-a-b)$, and the column vectors in green represent the ``auxiliary'' hidden layers which are simply affine transformations that don't use any activation functions, thus each of them can be integrated with the layer after it, and does not require an extra hidden layer to process. Therefore, we will ignore these layers. We denote this network as $\mathbf{M}_k$, thus $\mathbf{M}_k: \mathbb{R}^{2^k} \rightarrow \mathbb{R}$ outputs the maximum value of its $2^k$ input arguments, and it has width $2^{k+1}$ and depth $k+1$. We note that, after deleting an appropriate set of neurons in $\mathbf{M}_k$ (which we will still call $\mathbf{M}_k$), it can be applied to $k'$ arguments for any $1\le k'\le 2^k$ and output their maximum. 

Let 
$$d_0 \eqdef \lceil \log_2{d} \rceil$$
For any lattice point $\mathbf{y}=(y_1,y_2,\cdots,y_d)\in\{0,1,\cdots,n\}^d$, suppose $y_{i_1},y_{i_2},\cdots,y_{i_p}$ are even and $y_{j_1},y_{j_2},\cdots,y_{j_q}$ are odd, for some $0 \le p,q \le d$, then we can implement the hat function $T_{\mathbf{y}}$ using the network $\mathbf{M}_{d_0}$ as follows: 
\begin{align}
\label{eq:hat_network_structure}
    \begin{bmatrix} x_1 \\ x_2 \\ \vdots \\ x_d \end{bmatrix}
    \ &\Longrightarrow \
    \begin{bmatrix} \sigma(\pm(x_1-y_1)) \\ \sigma(\pm(x_2-y_2)) \\ \vdots \\ \sigma(\pm(x_d-y_d)) \end{bmatrix}
    \ \Longrightarrow \
    {\color{persiangreen}
    \begin{bmatrix} 
    \sigma(x_1 - y_1) + \sigma(-(x_1 - y_1)) = |x_1 - y_1|  \\ 
    \sigma(x_2 - y_2) + \sigma(-(x_2 - y_2)) = |x_2 - y_2|  \\ 
    \vdots  \\
    \sigma(x_d - y_d) + \sigma(-(x_d - y_d)) = |x_d - y_d|  \\
    \end{bmatrix}
    }
    \nonumber
    \\
    \ &\Longrightarrow \
    \begin{bmatrix} 
    \mathbf{M}_{d_0}\left(
        \begin{bmatrix}
            |x_{i_1} - y_{i_1}| & |x_{i_2} - y_{i_2}| & \cdots & |x_{i_p} - y_{i_p}|
        \end{bmatrix}^{\top}
    \right)  
    \nonumber
    \\ 
    \vspace{-5pt}  
    \nonumber
    \\
    \mathbf{M}_{d_0}\left(
        \begin{bmatrix}
            |x_{j_1} - y_{j_1}| & |x_{j_2} - y_{j_2}| & \cdots & |x_{j_q} - y_{j_q}|
        \end{bmatrix}^{\top}
    \right)
    \end{bmatrix}  \\
    \ &= \ 
    \begin{bmatrix} 
    \max
    \left\{
        | x_i - y_i | :
        y_i \text{ is even}
    \right\}  \\ 
    \max
    \left\{
        | x_i - y_i | :
        y_i \text{ is odd}
    \right\}  \\ 
    \end{bmatrix} 
    \nonumber
    \\
    \ &\Longrightarrow \
    \begin{bmatrix}
        \sigma
        \left(
            1 - 
            \max
            \left\{
                | x_i - y_i | :
                y_i \text{ is even}
            \right\}
            -
            \max
            \left\{
                | x_i - y_i | :
                y_i \text{ is odd}
            \right\}
        \right)
    \end{bmatrix}  
    \nonumber
    \\
    \ &= \ 
    T_{\mathbf{y}}\left(
        \begin{bmatrix}
            x_1 & x_2 & \cdots & x_d
        \end{bmatrix}^{\top}
    \right)
\end{align}
As before, $\sigma(\pm a)$ abbreviates the two neurons $\sigma(a),\sigma(-a)$, and the column vectors in green represent the ``auxiliary'' hidden layer which we can ignore. We call this network $\mathbf{T}_{\mathbf{y}}$, which has width at most 
\begin{equation}
\label{eq:hat_width_bound}
    \operatorname{width}(\mathbf{T}_{\mathbf{y}})
    = 
    \operatorname{width}(\mathbf{M}_{d_0}) + 
    \operatorname{width}(\mathbf{M}_{d_0})
    \le 
    2^{d_0+1} + 2^{d_0+1} = 2^{d_0+2} \le 2^{(\log_2{d}+1)+2} = 8d
\end{equation}
and depth at most
\begin{equation}
\label{eq:hat_depth_bound}
    \operatorname{depth}(\mathbf{T}_{\mathbf{y}})
    = 1 + \operatorname{depth}(\mathbf{M}_{d_0}) + 1
    \le d_0 + 3
\end{equation}
Let $\mathbf{y}_1,\mathbf{y}_2,\cdots,\mathbf{y}_{(n+1)^d}$ be the $(n+1)^d$ lattice points inside the cube $[0,n]^d$. Since 
$$\Phi(\mathbf{x}) = \Tilde{\Phi}(n\mathbf{x}) 
= \sum_{
\mathbf{y}\in\{0,1,\cdots,n\}^d
}
\Tilde{f}(\mathbf{y})
T_{\mathbf{y}}(n\mathbf{x}),
\quad \forall \mathbf{x}\in[0,n]^d$$
then $\Phi$ can be implemented as follows: 
\begin{align}
\label{eq:phi_construction}
    \mathbf{x}
    \ \Longrightarrow \
    {\color{persiangreen}
    n\mathbf{x}
    }
    \ \Longrightarrow \
    \begin{bmatrix} 
    \mathbf{T}_{\mathbf{y}_1}(n\mathbf{x})
    \\ 
    \mathbf{T}_{\mathbf{y}_2}(n\mathbf{x})
    \\
    \vdots
    \\
    \mathbf{T}_{\mathbf{y}_{(n+1)^d}}(n\mathbf{x})
    \end{bmatrix}
    \ \Longrightarrow \
    \begin{bmatrix} 
    \displaystyle
    \sum_{i=1}^{(n+1)^d}
    \Tilde{f}(\mathbf{y}_i)
    T_{\mathbf{y}_i}(n\mathbf{x})
    \end{bmatrix}
    \ = \
    \Phi(\mathbf{x})
\end{align}
This network has width
$$\operatorname{width}(\Phi)
= \sum_{i=1}^{(n+1)^d} \operatorname{width}(\mathbf{T}_{\mathbf{y}_i})
\le \sum_{i=1}^{(n+1)^d} 8d 
= 8d(n+1)^d$$
and depth
$$\operatorname{depth}(\Phi)
= \max\limits_{1\le i\le (n+1)^d} 
  \operatorname{depth}(\mathbf{T}_{\mathbf{y}_i}) + 1
\le (d_0 + 3) + 1
= \lceil \log_2{d} \rceil + 4.$$

Finally, we tally the nonzero parameters in $\Phi$. From \eqref{eq:max_network_structure}, the network $\mathbf{M}_k$ has no nonzero biases, and has nonzero weights at most 
\[
    M 
    \eqdef
    {\color{blue}4} \cdot 2^{k-1}
    +
    8 \cdot 
    \sum_{i=1}^{k-2} 2^i
    \le
    2 \cdot 2^k + 8 \cdot 2^{k-1}
    =
    6 \cdot 2^k
\]
where the factor ${\color{blue}4}$ colored in blue is the number of nonzero connections between the second layer in \eqref{eq:max_network_structure} and the second layer in \eqref{eq:hat_network_structure}, after ``integrating'' the green auxiliary layer in \eqref{eq:hat_network_structure} with the layer after it. In other words, this is because in \eqref{eq:hat_network_structure}, each of the neurons in the input layer of $\mathbf{M}_{d_0}$ is the sum of two neurons in the second layer, so since in \eqref{eq:max_network_structure} each neuron in the second layer has $2$ nonzero connection with the first layer, in \eqref{eq:hat_network_structure} this number will have to be doubled, becoming $4$. 

From \eqref{eq:hat_network_structure}, the network $\mathbf{T}_{\mathbf{y}}$ has at most $T^{(b)} \eqdef 2d+1$ nonzero biases, and has nonzero weights at most 
\[
    T^{(w)} 
    \eqdef 
    2d + M + 2
    =
    2d + 6 \cdot 2^{d_0} + 2
    \le
    2d + 6 (2d-1) + 2
    \le
    14d - 4
\]
In total, the network $\mathbf{T}_{\mathbf{y}}$ has nonzero parameters at most
\begin{equation}
\label{eq:hat_par_bound}
    T \eqdef T^{(w)} + T^{(b)} = 16d - 3
\end{equation}

From \eqref{eq:phi_construction}, $\Phi$ has no additional nonzero biases, thus it has nonzero parameters at most
\begin{equation}
\label{eq:total_parameters}
    (n+1)^d T + (n+1)^d \le 16d (n+1)^d.
\end{equation}
\end{proof}

\begin{remark}[{Proof of Proposition~\ref{proposition:ParamterEstimates__RefinedVersion} is given in the Proof of Theorem~\ref{thm:MainTheorem_StableUniversalApproximation}}]
\label{remark:ParamterEstimates__RefinedVersion}
The proof of Proposition~\ref{proposition:ParamterEstimates__RefinedVersion} directly follows from the proof of Theorem~\ref{thm:MainTheorem_StableUniversalApproximation}, upon noting that all parameters except for the ones between the input layer and the first hidden layer and the ones between the output layer and the last hidden layer come from $\{0,\pm 1/2\}$, and all parameters except for the ones between the output layer and the last hidden layer are independent of the ``sample values'' $\{f(\mathbf{x}_i)\}_{i=1}^{N(n)}$; i.e.\ the value of the encoder $\mathcal{E}_n(f)$. 
\end{remark}

\subsection{{Proof of Secondary Approximation Results}}
\label{s:Proofs__ss:Secondary}

\begin{proof}[Proof of Proposition~\ref{prop:MainTheorem_AdjustableSize}]
In the proof of Theorem~\ref{thm:MainTheorem_StableUniversalApproximation}, we constructed final network in \eqref{eq:phi_construction}. Here, we only need to distribute the $(n+1)^d$ sub-networks $\mathbf{T}_{\mathbf{y}_1}, \cdots, \mathbf{T}_{\mathbf{y}_{(n+1)^d}}$ to $L$ different layers.

We relabel the $(n+1)^d$ sub-network and function value pairs $\big(\mathbf{T}_{\mathbf{y}_j}, f(\mathbf{y}_j)\big)$ for $j=1,\cdots,(n+1)^d$ as $\big(\mathbf{T}_1^{(i)}, f_1^{(i)}\big), \cdots, \big(\mathbf{T}_{m_i}^{(i)}, f_{m_i}^{(i)}\big)$ for $i=1,\cdots,L$. To simplify notations, for $k=1,\cdots,L$, let
\[
    T_k (\mathbf{x})
    \eqdef
    \sum_{i=1}^k
    \sum_{j=1}^{m_i}
    f_j^{(i)} \mathbf{T}_j^{(i)}(n\mathbf{x}),
    \quad
    \forall \mathbf{x} \in [0,1]^d,
\]
Then, $\Phi(\mathbf{x}) = T_L(\mathbf{x})$. Now, we can construct $\Phi$ as follows:
\begin{align*}
    \mathbf{x}
    \ &\Longrightarrow \
    \begin{bmatrix} 
    \mathbf{T}_1^{(1)}(n\mathbf{x})
    \\ 
    \vdots
    \\
    \mathbf{T}_{m_1}^{(1)}(n\mathbf{x})
    \\
    n\mathbf{x}
    \end{bmatrix}
    \ \Longrightarrow \
    \begin{bmatrix} 
    \mathbf{T}_1^{(2)}(n\mathbf{x})
    \\ 
    \vdots
    \\
    \mathbf{T}_{m_2}^{(2)}(n\mathbf{x})
    \\
    \sigma(\pm T_1 (\mathbf{x}))
    \\
    n\mathbf{x}
    \end{bmatrix}
    \ \Longrightarrow \
    \begin{bmatrix} 
    \mathbf{T}_1^{(3)}(n\mathbf{x})
    \\ 
    \vdots
    \\
    \mathbf{T}_{m_3}^{(3)}(n\mathbf{x})
    \\
    \sigma(\pm T_2 (\mathbf{x}))
    \\
    n\mathbf{x}
    \end{bmatrix}
    \ \Longrightarrow \
    \cdots
    \\
    \ &\Longrightarrow \
    \begin{bmatrix} 
    \mathbf{T}_1^{(L)}(n\mathbf{x})
    \\ 
    \vdots
    \\
    \mathbf{T}_{m_L}^{(L)}(n\mathbf{x})
    \\
    \sigma(\pm T_{L-1} (\mathbf{x}))
    \\
    n\mathbf{x}
    \end{bmatrix}
    \ \Longrightarrow \
    {\color{persiangreen}
    T_L(\mathbf{x})
    }
    \ = \ 
    \Phi(\mathbf{x})
\end{align*}
where we used the fact that $\sigma(n\mathbf{x}) = n\mathbf{x}$ for any $\mathbf{x}\in[0,1]^d$. We know from \eqref{eq:hat_width_bound} and \eqref{eq:hat_depth_bound} that the sub-networks $\mathbf{T}_j^{(i)}$ has width $W \le 8d$, depth $D \le \lceil \log_2{d} \rceil + 3$, and nonzero parameters $T \le 16d-3$. Then, $\Phi$ has width at most
\[
    \operatorname{width}(\Phi)
    \le
    W \max\{m_1,\cdots,m_L\} + 2 + d
    =
    8d \max\{m_1,\cdots,m_L\} + d + 2
\]
depth at most
\[
    \operatorname{depth}(\Phi)
    \le
    L (D+1)
    =
    L (\lceil \log_2{d} \rceil + 4)
\]
and nonzero parameters at most
\begin{align*}
    \operatorname{par}(\Phi)
    &\le
    m_1 T + d
    +
    \sum_{i=2}^L
    (m_i T + 2m_{i-1} + 2 + d)
    +
    m_L + 2
    \\
    &\le
    (T+2) \sum_{i=1}^L m_i + L(d+2)
    \\
    &\le
    16d(n+1)^d + L(d+2)
    .
\end{align*}

\end{proof}

\begin{proof}[{Proof of Corollary~\ref{cor:main_StableUniversalApproximation___with_OptimalLipschitzExtrapolation}}]
Since the support of $\Phi$ in Theorem \ref{thm:MainTheorem_StableUniversalApproximation} is contained in $[-1/n,1+1/n]^d$ which is just slightly larger than $[0,1]^d$, the original construction of $\Phi$ could lead to a steep descent near the boundary of $[0,1]^d$. To resolve this issue, we avoid taking values of $\Phi$ outside $[0,1]^d$ by first projecting $\mathbb{R}^d$ to $[0,1]^d$ by applying the following 1D projection coordinate-wise: 
\begin{equation}
\label{eq:projection_l1}
    p(x) \eqdef \sigma(x) - \sigma(x-1)
    = 
    \begin{cases}
        0, & \text{if } x < 0  \\
        x, & \text{if } x \in [0,1]  \\
        1, & \text{if } x > 1  \\
    \end{cases},
    \quad \forall x \in \mathbb{R}
\end{equation}
For any $\mathbf{x},\mathbf{y}\in\mathbb{R}^d$, $p(\mathbf{x}) , p(\mathbf{y}) \in [0,1]^d$, $\|p(\mathbf{x}) - p(\mathbf{y})\|_1 \le \|\mathbf{x} - \mathbf{y}\|_1$, thus
\[
    | \Phi(p(\mathbf{x})) - \Phi(p(\mathbf{y})) |
    \le
    \omega( \| p(\mathbf{x}) - p(\mathbf{y}) \|_1 )
    =
    \bar{\omega}( \| p(\mathbf{x}) - p(\mathbf{y}) \|_1 )
    \le
    \bar{\omega}( \| \mathbf{x} - \mathbf{y} \|_1 )
\]
Therefore, $\bar{\omega}$ is a modulus of regularity of $\Phi\circ p$, whose restriction on $[0,1]^d$ is the same as $\Phi$. Thus, the function $\Phi\circ p$ satisfies our requirement, which is implemented by the following network: 
\begin{equation}
\label{eq:construction_of_Phi_global}
    \mathbf{x}
    \ \Longrightarrow \ 
    \begin{bmatrix} \sigma(\mathbf{x}) \\ \sigma(\mathbf{x}-1) \end{bmatrix}
    \ \Longrightarrow \ 
    \begin{bmatrix} 
        \Phi ( \sigma(\mathbf{x}) - \sigma(\mathbf{x}-1) )
    \end{bmatrix}
    \ = \
    \Phi ( p(\mathbf{x}) )
\end{equation}
which has width 
\[
    \operatorname{width}(\Phi\circ p)
    =
    \max\{ 2d, \operatorname{width}(\Phi) \}
    \le
    \max\{ 2d, 8d(n+1)^d \}
    =
    8d(n+1)^d
\]
and depth
\[
    \operatorname{depth}(\Phi\circ p)
    =
    \operatorname{depth}(\Phi) + 1
    \le
    \lceil \log_2{d} \rceil + 5
    .
\]
Finally, we tally the nonzero parameters in this network. From the proof of Theorem \ref{thm:MainTheorem_StableUniversalApproximation} (see the left-hand side of \eqref{eq:total_parameters}), we see that the network $\Phi$ has at most $(16d-2)(n+1)^d$ nonzero parameters (we omitted the $(-2)$ in the result to simplify the expression). Now, in the original construction of $\Phi$ (see \eqref{eq:hat_network_structure} and \eqref{eq:phi_construction}), there are $2d(n+1)^d$ neurons in its first hidden layer, each connecting to $m=1$ neuron in its input layer. Here, in the network \eqref{eq:construction_of_Phi_global} implementing $\Phi\circ p$, this number $m$ is doubled, resulting in an additional
\[
    W_1 
    \eqdef 
    (2m-m) \cdot 2d(n+1)^d 
    =
    2d(n+1)^d
\]
nonzero weights. For the biases, it is clear that only $B \eqdef d$ additional biases (namely $-1$) were introduced in the first hidden layer. Lastly, the only additional nonzero parameters in \eqref{eq:construction_of_Phi_global} are the weights between the input layer and the first hidden layer, which are $W_2 \eqdef 2d$ in total. Therefore, the number of nonzero parameters in \eqref{eq:construction_of_Phi_global} is at most 
\begin{align*}
    (16d-2)(n+1)^d + W_1 + B + W_2
    &=
    18d(n+1)^d - 2(n+1)^d + 3d
    \\
    &\le
    18d(n+1)^d - 2 \cdot 2^d + 3d
    \\
    &\le
    18d(n+1)^d.
\end{align*}
\end{proof}

\begin{proof}[{Proof of Corollary~\ref{cor:Extrapolation}}]
Fix $n\in \mathbb{N}_+$.
First note that the 
metric projection onto the unit cube in $\mathbb{R}^d$, given for each $\mathbf{x}\in \mathbb{R}^d$ by $
    \Pi(\mathbf{x}) \in \operatorname{argmin}_{\mathbf{z}\in 
    [0,1]^d
    }\, \|\mathbf{z}-\mathbf{x}\|_1
$ is well-defined since $[0,1]^d$ is closed and convex.  One can readily verify that $\Pi=p$, where $p$ is defined in~\eqref{eq:projection_l1}.

Now, by Corollary~\ref{cor:main_StableUniversalApproximation___with_OptimalLipschitzExtrapolation}, there exists a ReLU MLP $\Phi:\mathbb{R}^d\to \mathbb{R}$ which is uniformly continuous with modulus of regularity $\bar{\omega}$, width at most $8d(n+1)^d$, and depth at most $\lceil \log_2{d} \rceil + 5$, satisfying the uniform estimate
\begin{equation}
\label{eq:Proof_SOAMLP}
        \|f-\Phi\|_{L^{\infty}([0,1]^d)} 
    \le 
        \bar{\omega}\left(\frac{d}{2n}\right)
.
\end{equation}   
Consequentially,~\eqref{eq:Proof_SOAMLP} implies that: for each $\mathbf{x}\in \mathbb{R}^d$ the following holds
\allowdisplaybreaks
\begin{align}
\nonumber
        |f(\mathbf{x})-\Phi(\mathbf{x})|
    & \le 
        |f(p(\mathbf{x}))-\Phi(p(\mathbf{x}))|
        +
        |f(p(\mathbf{x}))-f(\mathbf{x})|
        +
        |\Phi(p(\mathbf{x}))-\Phi(\mathbf{x})|
\\
\label{eq:projection_Phi_definition}
& =
        |f(p(\mathbf{x}))-\Phi(p(\mathbf{x}))|
        +
        |f(p(\mathbf{x}))-f(\mathbf{x})|
        +
        |\Phi(p(\mathbf{x}))-\Phi(p(\mathbf{x}))|
\\
\nonumber
& = 
        |f(p(\mathbf{x}))-\Phi(p(\mathbf{x}))|
        +
        |f(p(\mathbf{x}))-f(\mathbf{x})|
\\
\nonumber
& \le 
        |f(p(\mathbf{x}))-\Phi(p(\mathbf{x}))|
        +
        \bar{\omega}(\|p(\mathbf{x})-\mathbf{x})\|_1)
\\
\nonumber
&= 
        |f(p(\mathbf{x}))-\Phi(p(\mathbf{x}))|
        +
        \bar{\omega}(\|p(\mathbf{x})-\mathbf{x})\|_1)
\\
\nonumber
& \le 
        \sup_{\mathbf{u}\in [0,1]^d}\,
        |f(\mathbf{u})-\Phi(\mathbf{u})|
        +
        \bar{\omega}(\|p(\mathbf{x})-\mathbf{x})\|_1)
\\
\label{eq:express_as_distance__from_projection}
& \le 
        \bar{\omega}\left(\frac{d}{2n}\right)
        +
        \bar{\omega}(\|p(\mathbf{x})-\mathbf{x})\|_1)
,
\end{align}
where~\eqref{eq:projection_Phi_definition} followed by the proof of by Corollary~\ref{cor:main_StableUniversalApproximation___with_OptimalLipschitzExtrapolation}, specifically in~\eqref{eq:construction_of_Phi_global}, we see that the first layer of $\Phi$ is given by pre-composition with $p$.

Since $[0,1]^d$ is compact then the minimal distance to any given point is realized. Also, since $p$ is the metric projection of $\mathbb{R}^d$ onto $[0,1]^d$, then 
\[
    d_{[0,1]^d}(\mathbf{x}) = \|p(\mathbf{x})-\mathbf{x}\|_1
\]
for every $\mathbf{x}\in \mathbb{R}^d$.  Consequentially, the right-hand side of~\eqref{eq:express_as_distance__from_projection} can be re-expressed as
\[
        \bar{\omega}\left(\frac{d}{2n}\right)
        +
        \bar{\omega}(\|p(\mathbf{x})-\mathbf{x})\|_1)
    =
        \bar{\omega}\left(\frac{d}{2n}\right)
        +
        \bar{\omega}(\operatorname{dist}_{[0,1]^d}(\mathbf{x}))
,
\]
which concludes our proof.
\end{proof}

\subsubsection{{Proof of Statistical Result - Theorem~\ref{thm:RademacherBound}}}
\label{s:learning_theory_proofs}
Let $\mathcal{F}$ be a non-empty set of functions from $\mathbb{R}^d$ to $\mathbb{R}$, $Z_1,\dots,Z_N$ be random vectors in $\mathbb{R}^d$, and let $\mathcal{Z}\eqdef (Z_n)_{n=1}^N$.
Its \textit{empirical Rademacher complexity}, denoted by $\mathfrak{R}_{\mathcal{Z}}(\mathcal{F})$, is the (random) quantity
\begin{equation}
\label{eq:RademacherComplexityDefinition}
        \mathfrak{R}_{\mathcal{Z}}(\mathcal{F})
    \eqdef 
        \frac1{N}
        \,
            \mathbb{E}_{\sigma}\Biggl[
                \sup_{g\in \mathcal{F}}\,
                \sum_{n=1}^N\,
                    \sigma_n\,g(Z_n)
            \Biggr]
\end{equation}
where $\mathbf{\sigma}=(\sigma_n)_{n=1}^N$ and $\sigma_1,\dots,\sigma_N$ are i.i.d.\ Rademacher random variables; i.e.\ $\mathbb{P}(\sigma_1=1)=\mathbb{P}(\sigma_1=-1)=1/2$.
\begin{proof}[{Proof of Theorem~\ref{thm:RademacherBound}}]
Let $\delta_1,\delta_2\in [0,1)$, to be fixed retroactively.  Let $\mathcal{Z}\eqdef \{(X_n,Y_n)\}_{n=1}^N$.
By~\citep[Theorems 8 and 12]{bartlett2002rademacher}, the following holds with probability at-least $1-\delta_1$
\begin{equation}
\label{eq:empirical_process_round_A}
    \sup_{\Phi\in \mathcal{NN}_{\Delta,W}^{L,n}} 
        \big|\mathcal{R}(\Phi)-\mathcal{R}_N(\Phi)\big|
    \le 
        2\,L_{\ell}\,\mathfrak{R}_{\mathcal{X}}(\mathcal{NN}_{\Delta,W}^{L,n})
        +
        \frac{
            \sqrt{8\log(2/\delta_1)}
        }{
            \sqrt{N}
        }
.
\end{equation}
Since $\mathcal{NN}_{\Delta,W}^{L,n}\subset \mathcal{NN}_{\Delta,W} \cap \operatorname{Lip}(\mathbb{R}^d,[0,1],L)$ then, by the definition of the empirical Rademacher complexity of a class in~\eqref{eq:RademacherComplexityDefinition}, we have that
\begin{equation}
\label{eq:EmpRadComplexity_MinBound}
    \mathfrak{R}_{\mathcal{Z}}(\mathcal{NN}_{\Delta,W}^{L,n})
\le 
    \min\biggl\{ 
        \underbrace{
            \mathfrak{R}_{\mathcal{Z}}\big(\mathcal{NN}_{\Delta,W}\big)
        }_{\term{t:Rad_ParameterSpace}}
    ,
        \underbrace{
            \mathfrak{R}_{\mathcal{Z}}\big(\operatorname{Lip}(\mathbb{R}^d,[0,1],L)\big)
        }_{\term{t:Rad_FunctionSpace}}
    \biggr\}
.
\end{equation}
\textbf{Step 1 - Bounding Term~\eqref{t:Rad_ParameterSpace}:}\hfill\\
Let $\mathbf{X}$ be the $N\times d$ random matrix with rows given by $X_1,\dots,X_N$.
Term~\eqref{t:Rad_ParameterSpace} can be bounded using the empirical Rademacher complexity bound derived in~\citep[Theorem 3.3]{bartlett2017spectrally} which implies that
\begin{equation}
\label{eq:bound__t:Rad_ParameterSpace}
    \eqref{t:Rad_ParameterSpace} 
\le 
    \frac{4}{N^{3/2}}
    +
    \frac{26\log(N)\log(2W)}{N}
    \Biggl(
        \|\mathbf{X}\|_F \prod_{l=1}^{\Delta}\, s_l \, \Big(\sum_{l=1}^{\Delta}\, \big(\frac{b_l}{s_l}\big)^{2/3}\Big)^{3/2}
    \Biggr)
\end{equation}
where $\|\cdot\|_F$ and $\|\cdot\|_2$ respectively denote the Fr\"{o}benius and the spectral matrix norms, and $s_l$ denotes the maximum spectral norm of the $l^{th}$ matrix $\mathbf{W}^{(l)}$ for any ReLU MLP $\Phi$ with representation~\eqref{eq:MLPRepresentation}, and $b_l$ denotes the maximum $\|\cdot\|_{1,2}$ matrix norm thereof (defined by the sum of the Euclidean norm of its columns\footnote{The authors take a transpose in their main result; since this matrix norm is typically defined by summing over the Euclidean norms of the rows of a matrix.}).  

Recall that, if $\Phi\in \mathcal{NN}_{\Delta,W}^{L,n}$ then by Proposition~\ref{proposition:ParamterEstimates__RefinedVersion} the entries of all its weight matrices have absolute value at-most $1/2$ if $l\in \{2,\dots,\Delta-1\}$ and $\max\{n,1\}$ otherwise.  These observations, together with the elementary matrix norm inequalities (all found on~\citep[page 56]{GolubVanLoan_MatrixComputationsBook_1989}), and the fact that each $\Phi\in\mathcal{NN}_{\Delta,W}$ has width at-most $W$, and that by enlarging the class a bit, we may assume that 
$s_1=s_{\Delta}=n\,W^{3/2}$ and that $b_l=s_l=\frac{1}{2}\,W^{3/2}$ for $l=1,\dots,\Delta$.  Thus, the right-hand side of~\eqref{eq:bound__t:Rad_ParameterSpace} simplifies to
\begin{equation}
\label{eq:bound__t:Rad_ParameterSpaceII_a}
    \eqref{t:Rad_ParameterSpace} 
\le 
    \frac{4}{N^{3/2}}
    +
    \frac{26\log(N)\log(2W)}{
        2^{\max\{0,\Delta-2\}}
    N
    }
    \|\mathbf{X}\|_F
    \,W^{3\Delta/2}
.
\end{equation} 
Using the standard matrix-norm bounds $\|\cdot\|_F\le \sqrt{N}\|\cdot\|_{\operatorname{op}}$ (where $\|\cdot\|_{\operatorname{op}}$ denotes the operator norm) and the fact that the operator norm of a matrix $\mathbf{A}$ equals to its largest singular value $\sigma_{max}(\mathbf{A})$, we reduce the right-hand side of~\eqref{eq:bound__t:Rad_ParameterSpaceII_a} to
\begin{equation}
\label{eq:bound__t:Rad_ParameterSpaceII_b}
\begin{aligned}
    \eqref{t:Rad_ParameterSpace} 
& \le 
    \frac{4}{N^{3/2}}
    +
    \frac{26\log(N)\log(2W)}{
        2^{\max(\{0,\Delta-2\}}
    N
    }
    \sqrt{N}\|\mathbf{X}\|_{\operatorname{op}}
    \,W^{3\Delta/2}
\\
& =
    \frac{4}{N^{3/2}}
    +
    \frac{26\log(N)\log(2W)}{
        2^{\max(\{0,\Delta-2\}}
        \sqrt{N}
    }
    \sigma_{max}(\mathbf{X})
    \,W^{3\Delta/2}
.
\end{aligned}
\end{equation} 
Define $\bar{\mathbf{X}}\eqdef \sqrt{N}\cdot \mathbf{X}$.
By the min-max characterization of singular values, see e.g.~\citep[Theorem 4.2.11]{HornJohnstone_2013_matrixanalysisBook}, we have that $\sigma_{max}(\bar{\mathbf{X}})
=
\sqrt{N}\sigma_{max}(\mathbf{X})
$.  Observe also that $\bar{\mathbf{X}}$ is isotropic since
\[
\mathbb{E}\big[\bar{\mathbf{X}}\bar{\mathbf{X}}^{\top}\big] 
= 
\mathbb{E}\big[(\sqrt{N})\mathbf{X}(\sqrt{N}\mathbf{X})^{\top}\big] 
=
N
\mathbb{E}\big[\mathbf{X}\mathbf{X}^{\top}]
=
N
\frac1{N}\mathbf{I}_d
=
\mathbf{I}_d
.
\]
Therefore, a consequence to a version of Gordon's Theorem given in~\citep[Theorem 4.6.1]{Vershynin_HighDimensionalProbBook_2018} applies to the random matrix $\bar{\mathbf{X}}$ since it has independent, sub-Gaussian, and isotropic rows; thus: with probability at-least $1-\delta_2$ we have that 
\begin{equation*}
\sigma_{max}(\bar{\mathbf{X}})\le \sqrt{d} + C\, (\sqrt{N}+ \sqrt{\ln(2/\delta_2)})
.
\end{equation*}
Therefore, with probability at-least $1-\delta_2$, the maximal singular value of $\mathbf{X}$ is bounded above by
\footnote{Remark that: without the correct scaling of $\frac{1}{N}\,I_d$ the covariance of each $X_1$ the third term on the right-hand side of~\eqref{eq:Gordon_HPBound} would not tend to $0$ as $N$ tends to infinity.}
\begin{equation}
\label{eq:Gordon_HPBound}
    \sigma_{max}(\mathbf{X})
\le 
    \frac1{\sqrt{N}}\big(
        \sqrt{d} + C\, (\sqrt{N}+ \sqrt{\ln(2/\delta_2)})
    \big)
=
    \frac{\sqrt{d}}{\sqrt{N}}
    +
    C\frac{
        \sqrt{\ln(2/\delta_2)}
    }{\sqrt{N}}
    +
    C
.
\end{equation}
Incorporating~\eqref{eq:Gordon_HPBound} into the right-hand side of~\eqref{eq:bound__t:Rad_ParameterSpaceII_b}, implies that: the following holds with probability at least $1-\delta_2$
\begin{equation}
\label{eq:bound__t:Rad_ParameterSpaceII}
\begin{aligned}
    \eqref{t:Rad_ParameterSpace} 
\le &
    \frac{4}{N^{3/2}}
    +
    \frac{26\log(N)\log(2W)}{
        2^{\max(\{0,\Delta-2\}}
        \sqrt{N}
    }
    \sigma_{max}(\mathbf{X})
    \,W^{3\Delta/2}
\\
\le &
    \frac{4}{
    N^{3/2}}
    +
    \frac{
        26\log(N)\log(2W)
        \,
        W^{3\Delta/2}
    }{
        2^{\max(\{0,\Delta-2\}}
    N
    }
    \,
    \big(
        \sqrt{d}
        +
        C\,\sqrt{N}
        +
        C\,\sqrt{\ln(2/\delta_2)}
    \big)
.
\end{aligned}
\end{equation} 
Fix $\delta \in (0,1)$ and set $\delta_1\eqdef \delta_2\eqdef \delta/2$.  By a union bound together with~\eqref{eq:bound__t:Rad_ParameterSpaceII} and~\eqref{eq:EmpRadComplexity_MinBound} we deduce that: the following holds with probability at-least $1-\delta$
\begin{equation}
\label{eq:Rademacherbound_NearCompletion}
\begin{aligned}
    \sup_{\Phi\in \mathcal{NN}_{\Delta,W}^{L,n}} 
        \big|\mathcal{R}(\Phi)-\mathcal{R}_N(\Phi)\big|
    &
    \le 
        2\,L_{\ell}\,\mathfrak{R}_{\mathcal{Z}}(\mathcal{NN}_{\Delta,W}^{L,n})
        +
        \frac{
            \sqrt{8\log(4/\delta)}
        }{
            \sqrt{N}
        }
\\
     \mathfrak{R}_{\mathcal{Z}}(\mathcal{NN}_{\Delta,W}^{L,n})
& 
\le 
    \min\biggl\{
            \frac{4}{N^{3/2}}
            +
            \frac{
                26\log(N)\log(2W)
                \,
                W^{3\Delta/2}
            }{
                2^{\max(\{0,\Delta-2\}}
                N
            }
            \,
            \big(
                \sqrt{d}
                +
                C\,\sqrt{N}
                +
                C\,\sqrt{\ln(4/\delta)}
            \big)
    ,
    \\
    &
    \qquad \quad \,
        \underbrace{
            \mathfrak{R}_{\mathcal{Z}}\big(\operatorname{Lip}(\mathbb{R}^d,[0,1],L)\big)
        }_{\term{t:Rad_FunctionSpace_v2}}
    \biggr\}
.
\end{aligned}
\end{equation}

\noindent \textbf{Step 2 - Bounding Term~\eqref{t:Rad_FunctionSpace}:}\hfill\\
By~\cite[Lemma 25]{HouKratsios_JMLR_2023}, we have that
\begin{equation}
\label{eq:Rademacherbound_II}
\eqref{t:Rad_FunctionSpace_v2}
\le 
    C_d\,\frac{L^{\frac{d}{d+3}}}{N^{\frac{1}{d+3}}}
= 
 \left(
    \big(8(d+1)^2 16^d\big)^{\frac{1}{d+3}} + 4\sqrt{2} \frac{16^{\frac{d}{d+3}}}{(18(d+1))^{\frac{d+1}{d+3}}}
\right)
\,\frac{L^{\frac{d}{d+3}}}{N^{\frac{1}{d+3}}}
\end{equation}
where $C_d\eqdef \big(8(d+1)^2 16^d\big)^{\frac{1}{d+3}} + 4\sqrt{2} \frac{16^{\frac{d}{d+3}}}{(18(d+1))^{\frac{d+1}{d+3}}}$. 
Incorporating~\eqref{eq:Rademacherbound_II} into~\eqref{eq:Rademacherbound_NearCompletion} yields: the following holds with probability at-least $1-\delta$
\begin{equation*}
\begin{aligned}
&
    \sup_{\Phi\in \mathcal{NN}_{\Delta,W}^{L,n}} 
        \big|\mathcal{R}(\Phi)-\mathcal{R}_N(\Phi)\big|
    \le 
        2\,L_{\ell}\,\mathfrak{R}_{\mathcal{Z}}(\mathcal{NN}_{\Delta,W}^{L,n})
        +
        \frac{
            \sqrt{8\log(4/\delta)}
        }{
            \sqrt{N}
        }
\\
&
     \mathfrak{R}_{\mathcal{Z}}(\mathcal{NN}_{\Delta,W}^{L,n})
\le 
    \min\biggl\{
            \frac{4}{N^{3/2}}
            +
            \frac{
                26\log(N)\log(2W)
                \,
                W^{3\Delta/2}
            }{
                2^{\max(\{0,\Delta-2\}}
                N
            }
            \,
            \big(
                \sqrt{d}
                +
                C\,\sqrt{N}
                +
                C\,\sqrt{\ln(4/\delta)}
            \big)
    ,
            C_d\,\frac{L^{\frac{d}{d+3}}}{N^{\frac{1}{d+3}}}
    \biggr\}
.
\end{aligned}
\end{equation*}
\end{proof}

\subsection{Proof of Neural McShane Theorem}
\begin{proof}[{Proof of Corollary~\ref{cor:McShane}}]
If $A$ is empty, there is nothing to show.  We thus assume that $A$ is non-empty.
By the version of McShane's extension theorem in~\cite[Theorem 3]{beer2020mcshane} there exists an $\omega$-uniformly continuous $F:[0,1]^d\to \mathbb{R}$ extending $f$; i.e.\ $F(\mathbf{x})=f(\mathbf{x})$ for each $\mathbf{x}\in A$ and $F$ is $\omega$-uniformly continuous.  
Since $A$ is finite (and non-empty), and $2^{-n}\mathbb{Z}^d\subseteq 2^{-n-1}\mathbb{Z}^d$ then, there exists some $n\in \mathbb{N}_+$ such that $A\subseteq 
\{i/n:i=0,\dots,n\}^d$.  Applying Theorem~\ref{thm:MainTheorem_StableUniversalApproximation} to $F$ we deduce that
\[
    F(\mathbf{x})=\Phi(\mathbf{x}) 
        \,\,
        \mbox{ for all } \mathbf{x}\in
        \left\{ \frac{0}{n}, \frac{1}{n}, \cdots, \frac{n}{n} \right\}^d
.
\]
Since $F(\mathbf{x})=f(\mathbf{x})$ for all $\mathbf{x}\in A$ and $A\subseteq \{i/n:i=0,\dots,n\}^d$ we obtain our conclusion.
\end{proof}

\subsection{Proof of Results in the Discussion Section}
\label{s:Proofs__ss:Discussion}

\begin{proof}[{Proof of Theorem~\ref{thm:KuhnUnique}}]
We will prove the following stronger statement: in the context of Lemma \ref{lem:CPWLApprox_wRegControl}, for $n=1$, the Kuhn triangulation is the only triangulation (up to reflections) such that for all Lipschitz functions $f$ with modulus of regularity $\omega$ which is a linear function, $\omega$ is also a modulus of regularity of $\Phi$. By a reflection, we mean a transformation of the form $x_i \mapsto 1-x_i$, for some $i=1,\cdots,d$. In other words, we will only consider Lipschitz functions and linear moduli of regularity.

Note that, all reflections are involutions and they are commutative with each other, so we will refer to a transformation as a reflection of the axes $x_{i_1},\cdots,x_{i_m}$ if it is a composition of the reflections $x_j \mapsto 1-x_j$ for $j=i_1,\cdots,i_m$, in any order; in particular, we say that it reflects the $x_i$-axis if $i\in\{i_1,\cdots,i_m\}$.

For $d=1,2$, the Kuhn triangulation is the only triangulation of $[0,1]^d$ up to reflections, so we will assume that $d\ge 3$ throughout the rest of the proof. 

Let $S \eqdef \{S_k\}_{k=1}^m$ be any triangulation of $[0,1]^d$ that makes Lemma \ref{lem:CPWLApprox_wRegControl} true. We will prove that $S$ is some reflected version of the Kuhn triangulation. 

For $i=1,\cdots,d$ and $\delta=0,1$, let 
$$F_{i,\delta} \eqdef [0,1]^d \cap \{(x_1,\cdots,x_d)\in\mathbb{R}^d : x_i = \delta\}$$ 
be some face of the unit cube $[0,1]^d$, and 
$$S_{i,\delta}  \eqdef  \{S_k \cap F_{i,\delta} : 
S_k \cap F_{i,\delta} \text{ is a } (d-1) \text{-dimensional simplex},
k=1,\cdots,m\}$$
be the ``restricted triangulation'' of $S$ on $F_{i,\delta}$. For any such $(d-1)$-dimensional face $F_{i,\delta}$, we show that the finite collection $S_{i,\delta}$ of $(d-1)$-dimensional simplices is a triangulation of $F_{i,\delta}$:
\begin{enumerate}
    \item $\cup S_{i,\delta} = F_{i,\delta}$: assume for contradiction that $\cup S_{i,\delta}$ is a proper subset of $F_{i,\delta}$. Denote $F_{i,\delta}^o$ as the interior of $F_{i,\delta}$, then $F_{i,\delta}^o \backslash (\cup S_{i,\delta})$ is open and non-empty, since otherwise $(\cup S_{i,\delta})\supseteq F_{i,\delta}^o$, which implies that $(\cup S_{i,\delta})\supseteq F_{i,\delta}$ as $\cup S_{i,\delta}$ is a closed set. Consider the Lebesgue measure on the $(d-1)$-dimensional hyperplane $x_i=\delta$ that contains $F_{i,\delta}$. The measure of $F_{i,\delta}^o \backslash (\cup S_{i,\delta})$ is positive as it is non-empty and open. If $S_k \cap F_{i,\delta}$ is not $(d-1)$-dimensional, then it is at most $(d-2)$-dimensional, which has measure $0$. There are only a finite number of these $(d-2)$-dimensional faces with $0$ measure, so their total measure is $0$, which cannot fill up $F_{i,\delta}^o \backslash (\cup S_{i,\delta})$ as it has positive measure, so we get a contradiction as desired;
    \item The set of vertices of the simplices in $S_{i,\delta}$ is $\{0,1\}^d \cap F_{i,\delta}$, which is the set of vertices of $F_{i,\delta}$;
    \item The interiors of the simplices in $S_{i,\delta}$ are pairwise disjoint: without loss of generality, consider the case $i=1$ and $\delta=0$. Assume for contradiction that $\mathbf{z}$ is contained in the interiors of $S_p$ and $S_q$, for two different simplices in $S_{1,0}$. Assume $S_p=S_{p'}\cap F_{1,0}$ for some $S_{p'}\in S$, then compared to $S_p$, $S_{p'}$ has an additional vertex in the ``upper face'' $F_{1,1}$ (otherwise $S_{p'}$ would not be a simplex in $\mathbb{R}^d$ as all its vertices lie in the $(d-1)$-dimensional hyperplane $x_1=0$), so $\mathbf{z} + t \mathbf{e}_1$ is contained in the interior of $S_{p'}$, for $t>0$ small enough. Similarly, assume $S_q=S_{q'}\cap F_{1,0}$ for some $S_{q'}\in S$, then $\mathbf{z} + t \mathbf{e}_1$ is also contained in the interior of $S_{q'}$ for $t>0$ small enough, which is impossible as $S$ is a triangulation and its simplices has pairwise disjoint interiors, so we get a contradiction as desired.
\end{enumerate}
Therefore, $S_{i,\delta}$ is indeed a triangulation of $F_{i,\delta}$, which is a translated and reflected $(d-1)$-dimensional unit cube $[0,1]^{d-1}$. 

Assume for contradiction that $S_{i,\delta}$ is not a reflected version of the Kuhn triangulation. Without loss of generality, consider the case $i=1$ and $\delta=0$. By induction, there exists a function $f:F_{1,0}\rightarrow\mathbb{R}$ and a linear modulus of regularity $\omega_f$ of $f$ which is not a modulus of regularity of the continuous piecewise linear approximator $\Phi:F_{1,0}\rightarrow\mathbb{R}$ of $f$ with respect to the triangulation $S_{1,0}$. Consider the extension $f^{\ast}$ of $f$ to $[0,1]^d$ defined by:
$$f^{\ast}(x_1,x_2,x_3,\cdots,x_d)  \eqdef  f(0,x_2,x_3,\cdots,x_d), \quad
\forall (x_1,x_2,x_3,\cdots,x_d)\in [0,1]^d$$
Clearly $\omega_f$ is also a modulus of regularity of $f^{\ast}$ as it is non-decreasing. Let $\Phi^{\ast}$ be the continuous piecewise linear approximator of $f$ on $[0,1]^d$ with respect to the triangulation $S$, then the restriction of $\Phi^{\ast}$ on $F_{1,0}$ is $\Phi$, but this implies that $\omega_f$ is not a modulus of regularity of $\Phi^{\ast}$, as this is not so even just on the subset $F_{1,0}$ of $[0,1]^d$. Thus, we get a contradiction as desired.

Therefore, $S_{i,\delta}$ is indeed some reflected version of the Kuhn triangulation, for $i=1,\cdots,d$ and $\delta=0,1$. Since $F_{d,0}$ is the unit cube $[0,1]^{d-1}$ in the $(d-1)$-dimensional hyperplane $x_d=0$, without loss of generality, we may assume that $S_{d,0}$ is the ``original'' Kuhn triangulation of $F_{d,0}$ without any reflections. Then, by definition of the Kuhn triangulation, any simplex in $S_{d,0}$ has an edge connecting $\mathbf{u}_1  \eqdef  (0,0,\cdots,0,0)$ and $\mathbf{u}_2  \eqdef  (1,1,\cdots,1,0)$. Since $S_{d,0}$ is the ``restricted'' triangulation of $S$ on $F_{d,0}$, there is some corresponding simplex in $S$ that has an edge connecting $\mathbf{u}_1$ and $\mathbf{u}_2$. 

Next, we identify the ``separating hyperplanes'' (i.e. each simplex lies on the same side of it) in $S$ and show that they are the same as the ones given in Lemma \ref{lem:triangulation_of_Rd}. We first show that the hyperplanes $x_i-x_j=0$ for all $1 \le i<j \le d-1$ are separating hyperplanes of $S$. 

Assume for contradiction that there is some $1 \le i<j \le d-1$ such that the hyperplane $x_i-x_j=0$ is not a separating hyperplane. Without loss of generality, assume that $i=1,j=2$. Then, there is some simplex in $S$ that has two vertices $\mathbf{v}_1,\mathbf{v}_2$ on different sides of the hyperplane $x_1-x_2=0$, so it has an edge connecting $\mathbf{v}_1,\mathbf{v}_2$ as it is a simplex. Since $\mathbf{v}_1,\mathbf{v}_2\in\{0,1\}^d$ and they lie on different sides of the hyperplane $x_1-x_2=0$, $\mathbf{v}_1 = (0,1,\delta_3^{(1)},\cdots,\delta_d^{(1)})$, $\mathbf{v}_2 = (1,0,\delta_3^{(2)},\cdots,\delta_d^{(2)})$, where the $\delta_a^{(b)}$'s are either $0$ or $1$.

Now, consider the following ``counterexample'' function:
$$f(x_1,x_2,\cdots,x_d)  \eqdef  \sigma(x_1 - x_2) 
+ \sum_{i=3}^{d} x_i, \quad
\forall (x_1,x_2,\cdots,x_d)\in [0,1]^d$$
Clearly $f$ is Lipschitz continuous with Lipschitz constant $1$ (under $\ell^1$ norm of $\mathbb{R}^d$), so $\omega(x) \eqdef x$($\forall x\ge 0)$ is a modulus of regularity of $f$. However, for the continuous piecewise linear approximator $\Phi$ of $f$ on $[0,1]^d$ with respect to the triangulation $S$: consider the pair of points 
$$\mathbf{u}  \eqdef  \frac{\mathbf{u}_1 + \mathbf{u}_2}{2} = 
\left( \frac{1}{2},\frac{1}{2},0,0,\cdots,0,0 \right) $$
$$\mathbf{v}  \eqdef  \frac{\mathbf{v}_1 + \mathbf{v}_2}{2} = 
\left( 
    \frac{1}{2},\frac{1}{2},
    \frac{\delta_3^{(1)} + \delta_3^{(2)}}{2},
    \cdots,
    \frac{\delta_{d-1}^{(1)} + \delta_{d-1}^{(2)}} {2}, 
    \frac{\delta_{d}^{(1)}   + \delta_{d}^{(2)}}   {2}
\right)$$
Since there is some simplex in $S$ that has an edge connecting $\mathbf{u}_1$ and $\mathbf{u}_2$, 
$$\Phi ( \mathbf{u} )  
= \Phi \left( \frac{\mathbf{u}_1 + \mathbf{u}_2}{2} \right)  
= \frac{ \Phi(\mathbf{u}_1) + \Phi(\mathbf{u}_2) } {2}   
= \frac{ f(\mathbf{u}_1) + f(\mathbf{u}_2) } {2} 
= 0$$
Since there is some simplex in $S$ that has an edge connecting $\mathbf{v}_1$ and $\mathbf{v}_2$, 
$$\Phi ( \mathbf{v} )  
= \Phi \left( \frac{\mathbf{v}_1 + \mathbf{v}_2}{2} \right)  
= \frac{ \Phi(\mathbf{v}_1) + \Phi(\mathbf{v}_2) } {2}   
= \frac{ f(\mathbf{v}_1) + f(\mathbf{v}_2) } {2} 
= \frac{1}{2} + \sum_{i=3}^d 
  \frac{\delta_i^{(1)}  + \delta_i^{(2)}} {2}$$
Then, $\omega$ is not a modulus of regularity of $\Phi$ because 
$$|\Phi ( \mathbf{u} ) - \Phi ( \mathbf{v} )| 
= \frac{1}{2} + \sum_{i=3}^d 
  \frac{\delta_i^{(1)}  + \delta_i^{(2)}} {2}
> \sum_{i=3}^d 
  \frac{\delta_i^{(1)}  + \delta_i^{(2)}} {2}
= \omega( \| \mathbf{u} - \mathbf{v} \|_1 )$$
This contradicts our assumption that the continuous piecewise linear approximators of any Lipschitz continuous functions on $[0,1]^d$ with respect to the triangulation $S$ always preserve regularity under $\ell^1$ norm of $\mathbb{R}^d$. Thus, we get a contradiction as desired.

Therefore, the hyperplanes $x_i-x_j=0$ for all $1 \le i<j \le d-1$ are indeed separating hyperplanes of $S$.

By similar arguments as above, switching to other ``pivot axis'' than $x_d$, we can conclude that for any $1\le i\le j\le d$, either $x_i-x_j=0$ or $x_i+x_j=1$ is a separating hyperplane of $S$, taking possible reflections into account. We have shown above that for all $1\le i\le j<d$, the separating hyperplane is of the form $x_i-x_j=0$. Thus, we remain to show that all remaining hyperplanes either all have the form $x_i-x_d=0$ or all have the form $x_i+x_d=1$ for $1\le i\le d-1$, and then the result follows from Lemma \ref{lem:triangulation_of_Rd}.

Assume for contradiction that there are some distinct $i,j$ with $1\le i,j\le d-1$ such that both $x_i-x_d=0$ and $x_j+x_d=1$ are separating hyperplanes of $S$. Without loss of generality, assume that $i=1,j=2$. Then, consider the following $d$ separating hyperplanes of $S$:
$$x_1 - x_d = 0, \quad 
x_2 + x_d = 1, \quad
x_1 - x_2 = 0, \quad
x_k = 0 \text{ for } 3\le k\le d-1 $$
They intersect at a single point 
$\left( \frac{1}{2}, \frac{1}{2}, 0,0,\cdots,0, \frac{1}{2} \right)$, which must be a vertex of some simplex in $S$, but this is impossible since it is not a lattice point. Therefore, we get a contradiction as desired. 
\end{proof}

\begin{proof}[{Proof of Theorem~\ref{thm:main_SOTA_Benchmark}}]
The proof is a mild modification of the one in~\cite{ShenYangZhang_JMPA_OptApprx_ReLU}, 
which approximates the target function by assigning constant values (sample values) on some pairwise disjoint cubes inside $[0,1]^d$, and the rest of $[0,1]^d$ is called ``trifling regions'' where errors are controlled by making these regions extremely small and taking the median of $3^d$ such MLPs with different trifling regions. The problem with this approach is that the Lipschitz constant explodes in these extremely small trifling regions. Here, we partially resolve this issue by taking the median of only $2d+1$ of these MLPs, thus enabling these trifling regions to become as large as possible, which leads to a lower Lipschitz constant. 

For the case $d=1$: Let $x_i=\frac{i}{n^2}$ for $i=0,1,\cdots,n^2$. Since
\[
    \left\lfloor
        \frac{2n+2}{4}
    \right\rfloor
    =
    \left\lfloor
        \frac{n+1}{2}
    \right\rfloor
    \ge
    \frac{n+1}{2} - \frac{1}{2}
    =
    \frac{n}{2}
\]
then by Proposition \ref{prop:fit_a_network_with_two_hidden_layers} (taking $M=2n+1,N=\left\lfloor \frac{2n+2}{4} \right\rfloor$), given the set of no more than $MN$ samples $\{(x_i,f(x_i)): i=0,1,\cdots,n^2 \}$, there exists $\Phi\in\mathcal{NN}(\#\text{input}=1;\text{widthvec}=[2n+1,2n])$ such that $\Phi(x_i)=f(x_i)$ for $i=0,1,\cdots,n^2$, and $\Phi$ is linear on $[x_i,x_{i+1}]$ for $i=0,1,\cdots,n^2-1$. Then, the result follows from Lemma \ref{lem:continuous_piecewise_linear_approximators_preserve_regularity}.

Therefore, in the rest of this proof, we will assume that $d\ge 2$. We will also assume that $f$ is non-negative, since otherwise we can first construct $\Phi$ for the non-negative function $f-\min{f}$ and then add back $\min{f}$ (which exists since $f$ is a continuous function on a compact domain) in the output layer. 

For $k=1,2,\cdots,2d+1$ we define some modifications of these ``trifling region'' in~\cite{ShenYangZhang_JMPA_OptApprx_ReLU}:
\begin{equation}
\label{eq:defintion_of_trifling_region}
    \Omega_k \eqdef \bigcup_{i=1}^d \left\{ 
    \mathbf{x}=(x_1,\cdots,x_d)\in[0,1]^d: x_i\in\bigcup_{j=0}^{n-1} \left(
    \frac{1}{n} \left( \frac{k-1}{2d+1}+j \right) , 
    \frac{1}{n} \left( \frac{k}{2d+1}+j \right) 
    \right) 
    \right\}
\end{equation}
$\Omega_k$ is the union of $nd$ ``thick hyperplanes'', where for each of the $d$ axes, there are $n$ of these parallel and equispaced ``thick hyperplanes'' that are perpendicular to it. Then, $\Omega_k$ separates $[0,1]^d$ into $n^d$ (if $k=1$ or $2d+1$) or $(n+1)^d$ (if $2\le k\le 2d$) non-overlaping equispaced cuboids:
\begin{equation}
\label{eq:defintion_of_NON_trifling_region}
Q_k \eqdef [0,1]^d\backslash\Omega_k=\left[ 
[0,1] 
\left\backslash 
\bigcup_{j=0}^{n-1} \left(
\frac{1}{n} \left( \frac{k-1}{2d+1}+j \right) , 
\frac{1}{n} \left( \frac{k}{2d+1}+j \right) 
\right)
\right.
\right]^d
\end{equation}
Most of these cuboids are $d$-dimensional cubes, except for the ones that touches some face of $[0,1]^d$ for $2\le k\le 2d$. See Figure \ref{fig:Omega_12345} for an illustration of $\Omega_1,\cdots,\Omega_5$ and $Q_1,\cdots,Q_5$ for the case $n=d=2$.

\begin{figure}[!htbp]
\centering

\includegraphics[width=0.25\linewidth]{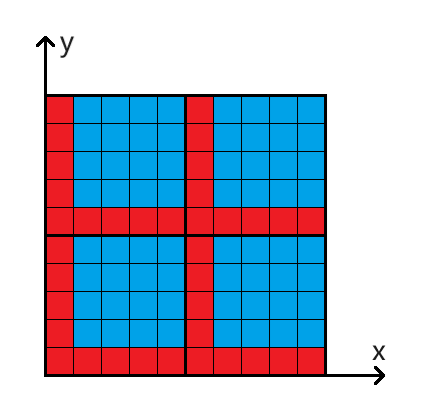}
\includegraphics[width=0.25\linewidth]{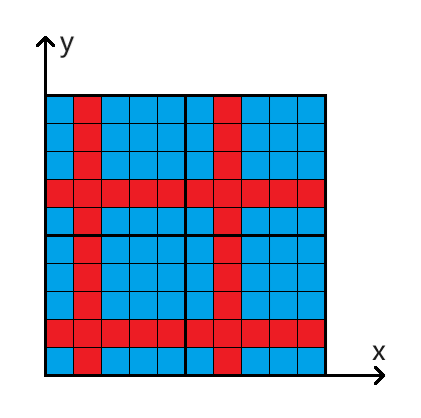}
\includegraphics[width=0.25\linewidth]{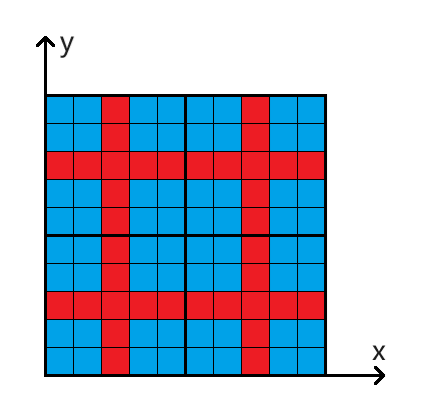}

\hfill\\

\includegraphics[width=0.25\linewidth]{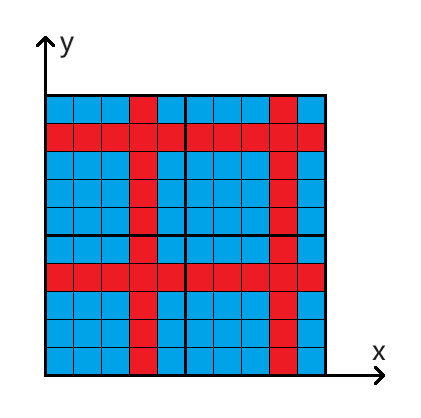}
\includegraphics[width=0.25\linewidth]{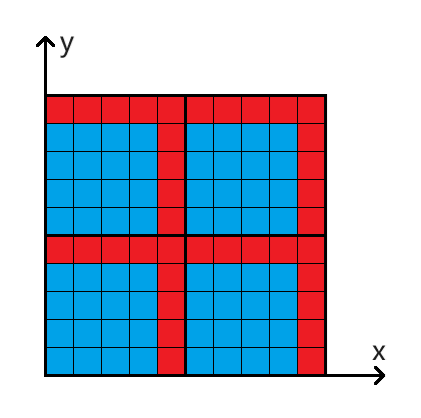}

\caption{An illustration of the trifling regions $\Omega_1,\cdots,\Omega_5$ (colored in red) and $Q_1,\cdots,Q_5$ (colored in blue) for the case $n=d=2$. Each of the smallest cubes appears in blue regions (where approximation errors are controlled) for at least $3$ times. By Lemma \ref{lem:median_estimators_have_small_errors}, the errors of their medians are also controlled. }
\label{fig:Omega_12345}
\end{figure}

\textbf{Step 1}: \textbf{
Construct the projection maps $\pi_k: [0,1]^d \xrightarrow[]{} \mathbb{R}$ for each $Q_k$
}

Fix $2\le k\le 2d$. The cases $k=1$ or $2d+1$ can be treated similarly and require less neurons, but we will assume that they require the same number of neurons as other cases for convenience. We will define $\pi_k$ in such a way that it is constant on each cuboid of $Q_k$, and different cuboids correspond to different constants. Notice that
$$[0,1] \left\backslash 
\bigcup_{j=0}^{n-1} \left(
\frac{1}{n} \left( \frac{k-1}{2d+1}+j \right) , 
\frac{1}{n} \left( \frac{k}{2d+1}+j \right) 
\right)
\right.$$
is the union of $n+1$ equi-spaced closed intervals, so we define a map $p_k:[0,1] \xrightarrow[]{} \mathbb{R}$ which maps the first of them (counting from the left) to $0$, the second of them to $1$,\dots, the last one of them to $n$, and let $p_k$ be linear on each of the remaining intervals such that it is continuous. Then, $p_k$ has $2n+2$ break points (including $0$ and $1$), so by \citep[Lemma 2.1]{shen2019nonlinear}, $p_k$ can be implemented by a ReLU MLP with width $2n+1$ and depth $1$. Then, for $\mathbf{x}=(x_1,\cdots,x_d)\in\mathbb{R}^d$, we define
$$\pi_k(\mathbf{x})=\sum_{i=1}^d (n+1)^{i-1}p_k(x_i)$$
which can be seen as the number $(\overline{p_k(x_d)p_k(x_{d-1})\cdots p_k(x_1)})$ in base $n+1$, so it is easy to see that $\pi_k$ indeed maps each of the $(n+1)^d$ cuboids to different constants, namely $0,1,2,\cdots,(n+1)^d-1$. Then, we implement $\pi_k$ as follows:
$$\begin{bmatrix} x_1 \\ x_2 \\ \vdots \\ x_d \end{bmatrix}
\ \Longrightarrow \
\begin{bmatrix} p_k(x_1) \\ p_k(x_2) \\ \vdots \\ p_k(x_d) \end{bmatrix}
\ \Longrightarrow \
\sum_{i=1}^d (n+1)^{i-1}p_k(x_i)$$
so $\pi_k$ has width $(2n+1)d$ and depth $3$ (counting the output layer, which is still a hidden layer of the final network).

\textbf{Step 2}: \textbf{
Construct a $1$-dimensional memorizer $\phi_k:\mathbb{R} \rightarrow \mathbb{R}$ for each $Q_k$
}

Fix $2\le k\le 2d$. As in Step 1, the cases $k=1$ or $2d+1$ can be treated similarly and require less neurons, but we will assume that they require the same number of neurons as other cases for convenience.

For $j=0,1,2,\cdots,(n+1)^d-1$, let $y_j$ be the value of $f$ at the center of the cuboid that is mapped to $j$ under $\pi_k$. Note that the range of $\pi_k$ is $[0,(n+1)^d-1]$, so we don't need to care about the behavior of our memorizer outside this interval.

Let $\{(j,y_j):j=0,1,2,\cdots,(n+1)^d-1 \}$ be a sample set of size $(n+1)^d$ for our memorizer. Since  
\[
    \left\lfloor 
        \frac{2 \lceil (n+1)^{d/2} \rceil + 2}{4} 
    \right\rfloor
    =
    \left\lfloor 
        \frac{ \lceil (n+1)^{d/2} \rceil + 1}{2} 
    \right\rfloor
    \ge
    \frac{ \lceil (n+1)^{d/2} \rceil + 1}{2} - \frac{1}{2}
    \ge
    \frac{ (n+1)^{d/2}}{2}
\]
then by Proposition \ref{prop:fit_a_network_with_two_hidden_layers} (taking $M=2 \lceil (n+1)^{d/2} \rceil$, 
$N= \big\lfloor 
        \frac{2 \lceil (n+1)^{d/2} \rceil + 2}{4} 
    \big\rfloor$),
there is a ReLU MLP $\phi_k$ with width $2\lceil (n+1)^{d/2} \rceil$ and depth 3 (counting the output layer, which is still a hidden layer of the final network) that memorizes this sample set, and $\phi_k$ is linear on $[j,j+1]$ for $j=0,1,\cdots,n^d-2$.

\textbf{Step 3}: \textbf{
Combine Step 1, Step 2 and Lemma \ref{lem:relu_for_computing_median} to construct the final network $\Phi$
}

We denote the ReLU MLP implementing the median function on $2d+1$ non-negative arguments as $\mathbf{M}_{2d+1}$, which has width $6d+3$ and depth $11d+3$ by Lemma \ref{lem:relu_for_computing_median}, then we construct $\Phi$ as follows:
\begin{align*}
\mathbf{x}
\ &\Longrightarrow \
\begin{bmatrix} \mathbf{x} \\ \pi_1(\mathbf{x}) \end{bmatrix}
\ \Longrightarrow \
\begin{bmatrix} \mathbf{x} \\ \pi_1(\mathbf{x}) \\ \pi_2(\mathbf{x}) \end{bmatrix}
\ \Longrightarrow \
\cdots
\ \Longrightarrow \
\begin{bmatrix} \mathbf{x} \\ \pi_1(\mathbf{x}) \\ \pi_2(\mathbf{x}) \\ \vdots \\ \pi_{2d}(\mathbf{x}) \\ \end{bmatrix}
\ \Longrightarrow \
\begin{bmatrix} \pi_1(\mathbf{x}) \\ \pi_2(\mathbf{x}) \\ \vdots \\ \pi_{2d}(\mathbf{x}) \\ \pi_{2d+1}(\mathbf{x}) \\ \end{bmatrix} 
\\
\ &\Longrightarrow \
\begin{bmatrix}  \phi_1\circ\pi_1(\mathbf{x}) \\ \pi_2(\mathbf{x}) \\ \vdots \\ \pi_{2d+1}(\mathbf{x}) \\ \end{bmatrix}
\ \Longrightarrow \
\begin{bmatrix}  \phi_1\circ\pi_1(\mathbf{x}) \\ \phi_2\circ\pi_2(\mathbf{x}) \\ \vdots \\ \pi_{2d+1}(\mathbf{x}) \\ \end{bmatrix}
\ \Longrightarrow \
\cdots
\ \Longrightarrow \
\begin{bmatrix}  \phi_1\circ\pi_1(\mathbf{x}) \\ \phi_2\circ\pi_2(\mathbf{x}) \\ \vdots \\ \phi_{2d+1}\circ\pi_{2d+1}(\mathbf{x}) \\ \end{bmatrix}\
\\
\ &\Longrightarrow \
\mathbf{M}_{2d+1} \left(
\begin{bmatrix}  \phi_1\circ\pi_1(\mathbf{x}) \\ \phi_2\circ\pi_2(\mathbf{x}) \\ \vdots \\ \phi_{2d+1}\circ\pi_{2d+1}(\mathbf{x}) \\ \end{bmatrix}
\right)
\ \Longrightarrow \
\Phi(\mathbf{x})
\end{align*}
Its width and depth are 
\begin{align*}
    \operatorname{width}(\Phi) 
    &= 
    \max\left\{\operatorname{width}(\pi_{2d+1})+2d,\operatorname{width}(\phi_{2d+1})+2d,\operatorname{width}(M_{2d+1})\right\} \\
    &= 
    \max\left\{(2n+3)d, 6d+3, 2\lceil (n+1)^{d/2} \rceil+2d \right\}
\end{align*}
\begin{align*}
    \operatorname{depth}(\Phi) &= \sum_{k=1}^{2d+1}\operatorname{depth}(\pi_k) + \sum_{k=1}^{2d+1}\operatorname{depth}(\phi_k) + \operatorname{depth}(M_{2d+1}) \\
    &= 3(2d+1)+3(2d+1)+(11d+3) \\
    &= 23d+9
\end{align*}

\textbf{Step 4}: \textbf{
Estimate the $L^{\infty}$ error
}

We first show that the $L^{\infty}$ error of $\phi_k\circ\pi_k$ is controlled on $Q_k$ for all $k$: pick any $\mathbf{w}\in Q_k$, then since $\pi_k$ maps the cuboid in $Q_k$ containing $\mathbf{w}$ (whose center we denote by $\mathbf{w}_0$) to some constant $j$, which is further mapped by $\phi_k$ to $y_j \eqdef f(\mathbf{w}_0)$, we have $\phi_k\circ\pi_k(\mathbf{w}) = f(\mathbf{w}_0)$, so
$$|\phi_k\circ\pi_k(\mathbf{w})-f(\mathbf{w})| = |f(\mathbf{w}_0)-f(\mathbf{w})| \le \nu \| \mathbf{w}_0-\mathbf{w} \|_1^{\alpha}$$
Since the edge length of any cuboid in $Q_k$ is at most $l \eqdef \frac{2d}{n(2d+1)}$ (this can be seen from Equation \ref{eq:defintion_of_NON_trifling_region}), $\| \mathbf{w}_0-\mathbf{w} \|_1 \le \frac{d}{2}l = \frac{d^2}{n(2d+1)}$, then
$$|\phi_k\circ\pi_k(\mathbf{w})-f(\mathbf{w})| \le \nu \| \mathbf{w}_0-\mathbf{w} \|_1^{\alpha} \le \nu \left( \frac{d^2}{n(2d+1)} \right)^{\alpha}$$

Now, for any $\mathbf{x}=(x_1,x_2,\cdots,x_d)\in[0,1]^d$, we show that it belongs to at least $d+1$ out of $2d+1$ $Q_k$'s: notice that in the definition of the trifling regions $\Omega_k$ (Equation \ref{eq:defintion_of_trifling_region}), each $x_i$ belongs to at most one of the following:
$$\bigcup_{j=0}^{n-1} \left(
\frac{1}{n} \left( \frac{k-1}{2d+1}+j \right) , 
\frac{1}{n} \left( \frac{k}{2d+1}+j \right) 
\right),
\quad \text{for } k=1,2,\cdots,2d+1 $$
so $\mathbf{x}$ belongs to at most $d$ trifling regions, thus $\mathbf{x}$ belongs to at least $d+1$ of the $Q_k$'s.

Therefore, for any $\mathbf{x}=(x_1,x_2,\cdots,x_d)\in[0,1]^d$, at least $d+1$ of $\phi_k\circ\pi_k(\mathbf{x})$ (for $k=1,2,\cdots,2d+1$) belong to the interval $\Big[
f(\mathbf{x}) - \nu \big( \frac{d^2}{n(2d+1)} \big)^{\alpha} ,
f(\mathbf{x}) + \nu \big( \frac{d^2}{n(2d+1)} \big)^{\alpha} 
\Big]$, then by Lemma \ref{lem:median_estimators_have_small_errors}, 
\[
    \Phi(\mathbf{x})
    =
    \operatorname{median}\left\{
        \phi_k\circ\pi_k(\mathbf{x}): 1\le k\le 2d+1
    \right\}
    \in
    \left[
        f(\mathbf{x}) 
        - 
        \nu \Big( \frac{d^2}{n(2d+1)} \Big)^{\alpha}
        ,
        f(\mathbf{x}) 
        + 
        \nu \Big( \frac{d^2}{n(2d+1)} \Big)^{\alpha}
    \right]
\]
hence
$$\|f-\Phi\|_{L^{\infty}([0,1]^d)} \le \nu \left( 
\frac{d^2}{n(2d+1)}
\right)^{\alpha}$$

\textbf{Step 5}: \textbf{
Estimate the Lipschitz constants
}

To compute the Lipschitz constants, we first fix $2\le k\le 2d$, and compute the Lipschitz constant of $\phi_k\circ\pi_k$. It is easy to see that $\phi_1\circ\pi_1$ and $\phi_{2d+1}\circ\pi_{2d+1}$ have at least the same regularity since there are less cuboids in $Q_1$ and $Q_{2d+1}$. Finally, by Lemma \ref{lem:median_function_preserves_regularity}, their median has the same regularity. 

We first compute the Lipschitz constant of $\pi_k$. Let $\mathbf{x}\in[0,1]^d$ and $\Delta\mathbf{x}$ small, we want to find an upper bound for $|\pi_k(\mathbf{x}+\Delta\mathbf{x})-\pi_k(\mathbf{x})|$. By definition of $p_k$, it only increases linearly by $1$ inside intervals of length $\frac{1}{n(2d+1)}$, so $|p_k'|\le n(2d+1)$, thus 
$$\|p_k(\mathbf{x}+\Delta\mathbf{x})-p_k(\mathbf{x})\|_1 \le n(2d+1) \|\Delta\mathbf{x}\|_1$$
where $p_k$ applies to $\mathbf{x}$ elementwise. Let $\mathbf{n} \eqdef \begin{bmatrix} (n+1)^0 & (n+1)^1 & \cdots & (n+1)^{d-1} \end{bmatrix}^{\top}$, then
\begin{align*}
    |\pi_k(\mathbf{x}+\Delta\mathbf{x})-\pi_k(\mathbf{x})| 
    &= 
    |\mathbf{n}\cdot p_k(\mathbf{x}+\Delta\mathbf{x})-\mathbf{n}\cdot p_k(\mathbf{x})| \\
    &=
    |\mathbf{n}\cdot (p_k(\mathbf{x}+\Delta\mathbf{x})-p_k(\mathbf{x}))| \\
    &\le
    (n+1)^{d-1} \cdot \|p_k(\mathbf{x}+\Delta\mathbf{x})-p_k(\mathbf{x})\|_1 \\
    &\le
    (n+1)^{d}(2d+1) \|\Delta\mathbf{x}\|_1
\end{align*}
therefore
$$\operatorname{Lip}(\pi_k) 
\le (n+1)^d(2d+1)$$

Now we compute the Lipschitz constant of $\phi_k$. As mentioned above, let
$$(\overline{a_da_{d-1}\cdots a_1})_{n+1} \eqdef \sum_{i=1}^d (n+1)^{i-1}a_i$$ 
denote the number $\overline{a_1a_2\cdots a_d}$ in base $n+1$. Let $P_{(\overline{a_da_{d-1}\cdots a_1})_{n+1}}^{(k)}$ be the center of the unique cuboid in $Q_k$ such that $p_i\left(P_{(\overline{a_da_{d-1}\cdots a_1})_{n+1}}^{(k)}\right)=a_i$ for $i=1,2,\cdots,d$. Note that $a_i\in\{0,1,2,\cdots,n\}$ for all $i$. Then, $\phi_k$ is the unique continuous piecewise linear function that goes through 
$\{(j, f(P_j^{(k)}) ): j=0,1,2,\cdots,(n+1)^d-1 \}$ and is linear between adjacent points in this set. Then, 

$$\operatorname{Lip}(\phi_k)
= \max\limits_{j=0,1,2,\cdots,(n+1)^d-2}{\left|f(P_{j+1}^{(k)})-f(P_j^{(k)})\right|} 
\le
\max\limits_{j=0,1,2,\cdots,(n+1)^d-2}
{\nu \left\|P_{j+1}^{(k)}-P_j^{(k)}\right\|_1^{\alpha}} 
\le
\nu d^{\alpha}
$$

Finally, by Lemma \ref{lem:median_function_preserves_regularity}, the median function preserves regularity, hence
\begin{align*}
    \operatorname{Lip}(\Phi) 
    &\le \max\limits_{k=1,2,\cdots,2d+1}{\operatorname{Lip}(\phi_k\circ\pi_k)} \\
    &\le \max\limits_{k=1,2,\cdots,2d+1}{\operatorname{Lip}(\phi_k)} \cdot {\operatorname{Lip}(\pi_k)} \\
    &\le \nu (n+1)^d (2d+1) d^{\alpha}.
\end{align*}
This concludes our proof.
\end{proof}


\section{Acknowledgments}
\label{s:Acknowledgments}
\paragraph{Funding Sources} The authors were funded by the NSERC Discovery grant RGPIN-2023-04482.

\bibliographystyle{plain}
\bibliography{0_References}

\end{document}